\providecommand{\Underscore}{\textunderscore}
\lstdefinelanguage{clingo}{basicstyle=\ttfamily,keywordstyle=[1]\bfseries,keywordstyle=[2]\bfseries,keywordstyle=[3]\bfseries,showstringspaces=false,literate={_}{\Underscore}1 {\%\%}{}0,escapeinside={\#(}{\#)},alsoletter={\#,\&},keywords=[1]{not,from,import,def,if,else,elif,return,while,break,and,or,for,in,del,and,class,with,as,is,yield,async},keywords=[2]{\#const,\#show,\#minimize,\#base,\#theory,\#count,\#external,\#program,\#script,\#end,\#heuristic,\#edge,\#project,\#show,\#sum},morecomment=[l]{\#\ },morecomment=[l]{\%\ },morestring=[b]",stringstyle={\itshape},commentstyle={\color{darkgray}}}
\lstdefinelanguage{clingcon}[]{clingo}{morekeywords={&dom,&sum,&nsum,&diff,&disjoint,&distinct,&minimize,&maximize,&show}}
\lstdefinelanguage{flingo}[]{clingo}{morekeywords={&sum,&sus,&in,&df,&min,&max,&show}}
\lstdefinelanguage{clingodl}[]{clingo}{morekeywords={&diff}}
\lstdefinelanguage{python}{basicstyle=\ttfamily,keywordstyle=[1]\bfseries,showstringspaces=false,literate={_}{\Underscore}{1},escapeinside={\#(}{\#)},alsoletter={\#,\&},keywords=[1]{not,from,import,def,if,else,elif,return,while,break,and,or,for,in,del,and,class,with,as,is,yield,async},morecomment=[l]{\#\ },morestring=[b]",stringstyle={\itshape},commentstyle={\color{darkgray}}}
 \providecommand{\sysfont}{\textit}
\newcommand{\Clingo}{\sysfont{Clingo}}
\newcommand{\Clingcon}{\sysfont{Clingcon}}
\newcommand{\clingcon}{\sysfont{clingcon}}
\newcommand{\clingo}{\sysfont{clingo}}
\newcommand{\clingodl}{\clingoM{dl}}
\newcommand{\Clingodl}{\ClingoM{dl}}
\newcommand{\ezcsp}{\sysfont{ezcsp}}
\newcommand{\lars}{\sysfont{lars}}
\newcommand{\telingo}{\sysfont{telingo}}
\newcommand{\clingoM}[1]{\clingo{\small\textnormal{[}\textsc{#1}\textnormal{]}}}
\newcommand{\ClingoM}[1]{\Clingo{\small\textnormal{[}\textsc{#1}\textnormal{]}}}
 \providecommand{\logfont}{\textrm}
\newcommand{\HT}{\ensuremath{\logfont{HT}}}
\newcommand{\HTC}{\ensuremath{\logfont{HT}_{\!c}}}
\newcommand{\THT}{\ensuremath{\logfont{THT}}}
\newcommand{\THTf}{\ensuremath{\THT_{\!f}}}
\newcommand{\MHT}{\ensuremath{\logfont{MHT}}}
\newcommand{\next}{\text{\rm \raisebox{-.5pt}{\Large\textopenbullet}}}    
\newcommand{\alwaysF}{\ensuremath{\square}}
\newcommand{\eventuallyF}{\ensuremath{\Diamond}}
\newcommand{\finally}{\ensuremath{\bm{\mathsf{F}}}}
\newcommand{\initially}{\ensuremath{\bm{\mathsf{I}}}}
\mathchardef\mhyphen="2D
\newcommand{\intervcc}[2]{\ensuremath{[#1..#2]}}
\newcommand{\intervco}[2]{\ensuremath{[#1..#2)}}
\newcommand{\rangeco}[3]{\ensuremath{#1 \in \intervco{#2}{#3}}}
\newcommand{\tuple}[1]{\ensuremath{\langle #1 \rangle}}
\newcommand{\Htrace}{\ensuremath{\mathbf{H}}}
\newcommand{\Ttrace}{\ensuremath{\mathbf{T}}}
\newcommand{\M}{\ensuremath{\mathbf{M}}}
\newcommand{\handt}{\tuple{H,T}}
\newcommand{\tandt}{\tuple{T,T}}
\lstdefinelanguage{clingos}{language=clingo,basicstyle=\small\ttfamily }
\newtheorem{definition}{Definition}
\newtheorem{theorem}{Theorem}
\newtheorem{proposition}{Proposition}
\newtheorem{corollary}{Corollary}
\newtheorem{lemma}{Lemma}
\newcommand{\eqdef}{=} 
\providecommand{\Next}{\text{\rm \raisebox{-.5pt}{\Large\textopenbullet}}}  
\newcommand{\metric}[3]{\ensuremath{#1_{
\ifthenelse{\equal{#2}{#3}}
{#2}
{
\ifthenelse{\equal{#2}{0}}
{
\ifthenelse{\equal{#3}{\omega}}
{}
{\leq#3}
      }
{
\ifthenelse{\equal{#3}{\omega}}
{\geq#2}
{\intervco{#2}{#3}}
      }
    }}}}
\newcommand{\metricI}[1]{\ensuremath{#1_{\cI}}}
\newcommand{\tmf}{\ensuremath{\tau}}
\newcommand{\cI}{\ensuremath{I}}
\newcommand{\matom}{\ensuremath{\mu}}
\newcommand{\laux}[2]{\ensuremath{L_{#1}^{#2}}}
\newcommand{\ljaux}[3]{\ensuremath{L_{#1}^{#2,#3}}}
\newcommand{\faili}[3]{\ensuremath{{\delta_{#1}^{#2,#3}}}}
\newcommand{\x}{\ensuremath{x}}
\newcommand{\y}{\ensuremath{y}}
\newcommand{\melrule}[1]{\ensuremath{\alwaysF{(#1)}}}
\newcommand{\alphabet}{\ensuremath{\mathcal{A}}}
\newcommand{\program}{\ensuremath{P}}
\newcommand{\body}{\ensuremath{\beta}}
\newcommand{\head}{\ensuremath{\alpha}}
\newcommand{\allI}{\ensuremath{\mathbb{I}}}
\newcommand{\allIp}{\ensuremath{\allI|_{\program}}}
\newcommand{\trans}[2]{\ensuremath{(#1)_{#2}}}
\newcommand{\tk}[1]{\trans{#1}{\kvar}}
\newcommand{\kvar}{\ensuremath{k}}
\newcommand{\tmvar}{\ensuremath{d}}
\newcommand{\timet}{\ensuremath{t}}
\newcommand{\tmflimit}{\ensuremath{\nu}}
\newcommand{\alphabetk}{\ensuremath{\mathcal{A}_k}}
\newcommand{\alphabets}{\ensuremath{\mathcal{A}^{*}}}
\newcommand{\alphabetT}{\ensuremath{\mathcal{T}}}
\newcommand{\den}[1]{\llbracket \, #1 \, \rrbracket}
\newcommand{\undefined}{\ensuremath{\boldsymbol{u}}} \newcommand{\Vh}{\ensuremath{h}} \newcommand{\Vt}{\ensuremath{t}} \newcommand{\htcinterp}{\ensuremath{\langle \Vh,\Vt \rangle}}
\newcommand{\htcttinterp}{\ensuremath{\langle \Vt,\Vt \rangle}}
\newcommand{\Label}[1]{\ensuremath{L_{#1}}}
\newcommand{\Labs}[3]{\ensuremath{{w_{#1}^{#2,#3}}}}
\newcommand{\Labsxy}[3]{\ensuremath{\Labs{\mu}{\x}{\y}}}
\newcommand{\subformulas}[1]{\ensuremath{\mathit{sub}(#1)}}
\newcommand{\mhtToHtf}{\ensuremath{\theta}} \newcommand{\mhtToHt}[1]{\ensuremath{\mhtToHtf(#1)}}
\newcommand{\mhtToHtc}[1]{\ensuremath{\mhtToHtf^c(#1)}}
\newcommand{\htToMhtf}{\ensuremath{\sigma}} \newcommand{\htToMht}[1]{\ensuremath{\htToMhtf(#1)}}
\newcommand{\htcToMht}[1]{\ensuremath{\htToMhtf^c(#1)}}
\newcommand{\true}{\ensuremath{\boldsymbol{t}}}
\newcommand{\false}{\ensuremath{\boldsymbol{f}}}
\newcommand{\trangeco}[3]{\ensuremath{#2 \leq #1 < #3}}
\newcommand{\trivalI}{\ensuremath{\bm{m}}}
\newcommand{\trivalIhtcletter}{\bm{m}} \newcommand{\trivalIhtc}{\ensuremath{\ensuremath{\trivalIhtcletter}^{\htcinterp}}}
\newcommand{\trivalItotalhtc}{\ensuremath{\ensuremath{\trivalIhtcletter}^{\tuple{t,t}}}}
\newcommand{\trivalIm}{\ensuremath{\bm{m}^{\tmf}}}
\newcommand{\trivaluation}[3]{\ensuremath{#3(#1,#2)}}
\newcommand{\trival}[2]{\trivaluation{#1}{#2}{\trivalIm}}
\newcommand{\alphabetI}{\ensuremath{\mathfrak{I}}}
\newcommand{\alphabetaux}{\ensuremath{\mathcal{L}_{\program}}}
\def \paramextended {\trivalIm}
\newcommand{\htcinterpex}{\chi{(\paramextended,P)}}
\newcommand{\kinlambda}{\rangeco{k}{0}{\lambda}}
\newcommand{\htcexmtri}{\ensuremath{\ensuremath{\trivalIhtcletter}^{\htcinterpex}}}
\newcommand{\exmtri}{\trivalI_{[\paramextended,P]}}
\renewenvironment{quote}{\list{}{\leftmargin15pt
    \rightmargin0cm
  }
  \item\relax
}
{\endlist}
\newcommand{\casex}[2]{\begin{quote}
    \hspace{-10pt}{\bfseries $\bm{#1}$}: #2\end{quote}}
\newcommand{\setpropcounter}[1]{
\setcounter{lastproposition}{\value{proposition}}
\setcounter{proposition}{\getrefnumber{#1}}
\addtocounter{proposition}{-1}
}
\newcounter{lastproposition}
\newcommand{\resetpropcounter}{
\setcounter{proposition}{\value{lastproposition}}
}
\newcommand{\relR}{\ensuremath{R}}
\newcommand{\openi}[1]{\ensuremath{\Omega(#1)}}
\newcommand{\htToMhttrace}[1]{\ensuremath{\htToMhtf_\mathcal{A}(#1)}}
\newcommand{\comment}[1]{}
\begin{document}
\title[Implementing Metric ASP]{Implementing Metric Temporal Answer Set Programming
\thanks{Research partially funded by DFG grant SCHA 550/15, Germany.}}

\begin{authgrp}
  \author{\sn{Arvid} \gn{Becker}}
  \affiliation{University of Potsdam, Germany}
  \author{\sn{Pedro} \gn{Cabalar}}
  \affiliation{University of Corunna, Spain}
  \author{\sn{Mart\'{\i}n} \gn{Di\'eguez}}
  \affiliation{University of Angers, France}
  \author{\sn{Susana} \gn{Hahn}}
  \affiliation{University of Potsdam, Germany}\affiliation{Potassco Solutions, Germany}
  \author{\sn{Javier} \gn{Romero}}
  \affiliation{University of Potsdam, Germany}
  \author{\sn{Torsten} \gn{Schaub}}
  \affiliation{University of Potsdam, Germany}\affiliation{Potassco Solutions, Germany}
\end{authgrp}
\maketitle
\begin{abstract}
  We develop a computational approach to Metric Answer Set Programming
  to express quantitative temporal constrains, such as durations and deadlines.
  We investigate two specific fragments:
  plain metric logic programs, restricted to local temporal constraints, and
  general metric logic programs, which allow for arbitrary metric formulas.
  A central challenge in this context is maintaining scalability when dealing with fine-grained timing constraints,
  which can significantly exacerbate grounding bottleneck of Answer Set Programming (ASP).
  To address this issue, we
  propose translations of both fragments into
  standard ASP and
  ASP extended with difference constraints,
  a simplified form of linear constraints,
  and prove their correctness and completeness.
  Our implementation, realized via meta-encodings,
  effectively decouples metric ASP from the granularity of time,
  resulting in a solution that is independent of time precision.
  \textit{Under consideration in Theory and Practice of Logic Programming (TPLP).}
\end{abstract}
\begin{keywords}
  Metric Answer Set Programming,
  Equilibrium Logic,
  Difference Constraints
\end{keywords}
 \section{Introduction}\label{sec:introduction}

Metric temporal logics~\citep{TIMEHandbook} allow for expressing quantitative temporal constrains,
like durations and deadlines.
As an example, consider the dentist scenario~\citep{mellarkod07a}:
{\em``Ram is at his office and has a dentist appointment in one hour.
For the appointment, he needs his insurance card which is at home and cash to pay the doctor.
He can get cash from the nearby Atm.
Table~\ref{table:dentist} shows the time in minutes needed to travel between locations: Dentist, Home, Office and Atm.
For example, the time needed to travel between Ram's office to the Atm is 20 minutes.
The available actions are: moving from one location to another and picking items such as cash or insurance.
The goal is to find a plan which takes Ram to the doctor on time.''}
\begin{table}[ht]
  \begin{tabular}{|r|r|r|r|r|}
    \cline{1-5}
                     & \textit{Dentist} & \textit{Home} & \textit{Office} & \textit{Atm} \\ \cline{1-5}
    \textit{Dentist} & 0                & 20            & 30              & 40           \\ \cline{1-5}
    \textit{Home}    & 20               & 0             & 15              & 15           \\ \cline{1-5}
    \textit{Office}  & 30               & 15            & 0               & 20           \\ \cline{1-5}
    \textit{Atm}     & 40               & 15            & 20              & 0            \\ \cline{1-5}
  \end{tabular}
  \caption{Durations between locations} \label{table:dentist}
\end{table}
This example combines planning and scheduling, and
nicely illustrates the necessity to combine qualitative and quantitative temporal constraints.

Extensions to the Logic of Here-and-There and Equilibrium Logic~\citep{pearce96a} were developed in~\citep{cadiscsc20a,becadiscsc24a}
to semantically ground the incorporation of metric constraints into Answer Set Programming~(ASP;~\citealp{lifschitz19a}).
\footnote{While \cite{cadiscsc20a} equate time steps with state indices in a linear temporal model,
  \cite{becadiscsc24a} pursue a more general approach by
  associating each state with an arbitrary natural number representing a specific point in time.}
Building upon these semantic foundations, we develop a computational approach to metric ASP.
A central challenge in this is to maintain scalability when dealing with fine-grained timing constraints,
which can significantly exacerbate ASP's grounding bottleneck.
To address this issue, we leverage extensions of ASP with difference constraints~\citep{jakaosscscwa17a},
a simplified form of linear constraints,
to handle time-related aspects externally.
This approach effectively decouples metric ASP from the granularity of time,
resulting in a solution that is unaffected by time precision.
In detail, we
  (i) propose translations of metric logic programs into regular logic programs and their extension with difference constraints,
 (ii) prove the completeness and correctness of both translations in terms of equilibrium logic and its extensions
      with metric and difference constraints,
(iii) describe an implementation of both translations in terms of meta-encodings, and
(iv) present empirical performance results with novel metric encodings across three different domains,
namely, the dentist scenario presented above, multi-agent synchronization, and job scheduling.
From a modeling perspective, we may consider metric logic programs as a high-level modeling language
for logic programs with difference constraints in the context of temporal domains.
Our meta-programming approach enables the rapid exploration of alternative logical designs
without requiring modifications to an underlying solver,
such as the temporal ASP system \telingo~\citep{cadilasc20a}.
Instead, our approach provides a versatile environment suited for
both theoretical research and the generation of system blueprints.

We begin by examining a restricted, yet expressive, fragment of our language,
which allows the next operator to be constrained by a temporal interval.
Subsequently, we extend this approach to encompass the full language.
First, we demonstrate how arbitrary metric formulas can be transformed into a logic programming representation.
Then, we illustrate how the techniques developed for the restricted fragment can be generalized
to accommodate this more comprehensive format.

 \subsection*{Related work}\label{sec:related:work}
Pioneering efforts to extend ASP with linear integer constraints for
expressing quantitative temporal relations were introduced by~\cite{baboge05a} and~\cite{mellarkod07a}.
These early approaches ultimately inspired the development of modern hybrid ASP systems,
such as \clingcon~\citep{bakaossc16a}, \clingodl~\citep{jakaosscscwa17a}, and \ezcsp~\citep{balduccini09a}.
For instance,
the latter was used by \cite{bamamale17a} to address planning problems involving mixed discrete-continuous dynamics.
While these systems effectively manage numerical constraints,
they lack any support for high-level temporal languages and reasoning.

The logical foundation of our work lies in Metric Equilibrium Logic,
as defined by~\cite{becadiscsc24a}
as an extension of Linear temporal equilibrium logic~\citep{agcadipescscvi20a}.
The latter serves as the basis for the temporal ASP system \telingo~\citep{cadilasc20a},
which supports qualitative temporal reasoning but cannot express metric constraints over time intervals.
We bridge this gap by providing a general definition of metric logic programs and corresponding implementations
based on meta-encodings,
which may ultimately serve as blueprints for a future metric extension of \telingo.

Metric concepts have also been explored in stream reasoning,
most notably in the \lars~framework~\citep{bedaei18a},
which extends ASP with window operators.
However, our focus differs:
while \lars\ reasons over data streams,
our work targets general metric temporal reasoning.
This distinction is also reflected in the semantic foundations:
we rely on Metric Equilibrium Logic, while \lars\ utilizes a reduct-based semantics.
Similarly, metric extensions in Datalog~\citep{wagrkaka19a}
have led to systems such as \textit{meteor}~\citep{wahuwagr22a} and (temporal) \textit{vadalog}~\citep{beblnisa22a}.
These extensions are monotonic and thus incapable of expressing default reasoning.
Moreover, they are optimized for large-scale data querying with continuous time,
making them less suitable for the search-intensive problems encountered in ASP.
Interestingly, \cite{watekocu21a} investigate metric Datalog with negation under the stable model semantics,
employing an \HT-based semantics to define a restricted class of metric logic programs.
In the spirit of Datalog,
these programs limit the use of logical and temporal operators,
disallowing disjunction and ``existential'' operators (such as eventually, since, or until) in rule heads.
Finally,
\citep{huozdi20a} defines reductions of (monotonic) Metric temporal logic to Linear temporal logic.
A comprehensive comparative account of metric approaches in logic programming is given in~\citep{becadiscsc24a}.

In the context of planning,
\cite{sobatu04a} explore actions with durations in plain ASP.
While the proposed action language deals with quantitative temporal constraints,
it does not offer any metric logic foundations.
Finally, outside of ASP,
\cite{crecod03a} introduce durative actions and deadlines in PDDL~2.1,
though deadlines were modeled indirectly via special encodings rather than as native constructs.
\cite{gehalosadi09a} later extend this language to express deadlines for any state or condition via hard or
soft constraints in PDDL~3.
While these approaches are tailored to specific PDDL constructs,
our objective is to provide a general logical framework for metric reasoning in ASP.
 \section{Background}
\label{sec:background}

To ease the formal elaboration of our translations from metric ASP to
regular logic programs and their extension with difference constraints,
we put ourselves into the context of the logical framework of Here-and-There and Equilibrium Logic.
\subsection{The Logic of Here-and-There \citep{pearce96a}}
\label{sec:ht}
A formula over an alphabet \alphabet\ is defined as
\begin{align*}
  \varphi &::= \bot \mid a \mid \varphi\wedge\varphi \mid \varphi\vee\varphi \mid \varphi\to\varphi
  \quad\text{ where } a\in\alphabet\ .
\end{align*}
We define the derived operators
$\neg\varphi\eqdef\varphi\to\bot$
and
$\top\eqdef\neg\bot$.
Elements $a$ of \alphabet\ are called (Boolean) \emph{atoms}.
A \emph{literal} is an atom or an atom preceded by negation, viz.\ $a$ or $\neg a$.
A \emph{theory} is a set of formulas.
We sometimes write $\varphi\leftarrow\psi$ instead of $\psi\to\varphi$
to follow logic programming conventions.
A \emph{program} is a set of implications of the form $\varphi \leftarrow \psi$
where $\varphi$ is a disjunction of literals and $\psi$ is a conjunction of literals.

We represent an \emph{interpretation} $T$ as a set of atoms $T\subseteq\alphabet$.
An \HT-\emph{interpretation} is a pair $\tuple{H,T}$
of interpretations such that $H\subseteq T$;
it is said to be \emph{total} if $H=T$.
An HT-interpretation $\tuple{H,T}$ \emph{satisfies} a formula~$\varphi$,
written $\tuple{H,T}\models\varphi$,
if
\begin{enumerate}
\item $\tuple{H,T}\models a$ if $a \in H$
\item $\tuple{H,T}\models \varphi \wedge \psi$ if $\tuple{H,T} \models \varphi$ and $\tuple{H,T} \models \psi$
\item $\tuple{H,T}\models \varphi \vee   \psi$ if $\tuple{H,T} \models \varphi$ or  $\tuple{H,T} \models \psi$
\item $\tuple{H,T}\models \varphi \to    \psi$ if $\tuple{H',T} \not\models \varphi$ or $\tuple{H',T} \models \psi$
  for each~$H'\in\{H, T\}$
\end{enumerate}
An \HT-interpretation $\tuple{H,T}$ is an \emph{\HT-model} of a theory $\Gamma$
if $\tuple{H,T}\models\varphi$ for each~$\varphi\in\Gamma$.
A total model~$\tuple{T,T}$ of a theory~$\Gamma$ is an \emph{equilibrium model}
if there is no other model~$\tuple{H,T}$ of~$\Gamma$ with~$H \subset T$;
$T$ is also called a \emph{stable model} of $\Gamma$.
 \subsection{The Logic of Here-and-There with Constraints \citep{cakaossc16a}}
\label{sec:htc}
The syntax of $\HTC$ relies on a signature~$\tuple{\mathcal{X},\mathcal{D},\mathcal{C}}$,
similar to constraint satisfaction problems,
where elements of set~$\mathcal{X}$ represent variables and elements of~$\mathcal{D}$ are domain values
(usually identified with their respective constants).
The constraint atoms in $\mathcal{C}$
provide an abstract way to relate values of variables and constants according to the atom's semantics.
For instance, \emph{difference constraint atoms} are expressions of the form `$x - y \leq d$',
containing variables~$x,y\in\mathcal{X}$ and the domain value~$d\in\mathcal{D}$.
A constraint formula $\varphi$ over $\mathcal{C}$ is defined as
\begin{align*}
  \varphi &::= \bot \mid c \mid \varphi\wedge\varphi \mid  \varphi\vee\varphi \mid  \varphi\rightarrow \varphi
  \quad\text{ where } c\in\mathcal{C}
\end{align*}
Concepts like defined operators, programs, theories, etc.\ are analogous to \HT.
Variables can be assigned some value from $\mathcal{D}$ or left \emph{undefined}.
For the latter, we use the special symbol $\undefined\not\in\mathcal{D}$ and
the extended domain $\mathcal{D}_u \eqdef \mathcal{D} \cup \{\undefined\}$.
A \emph{valuation} $v$ is a function $v:\mathcal{X}\rightarrow\mathcal{D}_u$.
We let $\mathbb{V}$ stand for the set of all valuations.
We sometimes represent a valuation $v$ as a set
\(
\{ (x,v(x)) \mid x\in\mathcal{X}, v(x)\in\mathcal{D}\}
\)
of pairs, so that $(x,\undefined)$ is never formed.
This allows us to use standard set inclusion, $v\subseteq v'$, for comparing $v,v'\in\mathbb{V}$.

The semantics of constraint atoms is defined in \HTC\ via \emph{denotations},
that is, functions
\(
\den{\cdot}:\mathcal{C}\rightarrow 2^{\mathbb{V}}
\)
mapping each constraint atom to a set of valuations.
An \emph{\HTC-interpretation} is a pair $\langle h,t \rangle$
of valuations such that $h\subseteq t$;
it is \emph{total} if $h=t$.
Given a denotation $\den{\cdot}$,
an \HTC-interpretation $\langle h,t \rangle$ \emph{satisfies} a constraint formula~$\varphi$,
written $\langle h,t \rangle \models \varphi$,
if\footnote{Since we use strict denotations, it is sufficient to check in Condition~\ref{sat:htc:atom} satisfaction in $h$.}
\begin{enumerate}
\item $\langle h,t \rangle \models c \text{ if } h\in \den{c}$\label{sat:htc:atom}
\item $\langle h,t\rangle \models \varphi\wedge\psi \text{ if } \langle h,t\rangle \models \varphi \text{ and } \langle h,t\rangle \models \psi$
\item $\langle h,t\rangle \models \varphi\vee\psi   \text{ if } \langle h,t\rangle \models \varphi \text{ or }  \langle h,t\rangle \models \psi$
\item $\langle h,t\rangle \models \varphi \rightarrow \psi
  \text{ if }\langle v,t\rangle \not\models \varphi
  \text{ or }\langle v,t\rangle     \models \psi
  \text{ for each }v\in\{h,t\}$
\end{enumerate}
An \HTC-interpretation~$\tuple{h,t}$ is an \emph{\HTC-model} of a theory~$\Gamma$
if $\tuple{h,t}\models\varphi$ for every~$\varphi\in\Gamma$.
A total model~$\langle t,t\rangle$ of a theory~$\Gamma$ is a \emph{constraint equilibrium model}
if there is no other model~$\tuple{h,t}$ of~$\Gamma$ with~$h\subset t$.
 \subsection{The Metric Temporal Logic of Here-and-There\\ \citep{cadiscsc20a,becadiscsc24a}}
\label{sec:mht}
Given an alphabet \alphabet\ and
\(
\allI\eqdef\{ \intervco{m}{n}\mid m\in\mathbb{N}, n\in\mathbb{N}\cup\{\omega\}\}
\),
a metric temporal formula is defined as\footnote{More general formulas, including until, release and past operators, are presented in~\citep{becadiscsc24a}.}
\begin{align*}
  \varphi &::= \bot \mid a \mid \varphi\wedge\varphi \mid \varphi\vee\varphi \mid \varphi\to\varphi \mid
  \initially                     \mid
  \metricI{\next}        \varphi \mid
  \metricI{\alwaysF}     \varphi \mid
  \metricI{\eventuallyF} \varphi
  \quad\text{ where } a\in\alphabet, I\in\allI
\end{align*}
The last three cases deal with metric temporal operators,
which are indexed by some interval $I$.
In words,
\metricI{\next},
\metricI{\alwaysF}, and
\metricI{\eventuallyF}
are called
\emph{next},
\emph{always}, and
\emph{eventually}.
\initially\ simply refers to the initial state.
We write
$\next$, $\alwaysF$, $\eventuallyF$ for
$\next_{\intervco{0}{\omega}}$, $\alwaysF_{\intervco{0}{\omega}}$, $\eventuallyF_{\intervco{0}{\omega}}$, respectively.
In addition to the aforedefined Boolean operators,
we define
$\finally   \eqdef  \neg \metric{\next}{0}{\omega}     \top$,
which allow us to refer to the final state. Concepts like programs, theories, etc.\ are analogous to \HT.

The semantics of temporal formulas is defined via \emph{traces},
being sequences $\Ttrace=(T_i)_{\rangeco{i}{0}{\lambda}}$ of interpretations $T_i\subseteq\alphabet$;
$\lambda$ is the length of \Ttrace.
Here, we consider only traces of finite length. We define the ordering $\Htrace\leq\Ttrace$ between traces of the same length $\lambda$ as $H_i\subseteq T_i$ for each $\rangeco{i}{0}{\lambda}$,
and $\Htrace<\Ttrace$ as both $\Htrace\leq\Ttrace$ and $\Htrace\neq\Ttrace$.
An \emph{\HT-trace} over \alphabet\ of length $\lambda$ is
a sequence of pairs
\(
(\tuple{H_i,T_i})_{\rangeco{i}{0}{\lambda}}
\)
with $H_i\subseteq T_i$ for any $0 \leq i < \lambda$;
it is \emph{total} if $H_i=T_i$.
For convenience, we represent it as the pair $\tuple{\Htrace,\Ttrace}$ of traces
$\Htrace = (H_i)_{\rangeco{i}{0}{\lambda}}$ and $\Ttrace = (T_i)_{\rangeco{i}{0}{\lambda}}$.

Metric information is captured by timing functions.
Given $\lambda\in\mathbb{N}$,
we say that
\(
\tmf: \intervco{0}{\lambda} \to \mathbb{N}
\)
is a (strict) \emph{timing function} wrt $\lambda$
if
$\tmf(0)=0$ and
$\tmf(\kvar)<\tmf(\kvar+1)$ for $0\leq\kvar<\lambda-1$.
A \emph{timed} \HT-trace $(\tuple{\Htrace,\Ttrace},\tmf)$ over \alphabet\ and $(\mathbb{N},<)$ of length $\lambda$
is a pair consisting of
an \HT-trace $\tuple{\Htrace,\Ttrace}$ over \alphabet\ of length $\lambda$ and
a timing function $\tmf$ wrt $\lambda$.
A timed \HT-trace $\M=(\tuple{\Htrace,\Ttrace}, \tmf)$
of length $\lambda$ over alphabet \alphabet\
\emph{satisfies} a metric formula $\varphi$ at $\rangeco{k}{0}{\lambda}$,
written \mbox{$\M,k \models \varphi$}, if
\begin{enumerate}
\item $\M,k \models a$ if $a \in H_k$
\item \label{def:mhtsat:and} $\M, k \models \varphi \wedge \psi$
  if
  $\M, k \models \varphi$
  and
  $\M, k \models \psi$
\item \label{def:mhtsat:or} $\M, k \models \varphi \vee \psi$
  if
  $\M, k \models \varphi$
  or
  $\M, k \models \psi$
\item $\M, k \models \varphi \to \psi$
  if
  $\M', k \not \models \varphi$
  or
  $\M', k \models  \psi$,
  \\ for both $\M'=\M$ and $\M'=(\tuple{\Ttrace,\Ttrace}, \tmf)$
\item $\M, k \models \initially$
  if
  $k =0$
\item \label{def:mhtsat:next}$\M, k \models \metricI{\next}\, \varphi$
  if
  $k+1<\lambda$ and $\M, k{+}1 \models \varphi$
  and $\tmf(k{+}1)-\tmf(k) \in \cI$
\item \label{def:mhtsat:eventuallyF} $\M, k \models \metricI{\eventuallyF}\, \varphi$
  if
  $\M, i \models \varphi$ for some $i$
  with
  $\trangeco{i}{k}{\lambda}$ and
  $\tmf(i)-\tmf(k) \in \cI$
\item \label{def:mhtsat:alwaysF} $\M, k \models \metricI{\alwaysF}\, \varphi$
  if
  $\M, i \models \varphi$ for all $i$
  with
  $\trangeco{i}{k}{\lambda}$ and
  $\tmf(i)-\tmf(k) \in \cI$
\end{enumerate}
\comment{Finish fixin intervals}
A timed \HT-trace $\M$ is an \emph{\MHT-model} of a metric theory $\Gamma$ if $\M,0 \models \varphi$ for all $\varphi \in \Gamma$.
A total \MHT-model~$(\tuple{\Ttrace, \Ttrace}, \tmf)$ of a theory~$\Gamma$ is a \emph{metric equilibrium model}
if there is no other model $(\tuple{\Htrace,\Ttrace}, \tmf)$ of~$\Gamma$ with $\Htrace<\Ttrace$.
 \par
\newcommand{\distloc}{\ensuremath{\delta\langle L,L'\rangle}}
\newcommand{\tat}{\textit{at}}
\newcommand{\tram}{\textit{ram}}
\newcommand{\tgo}{\textit{go}}
\newcommand{\thas}{\textit{has}}
\newcommand{\toffice}{\textit{office}}
\newcommand{\thome}{\textit{home}}
\newcommand{\tdentist}{\textit{dentist}}
\newcommand{\tatm}{\textit{atm}}
\newcommand{\tcard}{\textit{card}}
\newcommand{\tcash}{\textit{cash}}
\newcommand{\ticard}{\textit{icard}}
\newcommand{\tgoal}{\textit{goal}}
For illustration,
let us consider the formalization of the dentist scenario in~\eqref{ex:dentist:one} to~\eqref{ex:dentist:ten}.
We assume that variables $L$ and $L'$ are substituted by distinct locations $\toffice$, $\tatm$, $\tdentist$, and $\thome$;
and variable $I$ by items $\tcash$ and $\ticard$.
We use \distloc\ to refer to the distance between two locations from Table~\ref{table:dentist}.
As in~\citep{mellarkod07a},
we assume that Ram is automatically picking up items when being at the same position. \begin{align}
\alwaysF(\tat(\tram,\toffice)                                   &\leftarrow \initially)\label{ex:dentist:one}\\
  \alwaysF(\tat(\tcash,\tatm)                                     &\leftarrow \initially)\\
  \alwaysF(\tat(\ticard,\thome)                                   &\leftarrow \initially)\label{ex:dentist:tri}\\
\alwaysF(\textstyle\bigvee_{L'\neq L} \tgo(\tram,L')            &\leftarrow \tat(\tram,L)\wedge \neg\finally)\label{ex:dentist:for}\\
\alwaysF(\thas(\tram,I)                                         &\leftarrow \tat(\tram,L) \wedge \tat(I,L))\label{ex:dentist:six}\\
  \alwaysF(\tat(I,L)                                              &\leftarrow \tat(\tram,L) \wedge \thas(\tram,I))\label{ex:dentist:svn}\\
\alwaysF(\Next_{\intervco{\distloc}{\distloc+1}}{\tat(\tram,L')}&\leftarrow\tat(\tram,L) \wedge \tgo(\tram,L'))\label{ex:dentist:eit}\\
\alwaysF(\Next{\thas(\tram,I)}                                  &\leftarrow \thas(\tram,I) \wedge \neg\finally)\label{ex:dentist:nin}\\
  \alwaysF(\Next{\tat(I,L)}                                       &\leftarrow \neg \thas(\tram,I) \wedge \tat(I,L) \wedge \neg\finally)\label{ex:dentist:ten}
\end{align}
In brief,
Rules~\eqref{ex:dentist:one} to~\eqref{ex:dentist:tri} give the initial situation.
Rule~\eqref{ex:dentist:for} delineates possible actions.
Rules~\eqref{ex:dentist:six} and~\eqref{ex:dentist:svn} capture indirect effects.
Rule~\eqref{ex:dentist:eit} is the effect of moving from location $L$ to $L'$:
it uses the next operator restricted by the duration between locations.
Rules~\eqref{ex:dentist:nin} and~\eqref{ex:dentist:ten} address inertia.

 \section{Plain Metric Logic Programs}
\label{sec:mlp}

Metric logic programs,
defined as metric theories composed of implications akin to logic programming rules,
derive their semantics from their metric equilibrium models.
Syntactically,
a \emph{plain metric logic program}
over \alphabet\
is a set of \emph{plain metric rules} of form
\[
  \melrule{\head\leftarrow\body}
  \quad\text{ or }\quad
  \melrule{\metricI{\Next} a\leftarrow\body}
\]
for
$\head\eqdef a_1\vee  \dots\vee   a_m\vee  \neg a_{m+1}\vee  \dots\vee  \neg a_n$,
$\body\eqdef b_1\wedge\dots\wedge b_o\wedge\neg b_{o+1}\wedge\dots\wedge\neg b_p$, and
$0\leq m\leq n\leq o\leq p$
with
$a,a_i\in\alphabet$
for
$1\leq i\leq n$,
and
$b_i\in\alphabet\cup\{\initially,\finally\}$
for
$1\leq i\leq p$.

While our considered language fragment excludes global temporal operators and disjunctive metric heads,
it effectively captures state transitions and allows for imposing timing constraints upon them.
A comprehensive treatment is provided in Section~\ref{sec:gmlp}.

Our two alternative translations share a common structure, each divided into three distinct parts.
The first part maps a plain metric program into a regular one.
This part captures the state transitions along an \HT-trace specified by the metric program,
and is common to both translations.
The second and third parts capture the timing function along with
its interplay with the interval constraints imposed by the metric program, respectively.
The two variants of these parts are described in Section~\ref{sec:mlp:ht} and~\ref{sec:mlp:htc} below.

The first part of our translation takes a plain metric program over \alphabet\
and yields rules over
\(
\alphabets\eqdef\bigcup\nolimits_{\kvar\in\mathbb{N}} \alphabetk
\)
for
\(
\alphabetk\eqdef\{a_\kvar \mid a \in \alphabet \}
\)
and $\kvar \in \mathbb{N}$.
Atoms of form $a_k$ in $\alphabets$ represent the values taken by variable $a\in\alphabet$
at different points $k$ along a trace of length $\lambda$.

We begin by inductively defining the translation $\tk{r}$ of a plain metric rule $\alwaysF{r}$ at $k$
for $0 \leq \kvar < \lambda$ and $\lambda\in\mathbb{N}$
as follows:\footnote{Note that $\top$, $\neg$ and $\finally$ are defined operators.}
\begin{align*}
\tk{\bot}             &\eqdef \bot\\
  \tk{a}                &\eqdef a_\kvar \text{ if } a \in \alphabet \\
  \tk{\metricI{\next}a} &\eqdef
                          \begin{cases}
                            \bot    & \text{if } \kvar = \lambda-1 \\
                            \trans{a}{\kvar+1}        & \text{otherwise}
                          \end{cases} \\
  \tk{\initially}       &\eqdef
                          \begin{cases}
                            \top        & \text{if } \kvar = 0 \\
                            \bot        & \text{otherwise}
                          \end{cases}  \\
\tk{\varphi_1\wedge\varphi_2}&\eqdef \tk{\varphi_1}\wedge\tk{\varphi_2}  \\
  \tk{\varphi_1\vee\varphi_2}  &\eqdef \tk{\varphi_1}\vee\tk{\varphi_2}    \\
  \tk{\head \leftarrow\body}   &\eqdef\{\tk{\varphi_1}\leftarrow\tk{\varphi_2}\}
\end{align*}
Note that we drop the always operator \alwaysF{} preceding rules in the translation;
it is captured by producing a rule instance for every $0 \leq \kvar < \lambda$.
Accordingly, for a plain metric program \program\ over \alphabet\ and $\lambda\in\mathbb{N}$, we define
\begin{align*}
  \Pi_\lambda(\program)
  &=
  \textstyle\bigcup_{\alwaysF{r} \in P,\, 0 \leq \kvar < \lambda} \tk{r}
  \quad\text{ over }
  \alphabets.
\end{align*}

For illustration,
consider the instance of \eqref{ex:dentist:eit} for moving from $\toffice$ to $\thome$
\begin{align}\label{ex:dentist:eit:translated}
  \alwaysF(\Next_{\intervco{15}{16}}{\tat(\tram,\thome)}&\leftarrow\tat(\tram,\toffice) \wedge \tgo(\tram,\thome)) \ .
\end{align}
Our translation ignores \alwaysF\ at first and yields:
\begin{align}
                      \bot&\leftarrow\tat(\tram,\toffice)_k \wedge \tgo(\tram,\thome)_k\quad\text{ for }\kvar = \lambda-1\label{ex:dentist:eit:translated:one}\\
  \tat(\tram,\thome)_{k+1}&\leftarrow\tat(\tram,\toffice)_k \wedge \tgo(\tram,\thome)_k\quad\text{ otherwise}\label{ex:dentist:eit:translated:two}
\end{align}
When assembling $\Pi_\lambda(\program)$ for $\lambda=100$ and $P$ being the rules in~\eqref{ex:dentist:one}
to~\eqref{ex:dentist:ten}, we account for \alwaysF\ by adding 99 instances of the rule
in~\eqref{ex:dentist:eit:translated:two}
and a single instance of~\eqref{ex:dentist:eit:translated:one}.

This first part of our translation follows Kamp's translation~\citep{kamp68a} for Linear temporal logic.
Of particular interest is the translation of $\metricI{\Next} a\leftarrow\body$.
The case analysis accounts for the actual state transition of the next operator,
which is infeasible at the end of the trace.
Thus, we either derive $a_{k+1}$ or a contradiction.
The metric aspect is captured by the translations in Section~\ref{sec:mlp:ht} and~\ref{sec:mlp:htc}.
Whenever all intervals in a plain metric programs $P$ are of form \intervco{0}{\omega},
we get a one-to-one correspondence between
\MHT-traces of length $\lambda$ of $P$ with an arbitrary yet fixed timing function and
\HT-interpretations of $\Pi_\lambda(\program)$.
\comment{REV: This links to our temporal bases results}
Finally, it is worth noting that the size of the resulting program $\Pi_\lambda(\program)$ grows with $\lambda$.
 \subsection{Translation of Plain Metric Logic Programs to \HT}
\label{sec:mlp:ht}

We begin by formalizing timing functions $\tau$ via Boolean atoms in
\(
\alphabetT\eqdef\{\timet_{\kvar,\tmvar} \mid \kvar,\tmvar\in\mathbb{N} \}.
\)
An atom like $\timet_{\kvar,\tmvar}$ represents that $\tau(\kvar)=\tmvar$.
To obtain finite theories,
we furthermore impose an upper bound $\tmflimit\in\mathbb{N}$ on the range of $\tau$.
Hence, together with the trace length $\lambda$,
our formalization $\Delta_{\lambda,\tmflimit}$ only captures
timing functions $\tau$ satisfying $\tmf(\lambda-1)\leq\tmflimit$.

Given $\lambda,\tmflimit \in \mathbb{N}$,
we let
\begin{align}\label{def:ht:delta}
  \Delta_{\lambda,\tmflimit} &\eqdef
     \{ \timet_{0,0} \} \cup
     \{ \textstyle\bigvee_{d<d'\leq\tmflimit}\timet_{\kvar+1,\tmvar'}\leftarrow\timet_{\kvar,\tmvar} \mid
            0 \leq \kvar < \lambda-1, 0 \leq \tmvar \leq \tmflimit\}
\end{align}
Starting from $\tau(0)=0$, represented by $\timet_{0,0}$,
the rule in~\eqref{def:ht:delta} assigns strictly increasing time points to consecutive states,
reflecting that $\tmf(\kvar)<\tmf(\kvar+1)$ for $0\leq\kvar<\lambda-1$.

The last part of our formalization accounts for the interplay of the timing function with the interval conditions imposed in the program.
Given $\lambda,\tmflimit \in \mathbb{N}$
and
a plain metric program $\program$,
we let
\begin{align}
  \Psi_{\lambda,\tmflimit}(\program) \eqdef
  &\ \{ \bot \leftarrow \tk{\body} \wedge \timet_{\kvar,\tmvar} \wedge \timet_{\kvar+1,\tmvar'} \mid
    0 \leq \kvar < \lambda-1, 0\leq d<d'\leq\tmflimit,
    \label{def:ht:psi:one}\\&\qquad\qquad\nonumber
  \phantom{n\in\mathbb{N}, }\tmvar'-\tmvar < m, \melrule{\metric{\next}{m}{n} a \leftarrow \body} \in \program \}\;\cup \\
  &\ \{ \bot \leftarrow \tk{\body} \wedge \timet_{\kvar,\tmvar} \wedge \timet_{\kvar+1,\tmvar'} \mid
    0 \leq \kvar < \lambda-1, 0\leq d<d'\leq\tmflimit,
    \label{def:ht:psi:two}\\&\qquad\qquad\nonumber
           n\in\mathbb{N},  \tmvar'-\tmvar \geq n, \melrule{\metric{\next}{m}{n} a \leftarrow \body} \in \program \}
\end{align}
The integrity constraints ensure that for every plain metric rule $\melrule{\metric{\next}{m}{n} a \leftarrow \body}$
the duration between the $k$th and $(k+1)$st state in a trace
falls within interval $\intervco{m}{n}$.
With $\tmf(\kvar) =d$ and $\tmf(\kvar+1) =d'$,
this amounts to checking whether
\(
\tmvar+m \leq \tmvar' < \tmvar+n
\),
if $n$ is finite;
otherwise, the verification of the upper bound in~\eqref{def:ht:psi:two} is dropped for $n=\omega$.

Note that the size of both $\Delta_{\lambda,\tmflimit}$ and $\Psi_{\lambda,\tmflimit}(\program)$
is proportional to $\mathcal{O}(\lambda\cdot\tmflimit^2)$.
Hence, long traces and even more severely fine-grained timing functions lead to a significant blow up
when translating plain metric programs into regular ones with the above formalization.

For the rule in~\eqref{ex:dentist:eit:translated},
we get
\begin{align}
  \bot&\leftarrow\tat(\tram,\toffice)_k\wedge\tgo(\tram,\thome)_k\wedge\timet_{\kvar,\tmvar}\wedge\timet_{\kvar+1,\tmvar'}
        \quad\text{ for } d'-d < 15
  \\
  \bot&\leftarrow\tat(\tram,\toffice)_k\wedge\tgo(\tram,\thome)_k\wedge\timet_{\kvar,\tmvar}\wedge\timet_{\kvar+1,\tmvar'}
        \quad\text{ for } d'-d \geq 16
\end{align}
and
$0\leq\kvar <\lambda-1$,
$0\leq\tmvar<\tmvar'\leq\tmflimit$.
For $\lambda=100$ and $\tmflimit=1000$,
this then amounts to roughly $10^8$ instances for each of the above constraints.

In what follows,
we characterize the effect of our formalization in terms of \HT-models,
and ultimately show the completeness and correctness of our translation.
\begin{definition}\label{def:ht:timed}
  An \HT\ interpretation \handt\
  over \alphabet\ with $\mathcal{T}\subseteq\alphabet$,
  is \emph{timed} wrt $\lambda\in\mathbb{N}$,

  if
  there is a timing function \tmf\ wrt $\lambda$
  such that
for all $0\leq\kvar<\lambda$,
  $\tmvar \in \mathbb{N}$,
  we have
  \begin{align*}
    \tuple{H,T}\models\timet_{\kvar,\tmvar}\text{ iff }\tmf(\kvar)=\tmvar
    \quad\text{ and }\quad
    \tuple{T,T}\models\timet_{\kvar,\tmvar}\text{ iff }\tmf(\kvar)=\tmvar
  \end{align*}
\end{definition}
We also call $\tmf$ the timing function induced by \handt.
\begin{proposition}\label{prop:ht:delta:timed}
  Let
  $\lambda,\tmflimit\in\mathbb{N}$.

  If $\tandt$ is an equilibrium model of $\Delta_{\lambda,\tmflimit}$
  then $\tandt$ is timed wrt $\lambda$.
\end{proposition}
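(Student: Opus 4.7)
The theory $\Delta_{\lambda,\tmflimit}$ contains no negation, hence $\tandt$ is an equilibrium model if and only if $T$ is a minimal classical model of $\Delta_{\lambda,\tmflimit}$. The plan is to extract a timing function $\tmf$ from such a minimal model by proving, by induction on $k$ for $0\le k<\lambda$, that the restriction of $T$ to $\alphabetT$ at level $k$ is a singleton $\{\timet_{k,d_k}\}$ with $d_0=0$ and $d_k<d_{k+1}\le\tmflimit$. Setting $\tmf(k):=d_k$ will then supply the timing function required by Definition~\ref{def:ht:timed}.

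For the base case, the fact $\timet_{0,0}$ forces $\timet_{0,0}\in T$. If $\timet_{0,d}\in T$ for some $d>0$, then $T_0:=T\setminus\{\timet_{0,d}\}$ still satisfies $\Delta_{\lambda,\tmflimit}$: the fact is preserved, the rule with body $\timet_{0,d}$ becomes trivially true, and no level-$1$ atom is affected, so the rule firing from $\timet_{0,0}$ remains covered. This contradicts minimality of $T$.

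For the inductive step, assume $T$ contains the unique level-$k$ atom $\timet_{k,d_k}$. First observe that $d_k<\tmflimit$ whenever $k<\lambda-1$, since otherwise the rule in~\eqref{def:ht:delta} at $(k,\tmflimit)$ has an empty head disjunction and hence reduces to $\bot\leftarrow\timet_{k,\tmflimit}$, refuting $T$. The same rule instantiated at $(k,d_k)$ then forces some $\timet_{k+1,d'}\in T$ with $d_k<d'\le\tmflimit$. Uniqueness splits into two sub-claims, each proved by deleting a spurious atom and checking that the reduct still satisfies $\Delta_{\lambda,\tmflimit}$: (i) no $\timet_{k+1,d}$ with $d\le d_k$ is in $T$, because removing it only falsifies its own rule body while the rule at $(k,d_k)$ keeps its witness $d'>d_k$ in the reduct; (ii) $T$ contains at most one $\timet_{k+1,d'}$ with $d'>d_k$, since if there were two we could delete the larger, relying on the smaller to witness the rule at $(k,d_k)$ and noting that the deleted atom's own rule body becomes vacuously satisfied without disturbing level $k+2$. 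Each alternative contradicts minimality, so $T\cap\{\timet_{k+1,d}\mid d\in\mathbb{N}\}=\{\timet_{k+1,d_{k+1}}\}$ with $d_{k+1}>d_k$.

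Defining $\tmf(k):=d_k$ gives $\tmf(0)=0$ and $\tmf(k)<\tmf(k+1)$, so $\tmf$ is a timing function wrt $\lambda$. The equivalences demanded by Definition~\ref{def:ht:timed} follow: for $d\le\tmflimit$ we have $\timet_{k,d}\in T$ iff $d=d_k$ by construction, and for $d>\tmflimit$ both sides are false, since such atoms never appear in $\Delta_{\lambda,\tmflimit}$ and are therefore absent from the minimal model $T$, while $\tmf(k)\le\tmflimit<d$. The main obstacle is the bookkeeping in the two uniqueness sub-claims: one must verify that deleting a candidate atom does not cascade into failures elsewhere. This hinges on the very restricted shape of $\Delta_{\lambda,\tmflimit}$---each $\timet_{k,d}$ sits in the body of exactly one rule and within at most one head disjunction---so a local removal stays local and minimality can be exploited level by level.
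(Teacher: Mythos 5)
Your proof is correct and takes essentially the same route as the paper: both exploit minimality to show, level by level, that exactly one atom $\timet_{k,d_k}$ is true, with the rule bodies forcing existence of a strictly larger value at the next level; the paper merely phrases the deletions as \HT-interpretations $\tuple{T\setminus\{x\},T}$ (rather than passing to minimal classical models, which is legitimate here since $\Delta_{\lambda,\tmflimit}$ is negation-free) and establishes strict monotonicity in a separate supportedness step instead of inside the induction. One side remark of yours is inaccurate: an atom $\timet_{k+1,d}$ occurs in the head disjunction of \emph{every} rule whose body is $\timet_{k,e}$ with $e<d$, not in at most one head; your deletions remain safe only because, by the induction hypothesis, among all these rules the sole one with a true body is that for $\timet_{k,d_k}$, and it either does not mention the deleted atom (your case (i)) or keeps another witness (your case (ii))—so the argument stands, but that is the fact doing the work, not the claimed uniqueness of head occurrences.
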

\begin{proposition}\label{prop:ht:timed:delta}
  Let $\handt$ be an \HT\ interpretation
  and
  $\lambda\in\mathbb{N}$.

  If $\handt$ is timed wrt $\lambda$ and induces $\tmf$,
  then $\handt$ is an \HT-model of $\Delta_{\lambda,\tmflimit}$
  for $\tmflimit=\tmf(\lambda-1)$.
\end{proposition}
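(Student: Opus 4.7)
The plan is to show directly that $\handt$ satisfies every formula in $\Delta_{\lambda,\tmflimit}$ when $\tmflimit = \tmf(\lambda-1)$, by splitting $\Delta_{\lambda,\tmflimit}$ into its two components and invoking the timed condition from Definition~\ref{def:ht:timed} together with the defining properties of a timing function.

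For the unit clause $\timet_{0,0}$, I would appeal directly to the fact that $\tmf(0)=0$ holds by definition of a (strict) timing function. The timed condition then gives $\tuple{H,T}\models\timet_{0,0}$, so this formula is satisfied.

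For each rule $\bigvee_{d<d'\leq\tmflimit}\timet_{\kvar+1,\tmvar'}\leftarrow\timet_{\kvar,\tmvar}$ with $0 \leq \kvar < \lambda-1$ and $0 \leq \tmvar \leq \tmflimit$, I would apply the HT semantics of implication by checking the two cases $H'\in\{H,T\}$ simultaneously. The key observation is that, by the timed condition, both $\tuple{H,T}$ and $\tuple{T,T}$ satisfy $\timet_{\kvar,\tmvar}$ if and only if $\tmvar=\tmf(\kvar)$; hence the body either holds in both $H$ and $T$ or fails in both. If $\tmvar\neq\tmf(\kvar)$, the implication is satisfied vacuously. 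Otherwise, I would exhibit the witness $\tmvar'\eqdef\tmf(\kvar+1)$ for the head disjunction: since $\tmf$ is strict, $\tmvar<\tmvar'$; and since $\kvar+1\leq\lambda-1$ together with $\tmf$ being strictly increasing, $\tmvar'\leq\tmf(\lambda-1)=\tmflimit$. The timed condition then yields $\tuple{H,T}\models\timet_{\kvar+1,\tmvar'}$ and $\tuple{T,T}\models\timet_{\kvar+1,\tmvar'}$, verifying the head.

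The proof is essentially bookkeeping; the only subtle point is ensuring the witness $\tmvar'=\tmf(\kvar+1)$ stays within the bound $\tmflimit$, which is exactly why the statement fixes $\tmflimit=\tmf(\lambda-1)$. Everything else reduces to unfolding the HT implication clause and reading off both directions of the biconditional in Definition~\ref{def:ht:timed}.
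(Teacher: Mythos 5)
Your proposal is correct and follows essentially the same route as the paper's proof: the same split into the fact $\timet_{0,0}$ and the disjunctive rules, the same case distinction on whether $\tmvar=\tmf(\kvar)$ (with vacuous satisfaction otherwise), and the same witness $\tmvar'=\tmf(\kvar+1)$ justified to lie within $\tmflimit=\tmf(\lambda-1)$ by strict monotonicity of $\tmf$. No gaps to report.
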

Clearly, the last proposition extends to equilibrium models.

Given a timed \HT-trace $\M = (\tuple{H_\kvar,T_\kvar}_{\rangeco{\kvar}{0}{\lambda}},\tmf)$ of length $\lambda$ over \alphabet,
we define $\mhtToHt{\M}$ as \HT\ interpretation
\(
\tuple{H\cup X ,T\cup X }
\)
where
\begin{align*}
  H &=\{a_\kvar\in\alphabetk \mid 0 \leq \kvar < \lambda, a\in H_\kvar \} &
  T &=\{a_\kvar\in\alphabetk \mid 0 \leq \kvar < \lambda, a\in T_\kvar \} \\
  X &=\{\timet_{\kvar, \tmvar} \mid \tmf(\kvar)=\tmvar, 0\leq\kvar<\lambda, \tmvar\in\mathbb{N}\}
\end{align*}
Note that $\mhtToHt{\M}$ is an \HT\ interpretation timed wrt $\lambda$.

Conversely,
given an \HT\ interpretation $\tuple{H,T}$ timed wrt $\lambda$ over $\alphabets\cup\alphabetT$
and its induced timing function $\tmf$,
we define $\htToMht{\tuple{H,T}}$ as
the timed \HT-trace
\begin{align*}
  (\tuple{ \{a\in\alphabet\mid a_\kvar\in H\}, \{a\in\alphabet\mid a_\kvar\in T\}}_{\rangeco{\kvar}{0}{\lambda}}, \tmf)
\end{align*}
In fact, both functions $\htToMhtf$ and $\mhtToHtf$ are invertibles,
and we get a one-to-one correspondence between \HT\ interpretations timed wrt $\lambda$ and timed \HT-traces of length $\lambda$.

Finally, we have the following completeness and correctness result.
\begin{theorem}[Completeness]\label{thm:mlp:ht:completeness}
  Let \program\ be a plain metric logic program
  and
  $\M=(\tuple{\Ttrace,\Ttrace}, \tmf)$ a total timed \HT-trace of length $\lambda$.

  If
  $\M$ is a metric equilibrium model of $\program$,
  then
  $\mhtToHt{\M}$ is an equilibrium model of $\Pi_\lambda(\program)\cup\Delta_{\lambda,\tmflimit}\cup\Psi_{\lambda,\tmflimit}(\program)$
  with $\tmflimit=\tmf(\lambda-1)$.
\end{theorem}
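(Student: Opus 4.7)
The plan is to prove the two components of the equilibrium condition in turn: first, that $\mhtToHt{\M}$ is an \HT-model of $\Pi_\lambda(P)\cup\Delta_{\lambda,\tmflimit}\cup\Psi_{\lambda,\tmflimit}(P)$, and second, that no strict \HT-submodel exists. Throughout, a routine auxiliary induction gives the following transfer fact: for any body or head formula $\varphi$ built from atoms in $\alphabet\cup\{\initially,\finally\}$, and any $0\leq k<\lambda$, $\M,k\models\varphi$ iff $\mhtToHt{\M}\models\tk{\varphi}$; an analogous statement relates arbitrary \HT-interpretations timed wrt $\lambda$ to their associated timed \HT-traces via $\htToMhtf$. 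This uses only that $\mhtToHt{\M}$ restricted to $\alphabets$ reproduces $\Ttrace$ and that the boundary cases of $\initially$ and $\finally$ are correctly encoded by $\tk{\cdot}$.

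For the model part, $\mhtToHt{\M}$ is total by construction. Proposition~\ref{prop:ht:timed:delta} gives satisfaction of $\Delta_{\lambda,\tmflimit}$ for $\tmflimit=\tmf(\lambda-1)$. For each non-metric rule $\alwaysF(\head\leftarrow\body)\in P$ and each $0\leq k<\lambda$, $\M,k\models\head\leftarrow\body$ translates through the auxiliary fact to $\mhtToHt{\M}\models\tk{\head}\leftarrow\tk{\body}$. For each metric rule $\alwaysF(\metric{\next}{m}{n}a\leftarrow\body)$ at position $k$, either the body fails (making the corresponding rule in $\Pi_\lambda(P)$ and the matching constraints in $\Psi_{\lambda,\tmflimit}(P)$ vacuous) or $\M,k\models\metric{\next}{m}{n}a$ forces $k+1<\lambda$, $a\in T_{k+1}$, and $\tmf(k+1)-\tmf(k)\in\intervco{m}{n}$; the first two facts witness the rule in $\Pi_\lambda(P)$, while the third makes the bodies of the integrity constraints of $\Psi_{\lambda,\tmflimit}(P)$ unsatisfiable in $\mhtToHt{\M}$, since the only timing pair realised at stages $k,k+1$ is $\timet_{k,\tmf(k)}\wedge\timet_{k+1,\tmf(k+1)}$.

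For minimality, suppose an \HT-model $\tuple{H',T\cup X}$ of the combined program with $H'\subsetneq T\cup X$. The crucial first step is to show $X\subseteq H'$: $\timet_{0,0}$ is a fact of $\Delta_{\lambda,\tmflimit}$, and a straightforward induction on $k$ using the disjunctive rules of $\Delta_{\lambda,\tmflimit}$ together with the containment $H'\subseteq T\cup X$ pins the only admissible disjunct at stage $k+1$ to $\timet_{k+1,\tmf(k+1)}$. Writing $H'=H''\cup X$ we obtain $H''\subsetneq T$, so $\htToMht{\tuple{H',T\cup X}}=(\tuple{\Htrace',\Ttrace},\tmf)$ with $\Htrace'<\Ttrace$. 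The non-metric rules of $P$ transfer to this smaller trace by the auxiliary fact; for a metric rule $\alwaysF(\metric{\next}{m}{n}a\leftarrow\body)$ whose body holds at $k$ in the smaller trace, $\Pi_\lambda(P)$ forces $a_{k+1}\in H'$ and excludes $k=\lambda-1$, while $X\subseteq H'$ together with $\Psi_{\lambda,\tmflimit}(P)$ forces $\tmf(k+1)-\tmf(k)\in\intervco{m}{n}$, yielding satisfaction of $\metric{\next}{m}{n}a$ in the smaller trace. Hence $\htToMht{\tuple{H',T\cup X}}$ is an \MHT-model of $P$ strictly below $\M$, contradicting metric equilibrium. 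The most delicate point is precisely the step $X\subseteq H'$: the disjunctive form of $\Delta_{\lambda,\tmflimit}$ would in principle permit ``switching'' time points, and only the containment $H'\subseteq T\cup X$ closes this loophole; making this induction explicit, together with the bookkeeping at the $k=\lambda-1$ boundary case of the translation, is the main hurdle of the proof.
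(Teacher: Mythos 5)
Your proposal is correct and follows essentially the same route as the paper: first establish modelhood via the transfer between timed \HT-interpretations and timed traces (the paper's Lemma~\ref{lem:completeness-ht}, built on Lemmas~\ref{lem:boundedk:mht:ht}/\ref{lem:boundedk:ht:mht} and Proposition~\ref{prop:ht:timed:delta}), then refute a strict submodel by showing the timing atoms cannot be dropped (your $X\subseteq H'$ induction is the content of Proposition~\ref{prop:ht:timed:delta:equilibrium}) and mapping the resulting timed submodel back to a strictly smaller \MHT-model of $\program$ (the paper's Lemma~\ref{lem:correctness-ht}), contradicting metric equilibrium. The only difference is that you inline these two auxiliary results instead of invoking them as separate lemmas.
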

\begin{theorem}[Correctness]\label{thm:mlp:ht:correctness}
  Let
  \program\ be a plain metric logic program,
  and
  $\lambda,\tmflimit \in \mathbb{N}$.

  If
  $\tandt$ is an equilibrium model of $\Pi_\lambda(\program)\cup\Delta_{\lambda,\tmflimit}\cup\Psi_{\lambda,\tmflimit}(\program)$,
  then
  $\htToMht{\tandt}$ is a metric equilibrium model of $\program$.
\end{theorem}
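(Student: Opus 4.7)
The plan is to argue by extracting a timing function, constructing a candidate metric equilibrium model, and then transferring minimality across the bijection between timed \HT-traces and timed \HT-interpretations.

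First, I would establish that $\tandt$ induces a timing function. Since $\tandt$ is a model of $\Delta_{\lambda,\nu}$, starting from the fact $\timet_{0,0}$ and propagating through the disjunctive rules in~\eqref{def:ht:delta} forces at least one $\timet_{k,d}$ to hold in $T$ for each $\rangeco{k}{0}{\lambda}$. Because $\tandt$ is an equilibrium model of the whole union, and atoms in $\alphabetT$ only occur in the heads of rules of $\Delta_{\lambda,\nu}$, a $T$-minimality argument analogous to Proposition~\ref{prop:ht:delta:timed} forces \emph{exactly one} $\timet_{k,d_k}$ per $k$ with $d_0=0$ and $d_k<d_{k+1}\leq\nu$. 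Hence $\tandt$ is timed wrt $\lambda$ and induces a timing function $\tmf$ with $\tmf(\lambda-1)\leq\nu$. Set $\M\eqdef\htToMht{\tandt}=(\tuple{\Ttrace,\Ttrace},\tmf)$.

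Second, I would verify that $\M$ is an \MHT-model of $P$. For each $\alwaysF{r}\in P$ and each $\rangeco{k}{0}{\lambda}$, I show $\M,k\models r$ by a straightforward induction on the subformulas of $r$, showing that for every subformula $\varphi$, $\M,k\models\varphi$ iff $\tandt\models\tk{\varphi}$; the base cases for atoms, $\initially$ and $\metricI{\next}a$ follow directly from the definition of $\tk{\cdot}$, the inductive cases for $\wedge$ and $\vee$ are routine, and the rule case reduces to the implication clause. For a metric rule $\alwaysF(\metric{\next}{m}{n}a\leftarrow\body)$, if $\M,k\models\body$ and $k=\lambda-1$, then $\tandt\models\tk{\body}$ and the translated rule $\bot\leftarrow\tk{\body}$ contradicts $\tandt$ being a model of $\Pi_\lambda(P)$; if $k<\lambda-1$, then $\trans{a}{k+1}\in T$, so $a\in T_{k+1}$, and the integrity constraints \eqref{def:ht:psi:one}--\eqref{def:ht:psi:two} in $\Psi_{\lambda,\nu}(P)$, combined with the unique time atoms $\timet_{k,\tmf(k)}$ and $\timet_{k+1,\tmf(k+1)}$ in $T$, force $\tmf(k+1)-\tmf(k)\in\intervco{m}{n}$, which is precisely the semantic condition in Clause~\ref{def:mhtsat:next}.

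Third, I would establish minimality. Suppose for contradiction there is an \MHT-model $\M'=(\tuple{\Htrace,\Ttrace},\tmf)$ of $P$ with $\Htrace<\Ttrace$. Applying $\mhtToHtf$ yields an \HT-interpretation $\mhtToHt{\M'}=\tuple{H'\cup X,T\cup X}$ over $\alphabets\cup\alphabetT$ with $H'\cup X\subsetneq T\cup X$ (since the time component $X$ is identical on both sides thanks to the shared $\tmf$). By the analogue of the inductive equivalence above applied to $\M'$, this interpretation is a model of $\Pi_\lambda(P)$; it trivially satisfies $\Delta_{\lambda,\nu}$ because $X$ is a timing function with $\tmf(\lambda-1)\leq\nu$; and it satisfies the constraints of $\Psi_{\lambda,\nu}(P)$ since the timing relations still hold in $\M'$. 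This contradicts $\tandt$ being an equilibrium model of the union.

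The main obstacle will be the inductive lemma in step two: carefully stating and proving that $\M,k\models\varphi$ iff $\tandt\models\tk{\varphi}$ (and its variant relative to $(\tuple{\Htrace,\Ttrace},\tmf)$ and $\mhtToHt{(\cdot)}$), particularly handling the implication clause, where the semantics of $\HT$ and \MHT\ both require checking against the total counterpart, and ensuring that the interval-based clauses for $\metricI{\next}$ align correctly with the interaction between $\Pi_\lambda(P)$ (which enforces state transitions) and $\Psi_{\lambda,\nu}(P)$ (which enforces timing). A careful statement of this lemma, ideally factored out as an auxiliary proposition, is what makes both directions (completeness and correctness) fall out cleanly.
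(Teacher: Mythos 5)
Your proposal follows essentially the same route as the paper: it extracts the timing function from the equilibrium of $\Delta_{\lambda,\tmflimit}$ (the paper's Lemma~\ref{lem:splitting:plain:ht} plus Proposition~\ref{prop:ht:delta:timed}), transfers satisfaction through the translation and recovers the interval conditions from $\Psi_{\lambda,\tmflimit}(\program)$ (Lemmas~\ref{lem:boundedk:ht:mht} and~\ref{lem:correctness-ht}), and refutes non-minimality by mapping a hypothetical smaller metric model back through $\mhtToHtf$ (Lemma~\ref{lem:completeness-ht}). One small caveat: the blanket equivalence ``$\M,k\models\varphi$ iff $\tandt\models\tk{\varphi}$'' is false for $\metricI{\next}a$ with a proper interval, since $\tk{\cdot}$ drops the interval; the paper therefore states its transfer lemmas only for the relaxed interval $\intervco{0}{\omega}$ and treats intervals separately via $\Psi_{\lambda,\tmflimit}(\program)$ --- which is exactly what your separate handling of the metric rules amounts to, so nothing essential is lost.
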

 \subsection{Translation of Plain Metric Logic Programs to \HTC}
\label{sec:mlp:htc}

We now present an alternative, refined formalization of the second and third parts,
utilizing integer variables and difference constraints to capture the timing function more effectively.
To this end,
we use the logic of \HTC\ to combine the Boolean nature of ASP with constraints on integer variables.

Given base alphabet $\alphabet$ and $\lambda\in\mathbb{N}$,
we consider the \HTC\ signature $\tuple{\mathcal{X},\mathcal{D},\mathcal{C}}$
where\footnote{In \HTC~\citep{cakaossc16a},
  Boolean variables are already captured by truth values \true\ and \undefined\ (rather than \false[alse]).}
\begin{align*}
  \mathcal{X} &= \alphabets \cup \{ \timet_k \mid 0\leq k<\lambda\} \\
  \mathcal{D} &= \{\true\}\cup\mathbb{N}\\
  \mathcal{C} &= \{a=\true\mid a\in\alphabets\}
                 \cup \{\x=\tmvar \mid \x    \in \mathcal{X}\setminus\alphabets,\tmvar\in \mathbb{N} \}\;\cup\\
              &\quad\;\{\x-\y\leq \tmvar \mid \x,\y \in \mathcal{X}\setminus\alphabets\in\mathbb{N}, \tmvar\in \mathbb{Z} \}
\end{align*}
Rather than using Boolean variables,
this signature represents timing functions $\tau$ directly by integer variables $t_k$,
capturing that $\tau(k)=t_k$ for $0\leq k<\lambda$.
This is enforced by the integer constraints in $\mathcal{C}$,
whose meaning is defined by the following denotations:
\begin{align*}
  \den{a=\true}          &= \{v\in\mathbb{V}\mid v(a)=\true\} \qquad\qquad\qquad\text{ for all }a\in \alphabets\\
  \den{\x=\tmvar}        &= \{v\in\mathbb{V}\mid v(\x),       \tmvar\in\mathbb{N},\ v(\x)= \tmvar\}\\
  \den{\x-\y\leq \tmvar} &= \{v\in\mathbb{V}\mid v(\x),v(\y)\in\mathbb{N},\tmvar\in\mathbb{Z},\ v(\x)-v(\y)\leq \tmvar\}
\end{align*}
When dealing with Boolean variables, we simplify notation by representing $a=\true$ as $a$.

This leads us to the the following counterpart of $\Delta_{\lambda,\tmflimit}$ in~\eqref{def:ht:delta}.
Given $\lambda \in \mathbb{N}$, we define
\begin{align}\label{def:htc:delta}
  \Delta^{c}_{\lambda} &= \{ \timet_{0} = 0 \} \cup \{ \timet_{\kvar}-\timet_{\kvar{+}1} \leq -1 \mid 0 \leq \kvar < \lambda-1 \}
\end{align}
Starting from $\timet_0=0$,
the difference constraints in~\eqref{def:htc:delta} enforce that $\timet_{\kvar}<\timet_{\kvar{+}1}$
reflecting that $\tau(0)=0$ and $\tmf(\kvar)<\tmf(\kvar+1)$ for $0\leq\kvar<\lambda-1$.
Moreover, $\Delta^{c}_{\lambda}$ is unbound and thus imposes no restriction on timing functions.
And no variable $\timet_{\kvar}$ is ever undefined:
\begin{proposition}\label{pro:htc:defined}
Let $\htcinterp$ be an \HTC\ interpretation and $\lambda \in \mathbb{N}$

If $\htcinterp$ is an \HTC-model of $\Delta^{c}_{\lambda}$, then $\Vh(\timet_{\kvar})\in\mathbb{N}$ for all $0\leq\kvar<\lambda$.
\end{proposition}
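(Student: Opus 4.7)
The plan is a straightforward induction on $\kvar$ using the key observation that the denotations of the constraint atoms in $\mathcal{C}$ are \emph{strict}: by the definitions of $\den{\x=\tmvar}$ and $\den{\x-\y\leq\tmvar}$, membership of a valuation $v$ in either denotation explicitly requires the relevant variables to take values in $\mathbb{N}$ (not $\undefined$). Hence, whenever $\htcinterp$ satisfies such a constraint atom, the $h$-component must already assign natural-number values to the variables occurring in it.

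First I would handle the base case $\kvar=0$. Since $\timet_{0}=0 \in \Delta^{c}_{\lambda}$ and $\htcinterp$ is an \HTC-model of $\Delta^{c}_{\lambda}$, we have $\htcinterp \models \timet_{0}=0$. By satisfaction clause~\ref{sat:htc:atom}, this means $\Vh \in \den{\timet_{0}=0}$, and by the denotation of $\den{\x=\tmvar}$ this forces $\Vh(\timet_{0}) = 0 \in \mathbb{N}$.

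For the inductive step, assume $\Vh(\timet_{\kvar}) \in \mathbb{N}$ for some $0 \leq \kvar < \lambda - 1$. The difference constraint $\timet_{\kvar} - \timet_{\kvar+1} \leq -1$ belongs to $\Delta^{c}_{\lambda}$, so $\htcinterp \models \timet_{\kvar} - \timet_{\kvar+1} \leq -1$, which by clause~\ref{sat:htc:atom} gives $\Vh \in \den{\timet_{\kvar} - \timet_{\kvar+1} \leq -1}$. By the definition of this denotation, both $\Vh(\timet_{\kvar})$ and $\Vh(\timet_{\kvar+1})$ must lie in $\mathbb{N}$, yielding $\Vh(\timet_{\kvar+1}) \in \mathbb{N}$ as required.

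There is no genuine obstacle here; the only subtle point is to appeal explicitly to the strict denotations of the constraint atoms (as noted in the footnote before clause~\ref{sat:htc:atom}), which rule out $\undefined$ as a legal value whenever a variable occurs in a satisfied atom. The induction then terminates at $\kvar = \lambda - 1$, covering all $0 \leq \kvar < \lambda$.
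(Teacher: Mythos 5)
Your proof is correct and essentially matches the paper's, which performs a direct case analysis ($k=0$ via $\timet_0=0$, and $0<k<\lambda$ via $\timet_{k-1}-\timet_{k}\leq -1$) relying exactly on the strictness of the denotations. Your induction wrapper is harmless but superfluous: as you note, the inductive hypothesis is never used, since the denotation of each difference constraint already forces both of its variables into $\mathbb{N}$.
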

We also have $\Vt(\timet_{\kvar})\in\mathbb{N}$ by definition of \HTC\ interpretations, that is, since $\Vh\subseteq\Vt$.

Our variant of the third part of our translation re-expresses the ones in~(\ref{def:ht:psi:one}/\ref{def:ht:psi:two})
in terms of integer variables and difference constraints.
Given $\lambda \in \mathbb{N}$ and a plain metric logic program $\program$,
we define
\begin{align}
  \Psi^{c}_{\lambda}(\program) \eqdef
  &\ \{ \bot \leftarrow \tk{\body} \wedge \neg (\timet_{\kvar}-\timet_{\kvar+1} \leq -m) \mid
    0 \leq \kvar < \lambda-1,
    \label{def:htc:psi:one}\\&\qquad\qquad\nonumber
  \phantom{n\in\mathbb{N},} \melrule{\metric{\next}{m}{n} a \leftarrow \body} \in \program\} \; \cup \\
  &\ \{ \bot \leftarrow \tk{\body} \wedge \neg (\timet_{\kvar+1} - \timet_{\kvar} \leq n - 1) \mid
    0 \leq \kvar < \lambda-1,
    \label{def:htc:psi:two}\\&\qquad\qquad\nonumber
           n\in\mathbb{N},  \melrule{\metric{\next}{m}{n} a \leftarrow \body} \in \program\}
\end{align}

In fact, both $\Delta^{c}_{\lambda}$ and $\Psi^{c}_{\lambda}(\program)$ drop the upper bound on the range of a
timing function, as required in their Boolean counterparts.
Hence, their size is only proportional to $\mathcal{O}(\lambda)$,
and thus considerably smaller than their purely Boolean counterparts.

For the rule in~\eqref{ex:dentist:eit:translated},
we get
\begin{align}
  \bot&\leftarrow\tat(\tram,\toffice)_k\wedge\tgo(\tram,\thome)_k\wedge\neg (\timet_{\kvar}-\timet_{\kvar+1} \leq -15)
  \\
  \bot&\leftarrow\tat(\tram,\toffice)_k\wedge\tgo(\tram,\thome)_k\wedge\neg (\timet_{\kvar+1} - \timet_{\kvar} \leq 15)
\end{align}
for
$0 \leq \kvar < \lambda-1$.
Given $\lambda=100$, this only amounts to $10^2$ instances.

Mirroring our approach in Section~\ref{sec:mlp:ht},
we capture the meaning of $\Delta^{c}_{\lambda}$ using specialized \HTC-models.
This leads to the completeness and correctness of our translation.
\begin{definition}
  An \HTC\ interpretation \htcinterp\
  over $\tuple{\mathcal{X},\mathcal{D},\mathcal{C}}$
  is \emph{timed} wrt $\lambda$,
if
  there is a timing function \tmf\ wrt $\lambda$
  such that
  \(
  \Vh(\timet_{\kvar})=\tmf(\kvar)
  \)
  and
  \(
  \Vt(\timet_{\kvar})=\tmf(\kvar)
  \)
  for all $0\leq\kvar<\lambda$.
\end{definition}
As above, we call $\tmf$ the timing function induced by \htcinterp.

\begin{proposition}\label{prop:htc:delta:timed}
  Let $\htcinterp$ be an \HTC\ interpretation
  and
  $\lambda\in\mathbb{N}$.

  If $\htcinterp$ is an \HTC-model of $\Delta^{c}_{\lambda}$
  then $\htcinterp$ is timed wrt $\lambda$.
\end{proposition}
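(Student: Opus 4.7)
The plan is to construct an explicit timing function \tmf\ from the $h$-component of the given model and then verify the two defining properties of being timed.

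\textbf{Setting up the witness.}
First, I would appeal to Proposition~\ref{pro:htc:defined} to obtain $\Vh(\timet_{\kvar})\in\mathbb{N}$ for all $0\leq\kvar<\lambda$. Since $\htcinterp$ is an \HTC-interpretation, we have $\Vh\subseteq\Vt$, and because every $\Vh(\timet_{\kvar})$ is a proper domain value (not \undefined), the subset inclusion forces $\Vt(\timet_{\kvar})=\Vh(\timet_{\kvar})\in\mathbb{N}$ for each such $\kvar$. I would then define the candidate timing function by $\tmf(\kvar)\eqdef\Vh(\timet_{\kvar})$, so that both equalities $\Vh(\timet_{\kvar})=\tmf(\kvar)$ and $\Vt(\timet_{\kvar})=\tmf(\kvar)$ hold by construction; all that remains is to show that $\tmf$ is indeed a strict timing function wrt~$\lambda$.

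\textbf{Verifying the timing-function conditions.}
For $\tmf(0)=0$, I would unfold satisfaction of the constraint atom $\timet_0=0$ from $\Delta^{c}_{\lambda}$: since $\langle\Vh,\Vt\rangle\models\timet_0=0$ iff $\Vh\in\den{\timet_0=0}$, the denotation immediately yields $\Vh(\timet_0)=0$, i.e.\ $\tmf(0)=0$. For strict monotonicity, fix $0\leq\kvar<\lambda-1$ and use that $\htcinterp$ satisfies $\timet_\kvar-\timet_{\kvar+1}\leq -1$; by definition of $\den{\x-\y\leq \tmvar}$, this gives $\Vh(\timet_\kvar)-\Vh(\timet_{\kvar+1})\leq -1$, equivalently $\tmf(\kvar)<\tmf(\kvar+1)$.

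\textbf{Main obstacle.}
There is no deep obstacle: the result is essentially a routine unfolding of the \HTC-denotations of the atoms appearing in $\Delta^{c}_{\lambda}$, together with the fact that $\Vh\subseteq\Vt$ forces $\Vt$ to coincide with $\Vh$ on every variable that $\Vh$ leaves defined. The only point that requires a little care is ensuring that the $\Vt$-component also agrees with $\tmf$; this is precisely where Proposition~\ref{pro:htc:defined} (which guarantees definedness on the $h$-side) combines with the subset convention for valuations to do the work.
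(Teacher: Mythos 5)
Your proposal is correct and follows essentially the same route as the paper's proof: define $\tmf(\kvar)\eqdef\Vh(\timet_\kvar)$ and read off $\tmf(0)=0$ and strict monotonicity directly from the denotations of $\timet_0=0$ and $\timet_\kvar-\timet_{\kvar+1}\leq -1$. The only difference is that you explicitly discharge the $\Vt(\timet_\kvar)=\tmf(\kvar)$ requirement via Proposition~\ref{pro:htc:defined} and the subset convention $\Vh\subseteq\Vt$, a point the paper leaves implicit, so your version is if anything slightly more complete.
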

\begin{proposition}\label{prop:htc:timed:delta}
  Let $\htcinterp$ be an \HTC\ interpretation
  and
  $\lambda\in\mathbb{N}$.

  If $\htcinterp$ is timed wrt $\lambda$
  then $\htcinterp$ is an \HTC-model of $\Delta^{c}_{\lambda}$.
\end{proposition}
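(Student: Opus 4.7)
The plan is to verify directly that the timing function built into a timed \HTC\ interpretation makes each element of $\Delta^{c}_{\lambda}$ hold. Since $\Delta^{c}_{\lambda}$ contains only constraint atoms (no implications, disjunctions or nested formulas), satisfaction in $\htcinterp$ reduces, via Condition~\ref{sat:htc:atom} of the \HTC\ semantics, to membership of $\Vh$ in the corresponding denotation, and the fact that $\Vh\subseteq\Vt$ (together with $\Vh(\timet_k)=\Vt(\timet_k)$ here) takes care of the $\Vt$ side automatically.

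First I would unpack the hypothesis: by definition of timed, there exists a timing function $\tmf$ wrt $\lambda$ with $\Vh(\timet_\kvar)=\Vt(\timet_\kvar)=\tmf(\kvar)$ for every $0\leq\kvar<\lambda$, and both values lie in $\mathbb{N}$. Next I would dispatch the initial atom $\timet_0=0$: since $\tmf(0)=0$ by definition of a timing function, we get $\Vh(\timet_0)=0$, so $\Vh\in\den{\timet_0=0}$ and hence $\htcinterp\models\timet_0=0$.

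Then, for each $0\leq\kvar<\lambda-1$, I would use the strictness clause of the timing function, $\tmf(\kvar)<\tmf(\kvar+1)$, which on integers is equivalent to $\tmf(\kvar)-\tmf(\kvar+1)\leq -1$. Substituting $\Vh(\timet_\kvar)=\tmf(\kvar)$ and $\Vh(\timet_{\kvar+1})=\tmf(\kvar+1)$ yields $\Vh(\timet_\kvar)-\Vh(\timet_{\kvar+1})\leq -1$, so $\Vh\in\den{\timet_\kvar-\timet_{\kvar+1}\leq -1}$, and thus $\htcinterp$ satisfies each difference constraint in $\Delta^{c}_{\lambda}$.

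Since every formula in $\Delta^{c}_{\lambda}$ is satisfied, $\htcinterp$ is an \HTC-model of $\Delta^{c}_{\lambda}$. There is no real obstacle here: the proof is a two-line unfolding of the definitions of \emph{timed} and of the denotations of the equality and difference constraint atoms. The only point worth stating explicitly, to keep the argument clean, is that the $\Vt$ side of Condition~\ref{sat:htc:atom} is not needed because of strict denotations, but can in any case be recovered from $\Vh(\timet_\kvar)=\Vt(\timet_\kvar)$ by exactly the same calculation.
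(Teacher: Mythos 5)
Your proof is correct and follows essentially the same route as the paper: unfold the definition of \emph{timed} to get the induced $\tmf$, then check $\timet_0=0$ from $\tmf(0)=0$ and each difference constraint $\timet_\kvar-\timet_{\kvar+1}\leq -1$ from the strictness $\tmf(\kvar)<\tmf(\kvar+1)$ via the denotations. The remark about the $\Vt$ side being irrelevant due to strict denotations matches the paper's treatment as well.
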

Unlike Proposition~\ref{prop:ht:delta:timed}, the latter refrain from requiring \HTC-interpretations in equilibrium.

Given an \HT-trace $\M = (\tuple{H_\kvar,T_\kvar}_{\trangeco{\kvar}{0}{\lambda}},\tmf)$ of length $\lambda$,
we define $\mhtToHtc{\M}$ as the \HTC\ interpretation
\(
\tuple{h\cup x,t\cup x}
\)
where $h,t,x$ are valuations such that
\begin{align*}
  h &=\{(a_\kvar,\true) \mid 0 \leq \kvar < \lambda, a\in H_\kvar \} &
  t &=\{(a_\kvar,\true) \mid 0 \leq \kvar < \lambda, a\in T_\kvar \} \\
  x &= \{(\timet_{\kvar},\tmvar) \mid \tmf(\kvar)=\tmvar, 0\leq\kvar<\lambda, \tmvar\in\mathbb{N} \}
\end{align*}
Similar to above, $\mhtToHtc{\M}$ is an \HTC\ interpretation timed wrt $\lambda$.

Conversely,
given an \HTC\ interpretation $\htcinterp$ timed wrt $\lambda$
and its induced timing function $\tmf$,
we define $\htcToMht{\htcinterp}$ as
the timed \HT-trace
\begin{align*}
  (\tuple{ \{a\mid\Vh(a_\kvar)=\true\}, \{a\mid\Vt(a_\kvar)=\true\}}_{\trangeco{\kvar}{0}{\lambda}}, \tmf)
\end{align*}
As above, functions $\htToMhtf^c$ and $\mhtToHtf^c$ are invertibles.
Hence,
we get a one-to-one correspondence between \HTC\ interpretations timed wrt $\lambda$ and timed \HT-traces of length $\lambda$.

Finally, we have the following completeness and correctness result.
\begin{theorem}[Completeness]\label{thm:mlp:htc:completeness}
  Let \program\ be a plain metric logic program
  and
  $\M=(\tuple{\Ttrace,\Ttrace}, \tmf)$ a total timed \HT-trace of length $\lambda$.

  If
  $\M$ is a metric equilibrium model of $\program$,
  then
  $\mhtToHtc{\M}$ is a constraint equilibrium model of $\Pi_\lambda(\program)\cup\Delta^c_{\lambda}\cup\Psi^c_{\lambda}(\program)$.
\end{theorem}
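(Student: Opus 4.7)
The plan is to reduce the statement to two separate claims: first that $\mhtToHtc{\M}$ is an \HTC-model of $\Pi_\lambda(\program)\cup\Delta^c_{\lambda}\cup\Psi^c_{\lambda}(\program)$, and second that it is in equilibrium. Throughout I would exploit the one-to-one correspondence between timed \HT-traces of length $\lambda$ and \HTC-interpretations timed wrt $\lambda$ established via $\mhtToHtf^c$ and $\htToMhtf^c$, so that the argument reduces to transferring satisfaction across this bijection.

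For the first claim, I would proceed part by part. By construction, $\mhtToHtc{\M}$ is timed wrt $\lambda$ and induces $\tmf$, so Proposition~\ref{prop:htc:timed:delta} immediately gives that $\mhtToHtc{\M}$ models $\Delta^c_{\lambda}$. For $\Pi_\lambda(\program)$, I would prove by structural induction on metric subformulas $\varphi$ appearing in rule heads and bodies that, for every $0\leq \kvar <\lambda$, $\M,\kvar\models\varphi$ iff $\mhtToHtc{\M}\models\tk{\varphi}$, using the definitions of $\tk{\cdot}$ case by case (the base case for atoms is by definition of $h,t$; $\initially$ collapses to $\top$ or $\bot$ according to $k=0$; the $\metricI{\next}$ case uses that at $k=\lambda-1$ no successor exists, matching $\tk{\metricI{\next}a}=\bot$). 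Since $\M,0\models\alwaysF r$ for every rule $\alwaysF r\in\program$, this transfers to $\mhtToHtc{\M}\models\tk{r}$ for all $0\leq \kvar <\lambda-1$. For $\Psi^c_{\lambda}(\program)$, I would argue by contradiction: if some integrity constraint in~(\ref{def:htc:psi:one}/\ref{def:htc:psi:two}) failed, then $\mhtToHtc{\M}$ would satisfy $\tk{\body}$ together with a violation of the interval bound, which by Proposition~\ref{pro:htc:defined} and the definition of the difference-constraint denotations corresponds exactly to $\tmf(\kvar+1)-\tmf(\kvar)\notin \intervco{m}{n}$, contradicting $\M,0\models\alwaysF(\metric{\next}{m}{n}a\leftarrow\body)$ via clause~\ref{def:mhtsat:next} of the \MHT-semantics.

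For the equilibrium claim, I would argue contrapositively. Suppose $\mhtToHtc{\M}=\tuple{t\cup x, t\cup x}$ is not in equilibrium; then there exists an \HTC-model $\tuple{h',t\cup x}$ of the theory with $h'\subsetneq t\cup x$. Since $h'$ models $\Delta^c_{\lambda}$, Proposition~\ref{pro:htc:defined} forces $h'(\timet_\kvar)\in\mathbb{N}$ and, combined with $h'\subseteq t\cup x$ and the strict-denotation semantics of $\timet_\kvar=\tmvar$, forces $h'$ to agree with $x$ on all timing variables (equal natural values on both sides of the \HTC-interpretation). Hence $h'$ can differ from $t\cup x$ only on the Boolean part, so $\tuple{h',t\cup x}$ is timed wrt $\lambda$ and induces the same $\tmf$. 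Applying $\htcToMht{\cdot}$ then yields a timed \HT-trace $\M'=(\tuple{\Htrace',\Ttrace},\tmf)$ with $\Htrace'<\Ttrace$, and by inverting the equivalence used in the first claim, $\M'$ is an \MHT-model of $\program$, contradicting that $\M$ is a metric equilibrium model.

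The main obstacle will be the second claim, specifically verifying that any candidate counter-model $\tuple{h',t\cup x}$ must coincide with $t\cup x$ on every timing variable $\timet_\kvar$. This requires a careful reading of the \HTC\ semantics of $\timet_0=0$ and of $\timet_\kvar-\timet_{\kvar+1}\leq -1$ under strict denotations, together with Proposition~\ref{pro:htc:defined}, to rule out any ``partial'' reduction of $h'$ on the integer side; once this is settled, everything else is routine application of the bijection and the structural-induction equivalence on the Boolean side.
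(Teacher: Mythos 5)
Your overall route is the same as the paper's: you show that $\mhtToHtc{\M}$ is a model part by part (timedness plus Proposition~\ref{prop:htc:timed:delta} for $\Delta^c_{\lambda}$, a transfer of satisfaction for $\Pi_\lambda(\program)$, and a contradiction with clause~\ref{def:mhtsat:next} of the \MHT-semantics for $\Psi^c_{\lambda}(\program)$), and then establish equilibrium by contradiction with exactly the paper's case split: a candidate smaller \HTC-model cannot drop a timing variable because Proposition~\ref{pro:htc:defined} forbids $\timet_\kvar$ being undefined, hence it can differ only on the Boolean part, and it is mapped back through $\htToMhtf^c$ to a strictly smaller timed trace, contradicting metric equilibrium of $\M$.

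However, one step as written is incorrect and hides the only non-routine point. The biconditional you rely on, $\M,\kvar\models\varphi$ iff $\mhtToHtc{\M}\models\tk{\varphi}$, fails for $\varphi=\metricI{\next}a$ with a proper interval: $\tk{\metricI{\next}a}$ is just $\trans{a}{\kvar+1}$ (or $\bot$ at the last state), so the right-hand side carries no interval information. Only the forward direction holds (cf.\ Proposition~\ref{prop:mel:next:relaxed}), which suffices for the model claim. But in the equilibrium step you conclude ``by inverting the equivalence'' that the smaller trace $\M'$ models $\program$; inverting only yields that $\M'$ models the interval-relaxed program $\openi{\program}$. To obtain $\M'\models\program$ you must additionally use that the counter-model satisfies the constraints~(\ref{def:htc:psi:one})/(\ref{def:htc:psi:two}) and is timed with all $\timet_\kvar$ defined, so that whenever $\tk{\body}$ holds these constraints force $\tmf(\kvar+1)-\tmf(\kvar)\in\intervco{m}{n}$; this is precisely the content of the paper's Lemma~\ref{lem:correctness-htc}, which your proposal skips. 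The repair fits your setup, since your counter-model does satisfy $\Psi^c_{\lambda}(\program)$, but as stated the step does not go through; note also that this backward transfer must be proved for non-total \HTC-interpretations, because the candidate counter-model is not total.
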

\begin{theorem}[Correctness]\label{thm:mlp:htc:correctness}
  Let
  \program\ be a plain metric logic program,
  and
  $\lambda \in \mathbb{N}$.

  If
  $\htcttinterp$ is a constraint equilibrium model of $\Pi_\lambda(\program)\cup\Delta^c_{\lambda}\cup\Psi^c_{\lambda}(\program)$,
  then
  $\htcToMht{\htcttinterp}$ is a metric equilibrium model of $\program$.
\end{theorem}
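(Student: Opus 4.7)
The plan is to mirror the proof of the Boolean-side correctness result (Theorem~\ref{thm:mlp:ht:correctness}), substituting \HTC-interpretations and their timed characterisation (Propositions~\ref{pro:htc:defined}--\ref{prop:htc:timed:delta}) for the corresponding \HT-level machinery. Starting from a constraint equilibrium model $\htcttinterp$ of $\Pi_\lambda(\program)\cup\Delta^c_{\lambda}\cup\Psi^c_{\lambda}(\program)$, Proposition~\ref{prop:htc:delta:timed} guarantees that it is timed wrt $\lambda$, so its induced timing function $\tmf$ is well-defined and $\htcToMht{\htcttinterp}$ yields a total timed \HT-trace $\M=(\tuple{\Ttrace,\Ttrace},\tmf)$. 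It remains to show that $\M$ is a metric equilibrium model of $\program$.

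The key step is a correspondence lemma: for any timed \HT-trace $\M'=(\tuple{\Htrace,\Ttrace},\tmf)$ of length $\lambda$ and any plain metric rule $\alwaysF{r}\in\program$, $\M'$ satisfies $\alwaysF{r}$ (i.e., $\M',k\models r$ for every $0\leq k<\lambda$) iff $\mhtToHtc{\M'}$ is an \HTC-model of the rules in $\Pi_\lambda(\{r\})\cup\Psi^c_\lambda(\{r\})$. For a non-metric rule $\head\leftarrow\body$, this follows by induction on subformulas, showing that $\M',k\models\varphi$ iff $\mhtToHtc{\M'}\models\tk{\varphi}$. For a rule $\metric{\next}{m}{n} a\leftarrow\body$, one combines the Boolean rules of $\Pi_\lambda$ (which at $k<\lambda-1$ give $a_{k+1}\leftarrow\tk{\body}$ and at $k=\lambda-1$ give $\bot\leftarrow\tk{\body}$) with the integrity constraints~(\ref{def:htc:psi:one}) and~(\ref{def:htc:psi:two}): the first negates $\timet_{\kvar}-\timet_{\kvar+1}\leq -m$, forcing $\tmf(k{+}1)-\tmf(k)\geq m$, and the second negates $\timet_{\kvar+1}-\timet_{\kvar}\leq n-1$, forcing $\tmf(k{+}1)-\tmf(k)<n$. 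Together these precisely capture the interval condition $\tmf(k{+}1)-\tmf(k)\in\intervco{m}{n}$ of Clause~\ref{def:mhtsat:next}.

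Applied to $\htcttinterp$ itself, the lemma yields that $\M$ is an \MHT-model of $\program$. For the equilibrium property I argue by contradiction: suppose some $\M'=(\tuple{\Htrace,\Ttrace},\tmf)$ with $\Htrace<\Ttrace$ is an \MHT-model of $\program$. Its image $\mhtToHtc{\M'}$ agrees with $\htcttinterp$ on the timing-variable assignment $\{(\timet_{\kvar},\tmf(\kvar))\}$ and on the $t$-component (both encode the same $\Ttrace$), while its $h$-component is strictly smaller, since $\Htrace<\Ttrace$ translates to strict set inclusion between the corresponding Boolean valuations. By Proposition~\ref{prop:htc:timed:delta} it satisfies $\Delta^c_\lambda$, and by the correspondence lemma it satisfies $\Pi_\lambda(\program)\cup\Psi^c_\lambda(\program)$, contradicting the constraint equilibrium status of $\htcttinterp$.

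The main obstacle will be the metric case of the correspondence lemma, where the \MHT-semantics of $\metric{\next}{m}{n} a\leftarrow\body$ is captured \emph{jointly} by a rule in $\Pi_\lambda$ and two integrity constraints in $\Psi^c_\lambda$. Care is needed at $k=\lambda-1$, where $\tk{\metricI{\next}a}=\bot$ must force $\tk{\body}$ to fail in order to mirror the MHT requirement $k{+}1<\lambda$, and for $n=\omega$, where the upper-bound constraint~(\ref{def:htc:psi:two}) is simply omitted and the interval $\intervco{m}{\omega}$ is captured by the single lower-bound constraint. A secondary technical point is that in the equilibrium argument one must verify that any candidate smaller model $\htcinterp$ of $\Delta^c_\lambda$ automatically agrees with $\htcttinterp$ on all timing variables: Proposition~\ref{pro:htc:defined} ensures $\Vh(\timet_\kvar)\in\mathbb{N}$, and $\Vh\subseteq\Vt$ then forces $\Vh(\timet_\kvar)=\Vt(\timet_\kvar)$, so extracting an \HT-trace via $\htcToMht{\cdot}$ is well-defined and preserves the timing function, making the comparison $\Htrace<\Ttrace$ in \MHT\ meaningful.
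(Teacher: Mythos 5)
Your proposal is correct and follows essentially the same route as the paper: the model part is obtained from a correctness lemma for timed \HTC-interpretations (timedness coming from $\Delta^c_\lambda$ via Proposition~\ref{prop:htc:delta:timed}, with the two constraints~(\ref{def:htc:psi:one})/(\ref{def:htc:psi:two}) recovering the interval condition of the metric next), and minimality is shown by mapping a hypothetical smaller \MHT-model with the same $\tmf$ through $\mhtToHtf^c$ and invoking the completeness-style lemma to contradict constraint equilibrium. Your single biconditional ``correspondence lemma'' is just the paper's Lemmas~\ref{lem:correctness-htc} and~\ref{lem:completeness-htc} packaged together, so the argument matches the paper's proof in substance.
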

 \section{General Metric Logic Programs}
\label{sec:gmlp}

In the preceding section,
we introduced a simplified fragment of metric theories and
demonstrated its application in modeling quantitatively constrained transitions for dynamic problems.
While this fragment enables the representation of basic planning and scheduling scenarios,
it lacks the capacity to express variable queries involving global temporal operators.

In this section, we show how general metric theories can be reduced to a more general logic programming format and
how rules in this format can be implemented in analogy to the techniques shown in the previous section.
As in Section~\ref{sec:mlp},
we define metric logic programs as metric theories composed of implications analogous to logic programming rules,
with their semantics determined by their metric equilibrium models.

Syntactically,
a (general) \emph{metric logic program}
over \alphabet\
is a set of (general) \emph{metric rules} of form
\begin{align}\label{def:general:rule}
  \melrule{\head\leftarrow\body}
\end{align}
for
$\head\eqdef \matom_1    \vee  \dots\vee   \matom_m\vee  \neg \matom_{m+1}\vee  \dots\vee  \neg \matom_n$,
$\body\eqdef \matom_{n+1}\wedge\dots\wedge \matom_o\wedge\neg \matom_{o+1}\wedge\dots\wedge\neg \matom_p$, and
$0\leq m\leq n\leq o\leq p$,
where for $1\leq i\leq p$ each $\matom_i$ is a \emph{metric atom} over \alphabet\
defined as
\comment{REV: We can think of using something other than $b$, notice that we will have to say $b \in\alphabet\cup\{\bot,\top\}$ whenever we use $b$}
\begin{align*}
  \matom & ::=
        \bot                     \mid
        \top                     \mid
        a\in\alphabet            \mid
        \initially               \mid
        \metricI{\Next}        b \mid
        \metricI{\alwaysF}     b \mid
        \metricI{\eventuallyF} b
        \quad\text{ where }    b \in\alphabet\cup\{\bot,\top\}\text{ and }I\in\allI
\end{align*}
We include the truth constants, for instance,
to express via $\metricI{\eventuallyF}\top$ that there is a state within interval $I$ or
to define \finally\ as $\neg\metric{\next}{0}{\omega}\top$.
We write $\varphi\in r$ whenever a metric formula $\varphi$ occurs in a rule $r$ as in~\eqref{def:general:rule},
that is, if $\varphi=\matom_i$
for some $1\leq i\leq p$,
or $\varphi=b$ where $\matom_i\in\{\metricI{\Next}b,\metricI{\eventuallyF}b,\metricI{\alwaysF}b \}$
for some $1\leq i\leq p$.

Note that metric atoms only apply temporal operators to positive atoms and Boolean constants.
The next result shows that this is no restriction.
\begin{proposition}
  Given a metric formula $\varphi$,
  we have the following equivalences
  \begin{enumerate}
  \item
    \(
    \metricI{\eventuallyF}\neg \varphi
    \equiv
    \neg \metricI{\alwaysF} \varphi
    \)
  \item
    \(
    \metricI{\alwaysF} \neg \varphi
    \equiv
    \neg \metricI{\eventuallyF} \varphi
    \)
  \item
    \(
    \metricI{\Next} \neg \varphi
    \equiv
    \metricI{\Next} \top \wedge \neg \metricI{\Next} \varphi
    \)
  \end{enumerate}
\end{proposition}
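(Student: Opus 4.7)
The plan is, for each of the three equivalences, to fix an arbitrary timed \HT-trace $\M=(\tuple{\Htrace,\Ttrace},\tmf)$ of length $\lambda$ and a position $\rangeco{k}{0}{\lambda}$, and then unfold both sides according to the MHT satisfaction clauses. The pivotal observation is that $\M, k \models \neg\psi$ expands, via $\neg\psi\eqdef\psi\to\bot$ together with clause~(4) of the definition of $\models$, into $\M', k \not\models \psi$ for both $\M'=\M$ and $\M'=(\ttandt,\tmf)$. Each equivalence thus reduces to bookkeeping about these two interpretations and the indices $i$ with $\rangeco{i}{k}{\lambda}$ and $\tmf(i)-\tmf(k)\in\cI$.

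For item~1, the direction $\metricI{\eventuallyF}\neg\varphi \Rightarrow \neg\metricI{\alwaysF}\varphi$ is immediate: an index $i$ witnessing $\M, i \models \neg\varphi$ already falsifies $\varphi$ at $i$ under both $\M$ and $(\ttandt,\tmf)$, hence falsifies $\metricI{\alwaysF}\varphi$ at $k$ under both interpretations. For the converse I would start from the two separate witnesses given by $\M, k \not\models \metricI{\alwaysF}\varphi$ and by $(\ttandt,\tmf), k \not\models \metricI{\alwaysF}\varphi$, and invoke the standard MHT persistence lemma (if $\M, j \models \psi$ then $(\ttandt,\tmf), j \models \psi$, proved by induction on $\psi$) contrapositively: the $T$-witness $i_T$ then also satisfies $\M, i_T \not\models \varphi$, so $\M, i_T \models \neg\varphi$ witnesses $\metricI{\eventuallyF}\neg\varphi$.

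Item~2 is symmetric and direct: both $\metricI{\alwaysF}\neg\varphi$ at $k$ and $\neg\metricI{\eventuallyF}\varphi$ at $k$ unfold literally to ``for every $i$ with $\rangeco{i}{k}{\lambda}$ and $\tmf(i)-\tmf(k)\in\cI$, $\varphi$ fails at $i$ under both $\M$ and $(\ttandt,\tmf)$'', so the equivalence follows by inspection without appealing to persistence. Item~3 mirrors item~1 at the single point $k+1$. In the forward direction, $\M,k \models \metricI{\Next}\neg\varphi$ supplies $k+1<\lambda$ and $\tmf(k+1)-\tmf(k)\in\cI$ (hence $\metricI{\Next}\top$) together with the failure of $\varphi$ at $k+1$ under both interpretations (hence $\neg\metricI{\Next}\varphi$). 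In the backward direction, the $\metricI{\Next}\top$ conjunct supplies the successor $k+1$ and the interval condition, while $\neg\metricI{\Next}\varphi$ rules out $\varphi$ at $k+1$ under both $\M$ and $(\ttandt,\tmf)$, yielding $\M, k+1 \models \neg\varphi$ and therefore $\M, k \models \metricI{\Next}\neg\varphi$.

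The main obstacle is the backward direction of item~1: a priori, the witnesses $i_H$ and $i_T$ for the failures of $\metricI{\alwaysF}\varphi$ under the two interpretations could be distinct, whereas $\metricI{\eventuallyF}\neg\varphi$ demands a single index at which $\varphi$ fails under both. Persistence is precisely what collapses $i_T$ into a common witness, so a clean statement (and inductive proof) of MHT persistence for the fragment at hand is the one nontrivial ingredient; all remaining case analyses are routine unfoldings of the satisfaction clauses.
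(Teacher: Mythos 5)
Your argument is sound: unfolding the MHT satisfaction clauses pointwise, using that $\neg\psi$ abbreviates $\psi\to\bot$ and therefore demands failure of $\psi$ under both $\M$ and $(\ttandt,\tmf)$, settles items~2 and~3 by pure bookkeeping, and your diagnosis of item~1 is exactly right -- the only non-routine ingredient is the persistence property, needed to turn the there-level witness $i_T$ of the failure of $\metricI{\alwaysF}\varphi$ into a single index where $\varphi$ fails on both levels. That lemma does hold for \MHT\ (by the usual induction, or as a corollary of Proposition~\ref{prop:three-valued-semantics-eq}, since value $2$ implies value $>0$), so your proof goes through. The paper, however, does not argue on the satisfaction relation at all; it does not even write out a proof of this proposition, but the machinery it supplies for such equivalences is the three-valued semantics of Section~\ref{sec:three:mht}: by Proposition~\ref{prop:three-valued:eq:mht}, $\varphi\equiv\psi$ reduces to $\trival{k}{\varphi}=\trival{k}{\psi}$ for every three-valued metric interpretation, every $k$ and every timing function, and with the valuation clauses of Definition~\ref{def:three-valued-semantics-mht} each of the three items becomes a short computation with $\max$/$\min$ over the same index set (e.g.\ both sides of item~1 evaluate to $2$ iff some $j$ in range has $\trival{j}{\varphi}=0$, and to $0$ otherwise). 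In that route persistence never has to be stated separately -- it is absorbed into the distinction between value $2$ and value $>0$ -- and the here- and there-level are handled by one calculation. Your route buys self-containedness at the level of the satisfaction relation, at the price of proving (or citing) the persistence lemma explicitly; if you keep it, do state that lemma and its inductive proof, since everything else is indeed routine unfolding.
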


To reduce arbitrary metric formulas to a metric program,
we define a Tseitin-style translation~\citep{tseitin68a} that replaces each subformula with a corresponding fresh atom.
To this end,
we let \subformulas{\varphi} stand for the set of all subformulas of a metric formula $\varphi$ (including $\varphi$ itself).
The fresh variables needed for translating $\varphi$ are given by
\(
\mathcal{L}_\varphi=\{\Label{\phi} \mid \phi \in \subformulas{\varphi}\}
\).
With this,
we define the translation of a metric formula $\varphi$ into a metric logic program $\Theta(\varphi)$ as
\begin{align}\label{eq:tseitin:general}
  \Theta(\varphi) & \eqdef
                    \{\metric{\alwaysF}{0}{\omega} (\initially \rightarrow \Label{\varphi})\}
                    \cup
                    \textstyle\bigcup_{\phi \in \subformulas{\varphi}} \eta^*(\phi)
\end{align}
The definition of the sets $\eta^*(\phi)$ is given in Table~\ref{tab:tseitin:metric:general}.
\comment{REV: Consider renaming $\eta^*(\phi)$}
\begin{table}[htp]
  \begin{center}
    \(
    \begin{array}{|l|l|l|}
      \cline{1-3}
      \bm{\phi} & \bm{\eta(\phi)}  & \bm{\eta^*(\phi)} \\
      \cline{1-3}
      a
                & \alwaysF(\Label{a} \leftrightarrow a)
                                   & \begin{array}{l}
                                     \alwaysF(\Label{a} \rightarrow a) \\
                                     \alwaysF(a \rightarrow \Label{a})
                                   \end{array}                                                                                                 \\
      \cline{1-3}
      \varphi \wedge \psi
                & \alwaysF(\Label{\varphi \wedge \psi} \leftrightarrow \Label{\varphi} \wedge \Label\psi)
                                   & \begin{array}{l}
                                     \alwaysF(\Label{\varphi \wedge \psi} \rightarrow \Label{\varphi}) \\
                                     \alwaysF(\Label{\varphi \wedge \psi} \rightarrow \Label\psi)      \\
                                     \alwaysF(\Label{\varphi} \wedge \Label\psi \rightarrow \Label{\varphi \wedge \psi})
                                   \end{array}
      \\
      \cline{1-3}
      \varphi \vee \psi
                & \alwaysF(\Label{\varphi \vee \psi} \leftrightarrow \Label{\varphi} \vee \Label\psi)
                                   & \begin{array}{l}
                                     \alwaysF(\Label{\varphi} \rightarrow \Label{\varphi \vee \psi}) \\
                                     \alwaysF(\Label\psi \rightarrow \Label{\varphi \vee \psi})      \\
                                     \alwaysF(\Label{\varphi \vee \psi} \rightarrow \Label{\varphi} \vee \Label\psi)
                                   \end{array}
      \\
      \cline{1-3}
      \varphi \rightarrow \psi
                & \alwaysF(\Label{\varphi \rightarrow \psi} \leftrightarrow \Label{\varphi} \rightarrow \Label\psi)
                                   & \begin{array}{l}
                                     \alwaysF(\neg \Label{\varphi} \rightarrow \Label{\varphi \rightarrow \psi})               \\
                                     \alwaysF(\Label\psi \rightarrow \Label{\varphi \rightarrow \psi})                         \\
                                     \alwaysF(\Label{\varphi \rightarrow \psi} \wedge \Label{\varphi} \rightarrow  \Label\psi) \\
                                     \alwaysF( \top \rightarrow \Label{\varphi} \vee \neg \Label{\psi} \vee \Label{\varphi \rightarrow \psi})
                                   \end{array}
      \\
      \cline{1-3}
      \bot & \alwaysF(\Label{\bot} \leftrightarrow \bot) & \;\;\alwaysF(\Label{\bot} \rightarrow \bot  )
      \\
      \cline{1-3}
      \initially
                & \alwaysF(\Label{\initially} \leftrightarrow \initially)
                                   &
                                     \begin{array}{l}
                                       \alwaysF(\Label{\initially} \rightarrow \initially) \\
                                       \alwaysF(\initially \rightarrow \Label{\initially}) \\
                                     \end{array}
      \\
      \cline{1-3}
      \metricI{\Next} \varphi
                & \alwaysF(\Label{\metricI{\Next} \varphi} \leftrightarrow \metricI{\Next} \Label{\varphi})
                                   &
                                     \begin{array}{l}
                                       \alwaysF(\Label{\metricI{\Next}\varphi} \rightarrow \metricI{\Next}\Label\varphi) \\
                                       \alwaysF(\metricI{\Next}\Label\varphi \rightarrow \Label{\metricI{\Next}\varphi}) \\
                                     \end{array}
      \\
      \cline{1-3}
      \metricI{\eventuallyF} \varphi
                & \alwaysF(\Label{\metricI{\eventuallyF} \varphi} \leftrightarrow \metricI{\eventuallyF} \Label{\varphi})
                                   &
                                     \begin{array}{l}
                                       \alwaysF(\Label{\metricI{\eventuallyF}\varphi} \rightarrow \metricI{\eventuallyF}\Label\varphi) \\
                                       \alwaysF(\metricI{\eventuallyF}\Label\varphi \rightarrow \Label{\metricI{\eventuallyF}\varphi}) \\
                                     \end{array}
      \\
      \cline{1-3}
      \metricI{\alwaysF} \varphi
                & \alwaysF(\Label{\metricI{\alwaysF} \varphi} \leftrightarrow \metricI{\alwaysF} \Label{\varphi})
                                   &
                                     \begin{array}{l}
                                       \alwaysF(\Label{\metricI{\alwaysF}\varphi} \rightarrow \metricI{\alwaysF}\Label\varphi) \\
                                       \alwaysF(\metricI{\alwaysF}\Label\varphi \rightarrow \Label{\metricI{\alwaysF}\varphi}) \\
                                     \end{array}
      \\
      \cline{1-3}
    \end{array}
    \)
  \end{center}
  \caption{Definition of sets $\eta(\phi)$ and $\eta^*(\phi)$ for any metric temporal formula $\phi$.}
  \label{tab:tseitin:metric:general}
\end{table}
 Such a Tseitin-style translation adds a fresh symbol for each subformula and puts them into an equivalence.
This is shown on the left hand side of Table~\ref{tab:tseitin:metric:general}.
These equivalences are then translated into implications, which amount to the logic programs in the right column.

\newcommand{\qex}{\metric{\eventuallyF}{0}{60} \tgoal}
For example, consider the following metric formula
enforcing the goal condition of Ram being in the office with both items within one hour.
\begin{align}\label{eq:tseitin:general:example}
(\neg\neg\qex) \wedge ((\tat(\tram, \toffice) \wedge \thas(\tram, \tcash) \wedge \thas(\tram, \ticard))\rightarrow \tgoal)
\end{align}
We apply double negation to the goal,
which is equivalent to \( (\qex \to \bot) \to \bot \),
to ensure that this condition is inferred indirectly by other formulas, rather than directly entailed.
In what follows we will abbreviate formula \eqref{eq:tseitin:general:example} as $q$
and $\tat(\tram, \toffice) \wedge \thas(\tram, \tcash) \wedge \thas(\tram, \ticard)$ as $g$ for brevity.

The metric logic program capturing the formula \eqref{eq:tseitin:general:example} is then given by:
\[
\Theta(q) = \{\metric{\alwaysF}{0}{\omega} (\initially \rightarrow \Label{q})\} \cup Q^*
\]
where \( Q^* \) is derived by translating equivalences in \( Q \) into implications, as illustrated in Table~\ref{tab:tseitin:metric:general}.
\begin{align*}
    Q = & \{\alwaysF(\Label{a} \leftrightarrow a) \mid a \in \{\tgoal, \tat(\tram, \toffice), \thas(\tram, \tcash), \thas(\tram, \ticard)\}\} \cup \\
    & \{\alwaysF(\Label{\bot} \rightarrow \bot), \\
    & \;\; \alwaysF(\Label{\tat(\tram, \toffice) \wedge \thas(\tram, \tcash)} \leftrightarrow \Label{\tat(\tram, \toffice)} \wedge \Label{\thas(\tram, \tcash)}), \\
    & \;\; \alwaysF(\Label{g} \leftrightarrow \Label{\tat(\tram, \toffice) \wedge \thas(\tram, \tcash)} \wedge \Label{\thas(\tram, \ticard)}), \\
    & \;\; \alwaysF(\Label{g\rightarrow \tgoal} \leftrightarrow \Label{g} \to \Label{\tgoal}), \\
    & \;\; \alwaysF(\Label{\metric{\eventuallyF}{0}{60}\,\tgoal} \leftrightarrow \metric{\eventuallyF}{0}{60} \Label{\tgoal}), \\
    & \;\; \alwaysF(\Label{\metric{\eventuallyF}{0}{60}\,\tgoal \rightarrow \bot} \leftrightarrow \Label{\metric{\eventuallyF}{0}{60}\,\tgoal} \rightarrow \Label{\bot}), \\
    & \;\; \alwaysF(\Label{q} \leftrightarrow (\Label{\metric{\eventuallyF}{0}{60}\,\tgoal \to \bot}) \rightarrow \Label{\bot}) \}
\end{align*}

While program \(\Theta(q)\) correctly captures the goal condition,
it can be simplified
and directly expressed as a metric logic program
as follows:
\comment{REV: Elaborate here on why they are equivalent}
\begin{align}
    \alwaysF(\tgoal                                 &\leftarrow \tat(\tram, \toffice) \wedge \thas(\tram, \tcash) \wedge \thas(\tram, \ticard))\label{ex:dentist:goal}\\
    \alwaysF(\bot                                   &\leftarrow \initially \wedge \neg \metric{\eventuallyF}{0}{60}\,\tgoal)\label{ex:dentist:query}
 \end{align}

To show the correspondence between the models of $\varphi$ and $\Theta(\varphi)$,
we define the restriction of an \HT-trace\ to an alphabet $\alphabet$
as
\begin{align}
  {(\tuple{H_i,T_i})_{\trangeco{i}{0}{\lambda}}}|_{\alphabet} = (\tuple{H_i\cap\alphabet,T_i\cap\alphabet})_{\trangeco{i}{0}{\lambda}}
\end{align}
\begin{theorem}[Correctness and Completeness]\label{thm:correct-mel-to-mlp}
  Let $\varphi$ be a metric formula over $\alphabet$.
  Then, we have\footnote{We let $(\tuple{\Htrace',\Ttrace'}, \tmf) \models \Theta(\varphi)$ abbreviate that
    $(\tuple{\Htrace',\Ttrace'}, \tmf)$ is a \MHT-model of $\Theta(\varphi)$.}
  \begin{align*}
    \{(\tuple{\Htrace,\Ttrace}, \tmf) \mid (\tuple{\Htrace,\Ttrace}, \tmf) \models \varphi\} =
    \{({\tuple{\Htrace',\Ttrace'}}|_\alphabet, \tmf) \mid (\tuple{\Htrace',\Ttrace'}, \tmf) \models \Theta(\varphi)\}
  \end{align*}
\end{theorem}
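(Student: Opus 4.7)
The plan is to establish both inclusions by first proving a \emph{labeling lemma}: for every MHT-model $M' = (\tuple{\Htrace',\Ttrace'}, \tmf)$ of $\Theta(\varphi)$, every subformula $\phi \in \subformulas{\varphi}$, and every state $k$ with $\trangeco{k}{0}{\lambda}$, one has $M', k \models \Label{\phi}$ if and only if $M', k \models \phi$. Since subformulas of $\varphi$ only mention atoms in $\alphabet$, the right-hand side coincides with $({\tuple{\Htrace',\Ttrace'}}|_\alphabet, \tmf), k \models \phi$. I would prove the lemma by induction on $\phi$. The base cases $\phi \in \{\bot, a, \initially\}$ are immediate, because the corresponding rows of $\eta^*$ directly encode the two halves of the equivalence $\Label{\phi} \leftrightarrow \phi$. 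For $\wedge$, $\vee$, and the three temporal operators, the rules of $\eta^*$ again give the full equivalence between $\Label{\phi}$ and the connective applied to labels of immediate subformulas; in the temporal cases this hinges on the fact that operators like $\metricI{\Next}$, $\metricI{\eventuallyF}$ and $\metricI{\alwaysF}$ only inspect their argument at states where the inductive hypothesis already applies.

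The main obstacle is the implication case $\phi = \varphi \to \psi$, for which $\eta^*$ supplies four rules rather than two or three, the last being the HT-specific $\alwaysF(\top \to \Label{\varphi} \vee \neg \Label{\psi} \vee \Label{\varphi \to \psi})$. To show that these four rules jointly enforce $\Label{\phi} \leftrightarrow (\Label{\varphi} \to \Label{\psi})$ in HT, I would fix a state $k$ and argue by cases on which disjunct of the fourth rule is witnessed: if $\Label{\varphi}$ holds then the third rule combined with the assumed $\Label{\varphi} \to \Label{\psi}$ yields $\Label{\psi}$, whence $\Label{\phi}$ follows by the second rule; if $\neg \Label{\psi}$ holds then the HT clause of implication evaluated at $\tuple{\Ttrace',\Ttrace'}$ forces $\Label{\varphi}$ to fail in $T'_k$ (and hence in $H'_k$), so the first rule gives $\Label{\phi}$; otherwise $\Label{\phi}$ is asserted directly. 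The converse direction is cleaner: assuming $\Label{\phi}$ at $k$, the third rule together with the definition of implication in HT, evaluated at both $\tuple{\Htrace',\Ttrace'}$ and $\tuple{\Ttrace',\Ttrace'}$, yields $\Label{\varphi} \to \Label{\psi}$.

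With the labeling lemma in hand, the theorem follows. For the $\supseteq$ inclusion, any MHT-model $M'$ of $\Theta(\varphi)$ satisfies $\alwaysF(\initially \to \Label{\varphi})$, so $M', 0 \models \Label{\varphi}$, and the lemma yields $({\tuple{\Htrace',\Ttrace'}}|_\alphabet, \tmf), 0 \models \varphi$. For the $\subseteq$ inclusion, given $M = (\tuple{\Htrace,\Ttrace}, \tmf) \models \varphi$, I would extend $M$ pointwise by declaring $\Label{\phi} \in H'_k$ whenever $(\tuple{\Htrace,\Ttrace}, \tmf), k \models \phi$ and $\Label{\phi} \in T'_k$ whenever $(\tuple{\Ttrace,\Ttrace}, \tmf), k \models \phi$, for every subformula $\phi$; the standard persistence property of HT (if $\tuple{\Htrace,\Ttrace}, k \models \phi$ then $\tuple{\Ttrace,\Ttrace}, k \models \phi$) ensures $H'_k \subseteq T'_k$, and $M \models \varphi$ makes the rule $\alwaysF(\initially \to \Label{\varphi})$ true at state $0$. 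The remaining rules of $\Theta(\varphi)$ are verified by unfolding the definition of the extension and invoking the semantics of the connective around which each rule is built, with the implication case again demanding the same case analysis described above.
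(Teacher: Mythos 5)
Your overall decomposition matches the paper's: a correspondence lemma stating that in every \MHT-model of $\Theta(\varphi)$ each label $\Label{\phi}$ behaves like $\phi$ (the paper's Lemma~\ref{lem:mel2mlp2}), used for the $\supseteq$ inclusion via the rule $\alwaysF(\initially\to\Label{\varphi})$, plus an explicit extension of a model of $\varphi$ by labels for the $\subseteq$ inclusion (the paper's Lemma~\ref{lem:mel2mlp1}), with exactly the same definition of the extended trace. The difference is the technical vehicle: the paper works in the three-valued (G\"odel $G_3$) characterization of \MHT\ (Appendix~\ref{sec:three:mht}), proving $\trivaluation{k}{\Label{\gamma}}{\trivalIm}=\trivaluation{k}{\gamma}{\trivalIm}$ for all $\gamma\in\subformulas{\varphi}$, which makes every case of the induction (including implication) a one-line value computation, whereas you argue directly with HT satisfaction. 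A genuine plus of your version is that you explicitly prove that the four $\eta^*$-rules for implication in Table~\ref{tab:tseitin:metric:general} jointly enforce the biconditional $\Label{\phi}\leftrightarrow(\Label{\zeta}\to\Label{\psi})$ (the role of the fourth, classical-looking rule); the paper's proof of Lemma~\ref{lem:mel2mlp2} silently passes from $\eta^*(\phi)$ to the equivalence $\eta(\phi)$.

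The one point that must be repaired is the strength of your labeling lemma as an induction invariant. As stated ($M',k\models\Label{\phi}$ iff $M',k\models\phi$) it only records the correspondence at the ``here'' level, but the inductive step for $\phi=\zeta\to\psi$ needs it also at the total trace: to conclude $M',k\models\Label{\zeta}\to\Label{\psi}$ iff $M',k\models\zeta\to\psi$ you must replace labels by formulas inside the clause evaluated at $(\tuple{\Ttrace',\Ttrace'},\tmf)$, which the here-level hypothesis for $\zeta$ and $\psi$ does not give you. The fix is routine but should be stated: either strengthen the invariant to the two-level form (equivalently, to equality of three-valued values, which is precisely how the paper's Lemma~\ref{lem:mel2mlp2} avoids the issue, via Proposition~\ref{prop:three-valued-semantics-eq}), or observe that $(\tuple{\Ttrace',\Ttrace'},\tmf)$ is itself an \MHT-model of $\Theta(\varphi)$ by persistence, so the lemma—quantified over all models—may be invoked for it inside the induction. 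With that adjustment, and the analogous two-level bookkeeping when verifying the implication rules for your extended trace in the $\subseteq$ direction, your argument goes through and yields the theorem as in the paper.
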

 \subsection{Base Translation of General Metric Logic Programs}
\label{sec:gmlp:base:translation}
\newcommand{\newPi}{\ensuremath{\bm{\Pi}}}
\newcommand{\newPsi}{\ensuremath{\bm{\Psi}}}

Analogous to Section~\ref{sec:mlp},
our translation of general metric logic programs is divided into two parts:
a basic temporal component that addresses state transitions, and
metric components that map timing functions to either \HT\ or \HTC.
The metric components are detailed in Sections~\ref{sec:gmlp:ht} and~\ref{sec:gmlp:htc}.
In the following, we focus on the basic temporal component.

Given a metric logic program $\program$
over \alphabet\ and $\lambda\in\mathbb{N}$,
we define
\begin{align}
  \newPi_\lambda(\program) & \eqdef
                             \{ \tk{r} \mid \alwaysF{r} \in P, \trangeco{k}{0}{\lambda}\}
                             \cup
                             \textstyle
                             \bigcup_{\varphi \in r, \alwaysF{r} \in \program, \trangeco{k}{0}{\lambda}} \eta_k^*(\varphi)
\end{align}
where \tk{r} is obtained from $r$ by replacing in $r$ each metric atom \matom\ by \laux{\matom}{k}.

The Tseitin-style translation of each metric formula in $r$ is given in the right column of Table~\ref{tab:tseitin:metric:temporal}.
\begin{table}[htp]
  \begin{center}
    \(
    \begin{array}{|l|l|l|}
      \cline{1-3}
      \bm{\varphi}
      & \bm{\eta_k(\varphi)}
      & \bm{\eta^*_k(\varphi)}
      \\
      \cline{1-3}
      a
      &  \laux{a}{k} \leftrightarrow a_k
      &
        \begin{array}{l}
          a_k \rightarrow \laux{a}{k} \\
          \laux{a}{k} \rightarrow a_k \\
        \end{array}
      \\
      \cline{1-3}
      \bot
      & \laux{\bot}{k} \leftrightarrow \bot
      & \laux{\bot}{k} \rightarrow \bot
      \\
      \cline{1-3}
      \top
      & \laux{\top}{k} \leftrightarrow \top
      & \top \rightarrow \laux{\top}{k}
      \\
      \cline{1-3}
      \finally
      & \laux{\finally}{k} \leftrightarrow
        \begin{cases} \top &\text{ if } k = \lambda - 1\\ \bot &\text{ if } k < \lambda - 1 \end{cases}
      &
        \begin{array}{l}
          \top \rightarrow \laux{\finally}{\lambda - 1} \\
          \laux{\finally}{k} \rightarrow \bot \qquad\ \hspace{42px}  \text{ if } k < \lambda - 1
        \end{array}
      \\
      \cline{1-3}
      \initially
      & \laux{\initially}{k} \leftrightarrow \begin{cases} \top &\text{ if } k = 0\\ \bot &\text{ if } k > 0 \end{cases}
      &
        \begin{array}{l}
          \top \rightarrow \laux{\initially}{0} \\
          \laux{\initially}{k} \rightarrow \bot \qquad\ \hspace{42px} \text{ if } k > 0
        \end{array}
      \\
      \cline{1-3}
      \metricI{\Next} b
      & \laux{\metricI{\Next} b}{k} \leftrightarrow
        \begin{cases}
          \laux{b}{k+1} \wedge \neg \faili{\cI}{k}{k+1} &\text{if } k < \lambda - 1\\
          \bot &\text{if } k = \lambda - 1
        \end{cases}
      &
        \begin{array}{l}
          \laux{\metricI{\Next} b}{k} \rightarrow \begin{cases}
            \laux{b}{k+1} & \;\qquad\ \;\text{if } k < \lambda - 1  \\
            \bot          & \;\qquad\ \;\text{if } k = \lambda - 1
          \end{cases}
          \\
          \laux{\metricI{\Next} b}{k} \wedge \faili{\cI}{k}{k+1} \rightarrow \bot \qquad\  \text{if } k < \lambda - 1 \\
          \laux{b}{k+1} \wedge \neg \faili{\cI}{k}{k+1} \rightarrow \laux{\metricI{\Next} b}{k}
          \text{ if } k < \lambda -1
        \end{array}
      \\
      \cline{1-3}
      \metricI{\alwaysF} b     & \begin{aligned}
        & \laux{\metricI{\alwaysF} b}{k} \leftrightarrow \bigwedge_{j=k}^{\lambda-1} \ljaux{\metricI{\alwaysF} b}{k}{j} \\
        & \ljaux{\metricI{\alwaysF} b}{k}{j} \leftrightarrow \faili{\cI}{k}{j} \vee \laux{b}{j} \qquad \hfill \text{ for }\trangeco{j}{k}{\lambda}
      \end{aligned}
      &
        \begin{array}{l}
          \bigwedge_{j=k}^{\lambda-1} \ljaux{\metricI{\alwaysF} b}{k}{j} \rightarrow \laux{\metricI{\alwaysF} b}{k}                        \\
          \laux{\metricI{\alwaysF} b}{k} \rightarrow \ljaux{\metricI{\alwaysF} b}{k}{j} \hfill \text{ for }\trangeco{j}{k}{\lambda}                   \\
          \faili{\cI}{k}{j} \rightarrow \ljaux{\metricI{\alwaysF} b}{k}{j} \hfill \text{ for }\trangeco{j}{k}{\lambda}                                \\
          \laux{b}{j} \rightarrow \ljaux{\metricI{\alwaysF} b}{k}{j} \hfill \text{ for }\trangeco{j}{k}{\lambda}                                   \\
          \ljaux{\metricI{\alwaysF} b}{k}{j} \wedge \neg \faili{\cI}{k}{j} \rightarrow \laux{b}{i} \qquad \hfill \text{ for }\trangeco{j}{k}{\lambda} \\
        \end{array}                                                                                                \\
      \cline{1-3}
      \metricI{\eventuallyF} b & \begin{aligned}
        & \laux{\metricI{\eventuallyF} b}{k} \leftrightarrow \bigvee_{j=k}^{\lambda-1} \ljaux{\metricI{\eventuallyF} b}{k}{j}    \\
        & \ljaux{\metricI{\eventuallyF} b}{k}{j} \leftrightarrow \neg \faili{\cI}{k}{j} \wedge \laux{b}{j} \quad\; \hfill \text{ for }\trangeco{j}{k}{\lambda}
      \end{aligned}
      &
        \begin{array}{l}
          \laux{\metricI{\eventuallyF} b}{k} \rightarrow \bigvee_{j=k}^{\lambda-1} \ljaux{\metricI{\eventuallyF} b}{k}{j}         \\
          \ljaux{\metricI{\eventuallyF} b}{k}{j} \rightarrow  \laux{\metricI{\eventuallyF} b}{k} \hfill \text{ for }\trangeco{j}{k}{\lambda} \\
          \ljaux{\metricI{\eventuallyF} b}{k}{j} \wedge \faili{\cI}{k}{j} \rightarrow \bot \hfill \text{ for }\trangeco{j}{k}{\lambda}  \\
          \ljaux{\metricI{\eventuallyF} b}{k}{j} \rightarrow \laux{b}{j} \hfill \text{ for }\trangeco{j}{k}{\lambda}                         \\
          \neg \faili{\cI}{k}{j} \wedge \laux b{j} \wedge \rightarrow \ljaux{\metricI{\eventuallyF} b}{k}{j} \quad\; \hfill \text{ for }\trangeco{j}{k}{\lambda}
        \end{array}                                                                                             \\
      \cline{1-3}
    \end{array}
    \)
  \end{center}
  \caption{Definition of sets $\eta_k(\varphi)$ and $\eta_k^*(\varphi)$ for any metric atom $\varphi$ and $\trangeco{k}{0}{\lambda}$.}
  \label{tab:tseitin:metric:temporal}
\end{table}
 It relies on additional auxiliary atoms of form $\laux{\varphi}{k}$, \ljaux{\varphi}{\kvar}{j} and \faili{\cI}{k}{j}.
\comment{REV: We should mention that $\delta$ is classical and
that is why we can move it to the body in some implications.
For this we would need a proposition showing that it behaves classically.}
Atoms of form $\laux{\varphi}{k}$ provide an equivalence to
the satisfaction of $\varphi$ at state $k$.
Atoms of form $\ljaux{\varphi}{\kvar}{j}$ are used to define $\laux{\varphi}{k}$ and
indicate state $j$ as a witness to the truth of $\varphi$ at state $k$.
Atoms of form \faili{\cI}{k}{j} provide the link to the respective encoding of metric information,
defined in the next two sections.
Specifically, \faili{\cI}{k}{j} indicates whether a transition between states $k$ and $j$
fails to be done within the time allotted by interval \cI,
viz.\ $\tmf(j)-\tmf(k)\notin\cI$.
Interestingly, such atoms only occur in rule bodies in $\eta^*_k(\varphi)$.
\comment{REV: Hence,
without definitions for $\faili{\cI}{k}{j}$,
$\newPi_\lambda(\program)$
gives the temporal stable models according to linear-time temporal \HT,
since all $\faili{\cI}{k}{j}$ become false,
making metric constraints are trivially satisfied.}
\comment{REV: We should add citation to \textit{agcadipescscvi20a}}

For illustration consider the metric rule in~\eqref{ex:dentist:query}.
Our translation first yields:
\begin{align}
    \laux{\bot}{k} &\leftarrow \laux{\initially}{k} \wedge \neg \laux{\metric{\eventuallyF}{0}{60}\,\tgoal}{k}&
    \quad\text{ for }\trangeco{k}{0}{\lambda}
\end{align}

Followed by the implications obtained
from the translation of each metric formula in the rule:
\begin{align}
    \laux{\bot}{k} &\leftrightarrow  \bot &
            \quad\text{ for }\trangeco{k}{0}{\lambda}\\
\laux{\initially}{0} &\leftrightarrow  \top &
    \\
\laux{\initially}{k} &\leftrightarrow  \bot &
            \quad\text{ for }\trangeco{k}{0}{\lambda-1}\\
\laux{\tgoal}{k} &\leftrightarrow  \tgoal_k &
            \quad\text{ for }\trangeco{k}{0}{\lambda}\\
\laux{\metric{\eventuallyF}{0}{60}\,\tgoal}{k} &\leftrightarrow  \bigvee_{j=k}^{\lambda-1} \ljaux{\metricI{\eventuallyF} \tgoal}{k}{j}  &
            \quad\text{ for }\trangeco{k}{0}{\lambda}\\
\ljaux{\metric{\eventuallyF}{0}{60}\,\tgoal}{k}{j}& \leftrightarrow \neg \faili{\cI}{k}{j} \wedge \laux{\tgoal}{j} &
            \quad\text{ for }\trangeco{k}{0}{\lambda},\trangeco{j}{k}{\lambda} \label{ex:dentist:query:delta}
\end{align}
Observe that when $\faili{\cI}{k}{j}$ is false,
the witness of truth for $\metric{\eventuallyF}{0}{60}\,\tgoal$
at state $j$ in~\eqref{ex:dentist:query:delta} simplifies to
$\ljaux{\metric{\eventuallyF}{0}{60}\,\tgoal}{k}{j}\leftrightarrow\laux{\tgoal}{j}$.
This indicates that the metric condition over the interval is trivially satisfied,
allowing any state that meets the goal condition to serve as a witness,
regardless of its associated time point.
Conversely, for the $\alwaysF$ operator,
such a simplification of $\faili{\cI}{k}{j}$ would imply that all states lie within the interval.
This would strengthen the witness requirements,
enforcing compliance with the condition across all future states.

Clearly, our translation still bears redundancies and can be further optimized.

\subsection{Translation of General Metric Logic Programs to \HT}
\label{sec:gmlp:ht}

This section extends the definition of timing functions within \HT\ from plain to general metric logic programs.
Timing functions are formalized as in Section~\ref{sec:mlp:ht}, utilizing a set of atoms
\(
\alphabetT\eqdef\{\timet_{\kvar,\tmvar} \mid \kvar,\tmvar\in\mathbb{N} \}
\)
and the axioms of $\Delta_{\lambda,\tmflimit}$ in~\eqref{def:ht:delta}.
Each atom $\timet_{\kvar,\tmvar}$ represents that $\tau(\kvar)=\tmvar$.
The key distinction lies in the formalization of
how these timing functions interact with the interval conditions specified in the program.

To this end, let \allIp\ stand for all intervals occuring in a metric logic program \program.

Given $\lambda,\tmflimit \in \mathbb{N}$
and
a general metric program $\program$,
we let
\begin{align}
  \newPsi_{\lambda,\tmflimit}(\program) \eqdef
  & \ \{ \faili{\intervco{m}{n}}{\kvar}{j} \leftarrow
    \timet_{\kvar,\tmvar} \wedge \timet_{j,\tmvar'} \mid
    0 \leq \kvar \leq j < \lambda-1,
    0\leq d<d'\leq\tmflimit,
    \label{def:ht:interval:one}\\&\qquad\qquad\nonumber
    \tmvar'-\tmvar < m,
    \intervco{m}{n} \in \allIp \}\; \cup            \\
  & \ \{ \faili{\intervco{m}{n}}{\kvar}{j} \leftarrow
    \timet_{\kvar,\tmvar} \wedge \timet_{j,\tmvar'} \mid
    0 \leq \kvar \leq j < \lambda-1,
    0\leq d<d'\leq\tmflimit,
    \label{def:ht:interval:two}\\&\qquad\qquad\nonumber
    \tmvar'-\tmvar \geq n,
    \intervco{m}{n} \in \allIp, n\in\mathbb{N} \}\;
\end{align}
Differing from the rule-oriented integrity constraints used to define $\Psi_{\lambda,\tmflimit}(\program)$ in~\eqref{def:ht:psi:one} and~\eqref{def:ht:psi:two},
we now utilize rules that directly signal interval condition violations.
Each such violation is indicated by an atom \faili{\cI}{k}{j},
which signifies that the difference $\tmf(j)-\tmf(k)$ falls outside the interval \cI,
where $k$ and $j$ are indices representing specific time points.

For the interval \intervco{15}{16} in rule~\eqref{ex:dentist:eit:translated},
we get
\begin{align}
  \faili{\intervco{15}{16}}{\kvar}{j} & \leftarrow \timet_{\kvar,\tmvar} \wedge \timet_{j,\tmvar'}
  \quad\text{ for } d'-d < 15\\
  \faili{\intervco{15}{16}}{\kvar}{j} & \leftarrow \timet_{\kvar,\tmvar} \wedge \timet_{j,\tmvar'}
  \quad\text{ for } d'-d \geq 16
\end{align}
and
$0 \leq \kvar \leq j < \lambda-1$,
$0\leq\tmvar<\tmvar'\leq\tmflimit$.
For $\lambda=100$ and $\tmflimit=1000$,
this then amounts to roughly $10^{10}$
\comment{REV: Accorging to some calculations it was $5*10^7$ before and $3*10^9$ now. So I rounded as before.}
instances for each of the above constraints.
Showing an increase by a factor of $10^2$
wrt. Section~\ref{sec:mlp:ht}
which amounts for the addition of $j$ ranging from $k$ to $100$.
Since $\faili{\cI}{\kvar}{j}$ is used to express falling outside the interval,
intervals like \intervco{15}{16}, with a distance of 1, generate the largest number of rules.
On the other hand, the interval \intervco{0}{\omega}, used in rules~\eqref{ex:dentist:nin} and~\eqref{ex:dentist:ten},
does not yield a single rule.
\comment{REV: We can elaborate on why there are no rules.}
Furthermore, the instance of rule \eqref{ex:dentist:eit}
for moving from $\tatm$ to $\thome$
uses the same interval as rule~\eqref{ex:dentist:eit:translated},
since the distance between these locations is the same.
In contrast to Section~\ref{sec:mlp:ht},
the rules for this interval condition are reused.

This approach culminates in the following completeness and correctness result.
\begin{theorem}[Completeness]\label{thm:mlp:full:ht:completeness}
  Let \program\ be a metric logic program over $\alphabet$,
  and
  $\M=(\tuple{\Ttrace,\Ttrace}, \tmf)$ a total timed \HT-trace of length $\lambda$.

  If
  $\M$ is a metric equilibrium model of $\program$,
  then
  there exists an equilibrium model $\tandt$ of $\newPi_\lambda(\program)\cup\Delta_{\lambda,\tmflimit}\cup\newPsi_{\lambda,\tmflimit}(\program)$
  with $\tmflimit=\tmf(\lambda-1)$
  such that $\tandt|_{\alphabets\cap\alphabetT}=\mhtToHt{\M}$.
\end{theorem}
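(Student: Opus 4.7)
The plan is to construct the required $\tandt$ by extending $\mhtToHt{\M}$ with valuations for the fresh atoms $\laux{\varphi}{k}$, $\ljaux{\varphi}{k}{j}$, and $\faili{\cI}{k}{j}$ introduced by the translations of Sections~\ref{sec:gmlp:base:translation} and~\ref{sec:gmlp:ht}. Concretely, I would put $\laux{\varphi}{k}$ into $T$ iff $\M,k\models\varphi$ for every subformula $\varphi$ occurring in $\program$, put $\ljaux{\varphi}{k}{j}$ into $T$ iff the $j$-th state serves as a witness for $\varphi$ at $k$ (applicable when $\varphi$ is headed by $\metricI{\eventuallyF}$ or $\metricI{\alwaysF}$), and put $\faili{\cI}{k}{j}$ into $T$ iff $\tmf(j)-\tmf(k)\notin\cI$. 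By construction $\tandt|_{\alphabets\cup\alphabetT}=\mhtToHt{\M}$.

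Next I would verify that $\tandt$ satisfies each of the three components. Membership in $\Delta_{\lambda,\tmflimit}$ with $\tmflimit=\tmf(\lambda-1)$ follows from Proposition~\ref{prop:ht:timed:delta}. The rules of $\newPsi_{\lambda,\tmflimit}(\program)$ derive $\faili{\intervco{m}{n}}{k}{j}$ precisely when $\tmf(j)-\tmf(k)<m$ or $\tmf(j)-\tmf(k)\geq n$, matching our choice of $T$. For $\newPi_\lambda(\program)$, a structural induction on metric atoms, using the semantic clauses of Section~\ref{sec:mht}, shows that every Tseitin clause in Table~\ref{tab:tseitin:metric:temporal} holds under this valuation; once the auxiliary atoms correctly track satisfaction, the rule translations $\tk{r}$ for each $\alwaysF r\in\program$ hold because $\M$ itself satisfies $r$ at each $k$.

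The hard part is establishing minimality, i.e., that no $\tuple{H,T}$ with $H\subset T$ can be an \HT-model of the translated program. First I would argue that the $\alphabetT$-atoms are pinned down classically: $\timet_{0,0}$ lies in $H$ by the fact in $\Delta_{\lambda,\tmflimit}$, and each disjunctive rule of~\eqref{def:ht:delta} inductively forces the unique successor $\timet_{k+1,\tmf(k+1)}$ appearing in $T$ into $H$ as well. The positive rules of $\newPsi_{\lambda,\tmflimit}(\program)$ then derive exactly the same $\faili{\cI}{k}{j}$ atoms in $H$ as in $T$, so these behave classically too. With both layers pinned down, a second structural induction on metric atoms uses the implications in $\eta^*_k(\varphi)$ to show that $\laux{\varphi}{k}\in H$ iff $\varphi$ is satisfied at state $k$ in the metric trace $\M'=(\tuple{\Htrace,\Ttrace},\tmf)$ whose $k$-th $H$-component is $\{a\in\alphabet\mid a_k\in H\}$. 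The rule translations $\tk{r}$ in $\newPi_\lambda(\program)$ then hold in $\tuple{H,T}$ iff $\M'$ satisfies every $r\in\program$, so $\M'$ is an \MHT-model of $\program$ with $\Htrace<\Ttrace$, contradicting the assumption that $\M$ is a metric equilibrium model.
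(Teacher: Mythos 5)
Your construction and the overall architecture of your argument coincide with the paper's proof: the paper builds the same interpretation (packaged as the three-valued interpretation of Definition~\ref{def:ht:trival:construction}), establishes the restriction property via Proposition~\ref{prop:ht:corr:eq}, delegates model-hood to Lemma~\ref{lem:mlp:full:ht:completeness}, and proves minimality layer by layer exactly as you do --- pinning the $\alphabetT$-atoms (your argument is Proposition~\ref{prop:ht:timed:delta:equilibrium}), then the $\faili{\cI}{k}{j}$-atoms, and finally reducing a hypothetical smaller \HT-model to a smaller \MHT-model of \program. Your observation that the constructed $T$ contains no spurious $\faili{\cI}{k}{j}$ atoms, so that $H$ and $T$ must agree on $\alphabetI$, is sound and in fact lets you sidestep the paper's detour through the auxiliary constraints $\newPsi^{\bot}_{\lambda,\tmflimit}(\program)$ and Lemma~\ref{lem:mlp:full:ht:monotonic}.

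There is, however, a genuine gap in the final step: from $H\subset T$ you conclude that the induced trace satisfies $\Htrace<\Ttrace$, but this does not follow. The strict inclusion may be witnessed only by auxiliary atoms $\laux{\varphi}{k}$ or $\ljaux{\varphi}{k}{j}$, in which case $\Htrace=\Ttrace$, your $\M'$ is just $\M$, and no contradiction with metric equilibrium arises. You need the separate case the paper treats under $x\in\alphabetaux$: given that $\handt$ satisfies the implications $\eta^*_k$ of Table~\ref{tab:tseitin:metric:temporal} and agrees with $\tandt$ on $\alphabets\cup\alphabetT\cup\alphabetI$, the values of all atoms in $\alphabetaux$ are functionally determined by those of $\alphabets\cup\alphabetI$ --- note that your ``second structural induction'' as stated only characterizes membership of the $\laux{\varphi}{k}$ atoms in $H$, so you must also argue determinacy of the witness atoms $\ljaux{\varphi}{k}{j}$ (using classicality of the $\faili{\cI}{k}{j}$ atoms) --- whence $H=T$ on $\alphabetaux$ as well, contradicting $H\subset T$. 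With this case distinction added ($a_k\in T\setminus H$ for some $a\in\alphabet$ versus difference only in $\alphabetaux$), your proof is complete and substantively the same as the paper's.
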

\begin{theorem}[Correctness]\label{thm:mlp:full:ht:correctness}
  Let
  \program\ be a metric logic program,
  and
  $\lambda,\tmflimit \in \mathbb{N}$.

  If
  $\tandt$ is an equilibrium model of $\newPi_\lambda(\program)\cup\Delta_{\lambda,\tmflimit}\cup\newPsi_{\lambda,\tmflimit}(\program)$,
  then
  $\htToMht{\tandt|_{\alphabets\cap\alphabetT}}$ is a metric equilibrium model of $\program$.
\end{theorem}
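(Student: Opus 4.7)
The plan is to mirror the strategy used for Theorem~\ref{thm:mlp:ht:correctness}, with two extra layers of bookkeeping: the failure atoms $\faili{\cI}{k}{j}$ produced by $\newPsi_{\lambda,\tmflimit}(\program)$ and the Tseitin labels $\laux{\varphi}{k}, \ljaux{\varphi}{k}{j}$ produced by $\eta_k^*$. Let $\tandt$ be the given equilibrium model. By Proposition~\ref{prop:ht:delta:timed}, $\tandt$ is timed wrt $\lambda$, hence induces a timing function $\tmf$; set $\M=\htToMht{\tandt|_{\alphabets\cap\alphabetT}}$ and note that $\M$ carries the trace reconstructed from $T$ together with $\tmf$.

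First I would settle the interpretation of the auxiliary symbols. Since the atoms $\faili{\cI}{k}{j}$ occur only in the heads of $\newPsi_{\lambda,\tmflimit}(\program)$ and in bodies of rules in $\newPi_\lambda(\program)$, the minimality of $\tandt$ forces $\faili{\cI}{k}{j}\in T$ iff $\tmf(j)-\tmf(k)\notin\cI$ for every $\cI\in\allIp$; in particular their truth value depends only on $\tmf$, not on the H/T split. With this in hand, a structural induction on a metric atom $\matom$ (using the equivalences of the middle column of Table~\ref{tab:tseitin:metric:temporal}, which $\tandt$ satisfies because it satisfies $\eta_k^*(\matom)$) shows that $\laux{\matom}{k}\in T$ iff $\M,k\models\matom$. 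Applying this to the translation $\tk{r}$ of each rule $\alwaysF r\in\program$ and each $\trangeco{k}{0}{\lambda}$ yields that $\M,k\models r$, so $\M$ is an \MHT-model of $\program$.

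For the equilibrium condition, I argue by contradiction. Suppose there exists $\M'=(\tuple{\Htrace,\Ttrace},\tmf)$ with $\Htrace<\Ttrace$ that is an \MHT-model of $\program$. Build a companion HT-interpretation $\tuple{H'',T}$ with $H''\subsetneq T$ as follows: let $H''$ contain (i) the primary atoms $a_k$ for $a\in H_k$, (ii) exactly the timing atoms of $T$, (iii) exactly the failure atoms of $T$ (justified by their dependence on $\tmf$ only), and (iv) the labels $\laux{\varphi}{k}, \ljaux{\varphi}{k}{j}$ declared true precisely when $\M',k\models\varphi$ (resp.\ when the corresponding witness condition holds in $\M'$). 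A rule-by-rule check of $\Delta_{\lambda,\tmflimit}$, $\newPsi_{\lambda,\tmflimit}(\program)$, and each group in Table~\ref{tab:tseitin:metric:temporal} shows that $\tuple{H'',T}$ satisfies the whole translation, and since it also satisfies $\newPi_\lambda(\program)$ via the established label equivalences, it is an \HT-model strictly below $\tandt$, contradicting equilibrium.

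The main obstacle is item (iv): one must verify that the Tseitin implications in $\eta_k^*$ remain satisfied at the \emph{Here} world of $\tuple{H'',T}$, not merely at $T$. The delicate cases are $\metricI{\alwaysF}$ and $\metricI{\eventuallyF}$, where $\ljaux{\metricI{\alwaysF} b}{k}{j}$ and $\ljaux{\metricI{\eventuallyF} b}{k}{j}$ occur both positively and negatively among the rules of Table~\ref{tab:tseitin:metric:temporal}; here the classical behaviour of the $\faili{\cI}{k}{j}$ atoms (step 1) is essential, since it guarantees that the conditional bodies $\neg\faili{\cI}{k}{j}$ evaluate identically in $\tuple{H'',T}$ and $\tandt$, letting the induction on the semantic condition $\M',k\models\varphi$ push through. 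Once this persistence check is in place, the restriction $\htToMht{\tandt|_{\alphabets\cap\alphabetT}}=\M$ yields the claim.
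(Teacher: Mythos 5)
Your proposal is correct and follows essentially the same route as the paper: your steps (timedness from the $\Delta$ part, classical behaviour of the $\faili{\cI}{k}{j}$ atoms forced by equilibrium, label equivalences yielding $\M\models\program$, and minimality via constructing a strictly smaller \HT-model of the full translation from any smaller \MHT-trace) correspond exactly to the paper's splitting lemma, Lemma~\ref{lem:mlp:full:ht:monotonic}, Lemma~\ref{lem:mlp:full:ht:correctness} (done there with the three-valued machinery), and Lemma~\ref{lem:mlp:full:ht:completeness}. The only step you gloss over is that Proposition~\ref{prop:ht:delta:timed} applies to $\tandt|_{\alphabetT}$ only after a splitting argument shows this restriction is itself an equilibrium model of $\Delta_{\lambda,\tmflimit}$, which the paper supplies via Lemma~\ref{lem:splitting:full:ht}.
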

The restriction to $\tandt|_{\alphabets\cap\alphabetT}$ allows us to ignore auxiliary atoms.

\subsection{Translation of General Metric Logic Programs to \HTC}
\label{sec:gmlp:htc}

This section adapts the framework of the preceding one,
utilizing \HTC\ as the target system instead of \HT,
for extending the definitions of plain metric programs from Section~\ref{sec:mlp:htc} to general ones.
Specifically,
we keep the \HTC\ signature
$\tuple{\mathcal{X},\mathcal{D},\mathcal{C}}$
and extend the set $\mathcal{X}$ to include Boolean variables of the form $\laux{\mu}{k}$ $\ljaux{\mu}{k}{j}$ and $\faili{k}{j}{\cI}$.
The denotation for Boolean atoms, equalities, and difference constraints
remains as defined in Section~\ref{sec:mlp:htc}.
Furthermore,
we keep employing integer variables
\(
\alphabetT^c=\{ \timet_k \mid 0\leq k<\lambda\}
\)
to directly represent timing functions,
such that $\tau(k)=t_k$ for $0\leq k<\lambda$, adhering to the axioms of $\Delta^{c}_{\lambda}$
in~\eqref{def:htc:delta}.

However, to formalize the interaction between these timing functions and the interval conditions specified in the program,
we adopt the approach of the preceding section,
deriving atoms of the form \faili{\cI}{\kvar}{j} indicating that the difference $\tmf(j)-\tmf(k)$ falls outside interval \cI.
\begin{align}
  \newPsi^{c}_{\lambda}(\program) \eqdef
  & \ \{ \faili{\intervco{m}{n}}{\kvar}{j} \leftarrow \neg (\timet_{\kvar}-\timet_{j} \leq -m) \mid
    0 \leq \kvar \leq j < \lambda-1, \intervco{m}{n} \in \allIp \}\; \cup\label{def:htc:psi:one:general}                                     \\
  & \ \{ \faili{\intervco{m}{n}}{\kvar}{j} \leftarrow \neg (\timet_{j} - \timet_{\kvar} \leq n - 1) \mid
    0 \leq \kvar \leq j < \lambda-1, \intervco{m}{n} \in \allIp, n\in\mathbb{N} \}\label{def:htc:psi:two:general}
\end{align}

For the interval \intervco{15}{16} in rule~\eqref{ex:dentist:eit:translated},
we get
\begin{align}
  \faili{\intervco{15}{16}}{\kvar}{j} & \leftarrow \neg (\timet_{\kvar}-\timet_{j} \leq -15)\\
  \faili{\intervco{15}{16}}{\kvar}{j} & \leftarrow \neg (\timet_{j} - \timet_{\kvar} \leq 15)
\end{align}
for
$0 \leq \kvar \leq j < \lambda-1$.
Given $\lambda=100$, this amounts to $10^4$ instances,
showing the same increase proportional $\lambda$ as in the \HT\ approach
with respect to the plain metric fragment for \HTC,
due to the inclusion of $j$.
\comment{REV: We could elaborate a bit more there.}

To show completeness and correctness,
we define the restriction of an \HTC\ interpretation to an alphabet $\alphabet$
as
\(
\htcinterp|_{\alphabet} = \tuple{\Vh|_{\alphabet},\Vt|_{\alphabet}}
\)
where $f|_{\alphabet}$ is defined such that $f|_{\alphabet}(a) = f(a)$ for all $a\in\alphabet$.
\begin{theorem}[Completeness]\label{thm:mlp:full:htc:completeness}
  Let \program\ be a metric logic program over $\alphabet$,
  and
  $\M=(\tuple{\Ttrace,\Ttrace}, \tmf)$ a total timed \HT-trace of length $\lambda$.

  If
  $\M$ is a metric equilibrium model of $\program$,
  then
  there exists a constraint equilibrium model $\htcttinterp$ of $\newPi_\lambda(\program)\cup\Delta^{c}_{\lambda}\cup\newPsi^{c}_{\lambda}(\program)$
  such that $\htcttinterp|_{\alphabets\cap\alphabetT^c}=\mhtToHtc{\M}$.
\end{theorem}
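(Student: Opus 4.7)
The strategy is to construct the witness $\htcttinterp$ as a canonical extension of $\mhtToHtc{\M}$ by assigning the auxiliary variables according to the metric semantics of $\M$, and then verify modelhood and minimality by transferring systematically between \MHT\ and \HTC. I start from $\mhtToHtc{\M}$, which already fixes the truth of the atoms in $\alphabets$ and the integer values of the variables $\timet_k$ via $\tmf$. I then extend to the auxiliary variables by setting $\faili{\cI}{k}{j}$ to $\true$ exactly when $\tmf(j)-\tmf(k)\notin\cI$, $\laux{\mu}{k}$ to $\true$ exactly when $\M,k\models\mu$, and $\ljaux{\mu}{k}{j}$ to $\true$ exactly as prescribed by the biconditional in the middle column of Table~\ref{tab:tseitin:metric:temporal}. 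Since $\M$ is total, the $h$ and $t$ components of all these assignments coincide, yielding a total \HTC-interpretation $\htcttinterp$ whose restriction to $\alphabets\cap\alphabetT^c$ is $\mhtToHtc{\M}$.

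For model verification, Proposition~\ref{prop:htc:timed:delta} directly gives $\htcttinterp\models\Delta^c_\lambda$. For $\newPsi^c_\lambda(\program)$, the difference constraints $\timet_k-\timet_j\leq -m$ and $\timet_j-\timet_k\leq n-1$ capture exactly the bounds $\tmf(j)-\tmf(k)\in\intervco{m}{n}$ on the timed valuation, so negating them in a body yields precisely the head $\faili{\cI}{k}{j}$ as defined. For $\newPi_\lambda(\program)$, each Tseitin implication in $\eta_k^*(\mu)$ is one direction of the corresponding biconditional $\eta_k(\mu)$; a routine induction on the structure of $\mu$, using the \MHT\ semantics from Section~\ref{sec:mht} as base cases, shows that the labelling is preserved. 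Finally, every rule $\alwaysF{(\head\leftarrow\body)}\in\program$ translates to $\lambda$ rule instances whose bodies and heads are labelled proxies for the original metric atoms, and these are satisfied because $\M,k\models\head\leftarrow\body$ for every $k$.

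For equilibrium, suppose for contradiction that some $\tuple{\Vh,\Vt}$ with $\Vh\subsetneq\Vt$ is an \HTC-model of $\newPi_\lambda(\program)\cup\Delta^c_\lambda\cup\newPsi^c_\lambda(\program)$ with $\Vt$ the valuation constructed above. By Proposition~\ref{pro:htc:defined}, every $\timet_k$ takes the value $\tmf(k)$ already in $\Vh$, so the rule bodies in $\newPsi^c_\lambda(\program)$ evaluate identically under $\Vh$ and $\Vt$, forcing each $\faili{\cI}{k}{j}$ to agree. Defining $H_k=\{a\in\alphabet\mid\Vh(a_k)=\true\}$ and $\M'=(\tuple{\Htrace,\Ttrace},\tmf)$, we have $\Htrace\leq\Ttrace$ and $\Htrace\neq\Ttrace$. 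An induction on the subformula structure of $\mu$, using the implications in $\eta_k^*$, yields the correspondence $\M',k\models\mu$ iff $\Vh(\laux{\mu}{k})=\true$; hence every rule of $\program$ is satisfied by $\M'$, contradicting the metric equilibrium of $\M$.

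The principal obstacle I anticipate is the inductive step for the $\metricI{\alwaysF}$ and $\metricI{\eventuallyF}$ operators in the equilibrium argument, since the Tseitin clauses in Table~\ref{tab:tseitin:metric:temporal} use negated $\faili{\cI}{k}{j}$ atoms in rule bodies, and one must ensure these classical-style guards behave uniformly under $\Vh$ and $\Vt$ despite the intuitionistic semantics of \HTC\ implication. The key leverage is that $\faili{\cI}{k}{j}$ is pinned down by rules whose bodies depend only on difference constraints over the $\timet_k$, which are fixed by $\Delta^c_\lambda$; this collapses $\faili{\cI}{k}{j}$ to classical behaviour and lets the induction proceed uniformly over all temporal subformulas.
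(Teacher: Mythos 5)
Your proposal is correct, and the witness construction is essentially the paper's: you extend $\mhtToHtc{\M}$ canonically by making $\faili{\cI}{\kvar}{j}$ true exactly when $\tmf(j)-\tmf(\kvar)\notin\cI$ and by evaluating the $\laux{\matom}{k}$ and $\ljaux{\matom}{k}{j}$ labels according to the metric semantics of $\M$, which is exactly the interpretation $\htcinterpex$ of Definition~\ref{def:htc:trival:construction}, and modelhood of $\Delta^{c}_{\lambda}$, $\newPsi^{c}_{\lambda}(\program)$ and $\newPi_\lambda(\program)$ is checked the same way (the paper phrases your ``routine induction'' via the three-valued machinery of Proposition~\ref{prop:eq:eta:aux}). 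Where you genuinely diverge is the minimality argument. The paper proceeds modularly: it splits the program (Lemma~\ref{lem:splitting:full:htc}), shows the restrictions to $\alphabetT^c$ and then to $\alphabetT^c\cup\alphabetI$ are in equilibrium, introduces the auxiliary constraints $\newPsi^{c\bot}_{\lambda}(\program)$ together with Lemma~\ref{lem:mlp:full:htc:monotonic} to force classical behaviour of the $\faili{\cI}{\kvar}{j}$ atoms in an arbitrary competing model, and then invokes the correctness lemma (Lemma~\ref{lem:mlp:full:htc:correctness}) to turn a hypothetical smaller \HTC-model into a smaller \MHT-model of $\program$. You instead argue directly: Proposition~\ref{pro:htc:defined} pins the $\timet_k$ already in $\Vh$, so the difference-constraint bodies evaluate identically in $\Vh$ and $\Vt$; this derives $\faili{\cI}{\kvar}{j}$ in $\Vh$ whenever the difference lies outside $\cI$, while in the in-interval case agreement follows not from the rules but from $\Vh\subseteq\Vt$ together with the canonicity of $\Vt$ (your wording slightly overstates what the rules alone force, but the conclusion is right) --- this is exactly why you can dispense with $\newPsi^{c\bot}_{\lambda}(\program)$: the paper needs it because its correctness lemma must handle models whose $t$-component is not canonical, whereas in the equilibrium check the $t$-component is fixed to the canonical valuation. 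Your route is more self-contained and elementary; the paper's buys reusability, since the same auxiliary lemmas serve the correctness theorem (Theorem~\ref{thm:mlp:full:htc:correctness}) and mirror the \HT\ case.
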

\begin{theorem}[Correctness]\label{thm:mlp:full:htc:correctness}
  Let
  \program\ be a metric logic program,
  and
  $\lambda \in \mathbb{N}$.

  If
  $\htcttinterp$ is an constraint equilibrium model of $\newPi_\lambda(\program)\cup\Delta^{c}_{\lambda}\cup\newPsi^{c}_{\lambda}(\program)$,
  then
  $\htcToMht{\htcttinterp|_{\alphabets\cap\alphabetT^c}}$ is a metric equilibrium model of $\program$.
\end{theorem}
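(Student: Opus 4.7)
The plan is to mirror the argument of Theorem~\ref{thm:mlp:ht:correctness} and Theorem~\ref{thm:mlp:full:ht:correctness}, but carried out in \HTC\ and replacing the Boolean encoding of the timing function by the integer variables and difference constraints of Section~\ref{sec:mlp:htc}. Fix a constraint equilibrium model $\htcttinterp$ of $\newPi_\lambda(\program)\cup\Delta^{c}_{\lambda}\cup\newPsi^{c}_{\lambda}(\program)$. Since $\htcttinterp$ in particular satisfies $\Delta^{c}_{\lambda}$, Proposition~\ref{prop:htc:delta:timed} yields that $\htcttinterp$ is timed wrt $\lambda$ and hence induces a timing function $\tmf$. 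Let $\M\eqdef\htcToMht{\htcttinterp|_{\alphabets\cap\alphabetT^c}}$; by construction $\M$ is a timed \HT-trace of length~$\lambda$ with timing function~$\tmf$.

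Next, I would establish a semantic bridge lemma stating that, for every subformula $\varphi$ occurring in some rule of $\program$ and every $\trangeco{k}{0}{\lambda}$, the auxiliary atoms introduced by the Tseitin-style encoding in Table~\ref{tab:tseitin:metric:temporal} are forced to coincide with the metric satisfaction relation in $\M$. Concretely, on the timed component this amounts to $\Vt(\faili{\cI}{k}{j})=\true$ iff $\tmf(j)-\tmf(k)\notin\cI$, which follows directly from the denotation of difference constraints together with the defining rules in~(\ref{def:htc:psi:one:general}/\ref{def:htc:psi:two:general}); and, by induction on the structure of metric atoms, $\Vt(\laux{\varphi}{k})=\true$ iff $\M,k\models\varphi$. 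The induction uses the equivalences in the middle column of Table~\ref{tab:tseitin:metric:temporal}, noting that the right column implies them once one observes that the $\faili{\cI}{k}{j}$ atoms behave classically (they appear only in bodies and their defining bodies are classical difference-constraint expressions). From this bridge lemma, the rule translation $\tk{r}$ together with the labelling $\laux{\matom}{k}$ of metric atoms yields $\M,k\models r$ for every rule $\alwaysF{r}\in\program$ and every $\trangeco{k}{0}{\lambda}$, so $\M$ is an \MHT-model of $\program$.

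To complete the argument, it remains to establish the minimality required for a metric equilibrium model. I would argue by contrapositive: suppose there existed $\M'=(\tuple{\Htrace,\Ttrace},\tmf)$ with $\Htrace<\Ttrace$ satisfying $\program$. Using the inverse of $\htcToMhtf^c$ (as stated after Proposition~\ref{prop:htc:timed:delta}, since the maps are invertible), lift $\M'$ to a candidate valuation $h'$ over the base variables $\alphabets$ with $h'\subset t$, extend it with the same timing valuation $x$ (so the difference constraints of $\Delta^{c}_{\lambda}$ and $\newPsi^{c}_{\lambda}(\program)$ remain satisfied in $\langle h',t\rangle$), and finally close it under the defining rules of $\newPi_\lambda(\program)$ to determine unique truth values for the auxiliary atoms $\laux{\mu}{k}$, $\ljaux{\mu}{k}{j}$, and $\faili{\cI}{k}{j}$. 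Applying the bridge lemma again, the resulting \HTC-interpretation $\langle h',t\rangle$ is a model of $\newPi_\lambda(\program)\cup\Delta^{c}_{\lambda}\cup\newPsi^{c}_{\lambda}(\program)$ with $h'\subset t$, contradicting the equilibrium status of $\htcttinterp$.

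The main obstacle is the third step, namely the careful construction of the smaller \HTC-interpretation. One must verify that the auxiliary labels can always be reconstructed as a strictly smaller valuation while still satisfying the whole translation: this requires (i) that $\faili{\cI}{k}{j}$ is entirely determined by the timing variables $\timet_k$, so its value does not change when $t$ is preserved; (ii) that the Tseitin rules $\eta^*_k(\varphi)$ define $\laux{\varphi}{k}$ and $\ljaux{\varphi}{k}{j}$ classically from the base atoms, so that a strictly smaller $\Htrace$ for $\program$ induces a strictly smaller valuation on $\alphabets$ and hence on the auxiliary labels as well. Once this classicality of the auxiliary atoms is proved (an auxiliary proposition in the spirit of the remark following Table~\ref{tab:tseitin:metric:temporal}), the transfer of minimality from \HTC\ to \MHT\ becomes straightforward.
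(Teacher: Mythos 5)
Your overall two-step strategy (first establish $\M\models\program$ via a correspondence between the auxiliary atoms and metric satisfaction, then transfer minimality by lifting a hypothetical smaller \MHT-model to a smaller \HTC-model of the translation) is exactly the skeleton of the paper's proof. However, there is a genuine gap at the most delicate point: you claim that $\Vt(\faili{\cI}{k}{j})=\true$ iff $\tmf(j)-\tmf(k)\notin\cI$ ``follows directly from the denotation of difference constraints together with the defining rules in~(\ref{def:htc:psi:one:general}/\ref{def:htc:psi:two:general})''. It does not: those rules only force $\faili{\cI}{k}{j}$ to be true when the difference lies \emph{outside} $\cI$; when the difference lies \emph{inside} $\cI$ both rule bodies are false, so mere satisfaction places no constraint whatsoever on $\faili{\cI}{k}{j}$, and a spurious true $\faili{\cI}{k}{j}$ would wreck your bridge lemma (it silently discharges an $\metricI{\alwaysF}$ obligation and blocks an $\metricI{\eventuallyF}$ witness). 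The missing ingredient is foundedness: one must argue that in a \emph{constraint equilibrium} model such spurious atoms are unfounded and hence absent. The paper isolates exactly this step via the splitting result (Lemma~\ref{lem:splitting:full:htc}), the auxiliary constraint set $\newPsi^{c\bot}_{\lambda}(\program)$, and Lemma~\ref{lem:mlp:full:htc:monotonic}, and only then applies its correctness lemma (Lemma~\ref{lem:mlp:full:htc:correctness}); without some version of this argument your first step is incomplete, even though it is fixable because equilibrium is indeed available.

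A secondary inaccuracy concerns the minimality step: you require that the Tseitin rules define the labels $\laux{\mu}{k}$, $\ljaux{\mu}{k}{j}$ ``classically from the base atoms'' and that a strictly smaller $\Htrace$ induces a strictly smaller valuation on the labels. Classicality is not the right notion here and strict shrinkage on the labels is neither needed nor guaranteed; what is needed is a valuation $h'\subseteq t$ that models the whole translation and is strictly smaller on some $a_k$. The paper obtains it not by ``closing under the rules'' but by explicitly assigning each label the three-valued \HT-value of its metric atom in the smaller trace (Definition~\ref{def:htc:trival:construction}, packaged as the completeness lemma, Lemma~\ref{lem:mlp:full:htc:completeness}), which works because the metric atoms of a general program contain no nested implications, so their labels are determined by min/max over base atoms and the (classical, timing-determined) $\faili{\cI}{k}{j}$ atoms. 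With these two repairs your sketch coincides with the paper's proof.
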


\section{Implementation}
\label{sec:implementation}

In what follows,
we rely on a basic acquaintance with the ASP system \clingo~\citep{PotasscoUserGuide}.
% We outline specialized concepts as they are introduced throughout the text.
We show below how easily our approach is implemented via \clingo's meta-encoding framework.
This serves us as a blueprint for a more sophisticated future implementation.

However, before delving into details, we give in Figure~\ref{fig:workflow} an overview of our framework and its workflows.
\begin{figure}[ht]
   \centering
   \includegraphics[width=0.9\linewidth]{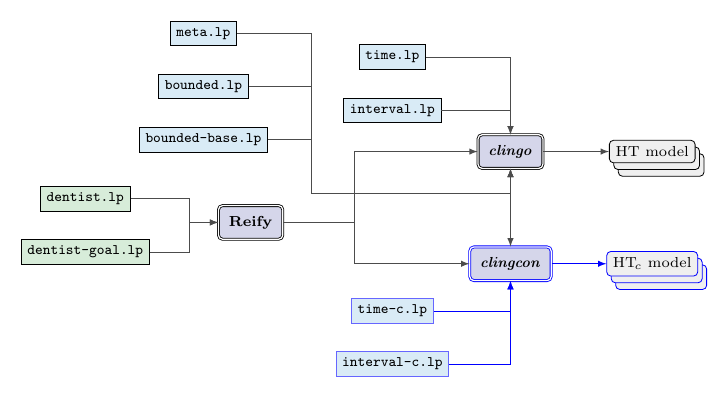}
   \caption{Workflow of our implementations applied to the dentist example.}
   \label{fig:workflow}
\end{figure}
File names depicted in the diagram correspond to those introduced in the listings of this section.
Nodes with a green background represent user-provided input programs, illustrated here by the dentist example, whereas solver nodes are distinguished by a purple double border.
The blue nodes denote the meta-encodings used for the translation, some of which are shared by both approaches, while the \HTC-specific pipeline is explicitly highlighted with blue arrows.
All presented encodings, as well as our evaluation results, are publicly available at \url{https://github.com/potassco/memelingo}.

\Clingo\ allows for reifying a ground logic program in terms of facts, which can then be (re)interpreted by a meta-encoding.
The result of another grounding is then channeled to the respective back-end, in our case a regular or hybrid ASP solver, respectively.
Though, for brevity, we must refer to~\citep{karoscwa21a} for details,
we mention that a reified ground program is represented by instances of predicates
\lstinline{atom_tuple},
\lstinline{literal_tuple},
\lstinline{rule},
\lstinline{output},
etc.
\footnote{Below we draw upon the symbol table, captured by \lstinline{output/2}, for extracting syntactic entities.}

\lstinputlisting[caption={Metric logic program for dentist example (\texttt{dentist.lp})},label={prg:dentist},language=clingos]{encodings/dentist.lp}
Listing~\ref{prg:dentist}
uses the dentist example
(leaving out the representation of Table~\ref{table:dentist} in terms of \lstinline{distance/3})
to illustrate
a metric logic program.
This input program is then reified and interpreted by the corresponding meta-encodings.
For simplicity, we assume that each rule is implicitly in the scope of an always operator \alwaysF.
Moreover, included temporal operators and their comprised atoms are exempt from simplifications during grounding.\footnote{In meta-encodings, this is done by adding corresponding \lstinline{#external} directives.}
We use predicate \lstinline{next/2} for the metric next operator.
As an example,
consider the instance of Line~\ref{ex:dentist:eit:code} for moving from $\toffice$ to $\thome$,
viz.\ the counterpart of~\eqref{ex:dentist:eit:translated}.
\begin{lstlisting}[language=clingos]
next((15,16),at(ram,home)) :-
             at(ram,office), go(ram,home), distance(office,home,15).
\end{lstlisting}
Note that \lstinline[mathescape]{next((0,w),$\cdot$)} in the head of the last two rules
stands for $\next$ aka $\next_{\intervco{0}{\omega}}$.

\lstinputlisting[caption={Metric logic program for the goal in dentist example (\texttt{dentist-goal.lp})},label={prg:dentist-goal},language=clingos]{encodings/dentist-goal.lp}
The rules in Listing~\ref{prg:dentist} do not ensure that Ram reaches the dentist within one hour with all necessary items.
This condition is represented in Listing~\ref{prg:dentist-goal},
corresponding to rules~\eqref{ex:dentist:goal} and \eqref{ex:dentist:query}.
However, these rules use the global metric operator $\metricI{\eventuallyF}$,
which means they do not conform a (plain) metric logic program.
As a result, we must adopt a different approach for these programs,
replacing Listing~\ref{prg:dentist-goal},
which will be discussed in Section~\ref{sec:implementation:plain}.
Before proceeding,
we will conclude this section with the remaining
implementation details that are
common to both fragments.

\lstinputlisting[caption={Timed meta-encoding (\texttt{meta.lp})},label={prg:meta},language=clingos]{encodings/meta.lp}
Listing~\ref{prg:meta} modifies the basic meta-encoding in~\citep{karoscwa21a}
by adding a variable \lstinline{T} for time steps to all derived predicates.
Their range is fixed in Line~\ref{meta:time}.
In this way, an atom \lstinline{hold(a,k)} stands for $a_k$ in \alphabets,
where \lstinline{a} is the numeric identifier of $a$ in \alphabet.
While this encoding handles Boolean connectives,
the metric ones are treated in Listing~\ref{prg:metabound}.
The rules in Line~\ref{meta:bounded:true:one} and~\ref{meta:bounded:true:two} restore the symbolic representation
of the numerically identified atoms, which allows us to analyze the inner structure of modalized propositions.
Lines~(\ref{meta:bounded:initially:one}/\ref{meta:bounded:initially:two}) and
(\ref{meta:bounded:finally:one}/\ref{meta:bounded:finally:two})
deal with \initially\ and \finally, respectively.
The remaining metric operators
are handled differently for each metric fragment,
as described in Sections~\ref{sec:mlp} and~\ref{sec:gmlp}.
These will be discussed in detail in
Sections~\ref{sec:implementation:plain} and~\ref{sec:implementation:general}, respectively.
\lstinputlisting[caption={Meta encoding for base metric operators (\texttt{bounded-base.lp})},label={prg:metabound},language=clingos]{encodings/meta_bounded.lp}

\subsection{Implementation of Plain Metric Logic Programs}
\label{sec:implementation:plain}

\lstinputlisting[caption={Meta encoding for metric operators of $\Pi_\lambda(\program)$ (\texttt{bounded.lp})},label={prg:metabound:op},language=clingos]{encodings/meta_bounded-op.lp}

Lines~\ref{meta:bounded:next:one} and~\ref{meta:bounded:next:two}
of Listing~\ref{prg:metabound:op}
realize the metric next operator,
$\metricI{\Next}a$, represented by term \lstinline{next(I,a)}.
Together, Listing~\ref{prg:meta},~\ref{prg:metabound} and \ref{prg:metabound:op} account for $\Pi_\lambda(\program)$.

When expressing time via Boolean variables,
the previous listings are combined with Listing~\ref{prg:time:ht} and~\ref{prg:interval:ht} below,
which realize $\Delta_{\lambda,\tmflimit}$ and $\Psi_{\lambda,\tmflimit}(\program)$, respectively.
Atoms $\timet_{\kvar,\tmvar}$ in \alphabetT\ are represented by \lstinline{t(k,d)}.
The upper bound $\tmflimit$ on the timing function's range is given by \lstinline{v},
and $\omega$ is represented by \lstinline{w}.
The two encodings directly mirror the definitions of $\Delta_{\lambda,\tmflimit}$ and
$\Psi_{\lambda,\tmflimit}(\program)$,
with one key difference:
the rule bodies in~\eqref{def:ht:psi:one} and~\eqref{def:ht:psi:two} are replaced in
Lines~\ref{interval:ht:one} and~\ref{interval:ht:two} of Listing~\ref{prg:interval:ht} by auxiliary atoms of
predicate \lstinline{true/2}.
\lstinputlisting[caption={Meta encoding for $\Delta_{\lambda,\tmflimit}$ (\texttt{time.lp})},label={prg:time:ht},language=clingos]{encodings/time_ht.lp}
\lstinputlisting[caption={Meta encoding for $\Psi_{\lambda,\tmflimit}(\program)$ (\texttt{interval.lp})},label={prg:interval:ht},language=clingos]{encodings/interval_ht.lp}

When expressing time in terms of integer variables,
we rely on difference constraints for modeling timing functions.
Such simplified linear constraints have the form `$x - y \leq d$' for $x,y\in\mathcal{X}$ and $d\in\mathbb{Z}$
and are supported by the \clingo\ extensions \clingcon~\citep{bakaossc16a} and \clingodl~\citep{jakaosscscwa17a}.
We use below \clingcon's syntax and represent them as `\lstinline[mathescape]|&sum{$x$ ; $y$} <= $d$|'.
A Boolean atom $a$ can be seen as representing `$a=\true$'.
In the case at hand,
Listing~\ref{prg:meta},~\ref{prg:metabound} and \ref{prg:metabound:op} are now completed by Listing~\ref{prg:time:htc} and~\ref{prg:interval:htc} below.
As above, they faithfully replicate the definitions of $\Delta^{c}_{\lambda}$ and $\Psi^{c}_{\lambda}(\program)$.
Unlike above, however,
the timing function is now captured by integer variables of form \lstinline{t(k)} and
its range restriction is now obsolete.
The rules in Listing~\ref{prg:time:htc} mirror the two conditions on timing functions,
namely, that \lstinline{t(0)} equals zero and that the instances of \lstinline{t(K+1)} receive a strictly greater integer
than the ones of \lstinline{t(K)} for \lstinline{K} ranging from \lstinline{0} to \lstinline{lambda-1}.
Similarly, given that \lstinline{w} stands for $\omega$,
the two rules in Listing~\ref{prg:interval:htc} correspond to
the difference constraints in~\eqref{def:htc:psi:one} and~\eqref{def:htc:psi:two}.
\lstinputlisting[caption={Meta encoding for $\Delta^{c}_{\lambda}$ (\texttt{time-c.lp})},label={prg:time:htc},language=clingos]{encodings/time_htc.lp}
\lstinputlisting[caption={Meta encoding for $\Psi^{c}_{\lambda}(\program)$ (\texttt{interval-c.lp})},label={prg:interval:htc},language=clingos]{encodings/interval_htc.lp}
As in Listing~\ref{prg:interval:ht}, we use auxiliary atoms of predicate \lstinline{true/2} rather than the
corresponding rule bodies.
Notably, we shifted in Listing~\ref{prg:interval:htc} the difference constraints in~\eqref{def:htc:psi:one}
and~\eqref{def:htc:psi:two} from the body to the head.
This preserves (strong) equivalence in \HTC\ whenever all variables comprised in a constraint atom are defined,
as guaranteed by Proposition~\ref{pro:htc:defined}.
\comment{REV: For the other fragment we do this shift in the translation already.}

In the two approaches at hand,
we may compensate the lack of global metric operators in out example by
replacing Line~1 by `\lstinline{:- finally, not goal}' and
enforcing the time limit either
by setting $\tmflimit$ to 60 in our \HT-based approach or
by extending Listing~\ref{prop:htc:timed:delta} with `\lstinline|&sum{t(K)} <= 60 :- time(K), not time(K+1).|' in our \HTC-based approach.
However, this is only effective when the goal is achieved at the final step.

The example in Listing~\ref{prg:dentist}
is addressed with \clingo\ in the following way.
\begin{lstlisting}[basicstyle=\small\ttfamily,numbers=none]
clingo dentist.lp --output=reify |
clingo 0 - meta.lp bounded-base.lp bounded.lp
           time.lp interval.lp -c lambda=4 -c v=110
\end{lstlisting}
When using \clingcon\ instead,
it suffices to replace the second line with:
\begin{lstlisting}[basicstyle=\small\ttfamily,numbers=none]
clingcon 0 - meta.lp bounded-base.lp bounded.lp
           time-c.lp interval-c.lp -c lambda=4
\end{lstlisting}
Our choices of \lstinline{lambda} and \lstinline{v}
allow for all movement combinations within the 4 steps required to reach the goal;
we obtain in each case 27 solutions.
Once we include the query-oriented additions from above, we obtain a single model instead.

\subsection{Implementation of General Metric Logic Programs}
\label{sec:implementation:general}

\lstinputlisting[caption={Meta encoding for metric operators of $\newPi_\lambda(\program)$ (\texttt{bounded.lp})},label={prg:metabound:op:general},language=clingos]{encodings/meta_bounded-op-general.lp}
Listing~\ref{prg:metabound:op:general}
extends Listing~\ref{prg:metabound} to account for the
metric next operator $\metricI{\Next}b$ in Lines~\ref{meta:bounded:next:one:general} to~\ref{meta:bounded:next:last:general},
the metric always operator $\metricI{\alwaysF}b$ in Lines~\ref{meta:bounded:always:one} to~\ref{meta:bounded:always:last},
and the metric eventually operator $\metricI{\eventuallyF}b$ in Lines~\ref{meta:bounded:eventually:one} to~\ref{meta:bounded:eventually:last}.
These rules mirror the definitions of $\eta^*_k(\varphi)$ in Table~\ref{tab:tseitin:metric:temporal},\footnote{The use of the predicate \lstinline{output/2} is merely to ensure rule safety.}
where atoms \lstinline{true(m,k)} correspond to $\laux{\varphi}{k}$,
witnesses $\ljaux{\varphi}{k}{j}$ are expressed via the atom \lstinline{wit(m,(k,j))},
and interval violations $\faili{i}{k}{j}$ are represented by \lstinline{iv(i,(k,j))}.
Together, Listings~\ref{prg:meta},~\ref{prg:metabound}, and~\ref{prg:metabound:op:general} account for $\newPi_\lambda(\program)$.

When expressing time via Boolean variables,
we reuse Listing~\ref{prg:time:ht} to implement $\Delta_{\lambda,\tmflimit}$ and
define $\newPsi_{\lambda,\tmflimit}(\program)$ in Listing~\ref{prg:interval:ht:general} below.
The two rules in Lines~(\ref{interval:ht:general:onea}/\ref{interval:ht:general:oneb})
and Lines~(\ref{interval:ht:general:twoa}/\ref{interval:ht:general:twob})
correspond to \eqref{def:ht:interval:one} and \eqref{def:ht:interval:two},
respectively,
where predicate \lstinline{i/1} collects the intervals in $\allIp$
\lstinputlisting[caption={Meta encoding for $\newPsi_{\lambda,\tmflimit}(\program)$ (\texttt{interval.lp})},label={prg:interval:ht:general},language=clingos]{encodings/interval_ht-general.lp}

When expressing time using integer variables,
we rely on difference constraints to model timing functions,
as described in the previous section.
We reuse Listing~\ref{prg:time:htc} to implement $\Delta^{c}_{\lambda}$
and define $\newPsi^{c}_{\lambda}(\program)$ in Listing~\ref{prg:interval:htc:general} below.
The two rules in Listing~\ref{prg:interval:htc:general} correspond to
rules~\eqref{def:htc:psi:one:general} and~\eqref{def:htc:psi:two:general}.
\lstinputlisting[caption={Meta encoding for $\newPsi^{c}_{\lambda}(\program)$ (\texttt{interval-c.lp})},label={prg:interval:htc:general},language=clingos]{encodings/interval_htc-general.lp}

The example in Listing~\ref{prg:dentist}
can be addressed using \clingo\ and \clingcon\ in the same manner as before.
However, this fragment accommodates the goal condition
from Listing~\ref{prg:dentist-goal}.
Listing~\ref{prg:dentist:output} displays the corresponding \clingo\ output.
\lstinputlisting[float=ht,language=bash,basicstyle=\small\ttfamily,caption={System output showcasing the dentist example using \clingo},label={prg:dentist:output}]{encodings/system-clingo.sh}
For readability,
we ordered the output atoms and omitted all predicates except
\lstinline{t/2}, \lstinline{go/3}, and \lstinline{at/3}.
The output presents
the expected optimal plan for reaching the dentist on time.
Line~6 details Ram's actions
(traveling from the office to the ATM, then home, and finally to the dentist),
while Lines~7--9 track his location and the items he acquires.
The timing function is encoded as normal atoms over the predicate \lstinline{t/2} (Line~11);
the time difference between consecutive states corresponds to the distance between
locations, achieving the goal in 55~minutes.

The \clingcon\ output (Listing~\ref{prg:dentist:output:clingcon}) differs only in its representation of the timing function,
which utilizes integer variables
and is presented separately as assignments
in Lines~11 and~12.
\lstinputlisting[float=ht,language=bash,basicstyle=\small\ttfamily,caption={System output showcasing the dentist example using \clingcon},label={prg:dentist:output:clingcon}]{encodings/system-clingcon.sh}
 \subsection{Results}
\label{sec:results}
In this section, we present some empirical results on the scalability
of both approaches.
We use \clingo\ 5.8.0~\citep{karoscwa21a} for \HT,
and \clingcon\ 5.2.1~\citep{bakaossc16a} and \clingodl\ 1.5.0~\citep{jakaosscscwa17a} for \HTC.
In our setting,
the solutions computed by \clingcon\ coincide with those defined by \HTC,
while \clingodl\ returns only assignments
where the time points take the smallest possible positive integer values.

\paragraph{Experimental Setup.}
We evaluated our implementation on three problem domains.
For each of them, we developed a novel metric encoding.
The first is the dentist scenario, used
to compare both metric
fragments, plain (\(P\)) and general (\(G\)), and to
study the scalability of each approach
and the overhead introduced by the general fragment.
In this domain, we enumerate all models
to ensure full inspection of the search space, with and without the goal condition.
Since the formalization uses only intervals of size one,
the minimality condition of \clingodl\ does not reduce the number of models.
The second domain is Multi-Agent Path Finding~\citep{ststfekomawaliatcokubabo19b}.
It is evaluated only for the general fragment,
because the representation requires global temporal operators
(\(\Diamond_I\), \(\Box_I\)) to express
multi-agent coordination and goal conditions,
which are not available in the plain fragment.
In this domain, we analyze the scalability of the general fragment across all approaches in a more complex setting.
We compute a single model, as enumeration would lead to a very large number of models,
so the minimality imposed by \clingodl\ only affects which model is returned.
The metric representation is based on the ASP encoding from~\citet{bekascsosvwa24a}.
The third domain is Job-Shop Scheduling. It is evaluated in the general fragment, computing a single model.
Here, we analyze the impact of varying the horizon on the overall scalability of each approach.
The metric encoding is inspired by~\citet{huozdi20a}.

We conducted all experiments on a cluster\footnote{\url{https://www.cs.uni-potsdam.de/bs/research/labs.html#hardware}}
with Intel Xeon E5-2650v4@2.9GHz CPUs
and 64GB of memory, running Debian Linux 10,
imposing per instance a timeout of 20 minutes and a memory limit of 20GB.

\subsubsection{Dentist Scenario}

In our first setting,
we use the running example from Listing~\ref{prg:dentist}.
As explained before,
we select \lstinline{lambda}=4 and \lstinline{v}=110 to
generate all movement combinations within the 4 steps required to reach the goal,
resulting in 27 solutions.
To investigate scalability, we multiply both the durations in Table~\ref{table:dentist}
and the time limit \lstinline{v} by a factor $f\in\{1, 4, 7, 10\}$.
The results are summarized in Table~\ref{tab:benchmarks}.
They show that \clingo\ does not scale well.
This is most clear at $f=10$, where it reaches the memory limit while grounding.
In contrast,
the performance of \clingcon\ and \clingodl\
is independent of the time granularity, for both fragments $P$ and $G$.
The inner workings of both systems differ,
but in this simple example their behavior is essentially the same.
As anticipated in Sections~\ref{sec:gmlp:ht} and~\ref{sec:gmlp:htc},
there is a linear increase in the number of rules of $G$ with respect to $P$.
The additional rules enable the next operator $\metricI{\Next}$ in the body
and define auxiliary atoms representing interval violations.
These additions may explain why the general fragment
relies more on solving than the plain one, as observed in Table~\ref{tab:benchmarks}.
Note that the global metric operators $\metricI{\eventuallyF}$ and $\metricI{\alwaysF}$
are not used in this setting,
and consequently
the intelligent grounder does not produce the corresponding ground rules.

\begin{table}[ht]
  \centering
    \begin{tabular}{|rr|rrr|rrr|rrr|}
      \cline{1-11}
         $f$ &  &\multicolumn{3}{c|}{\clingo}
         &\multicolumn{3}{c|}{\clingcon}
         &\multicolumn{3}{c|}{\clingodl}\\
         & & solve & ground & \#rules & solve & ground & \#rules & solve & ground & \#rules\\
      \cline{1-11}
      1  & $P$ & 0.03 & 1.07 & 297\,383 & 0.01 & 0.14 & 2\,337 & 0.01 & 0.14 & 2\,337 \\
         & $G$ & 0.29 & 0.61 & 334\,458 & 0.01 & 0.22 & 2\,572 & 0.01 & 0.16 & 2\,572 \\
\cline{1-11}
      4  & $P$ & 1.65 & 21.61 & 4\,812\,563 & 0.01 & 0.18 & 2\,337 & 0.01 & 0.14 & 2\,337 \\
         & $G$ & 36.37 & 15.80 & 5\,395\,908 & 0.01 & 0.19 & 2\,572 & 0.01 & 0.19 & 2\,572 \\
\cline{1-11}
      7  & $P$ & 16.58 & 79.35 & 14\,772\,743 & 0.01 & 0.14 & 2\,337 & 0.01 & 0.13 & 2\,337 \\
         & $G$ & 81.56 & 67.55 & 16\,555\,758 & 0.02 & 0.21 & 2\,572 & 0.01 & 0.16 & 2\,572 \\
\cline{1-11}
      10 & $P$ & \multicolumn{3}{c|}{--} & 0.01 & 0.15 & 2\,337 & 0.01 & 0.18 & 2\,337 \\
         & $G$ & \multicolumn{3}{c|}{--} & 0.02 & 0.19 & 2\,572 & 0.01 & 0.23 & 2\,572 \\
      \cline{1-11}
    \end{tabular}
    \medskip
    \caption{Results for \clingo, \clingcon\ and \clingodl\ without goal condition
    (times in seconds; \textup{--} represents a memout).}
    \label{tab:benchmarks}
\end{table}

In our second setting,
we include the goal constraint from Listing~\ref{prg:dentist-goal},
which can only be handled by the general fragment.
We use the same parameters as before.
In particular, we enumerate all models,
but in this case there is always a single one.
The results are summarized in Table~\ref{tab:benchmarks:goal}.
As before, \clingo\ does not scale well,
while the performance of \clingcon\ and \clingodl\
is independent of the time granularity.
The number of rules increases as expected,
due to the goal constraint and
the rules required
to account for the metric eventually \(\metricI{\eventuallyF}\) occurring in it.
For \clingo\, the inclusion of the goal condition leads to a slight increase in grounding time,
while solving time decreases since the goal constraint significantly reduces the number of models.

\begin{table}[ht]
  \centering
    \begin{tabular}{|rr|rrr|rrr|rrr|}
      \cline{1-11}
         $f$ &  &\multicolumn{3}{c|}{\clingo}
         &\multicolumn{3}{c|}{\clingcon}
         &\multicolumn{3}{c|}{\clingodl}\\
         & & solve & ground & \#rules & solve & ground & \#rules & solve & ground & \#rules\\
      \cline{1-11}
       1 & $G$ & 0.10 & 0.73 & 377\,301 & 0.01 & 0.19 & 2\,731 & 0.01 & 0.20 & 2\,731 \\
\cline{1-11}
       4 & $G$ & 28.33 & 15.69 & 6\,092\,421 & 0.01 & 0.21 & 2\,731 & 0.01 & 0.19 & 2\,731 \\
\cline{1-11}
       7 & $G$ & 17.20 & 77.04 & 18\,694\,341 & 0.01 & 0.18 & 2\,731 & 0.01 & 0.21 & 2\,731 \\
\cline{1-11}
      10 & $G$ & \multicolumn{3}{c|}{--} & 0.01 & 0.23 & 2\,731 & 0.01 & 0.19 & 2\,731 \\
      \cline{1-11}
    \end{tabular}
    \medskip
    \caption{Results for \clingo, \clingcon\ and \clingodl\ with goal condition,
    (times in seconds;
\textup{--} indicates the memory limit was reached).}
    \label{tab:benchmarks:goal}
\end{table}

\subsubsection{Multi-Agent Path Finding}
\newcommand{\tstarta}{\textit{$S_A$}}
\newcommand{\tgoala}{\textit{$G_A$}}
\newcommand{\tmoving}{\textit{moving}}
\newcommand{\ttaken}{\textit{taken}}
\newcommand{\tedge}{\textit{edge}}
\newcommand{\tmove}{\textit{move}}
\newcommand{\tinmotion}{\textit{in_motion}}
\newcommand{\tagenta}{\textit{$A$}}
\newcommand{\tagentb}{\textit{$B$}}
\newcommand{\tv}{\textit{V}}
\newcommand{\tu}{\textit{U}}
\newcommand{\twait}{\textit{wait}}
\newcommand\ASP{\lstinline[language=clingo,columns=flexible,mathescape]}
\newcommand{\vset}{\ensuremath{\mathcal{V}}}
\newcommand{\eset}{\ensuremath{\mathcal{E}}}

Multi-Agent Path Finding~(MAPF,~\cite{ststfekomawaliatcokubabo19b})
is the problem of finding a plan for a set of agents
that move from their start to goal positions.
The agents move on a map represented as a graph $(\vset,\eset)$
where vertices in $\vset$ represent locations and
edges in $\eset$ represent movements between locations, each with a duration.
A valid plan is a sequence of movements for each agent,
free of vertex conflicts and swap conflicts.
Vertex conflicts
occur when two agents occupy the same vertex at the same time,
and swap conflicts
occur when two agents traverse the same edge in opposite directions at the same time.

Our formalization adapts the ASP encoding of~(\cite{bekascsosvwa24a}, Listing 5)
to the metric logic program
in~\eqref{ex:mapf:start}--\eqref{ex:mapf:goal},
capturing movement durations by the timing function.
We use letters $\tu$ and $\tv$ to denote vertices from $\vset$,
and if $(\tu, \tv) \in \eset$, the duration of the movement
is given by $\delta(\tu,\tv)$.
We use letters $\tagenta$ and $\tagentb$ to denote agents, and
each agent $\tagenta$ has a \emph{start} vertex $\tstarta \in \vset$ and
a \emph{goal} vertex $\tgoala \in \vset$.

Rule~\eqref{ex:mapf:start} captures the initial position of each agent.
Rule~\eqref{ex:mapf:move} chooses a possible movement along an edge $(\tu,\tv)$.
Rule~\eqref{ex:mapf:effect} captures the effect of moving from $\tu$ to $\tv$,
which takes exactly $\delta(\tu,\tv)$ time units.
We use the eventually operator rather than the next operator so that
different agents may move asynchronously.
Rule~\eqref{ex:mapf:inertia} captures inertia:
an agent remains at its current vertex $\tu$
if it does not move along any edge $(\tu, \tv)$.
Following~\citep{bekascsosvwa24a},
an agent is at no vertex while moving,
but reappears at the target vertex $\tv$ upon arrival.
Rule~\eqref{ex:mapf:pre} ensures that a movement is only possible
if the agent is currently at the source vertex $\tu$.
This also prevents another action from being taken
until the current movement is completed.
Rule~\eqref{ex:mapf:unique} enforces that each agent moves at most once per time point,
given that $(\tu,\tv)$ and $(\tu',\tv')$ are distinct edges.
Rule~\eqref{ex:mapf:vertexcol} enforces the absence of vertex conflicts,
given that $\tagenta \neq \tagentb$.
Rule~\eqref{ex:mapf:edgecol} enforces the absence of swap conflicts
using the eventually operator to ensure that no swap conflict arises during the traversal of an edge.
Finally, Rule~\eqref{ex:mapf:goal} enforces the goal condition:
every agent must be at its goal vertex at the end of the plan.

\begin{align}
\alwaysF(\tat(\tagenta,\tstarta)                                                      &\leftarrow \initially)\label{ex:mapf:start}\\
\alwaysF(\tmove(\tagenta,\tu,\tv) \vee \neg\tmove(\tagenta,\tu,\tv) &\leftarrow  \neg\finally) \quad \quad \quad \quad \quad \quad (\tu,\tv)\in\eset \label{ex:mapf:move}\\
\alwaysF(\eventuallyF_{\intervco{\delta(\tu,\tv)}{\delta(\tu,\tv)+1}}{\tat(\tagenta,\tv)} &\leftarrow \tmove(\tagenta,\tu,\tv))\label{ex:mapf:effect}\\
\alwaysF(\Next_{\intervco{0}{\omega}}{\tat(\tagenta,\tu)} &\leftarrow \neg\finally \wedge \tat(\tagenta,\tu) \wedge \bigwedge_{(\tu,\tv)\in \eset} \neg\tmove(\tagenta,\tu,\tv))\label{ex:mapf:inertia}
\end{align}\vspace{-2em}\begin{align}
\alwaysF(\bot &\leftarrow \tmove(\tagenta,\tu,\tv) \wedge \neg\tat(\tagenta,\tu))\label{ex:mapf:pre}\\
\alwaysF(\bot &\leftarrow \tmove(\tagenta,\tu,\tv) \wedge \tmove(\tagenta,\tu',\tv')) \quad  (\tu,\tv)\neq(\tu',\tv')\label{ex:mapf:unique}\\
\alwaysF(\bot &\leftarrow \tat(\tagenta,\tu) \wedge \tat(\tagentb,\tu)) \quad \quad \quad \quad \quad \quad \quad \quad \ \ \tagenta\neq\tagentb\label{ex:mapf:vertexcol}\\
\alwaysF(\bot &\leftarrow \tmove(\tagenta,\tu,\tv) \wedge \eventuallyF_{\intervco{0}{\delta(\tu,\tv)}}{\tmove(\tagentb,\tv,\tu)})\label{ex:mapf:edgecol}\\
\alwaysF(\bot &\leftarrow \tgoal(\tagenta,\tu) \wedge \neg\tat(\tagenta,\tu) \wedge \finally)\label{ex:mapf:goal}
\end{align}

\paragraph{Experiments.}
We conducted experiments in two settings
using the encoding in Listing~\ref{prg:mapf},
which is a direct mapping of the rules above.
In each case, we computed one solution.

\lstinputlisting[caption={Metric logic program for MAPF},label={prg:mapf},language=clingos]{encodings/mapf.lp}

A preliminary evaluation showed that \clingcon\ was not able
to find a solution without a time-point limit.
The reason is that the MAPF encoding allows all agents to remain idle,
resulting in an unbounded timing function $\tau$.
This is not the case in the dentist scenario,
where at each state there must be a movement that constrains $\tau$.
This poses a challenge for \clingcon,
which attempts to assign times to states without a bound,
and stalls during solving.
In contrast,
\clingodl\ selects the smallest possible positive values,
naturally bounding the search space.
To address this issue while ensuring a fair comparison across all approaches,
we impose the time-point limit $\nu$ (used in the \HT\ approach)
\comment{JR: Before we used \lstinline{v}, here $\nu$, should be consistent}
also for \clingcon\ and \clingodl\ in the \HTC\ approach,
by adding a numerical constraint enforcing assignments to be smaller than $\nu$.
\comment{JR: I am not sure if this fits better here or before.
  This is part of the experiments, so it makes sense here.
  But it breaks a bit the story...
}

\begin{figure}[ht]
   \centering
   \begin{minipage}[b]{0.22\linewidth}
      \includegraphics[width=70pt]{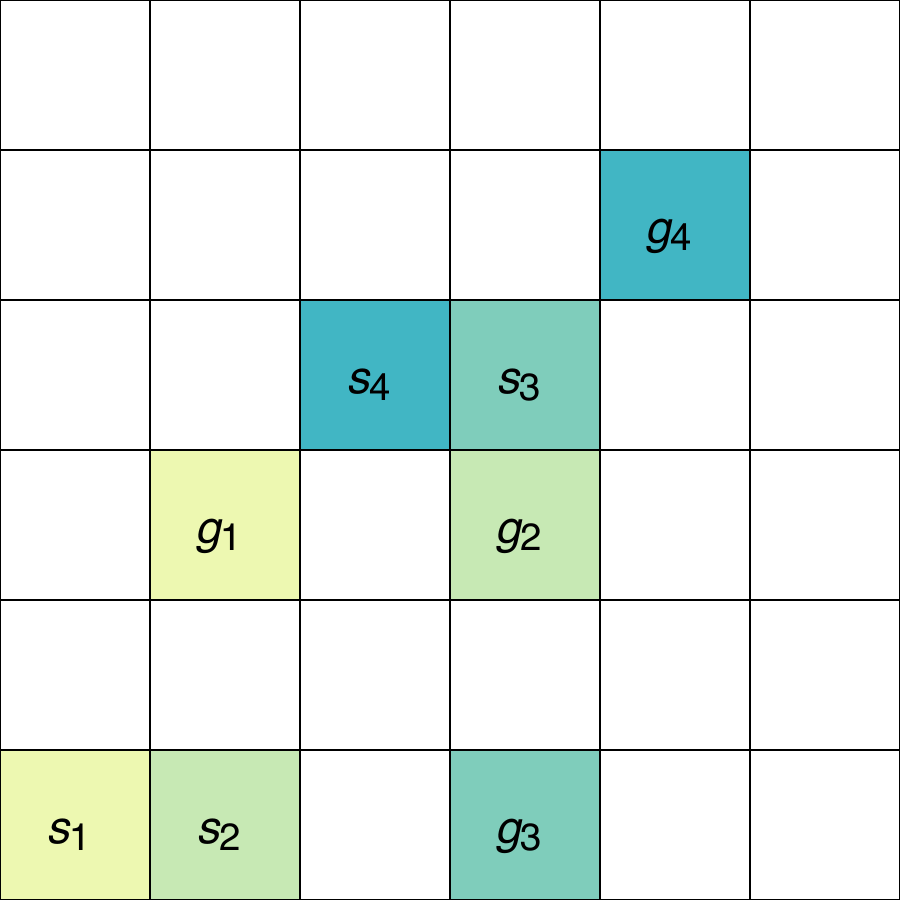}
      \caption{Empty grid (6x6)}
      \label{fig:mapf-empty}
   \end{minipage}
   \begin{minipage}[b]{0.22\linewidth}
      \includegraphics[width=70pt]{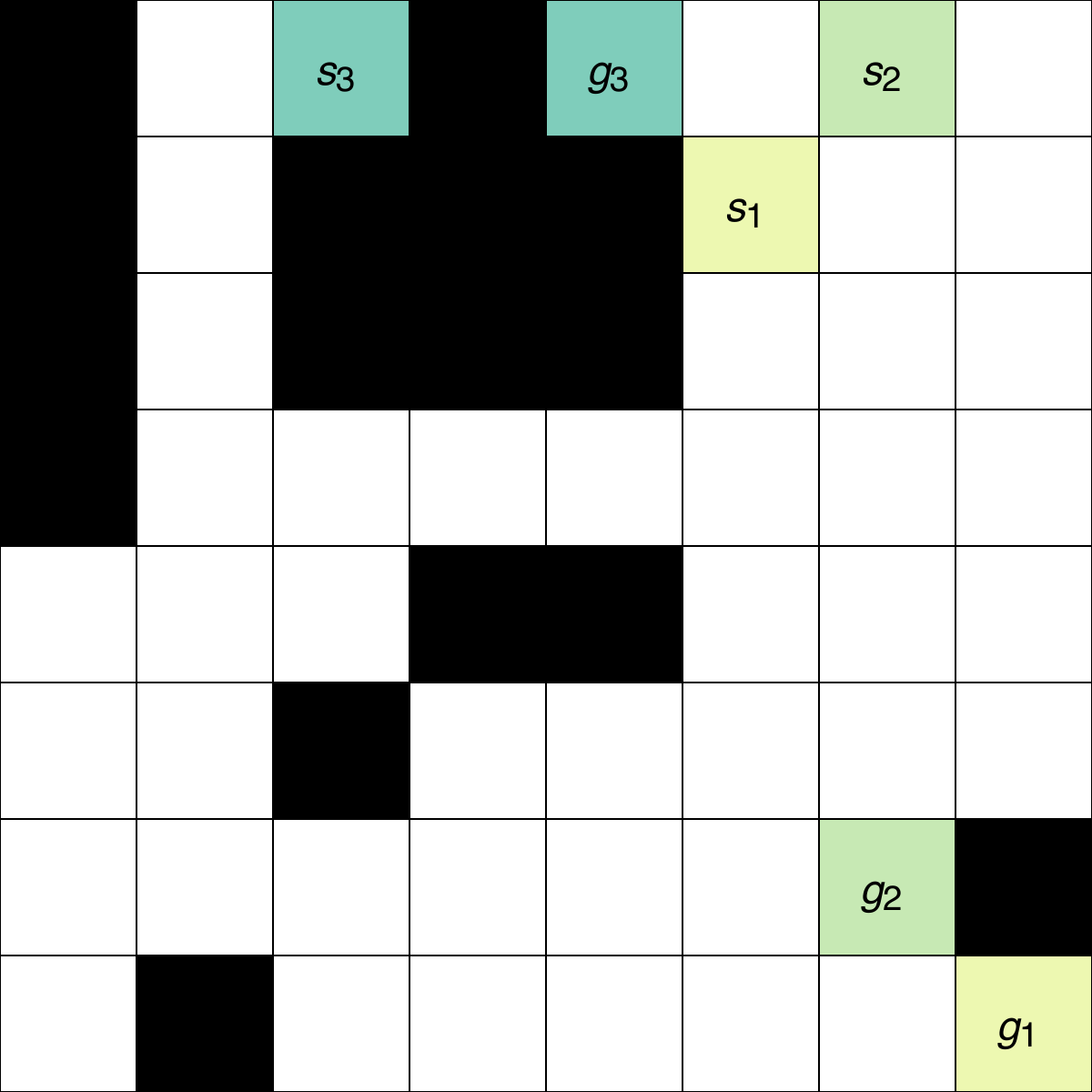}
      \caption{Random grid (8x8)}
      \label{fig:mapf-random}
   \end{minipage}
   \begin{minipage}[b]{0.22\linewidth}
      \includegraphics[width=70pt]{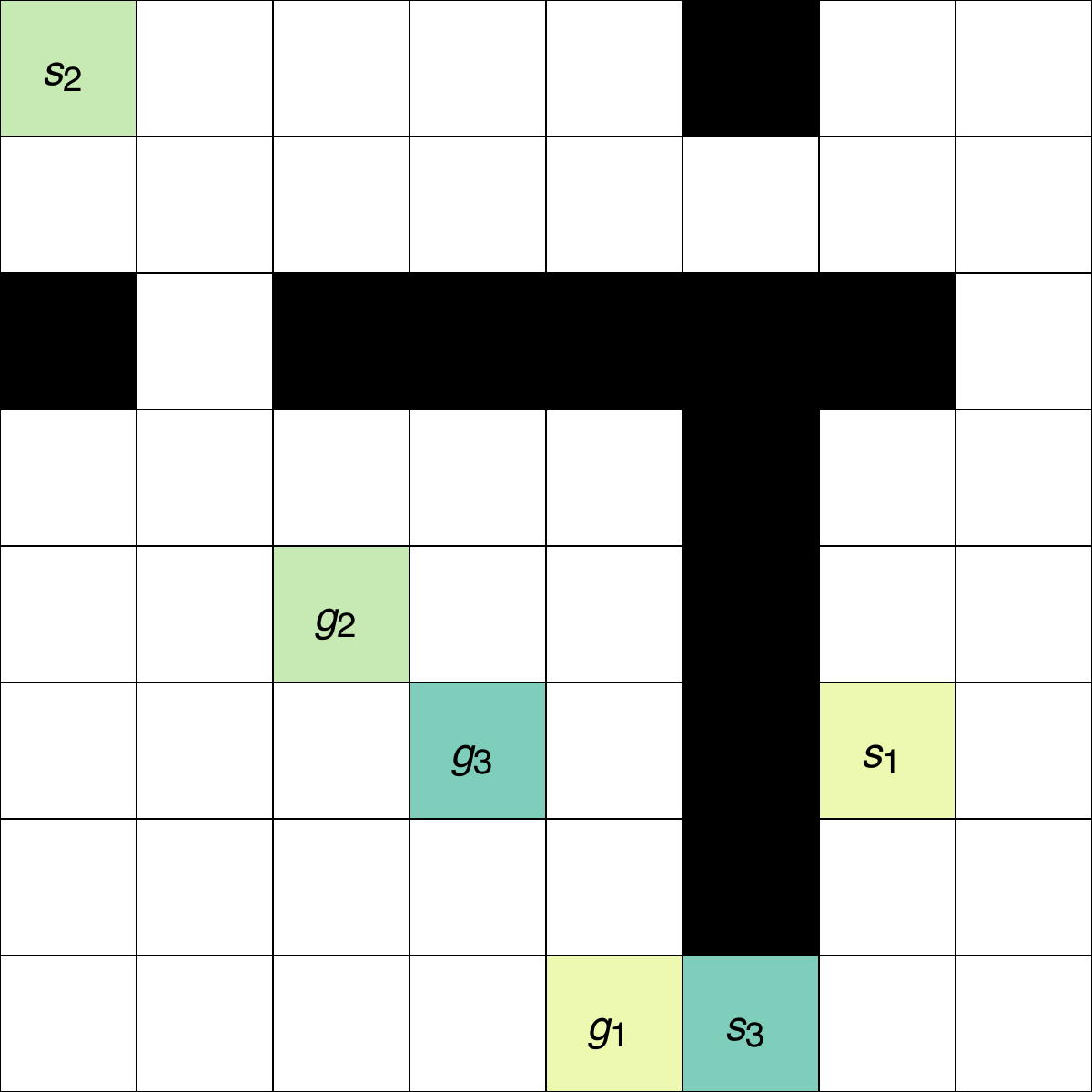}
      \caption{Room grid (8x8)}
      \label{fig:mapf-room}
   \end{minipage}
   \begin{minipage}[b]{0.22\linewidth}
      \includegraphics[width=70pt]{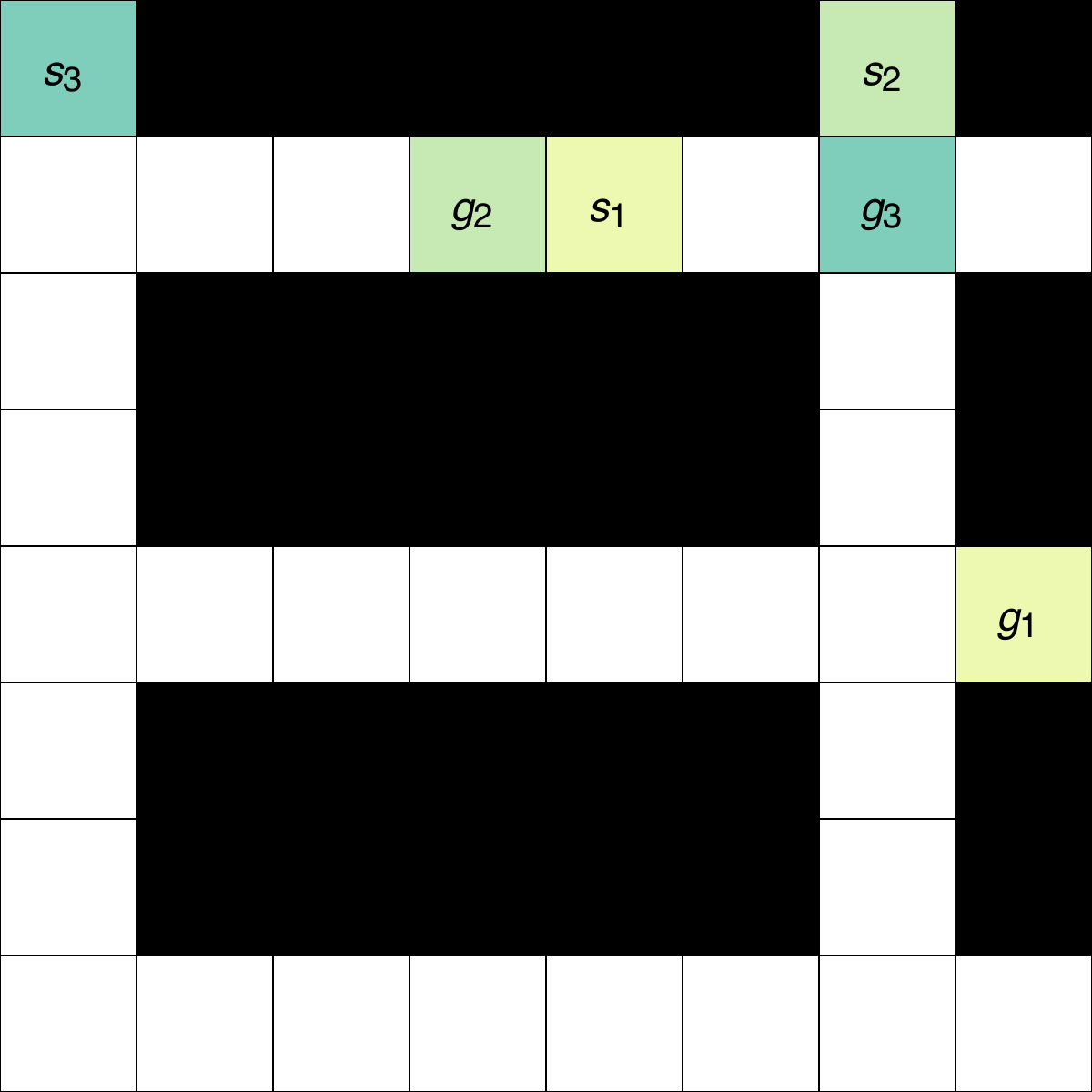}
      \caption{Warehouse grid (8x8)}
      \label{fig:mapf-warehouse}
   \end{minipage}
\end{figure}

In the first setting, we study the scalability of the general fragment
and test the time-granularity independence of the \HTC\ approach
in a more complex scenario.
We use an empty grid map of size $6 \times 6$ (Figure~\ref{fig:mapf-empty})
generated by the framework of~\citet{bekascsosvwa24a}.
We connect vertices with edge durations ranging from 1 to 5 time units,
vary the number of agents from 1 to 4,
and scale edge durations by a factor $f \in \{1, 5, 10, 15, 20, 25, 30, 35\}$,
solving all combinations with $\lambda = 10$ and $\nu = 15 \cdot f$.
Figure~\ref{fig:mapf-plots} shows the performance of each approach
across the different scaling factors.
The reported time is the total time to find the first solution,
and the shaded area below each line indicates the grounding time.
The \HT\ approach using \clingo\ exhibits behavior similar to before: larger edge durations lead to higher runtimes and early timeouts,
as well as higher grounding times,
visible in the light green shaded area for the single-agent case
but not for larger instances due to timeouts.
In contrast, the \HTC\ approaches with \clingcon\ and \clingodl\
show stable performance across all scaling factors,
confirming their independence from time granularity.
Their grounding time is negligible across all agent counts,
but the total time increases with the number of agents, as expected.
Overall, \clingodl\ performs better than \clingcon.
Figure~\ref{fig:mapf-rules} shows the number of ground rules generated by each approach.
The number of rules for \clingo\ grows rapidly as the factor increases,
while for \clingcon\ and \clingodl\ the number of rules remains independent of the time granularity,
growing only slightly with the number of agents due to the additional rules introduced per agent.
The instances where \clingo\ timed out are also plotted, as
the timeout was reached during solving rather than grounding.

\begin{figure}[ht]
   \centering
   \includegraphics[width=0.9\linewidth]{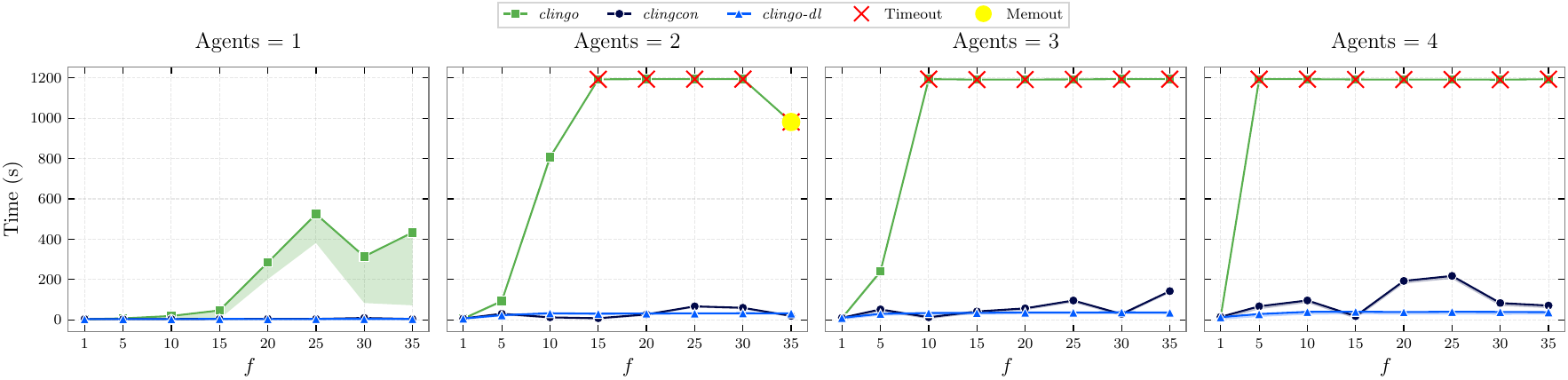}
   \caption{Total times for empty 6x6 grid MAPF instances with increasing number of agents.
   The shaded areas under the lines represent grounding time.}
   \label{fig:mapf-plots}
\end{figure}
\comment{JR: I am not sure about this ``Total'' times,
    but before we had ``Grounding and Solving'' which was also unclear}

\begin{figure}[ht]
   \centering
   \includegraphics[width=0.9\linewidth]{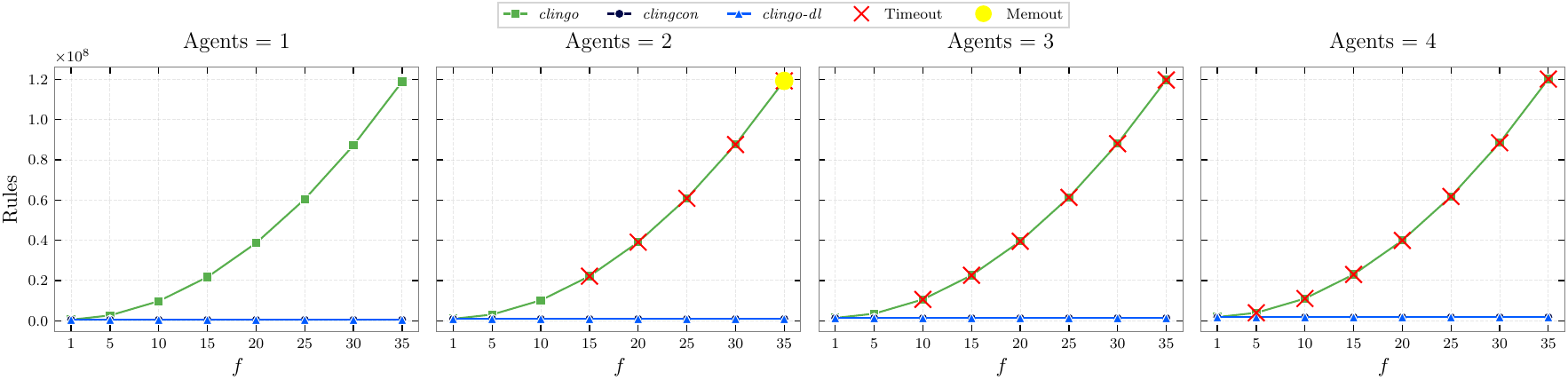}
   \caption{Number of ground rules for empty 6x6 grid MAPF instances with increasing number of agents.}
   \label{fig:mapf-rules}
\end{figure}

In the second setting,
we study the effect of relaxing the horizon on solver performance
across different map structures.
We generated three MAPF instances on $8 \times 8$ grids,
each with a different map layout: random, room, and warehouse
(Figures~\ref{fig:mapf-random},~\ref{fig:mapf-room}, and~\ref{fig:mapf-warehouse}, respectively),
fixing the number of agents to three,
and sampling edge durations between 1 and 5 time units.
We set a time-point limit of $\nu = 100$ and vary the horizon $\lambda$ from
$15$ to $45$ in steps of $5$.
The results are shown in Figure~\ref{fig:mapf8-plot}.
Compared to the previous setting, grounding has a more pronounced impact,
reflecting the larger map size and higher time-point limit.
\Clingodl\ achieves the best performance across all three layouts.
The behavior of \clingcon\ is more irregular across layouts and values of $\lambda$,
and requires further investigation.
In general, as $\lambda$ increases,
solving becomes faster because the problem becomes less constrained.
However, \clingo's grounding time grows with $\lambda$, creating a tradeoff
that yields an optimal operating point, beyond which total time increases again.
This underscores the importance of selecting an appropriate horizon $\lambda$.

\begin{figure}[ht]
   \centering
   \includegraphics[width=\linewidth]{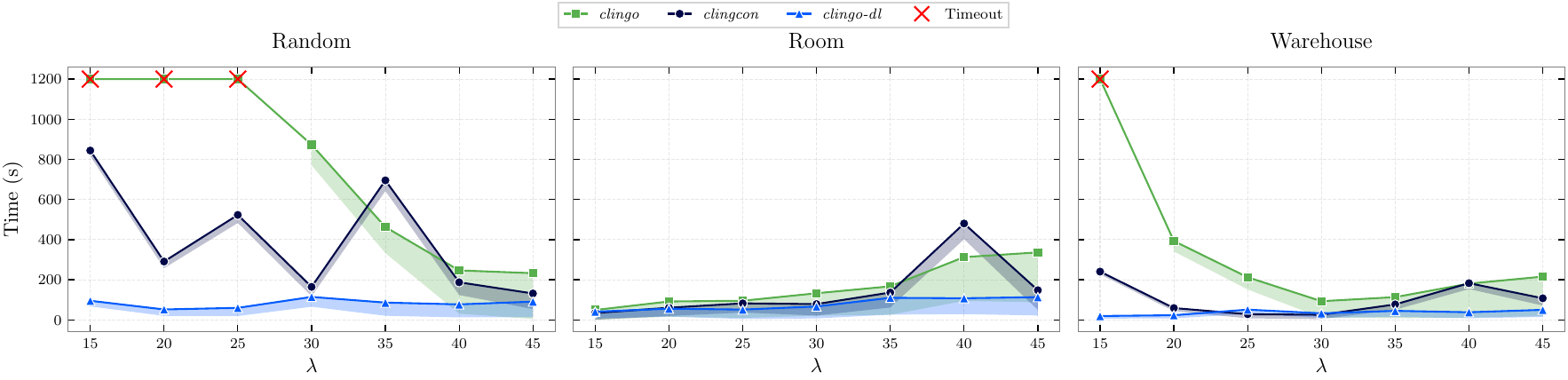}
\caption{Total times for 8x8 grid MAPF instances with increasing horizon.
   The shaded areas under the lines represent grounding time.}
   \label{fig:mapf8-plot}
\end{figure}

\subsubsection{Job-Shop Scheduling}

\newcommand{\tprev}{\textit{prev}}
\newcommand{\startrun}{\textit{start\_run}}
\newcommand{\hasrun}{\textit{has\_run}}
\newcommand{\run}{\textit{run}}
\newcommand{\before}{\textit{before}}
\newcommand{\toperation}{\textit{O}}
\newcommand{\tmach}{\textit{M}}
\newcommand{\tjob}{\textit{J}}

In the Job-Shop Scheduling problem,
there is a set of jobs $\mathcal{J}$, operations $\mathcal{O}$ and machines $\mathcal{M}$.
Each job $J\in \mathcal{J}$ consists of a sequence of operations.
Each operation $O\in \mathcal{O}$ belongs to a single job
and must be processed on machine $\mu(O)$ for a duration $\delta(O)$.
By $\tprev(O)$ we denote the operation that precedes $O \in \mathcal{O}$ in the sequence, if it exists.
The goal is to find a schedule such that all operations are completed in order
within the given time.

Our formalization as a general metric logic program is shown
in~\eqref{ex:job:start}--\eqref{ex:job:goal}.
For some operation $O$,
the atom $\startrun_{O}$ represents its start,
$\run_{O}$ represents that it is running,
$\hasrun_O$ that it has already run, and
$\twait_{O}$ that it is waiting.
Rule~\eqref{ex:job:start} captures the non-deterministic choice to start an operation,
provided it has not yet completed.
Rule~\eqref{ex:job:duration} represents the execution of an operation for its duration,
and Rule~\eqref{ex:job:completion} captures its completion after that duration.
Rule~\eqref{ex:job:machinelimit} prevents two distinct operations from running
on the same machine at the same time,
and Rule~\eqref{ex:job:order} prevents an operation from starting
if its preceding operation in the job has not yet completed.
Finally, Rule~\eqref{ex:job:goal} enforces that all operations complete
by the end of the trace.

\begin{align}
\alwaysF(\twait(\toperation)\vee \startrun_{\toperation} &\leftarrow \neg \hasrun_\toperation)\label{ex:job:start}\\
\alwaysF(\alwaysF_{\intervco{0}{\delta(\toperation)}}{\run_{\toperation}} &\leftarrow \startrun_{\toperation})\label{ex:job:duration}\\
   \alwaysF(\alwaysF_{\intervco{\delta(\toperation)}{\omega}}{\hasrun_{\toperation}} &\leftarrow \startrun_{\toperation})\label{ex:job:completion}\\
\alwaysF(\bot &\leftarrow \run_{\toperation} \wedge \run_{\toperation'}) \quad \quad \quad O \neq O'\text{ and }\mu(\toperation)=\mu(\toperation') \label{ex:job:machinelimit}\\
   \alwaysF(\bot &\leftarrow \startrun_{\toperation} \wedge \neg \hasrun_{\tprev(\toperation)}) \label{ex:job:order}\\
\alwaysF(\bot &\leftarrow \neg \hasrun_\toperation \wedge \finally) \label{ex:job:goal}
\end{align}

\paragraph{Experiments.}
We study the impact of varying the horizon on the overall scalability of each approach.
We evaluate the benchmark instance \textit{ft06} from~\citet{muttho63a},
which consists of $6$ jobs, $6$ machines, and $36$ operations.
The experiments were performed with the known optimal makespan of $55$,
and in a relaxed setting with a time-point limit of $110$ (double the optimal makespan),
to analyze the impact of enforcing optimality via the time-point limit.
The benchmark instance is translated into ASP facts
using predicates \texttt{operation/1} for $\mathit{O}$,
\texttt{machine/2} for $\mu$, \texttt{duration/2} for $\delta$,
and \texttt{job/2} for the job assignment.
The order of operations is captured by incremental numerical identifiers.
The encoding, shown in Listing~\ref{prg:jobshop},
is a direct mapping of Rules~\eqref{ex:job:start}--\eqref{ex:job:goal}.

\lstinputlisting[caption={Metric logic program for Job-Shop Scheduling},
                 label={prg:jobshop},
                 language=clingos]{encodings/job-shop.lp}

Figure~\ref{fig:jobshop-plot-optimal} shows the performance of each approach
across different values of $\lambda$,
with the time-point limit $\nu$ fixed to the known optimal makespan of $55$.
The problem is unsatisfiable for $\lambda \in \{10, 15\}$;
with the optimal time-point limit of $55$, a horizon of $\lambda > 15$ is required to obtain a solution.
\Clingo\ times out for $\lambda \in \{20, 25, 30\}$,
while \clingcon\ and \clingodl\ time out at $\lambda = 25$.
For all other satisfiable instances, \clingcon\ and \clingodl\ are
significantly faster than \clingo.
Unlike in the previous applications, \clingcon\ shows overall better performance
than \clingodl\ in this setting,
though this warrants further analysis of the constraint solvers' internal behaviour.
The solving time for \clingo\ generally decreases as $\lambda$ increases beyond $35$,
which can be attributed to the fact that a smaller horizon makes the problem harder to solve,
similarly to the MAPF results.
The fast response observed at $\lambda = 50$ for all approaches
can be attributed to the fact that, with a time-point limit of $55$,
this horizon is very close to an identity mapping for $\tmf$,
which simplifies the search considerably.
The grounding time is negligible for all approaches,
as this job-shop instance is designed to be hard to solve but not to ground.

Figure~\ref{fig:jobshop-plot-relaxed} shows the results with a relaxed time-point limit of $110$.
Relaxing the time-point limit eliminates most of the timeouts observed for \clingo\
and removes the erratic behaviour seen in Figure~\ref{fig:jobshop-plot-optimal},
with only $\lambda = 10$ remaining unsolvable within the time limit.
The \HTC\ approach finds a solution immediately across all satisfiable instances.
As in the previous applications, the grounding time for \clingo\ grows with the horizon,
so the relaxed time-point limit benefits smaller horizons where solutions can be found,
but introduces overhead for larger ones.
Figures~\ref{fig:jobshop-plot-optimal-rules} and~\ref{fig:jobshop-plot-relaxed-rules} show that
the number of ground rules for \clingo\ grows linearly with $\lambda$,
reflecting the propositional grounding of time-step constraints,
whereas \clingcon\ and \clingodl\ maintain a compact representation
independent of the time-point limit.

\begin{figure}[ht]
   \centering
   \begin{minipage}[b]{0.45\linewidth}
      \includegraphics[width=\linewidth]{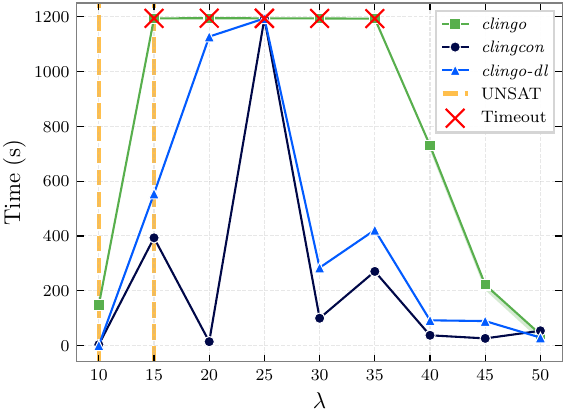}
      \caption{Total times for \textit{ft06} with time-point limit $\nu = 55$.}
      \label{fig:jobshop-plot-optimal}
   \end{minipage}\hfill
   \begin{minipage}[b]{0.45\linewidth}
      \includegraphics[width=\linewidth]{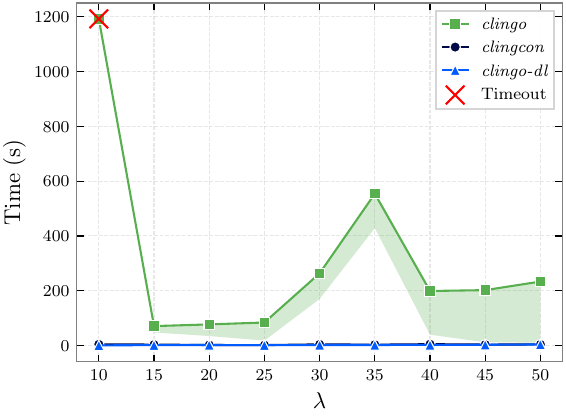}
      \caption{Total times for \textit{ft06} with relaxed time-point limit $\nu = 110$.}
      \label{fig:jobshop-plot-relaxed}
   \end{minipage}
\end{figure}

\begin{figure}[ht]
   \centering
   \begin{minipage}[b]{0.45\linewidth}
      \includegraphics[width=\linewidth]{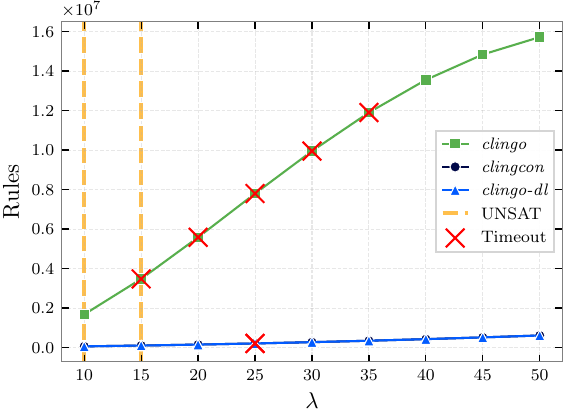}
      \caption{Number of ground rules for \textit{ft06} with time-point limit $\nu = 55$.}
      \label{fig:jobshop-plot-optimal-rules}
   \end{minipage}\hfill
   \begin{minipage}[b]{0.45\linewidth}
      \includegraphics[width=\linewidth]{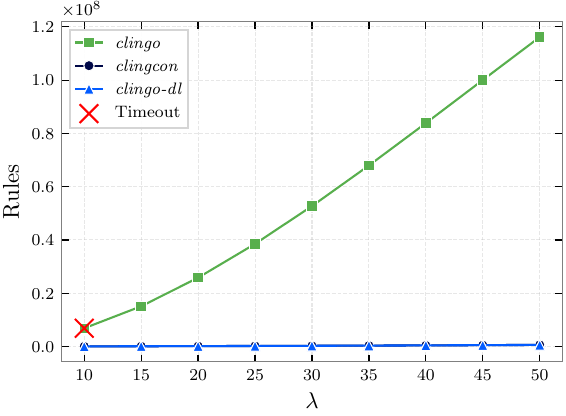}
      \caption{Number of ground rules for \textit{ft06} with relaxed time-point limit $\nu = 110$.}
      \label{fig:jobshop-plot-relaxed-rules}
   \end{minipage}
\end{figure}

  \section{Conclusion}\label{sec:discussion}

We presented a computational approach to metric ASP that allows for fine-grained timing constraints.
In doing so, we proposed two metric fragments.
The first fragment restricts the use of metric operators in rule heads, allowing only the next operator.
The expressiveness of this fragment showed to be enough to model basic transitions.
However, it lacks the expressiveness needed for capturing global metric conditions.
The second fragment, on the other hand,
permits a broader use of global metric operators,
significantly increasing expressiveness at the cost of higher computational complexity.
For each fragment,
we developed two alternative translations from firm semantic foundations, and proved their completeness and correctness.
Our second translation has a clear edge over the first one, when it comes to a fine-grained resolution of time.
This is achieved by outsourcing the treatment of time by using difference constraints.
However, a prominent use case involves employing the identity (timing) function,
where intervals reference only state indices within traces.
This coarser notion of time reduces the discrepancy between our two translations.
Further improvement is possible through more sophisticated Boolean encodings, such as an order
encoding~\citep{crabak94a,bageinospescsotawe15a}.

We use meta-programming for implementation
as it facilitates development and rapid prototyping for different fragments and translations.
Our empirical analysis allowed us to compare translations and target systems,
providing a basis for future work to improve the systems performance.

Overall our hybrid approach is superior to the Boolean one.
Among the former,
\clingodl\ outperforms \clingcon\ in solving time across all benchmarks.
Note however that the semantics developed in this work corresponds to that of \clingcon,
while \clingodl\ only computes canonical models (cf.~\cite{cafascwa23a}) .
\Clingodl\ has furthermore the advantage that it does not require a
fixed time-point limit in advance.
Selecting such a limit is non-trivial in practice: its value is rarely known beforehand,
it critically affects performance, and an inadequate choice may require multiple solver calls.
Moreover, without a finite bound, \clingcon\ cannot terminate its search since the timing
function might be unbounded, while for \clingo\ the time-point limit directly determines the
size of the ground program, making its choice critical for grounding performance.

Outsourcing the treatment of time is not for free either.
\Clingodl\ maps difference constraints into graphs, whose nodes are time variables and weighted edges reflect the actual
constraints.
This results in a quadratic space complexity.
\Clingcon\ pursues a lazy approach to constraint solving that gradually unfolds an ASP encoding of linear constraints.
In the worst case, this amounts to the space requirements of our first translation.
As well, such constraints are hidden from the ASP solver and cannot be used for directing the search.
Hence,
despite our indicative observations,
a detailed empirical analysis is needed to account for the subtleties of our translation and its target systems.
For instance,
the irregular behavior of \clingcon\ across instances and map layouts
should be further investigated, as it suggests sensitivity to factors
not yet fully understood.

 \vspace{30pt}

\noindent \textbf{Competing interests declaration.} The authors declare none.
\bibliographystyle{plainnat} 

\appendix
\section{Three-valued semantics}
\label{sec:three}
In our framework,
\HT-interpretations (and -traces) naturally lend themselves to three-valued models,
where each atom is classified as false, assumed, or proven (in a given state).
This correspondence arises from the fact that the intermediate logic of \HT\ aligns
with G\"odel's three-valued logic $G_3$~\citep{goedel32a}.
Consequently, formula satisfaction in \HT\ can be directly represented by a three-valued function, $\bm{m}(\alpha)$,
assigning values 0 (false), 1 (assumed), or 2 (proven) to each formula $\alpha$.
The significance of this multi-valued truth assignment lies in its ability to simplify \HT-equivalence,
reducing $\alpha \equiv \beta$ to a straightforward comparison of truth values, $\bm{m}(\alpha)=\bm{m}(\beta)$.
This property proves invaluable when introducing auxiliary atoms, enabling the replacement of complex formulas with simpler atoms
while ensuring semantic equivalence.
Specifically, by introducing an auxiliary atom $a$ to represent $\alpha$,
we can safely substitute $\alpha$ with $a$,
provided we add formulas that guarantee $\bm{m}(\alpha)=\bm{m}(a)$.
Building upon these principles, we now present the following formal definitions.
\subsection{Three-valued semantics for \HT}
\label{sec:three:ht}

The following definitions follow the ones in~\citep{capeva05a}.
\begin{definition}[Three-valued interpretation]\label{def:three-valued-interpretation-ht}
  A three-valued interpretation over \alphabet\ is a function $\trivalI: \alphabet \rightarrow \{ 0, 1, 2 \}$.
\end{definition}
A three-valued interpretation is total, if its range is $\{0,2\}$.
\begin{definition}[Extension of three-valued interpretation]\label{def:three-valued-extension-ht}
  A three-valued interpretation $\trivalI$ over \alphabet\ is extended to Boolean formulas as follows.
  \begin{align*}
    \trivalI(\bot) & = 0 \\
    \trivalI(\varphi \wedge \psi) & = \min(\trivalI(\varphi), \trivalI(\psi)) \\
    \trivalI(\varphi \vee \psi) & = \max(\trivalI(\varphi), \trivalI(\psi)) \\
    \trivalI(\varphi \rightarrow \psi) & =
                                         \begin{cases}
                                           2              & \text{if } \trivalI(\varphi) \leq \trivalI(\psi) \\
                                           \trivalI(\psi) & \text{otherwise}
                                         \end{cases}
  \end{align*}
\end{definition}
For the derived operators we have
$\trivalI(\top) = 2$ and
$\trivalI(\neg \varphi)=2$ if $\trivalI(\varphi)=0$ and $0$ otherwise.

\begin{proposition}[Satisfaction total three-valued interpretations]
  For any total three-valued interpretation $\trivalI$
  and for any propositional formula $\varphi$,
  $\trivalI(\varphi)\in \lbrace 0,2\rbrace$.
\end{proposition}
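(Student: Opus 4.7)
The plan is to prove the claim by structural induction on the propositional formula $\varphi$, following the recursive clauses of Definition~\ref{def:three-valued-extension-ht}. For the base cases, I would observe that $\trivalI(\bot) = 0 \in \{0,2\}$ directly by definition, and that for any atom $a \in \alphabet$, totality of $\trivalI$ (its range being $\{0,2\}$) gives $\trivalI(a) \in \{0,2\}$ immediately.

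For the inductive step, I would dispatch the three binary connectives in turn, assuming the claim holds for the immediate subformulas. In the conjunction case $\varphi = \psi_1 \wedge \psi_2$, the induction hypothesis places both $\trivalI(\psi_1)$ and $\trivalI(\psi_2)$ in $\{0,2\}$, and since $\{0,2\}$ is closed under $\min$, we obtain $\trivalI(\varphi) \in \{0,2\}$. The disjunction case is symmetric, using the closure of $\{0,2\}$ under $\max$. For implication $\varphi = \psi_1 \to \psi_2$, the case split in Definition~\ref{def:three-valued-extension-ht} returns either $2$ (whenever $\trivalI(\psi_1) \leq \trivalI(\psi_2)$) or $\trivalI(\psi_2)$ (otherwise); both outcomes lie in $\{0,2\}$, the former trivially and the latter by the induction hypothesis.

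There is essentially no serious obstacle: the statement asserts that totality is preserved by the $G_3$ extension, which amounts to checking that $\{0,2\}$ is closed under the operations $\min$, $\max$, and the implication clause. The only mildly delicate point is the implication case, because its value is defined by a comparison rather than a pure arithmetic combination, but the two-way case split handles both branches without escaping $\{0,2\}$. Hence the proof is a clean structural induction with no appeal to machinery beyond the two definitions above.
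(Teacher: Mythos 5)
Your proof is correct and follows exactly the route the paper intends: the paper dispatches this (and its \HTC\ and \MHT\ analogues) with the one-line remark that it is proven by straightforward structural induction on $\varphi$, and your case analysis — $\bot$, atoms via totality, closure of $\{0,2\}$ under $\min$, $\max$, and the two-branch implication clause — is precisely that induction spelled out.
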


\begin{proposition}[Equivalent formulas]\label{prop:three-valued:eq:ht}
  For any propositional formulas $\varphi$
  and $\psi$,
  $\varphi \equiv \psi$ iff $\trivalI(\varphi) = \trivalI(\psi)$
  for any three-valued interpretation $\trivalI$.
\end{proposition}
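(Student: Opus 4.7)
The plan is to set up a bijection between \HT-interpretations and three-valued interpretations, and then reduce the equivalence statement to a structural correspondence between HT-satisfaction and the three truth values. Concretely, I would associate with each \HT-interpretation $\tuple{H,T}$ the three-valued interpretation $\trivalI^{H,T}$ over $\alphabet$ defined by $\trivalI^{H,T}(a)=2$ if $a\in H$, $\trivalI^{H,T}(a)=1$ if $a\in T\setminus H$, and $\trivalI^{H,T}(a)=0$ if $a\notin T$. This is a bijection, since conversely any $\trivalI$ determines $H=\trivalI^{-1}(2)$ and $T=\trivalI^{-1}(\{1,2\})$, automatically satisfying $H\subseteq T$.

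The key technical lemma I would establish is the following pair of statements, proved by simultaneous structural induction on $\varphi$:
\begin{enumerate}
\item[(i)] $\trivalI^{H,T}(\varphi)=2$ iff $\tuple{H,T}\models\varphi$;
\item[(ii)] $\trivalI^{H,T}(\varphi)\geq 1$ iff $\tuple{T,T}\models\varphi$.
\end{enumerate}
The atomic and $\bot$ cases are immediate from the definitions, and the cases for $\wedge$ and $\vee$ follow routinely from the $\min/\max$ clauses together with the inductive hypotheses. The main obstacle — and really the only one — is the implication case. For (i), one decomposes the condition $\trivalI(\varphi)\leq\trivalI(\psi)$ by cases on $\trivalI(\varphi)$: the case $\trivalI(\varphi)=0$ corresponds via IH~(ii) to $\tuple{T,T}\not\models\varphi$; the case $\trivalI(\varphi)=1$ combines the IH clauses to yield $\tuple{H,T}\not\models\varphi$ together with $\tuple{T,T}\models\varphi$; and the case $\trivalI(\varphi)=2$ gives $\tuple{H,T}\models\varphi$. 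In each case one verifies that $\trivalI(\varphi)\leq\trivalI(\psi)$ matches exactly the conjunction of the two HT-clauses obtained for $H'\in\{H,T\}$. Statement~(ii) for implication is checked analogously, restricting attention to $H'=T$.

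With the lemma in hand, the proposition follows directly. For the $(\Leftarrow)$ direction, if $\trivalI(\varphi)=\trivalI(\psi)$ for every three-valued $\trivalI$, then (i) yields $\tuple{H,T}\models\varphi$ iff $\tuple{H,T}\models\psi$ for every \HT-interpretation, so $\varphi\equiv\psi$. For the $(\Rightarrow)$ direction, given $\varphi\equiv\psi$ and an arbitrary $\trivalI$, let $\tuple{H,T}$ be the associated \HT-interpretation. Then $\tuple{H,T}\models\varphi$ iff $\tuple{H,T}\models\psi$ and $\tuple{T,T}\models\varphi$ iff $\tuple{T,T}\models\psi$; by (i) the first equivalence gives $\trivalI(\varphi)=2$ iff $\trivalI(\psi)=2$, and by (ii) the second gives $\trivalI(\varphi)\geq 1$ iff $\trivalI(\psi)\geq 1$. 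Since $\trivalI$ takes only the three values $0,1,2$, these two equivalences together pin down $\trivalI(\varphi)=\trivalI(\psi)$, completing the proof.
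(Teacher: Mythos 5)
Your proposal is correct and takes essentially the paper's route: the mapping you define is exactly the correspondence of Definition~\ref{def:three-valued-from-ht}, and your key lemma (i)/(ii) is precisely the Equi-satisfaction result (Proposition~\ref{prop:three-valued-ht-properties}) that the paper imports from the literature rather than reproving, after which the equivalence follows just as you argue. The only cosmetic difference is the final step, where the paper's analogous proofs (for \HTC\ and \MHT) argue via the value of $\varphi\leftrightarrow\psi$ being $2$ while you combine the two satisfaction clauses directly to pin down the value in $\{0,1,2\}$; both are sound.
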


In order to prove that \HT{} and its three-valued characterization are equivalent,
we first define a mapping between \HT{}
and three-valued mappings.
Then we prove that such correspondence can be extended
to the full propositional language.

\begin{definition}[Correspondence between \HT{} and three-valued interpretations]\label{def:three-valued-from-ht}
  Given an \HT\ interpretation $\handt$ over \alphabet,
  we define its corresponding three-valued interpretation $\trivalI$ over \alphabet\
  for each atom $a\in\alphabet$
  as
  \begin{align*}
    \trivalI(a) & \eqdef
                          \begin{cases}
                            0 & \text{ if } a \not\in T        \\
                            1 & \text{ if } a \in T\setminus H \\
                            2 & \text{ if } a \in H
                          \end{cases}
  \end{align*}

 Conversely, given a three-valued interpretation $\trivalI$ over \alphabet,
 we define its corresponding \HT\ interpretation over \alphabet\
 as $\handt$,
 where
 \begin{align*}
 	T & \eqdef \{a \in \alphabet \mid \trivalI(a) > 0\}\\
 	H & \eqdef \{a \in \alphabet \mid \trivalI(a) = 2\}
 \end{align*}
\end{definition}
\begin{proposition}[Equi-satisfaction]\label{prop:three-valued-ht-properties}Given any \HT\ interpretation $\handt$ and any three-valued interpretation $\trivalI$
  satisfying the correspondence
  established in Definition~\ref{def:three-valued-from-ht}, it follows that

  	\begin{eqnarray*}
  		\trivalI(\varphi) = 2 &\hbox{ iff }& \handt \models \varphi \hbox{ and }\\
  		\trivalI(\varphi) > 0 &\hbox{ iff }& \tuple{T,T} \models \varphi
  	\end{eqnarray*}
  \noindent for all propositional formulas $\varphi$.
\end{proposition}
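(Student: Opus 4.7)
The plan is to prove both equivalences simultaneously by a single structural induction on the propositional formula $\varphi$. Fix an \HT-interpretation $\handt$ and let $\trivalI$ be its corresponding three-valued interpretation according to Definition~\ref{def:three-valued-from-ht}. Throughout, I will write (I) for the biconditional $\trivalI(\varphi)=2 \Leftrightarrow \handt \models \varphi$ and (II) for $\trivalI(\varphi)>0 \Leftrightarrow \tuple{T,T}\models\varphi$; the inductive hypothesis assumes both (I) and (II) for all proper subformulas.

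The base cases are immediate: for $\varphi=\bot$, both sides of (I) and (II) are false by the definitions of $\trivalI(\bot)=0$ and of \HT-satisfaction; for $\varphi=a\in\alphabet$, the value bookkeeping in Definition~\ref{def:three-valued-from-ht} gives $\trivalI(a)=2$ iff $a\in H$ iff $\handt\models a$, and $\trivalI(a)>0$ iff $a\in T$ iff $\tuple{T,T}\models a$. The conjunction and disjunction cases follow by an easy unfolding of $\min$ and $\max$: e.g.\ $\trivalI(\varphi\wedge\psi)=2$ iff $\trivalI(\varphi)=\trivalI(\psi)=2$, which by IH matches the \HT-clause for conjunction, and analogously for (II) and for disjunction.

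The main obstacle, as expected, is the implication case, because \HT-satisfaction of $\varphi\to\psi$ involves a universal quantification over $H'\in\{H,T\}$ while the three-valued clause $\trivalI(\varphi\to\psi)=2$ iff $\trivalI(\varphi)\leq\trivalI(\psi)$ is a direct numerical comparison. I would handle (I) by a case analysis on the pair $(\trivalI(\varphi),\trivalI(\psi))$. The three cases with $\trivalI(\varphi)\leq\trivalI(\psi)$ yield \HT-satisfaction: if $\trivalI(\varphi)=0$ then by IH(II) $\tuple{T,T}\not\models\varphi$, hence also $\handt\not\models\varphi$ by \HT-persistence, so the implication is vacuously satisfied in both $H'$; if $\trivalI(\varphi)=1\leq\trivalI(\psi)$ then $\handt\not\models\varphi$ (by IH(I)) and $\tuple{T,T}\models\psi$ (by IH(II)); if both values are $2$, then $\handt\models\psi$ and $\tuple{T,T}\models\psi$ cover both choices of $H'$. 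Conversely, the three cases with $\trivalI(\varphi)>\trivalI(\psi)$ each provide a witness $H'\in\{H,T\}$ satisfying $\varphi$ but not $\psi$, defeating \HT-satisfaction of the implication.

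For (II), I would exploit the identity $\trivalI(\varphi\to\psi)>0 \Leftrightarrow (\trivalI(\varphi)\leq\trivalI(\psi)\text{ or } \trivalI(\psi)>0)$, which collapses to $\trivalI(\varphi)=0$ or $\trivalI(\psi)>0$; applying IH(II) on both sides yields exactly $\tuple{T,T}\not\models\varphi$ or $\tuple{T,T}\models\psi$, which is the total-model clause for implication. Since total models reduce to classical two-valued reasoning, no additional case analysis is required there. Once (I) and (II) are established for implication, the inductive step is complete and the proposition follows; the derived operators $\top$ and $\neg$ are covered automatically since $\top\eqdef\neg\bot$ and $\neg\varphi\eqdef\varphi\to\bot$ use already-handled connectives.
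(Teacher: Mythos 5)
Your proof is correct and follows essentially the same route the paper intends: a single structural induction carrying both equivalences as simultaneous induction hypotheses (the paper only sketches this for the analogous \MHT\ and \HTC\ propositions, noting ``two induction hypotheses, one for each of the two cases''), with the implication case resolved by comparing the three-valued values against the two choices of $H'\in\{H,T\}$. Your case analysis for implication and the simplification of the total-model clause are accurate, so nothing further is needed.
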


\begin{corollary}[Total three-valued correspondence]\label{prop:three-valued:total}Let $\tuple{T,T}$ a total \HT{} interpretation and let $\trivalI$ a three-valued interpretation
  satisfying the correspondence
  established in Definition~\ref{def:three-valued-from-ht},
  it follows that

  \begin{eqnarray*}
  	\trivalI(\varphi) = 2 &\hbox{ iff }& \tuple{T,T} \models \varphi \hbox{ and }\\
  	\trivalI(\varphi) = 0 &\hbox{ iff }& \tuple{T,T} \not \models \varphi.
  \end{eqnarray*}
  \noindent for all propositional formulas $\varphi$.
\end{corollary}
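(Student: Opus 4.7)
The corollary is a direct specialization of Proposition~\ref{prop:three-valued-ht-properties} to the total case, so the strategy is to reduce both biconditionals to results already available and then exploit the fact that $\trivalI$ can only take values $0$ or $2$ when the underlying \HT\ interpretation is total.

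First I would observe that, by Definition~\ref{def:three-valued-from-ht} applied to $\handt = \tuple{T,T}$, the set $T\setminus H$ is empty, so for every atom $a\in\alphabet$ we have $\trivalI(a)\in\{0,2\}$. In other words, $\trivalI$ is total in the sense of the paragraph following Definition~\ref{def:three-valued-interpretation-ht}. The preceding proposition on satisfaction in total three-valued interpretations then lifts this to arbitrary formulas: $\trivalI(\varphi)\in\{0,2\}$ for every propositional formula $\varphi$. This dichotomy is what lets the ``intermediate'' value $1$ drop out of the picture.

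Second, I would instantiate Proposition~\ref{prop:three-valued-ht-properties} with $H=T$. Its first clause immediately yields
\[
\trivalI(\varphi)=2 \iff \tuple{T,T}\models\varphi,
\]
which is the first biconditional of the corollary. For the second biconditional, I combine the range restriction from the previous step with the contrapositive: since $\trivalI(\varphi)\in\{0,2\}$, the condition $\trivalI(\varphi)=0$ is equivalent to $\trivalI(\varphi)\neq 2$, and by what we just proved this is equivalent to $\tuple{T,T}\not\models\varphi$. (Alternatively, one may invoke the second clause of Proposition~\ref{prop:three-valued-ht-properties}, $\trivalI(\varphi)>0 \iff \tuple{T,T}\models\varphi$, and negate it using the same range argument.)

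There is essentially no hard step here; the proof is bookkeeping once the totality of $\trivalI$ has been established. The only point that requires some care is explicitly justifying that totality of the \HT\ interpretation (i.e.\ $H=T$) transfers via Definition~\ref{def:three-valued-from-ht} to totality of $\trivalI$ at the atomic level, because this is what activates the earlier proposition guaranteeing $\trivalI(\varphi)\in\{0,2\}$ for all formulas. Everything else follows by direct specialization of Proposition~\ref{prop:three-valued-ht-properties}.
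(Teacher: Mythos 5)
Your proof is correct and follows essentially the same route as the paper, which (as with its \HTC{} and \MHT{} analogues) obtains this corollary directly from the Equi-satisfaction proposition (Proposition~\ref{prop:three-valued-ht-properties}) specialized to $H=T$. Your extra care about totality of $\trivalI$ is fine, though, as you note yourself, the second biconditional already follows by simply negating the clause $\trivalI(\varphi)>0$ iff $\tuple{T,T}\models\varphi$.
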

 \subsection{Three-valued semantics for \HTC}
\label{sec:three:htc}
\begin{definition}[Corresponding three-valued interpretation]\label{def:htc:three-val:interpretation}
  Given an \HTC\ interpretation $\htcinterp$ and a denotation $\den{\cdot}$ over $\tuple{\mathcal{X},\mathcal{D},\mathcal{C}}$,
  we define its corresponding three-valued interpretation \trivalIhtc\ over $\mathcal{C}$
  for each constraint atom $c\in\mathcal{C}$
  as\footnote{Since we use strict denotations, it is sufficient to check in the first case only satisfaction in $h$.}
  \begin{align*}
    \trivalIhtc(c) & \eqdef
                     \begin{cases}
                       2 & \hbox{ if }	\Vh \in \den{c}                                   \\
                       1 & \hbox{ if }	\Vh \not \in \den{c} \text{ and } \Vt \in \den{c} \\
                       0 & \hbox{ if }	\Vt \not \in \den{c}
                     \end{cases}
  \end{align*}
\end{definition}
Note that $\trivalIhtc$ is total when $h=t$.

\begin{definition}[Extension of three-valued interpretation]\label{def:def:htc:three-val:extension}
  A three-valued interpretation $\trivalIhtc$ corresponding to an \HTC\ interpretation $\htcinterp$,
  is extended to Boolean constraint formulas as follows.
  \begin{align*}
    \trivalIhtc({\bot}) & = 0 \\
    \trivalIhtc({\varphi \wedge \psi}) & = \min\{\trivalIhtc({\varphi}), \trivalIhtc({\psi})\} \\
    \trivalIhtc({\varphi \vee \psi}) & = \max\{\trivalIhtc({\varphi}), \trivalIhtc({\psi})\} \\
    \trivalIhtc({\varphi \to \psi}) & =
                                      \begin{cases}
                                        2                   & \text{if } \trivalIhtc({\varphi}) \leq \trivalIhtc({\psi}) \\
                                        \trivalIhtc({\psi}) & \text{otherwise}
                                      \end{cases}
  \end{align*}
\end{definition}
For the derived operators we have
$\trivalIhtc(\top) = 2$ and
$\trivalIhtc(\neg \varphi)=2$ if $\trivalIhtc(\varphi)=0$ and $0$ otherwise.

\begin{proposition}[Satisfaction total three-valued interpretations]
  For any total three-valued interpretation $\trivalItotalhtc$ and for any formula $\varphi$, $\trivalItotalhtc(\varphi)\in \lbrace 0,2\rbrace$.
\end{proposition}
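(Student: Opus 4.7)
The plan is to proceed by structural induction on the constraint formula $\varphi$, leveraging the fact that the three cases in Definition~\ref{def:htc:three-val:interpretation} collapse to two whenever the underlying \HTC-interpretation is total, i.e.\ when $\Vh = \Vt$.

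For the base step, I would first dispatch $\bot$ immediately, since $\trivalItotalhtc(\bot) = 0 \in \{0,2\}$ by Definition~\ref{def:def:htc:three-val:extension}. For a constraint atom $c$, I would observe that the middle clause of Definition~\ref{def:htc:three-val:interpretation} requires $\Vh \notin \den{c}$ together with $\Vt \in \den{c}$; under totality this conjunction is vacuous, so only the first and third clauses remain, yielding $\trivalItotalhtc(c) = 2$ when $\Vt \in \den{c}$ and $\trivalItotalhtc(c) = 0$ otherwise.

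For the inductive step I would appeal directly to the clauses of Definition~\ref{def:def:htc:three-val:extension}. In the cases $\varphi \wedge \psi$ and $\varphi \vee \psi$, the induction hypothesis gives $\trivalItotalhtc(\varphi), \trivalItotalhtc(\psi) \in \{0,2\}$, and the set $\{0,2\}$ is closed under both $\min$ and $\max$. For $\varphi \to \psi$, the induction hypothesis again places both values in $\{0,2\}$; if $\trivalItotalhtc(\varphi) \leq \trivalItotalhtc(\psi)$ the clause returns $2$, and otherwise it returns $\trivalItotalhtc(\psi)$, which is already in $\{0,2\}$.

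I do not anticipate a genuine obstacle: the result is essentially a sanity check that the three-valued machinery of \HTC\ collapses to classical two-valued semantics on total interpretations, exactly mirroring the analogous statement for \HT\ in Section~\ref{sec:three:ht}. The only delicate point is the atomic case, where one must explicitly invoke $\Vh = \Vt$ rather than merely the inclusion $\Vh \subseteq \Vt$ available for arbitrary \HTC-interpretations; this is what kills the intermediate value~$1$ at the leaves, which then propagates upward through the Boolean clauses by the closure argument above.
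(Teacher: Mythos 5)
Your proof is correct and follows exactly the route the paper takes, namely a straightforward structural induction on $\varphi$ (the paper leaves the details implicit); your atomic case correctly isolates the only point of substance, that totality $\Vh=\Vt$ makes the middle clause of Definition~\ref{def:htc:three-val:interpretation} vacuous, after which closure of $\{0,2\}$ under $\min$, $\max$ and the implication clause finishes the argument.
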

\begin{proof}
  Proven by straightforward structural induction on $\varphi$.
\end{proof}

\begin{proposition}[Equivalence of formulas]\label{prop:three-valued:eq:htc}
  For any propositional formulas $\varphi$ and $\psi$, $\varphi \equiv \psi$ iff  $\trivalIhtc(\varphi) = \trivalIhtc(\psi)$
  for any $\tuple{h,t}$.
\end{proposition}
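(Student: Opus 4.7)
The plan is to mirror the proof of the HT counterpart (Proposition~\ref{prop:three-valued:eq:ht}), which proceeds through the intermediate correspondence result of Proposition~\ref{prop:three-valued-ht-properties}. The first step is to establish the HTC analogue of that correspondence: for any \HTC-interpretation $\htcinterp$ with corresponding three-valued interpretation $\trivalIhtc$ as given by Definition~\ref{def:htc:three-val:interpretation}, and for every constraint formula $\varphi$,
\[
\trivalIhtc(\varphi) = 2 \text{ iff } \htcinterp \models \varphi,
\qquad
\trivalIhtc(\varphi) > 0 \text{ iff } \tuple{t,t} \models \varphi.
\]

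I would prove this auxiliary property by structural induction on $\varphi$. The case $\bot$ is immediate. For a constraint atom $c$, Definition~\ref{def:htc:three-val:interpretation} matches the \HTC-satisfaction clause directly: $\trivalIhtc(c) = 2$ iff $\Vh \in \den{c}$ iff $\htcinterp \models c$, and $\trivalIhtc(c) > 0$ iff $\Vt \in \den{c}$, which is equivalent to $\tuple{t,t} \models c$ by strictness of denotations together with $\Vh \subseteq \Vt$. The cases $\wedge$ and $\vee$ follow from the min/max clauses in Definition~\ref{def:def:htc:three-val:extension} combined with the compositional clauses of \HTC-satisfaction. The critical case is implication, where one verifies $\trivalIhtc(\varphi \to \psi) = 2$ iff $\tuple{v,t} \not\models \varphi$ or $\tuple{v,t} \models \psi$ for each $v\in\{\Vh,\Vt\}$; this follows by case analysis on the comparative values of $\trivalIhtc(\varphi)$ and $\trivalIhtc(\psi)$, applying the inductive hypothesis simultaneously to $\htcinterp$ and to $\tuple{t,t}$.

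Granted this lemma, the proposition follows directly. If $\trivalIhtc(\varphi) = \trivalIhtc(\psi)$ for every $\htcinterp$, then the first biconditional immediately yields $\htcinterp \models \varphi$ iff $\htcinterp \models \psi$ for all $\htcinterp$, whence $\varphi \equiv \psi$. Conversely, suppose $\varphi \equiv \psi$ and fix any $\htcinterp$. Applying the biconditionals at both $\htcinterp$ and its total variant $\tuple{t,t}$ forces $\trivalIhtc(\varphi) = 2 \Leftrightarrow \trivalIhtc(\psi) = 2$ and $\trivalIhtc(\varphi) > 0 \Leftrightarrow \trivalIhtc(\psi) > 0$; since the codomain is $\{0,1,2\}$, these two biconditionals together yield $\trivalIhtc(\varphi) = \trivalIhtc(\psi)$.

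The main obstacle is the implication case of the inductive lemma, where the \HTC-satisfaction clause quantifies over $v\in\{\Vh,\Vt\}$ rather than only checking $\Vh$. Particular care is needed in the subcase $\trivalIhtc(\varphi) = 1$, $\trivalIhtc(\psi) = 0$: here the inductive hypothesis gives $\htcinterp \not\models \varphi$ and $\tuple{t,t} \models \varphi$ while $\tuple{t,t} \not\models \psi$, so the $v=\Vt$ branch of the implication clause fails, consistent with $\trivalIhtc(\varphi \to \psi) = \trivalIhtc(\psi) = 0$. The remaining subcases are mechanical variants of the argument underpinning Proposition~\ref{prop:three-valued-ht-properties}, with persistence of constraint satisfaction carrying the role that monotonicity of $H\subseteq T$ plays in the Boolean setting.
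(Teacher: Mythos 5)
Your proof is correct and in substance matches the paper's own development: the auxiliary correspondence you establish by structural induction is exactly the paper's equi-satisfaction result (Proposition~\ref{prop:htc:equi-satisfaction}), and the equivalence claim then follows as you argue, using the $\{0,1,2\}$ codomain to combine the two biconditionals. The paper's local proof is only a terser contradiction argument via $\trivalIhtc(\varphi\leftrightarrow\psi)\neq 2$, which rests on the same correspondence, so your route is essentially the same, just with the dependency made explicit.
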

\begin{proof}
  Suppose $\varphi \equiv \psi$
  but there is some $\trivalIhtc$
  such that
  $\trivalIhtc(\varphi) \neq \trivalIhtc(\psi)$.
  By the three-valued semantics,
  $\trivalIhtc(\varphi\leftrightarrow\psi)\neq 2$,
  This contradicts the fact that $\varphi \equiv \psi$.
The converse direction is proven analogously.
\end{proof}

To prove that \HTC{} and our (alternative) three valued characterization are equivalent,
there is no need to stablish a model correspondence
since the three-valued characterization is induced by and \HTC{} interpretation.
Consequently, the following lemma for the full propositional language can be directly proved.

\begin{proposition}[Equi-satisfaction]
  \label{prop:htc:equi-satisfaction}
	Given an \HTC{} interpretation $\tuple{h,t}$ ,
  it follows that

	\begin{eqnarray*}
		\trivalIhtc(\varphi) = 2  &\hbox{ iff } & \tuple{h,t} \models \varphi \hbox{ and }\\
		\trivalIhtc(\varphi) > 0 &\hbox{ iff } &   \tuple{t,t} \models \varphi.
	\end{eqnarray*}
  \noindent for all propositional formulas $\varphi$.

\end{proposition}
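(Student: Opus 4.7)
The plan is to prove both biconditionals simultaneously by structural induction on the constraint formula $\varphi$. Before launching the induction I would first record the persistence lemma for \HTC: whenever $\Vh\subseteq\Vt$, if $\htcinterp\models\varphi$ then $\tuple{\Vt,\Vt}\models\varphi$. This is itself a quick induction: the atomic case uses strictness of the denotations (if $\Vh\in\den{c}$ and $\Vh\subseteq\Vt$ then $\Vt\in\den{c}$), and the inductive cases are immediate from the satisfaction clauses, with implication handled via its quantification over $v\in\{\Vh,\Vt\}$. I would keep this lemma at hand because the implication case of the main proposition cannot be closed without it.

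The base cases are direct. For $\bot$, one has $\trivalIhtc(\bot)=0$ and neither $\htcinterp$ nor $\tuple{\Vt,\Vt}$ satisfies $\bot$, so both equivalences hold vacuously. For a constraint atom $c$, Definition~\ref{def:htc:three-val:interpretation} yields $\trivalIhtc(c)=2$ exactly when $\Vh\in\den{c}$, which by clause~\ref{sat:htc:atom} of \HTC-satisfaction is precisely $\htcinterp\models c$. For the second equivalence, $\trivalIhtc(c)>0$ iff $\Vt\in\den{c}$ (the value $2$ case forces $\Vt\in\den{c}$ by strictness since $\Vh\subseteq\Vt$, and the value $1$ case gives it explicitly), which is exactly $\tuple{\Vt,\Vt}\models c$. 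The conjunction and disjunction cases are mechanical: combining the induction hypotheses with the $\min$/$\max$ clauses of Definition~\ref{def:def:htc:three-val:extension} and the corresponding satisfaction conditions yields both equivalences without further effort.

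The main obstacle is the implication case $\varphi\to\psi$, which I would dispatch by a case split on $\trivalIhtc(\varphi)\in\{0,1,2\}$. If $\trivalIhtc(\varphi)=0$, the induction hypothesis gives $\tuple{\Vt,\Vt}\not\models\varphi$ and hence (via persistence applied contrapositively, or directly from the IH clause $\trivalIhtc(\varphi)=2$ iff $\htcinterp\models\varphi$) also $\htcinterp\not\models\varphi$, so $\htcinterp\models\varphi\to\psi$ is vacuously true and $\trivalIhtc(\varphi\to\psi)=2$, matching both equivalences. If $\trivalIhtc(\varphi)=1$, the $v=\Vh$ check in the implication clause is discharged since $\htcinterp\not\models\varphi$, and the residual $v=\Vt$ check becomes $\tuple{\Vt,\Vt}\models\psi$; this exactly matches $\trivalIhtc(\varphi\to\psi)=2$ iff $\trivalIhtc(\psi)\geq 1$. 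If $\trivalIhtc(\varphi)=2$, I would invoke persistence to reduce the $v=\Vt$ requirement to the $v=\Vh$ one, so $\htcinterp\models\varphi\to\psi$ iff $\htcinterp\models\psi$, aligning with $\trivalIhtc(\varphi\to\psi)=2$ iff $\trivalIhtc(\psi)=2$.

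Finally, for the $>0$ half of the implication case, I would observe that $\trivalIhtc(\varphi\to\psi)>0$ is equivalent, by Definition~\ref{def:def:htc:three-val:extension}, to $\trivalIhtc(\varphi)=0$ or $\trivalIhtc(\psi)>0$; applying the induction hypotheses this becomes $\tuple{\Vt,\Vt}\not\models\varphi$ or $\tuple{\Vt,\Vt}\models\psi$, which is exactly $\tuple{\Vt,\Vt}\models\varphi\to\psi$ since the two quantified checks over $\{\Vt,\Vt\}$ collapse into one. This closes the induction and establishes the proposition.
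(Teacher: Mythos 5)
Your proof is correct and takes essentially the same route as the paper, whose own argument is just the one-line remark that the result follows by structural induction on $\varphi$ with two simultaneous induction hypotheses, one for each equivalence. Your detailed case analysis for implication (and the persistence lemma, which the paper leaves implicit in its use of strict denotations) simply fills in the details the paper omits.
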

\begin{proof}
  Proven by structural induction on $\varphi$,
  taking into account that we have two induction hypotheses,
  one for each of the two cases in the proposition.
\end{proof}

\begin{proposition}[Total three-valued correspondence]
	Given a total \HTC{} interpretation $\tuple{t,t}$,
  it follows that

	\begin{eqnarray*}
    \trivalItotalhtc(\varphi) =2  &\hbox{ iff } &  \tuple{t,t} \models \varphi \hbox{ and }\\
    \trivalItotalhtc(\varphi) = 0 &\hbox{ iff } &  \tuple{t,t} \not \models \varphi.
	\end{eqnarray*}
  \noindent for all propositional formulas $\varphi$.

\end{proposition}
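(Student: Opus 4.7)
The plan is to derive the statement directly as a corollary of the preceding Proposition~\ref{prop:htc:equi-satisfaction} (Equi-satisfaction) together with the earlier observation that total three-valued interpretations take values only in $\{0,2\}$. Since the heavy lifting (the structural induction establishing the two equi-satisfaction clauses) has already been done for arbitrary \HTC-interpretations $\htcinterp$, instantiating that result at $h=t$ is what remains.

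First I would unfold the definition of $\trivalItotalhtc$ at constraint atoms and observe that, with $\Vh=\Vt$, the middle case of Definition~\ref{def:htc:three-val:interpretation} (requiring $\Vh\notin\den{c}$ and $\Vt\in\den{c}$) is vacuous, so $\trivalItotalhtc(c)\in\{0,2\}$ for every $c\in\mathcal{C}$. A straightforward structural induction using Definition~\ref{def:def:htc:three-val:extension} then lifts this to arbitrary propositional constraint formulas: the $\min$, $\max$, and implication clauses all map $\{0,2\}\times\{0,2\}$ into $\{0,2\}$. This is exactly the statement of the unnumbered "Satisfaction total three-valued interpretations" proposition, which I would simply cite.

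Next, I would apply Proposition~\ref{prop:htc:equi-satisfaction} to the \HTC-interpretation $\tuple{t,t}$. Its first clause gives immediately
\[
\trivalItotalhtc(\varphi)=2 \ \Longleftrightarrow\ \tuple{t,t}\models\varphi,
\]
which is the first equivalence. For the second equivalence, combine the fact $\trivalItotalhtc(\varphi)\in\{0,2\}$ with the first equivalence: $\trivalItotalhtc(\varphi)=0$ iff $\trivalItotalhtc(\varphi)\neq 2$ iff $\tuple{t,t}\not\models\varphi$. Alternatively, one can invoke the second clause of Proposition~\ref{prop:htc:equi-satisfaction}, $\trivalItotalhtc(\varphi)>0 \Leftrightarrow \tuple{t,t}\models\varphi$, and negate both sides, using again that the only non-positive value available is $0$.

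There is no real obstacle: the whole argument is a two-line specialization of an already-proved result, and the only subtlety is making explicit that totality collapses the three truth values to two. I would therefore present the proof compactly, essentially as "by Proposition~\ref{prop:htc:equi-satisfaction} and the fact that $\trivalItotalhtc$ ranges over $\{0,2\}$."
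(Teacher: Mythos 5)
Your proposal is correct and follows exactly the route the paper takes: its proof is the one-liner ``Follows from Proposition~\ref{prop:htc:equi-satisfaction}'', and you simply make explicit the same two ingredients (instantiating equi-satisfaction at $h=t$ and using that a total interpretation only takes values in $\{0,2\}$). Nothing is missing.
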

\begin{proof}
  Follows from Proposition~\ref{prop:htc:equi-satisfaction}.
\end{proof}

 \subsection{Three-valued semantics for \MHT}
\label{sec:three:mht}

\begin{definition}[Three-valued metric interpretation]\label{def:three-valued-interpretation-mht}
  Given a timing function $\tmf: \intervco{0}{\lambda}\to\mathbb{N}$ with $\lambda \in \mathbb{N}$,
  we define a three-valued metric interpretation $\trivalIm$
  over alphabet $\alphabet$
  for
  as the function
  \begin{align*}
    \trivalIm&: \intervco{0}{\lambda}\times\alphabet\to\{0,1,2\}
  \end{align*}
\end{definition}
The three-valued interpretation $\trivalIm$ assigns to each state $k\in\intervco{0}{\lambda}$ and each atom $a\in\alphabet$
a truth value in $\{0,1,2\}$.
In analogy to Section~\ref{sec:mht},
the associated time function $\tmf$ captures the time of each state.

A three-valued metric interpretation $\trivalIm$ is total,
if its range is $\{0,2\}$ for all $\rangeco{k}{0}{\lambda}$.

\begin{definition}[Extension of three-valued metric interpretation]\label{def:three-valued-semantics-mht}
  A three-valued metric interpretation $\trivalIm$,
  is extended to metric formulas
  for any time point $k \in \intervco{0}{\lambda}$
  as follows.
  \begin{align*}
    \trival{k}{\bot} & \eqdef 0 \\
    \trival{k}{\varphi \wedge \psi} & \eqdef \min\{\trival{k}{\varphi}, \trival{k}{\psi}\} \\
    \trival{k}{\varphi \vee \psi}   & \eqdef \max\{\trival{k}{\varphi}, \trival{k}{\psi}\} \\
    \trival{k}{\varphi \to \psi} & \eqdef
                                   \begin{cases}
                                     2                & \text{if } \trival{k}{\varphi} \leq \trival{k}{\psi} \\
                                     \trival{k}{\psi} & \text{otherwise}
                                   \end{cases} \\
    \trival{k}{\initially} & \eqdef
                             \begin{cases}
                               2 & \text{if } k = 0 \\
                               0 & \text{otherwise}
                             \end{cases} \\
    \trival{k}{\metricI{\Next} \varphi} & \eqdef
                                          \begin{cases}
                                            0                     & \text{if } k + 1 = \lambda \text{ or } \tau(k + 1) - \tau(k) \not\in I \\
                                            \trival{k + 1}{\varphi} & \text{otherwise}
                                          \end{cases} \\
    \trival{k}{\metricI{\eventuallyF} \varphi} & \eqdef \max\{0\}\cup \{\trival{j}{\varphi} \mid k \leq j < \lambda \text{ and } \tau(j) - \tau(k) \in I\} \\
    \trival{k}{\metricI{\alwaysF} \varphi}     & \eqdef \min\{2\}\cup \{\trival{j}{\varphi} \mid k \leq j < \lambda \text{ and } \tau(j) - \tau(k) \in I\}
  \end{align*}
\end{definition}
Unlike in the three-valued semantics of \THTf~\citep{cakascsc18a},
the sets for eventually and always might be empty due to the restrictions on the interval.
Therefore, we add the default values, 0 and 2, respectively.

\begin{proposition}[Satisfaction total three-valued metric interpretations] For any total three-valued valuation $\trivalIm$, for any metric formula $\varphi$ and for all $\rangeco{k}{0}{\lambda}$,
	 $\trival{k}{\varphi}\in \lbrace 0,2\rbrace$.
\end{proposition}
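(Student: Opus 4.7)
The plan is to prove the statement by a routine structural induction on the metric formula $\varphi$, uniformly over all time points $\rangeco{k}{0}{\lambda}$. The induction hypothesis states that for every strict subformula $\psi$ of $\varphi$ and every $\rangeco{k}{0}{\lambda}$ the three-valued assignment $\trival{k}{\psi}$ already lies in $\{0,2\}$; the goal in each inductive case is then to verify that the corresponding clause of Definition~\ref{def:three-valued-semantics-mht} maps $\{0,2\}$-valued inputs to $\{0,2\}$-valued outputs.

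The base cases are immediate: $\trival{k}{\bot}=0$ by definition, $\trival{k}{a}\in\{0,2\}$ for atoms $a\in\alphabet$ by the totality assumption on $\trivalIm$, and $\trival{k}{\initially}\in\{0,2\}$ by inspection of its two-case definition. For the Boolean connectives, conjunction and disjunction reduce to $\min$ and $\max$ of two elements of $\{0,2\}$, which clearly remain in $\{0,2\}$. The implication case is the only one that momentarily demands attention: its first branch returns $2$, whereas the ``otherwise'' branch applies precisely when $\trival{k}{\varphi}>\trival{k}{\psi}$. Combined with the induction hypothesis, this strict inequality forces $\trival{k}{\varphi}=2$ and $\trival{k}{\psi}=0$, so the returned value $\trival{k}{\psi}=0$ is again in $\{0,2\}$.

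For the metric operators the argument remains structural. The case $\metricI{\Next}\varphi$ yields either $0$ (when $k+1=\lambda$ or $\tmf(k+1)-\tmf(k)\notin I$) or $\trival{k+1}{\varphi}$, which lies in $\{0,2\}$ by the induction hypothesis applied at state $k+1$. For $\metricI{\eventuallyF}\varphi$ and $\metricI{\alwaysF}\varphi$, the outcome is respectively the maximum or minimum of a finite subset of $\{0,2\}$, augmented by a default value ($0$ or $2$) that accounts for the possibly empty witness set noted after Definition~\ref{def:three-valued-semantics-mht}; since $\{0,2\}$ is closed under $\min$ and $\max$, the claim follows.

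I do not foresee any substantial obstacle here: each step reduces to the closure of $\{0,2\}$ under a simple operation, and the only subtle point is the verification in the implication clause that the ``otherwise'' branch cannot produce the intermediate value $1$. The role of the empty-set defaults in the metric eventually and always clauses is what distinguishes this argument from the analogous result in \THTf, but it does not complicate the induction.
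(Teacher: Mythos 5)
Your proof is correct and follows the same route as the paper, which simply states that the result is ``proven by straightforward structural induction on $\varphi$''; your write-up fills in exactly the case analysis the paper leaves implicit (closure of $\{0,2\}$ under $\min$, $\max$, the implication clause, and the defaulted sets for $\metricI{\eventuallyF}$ and $\metricI{\alwaysF}$). No gaps.
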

\begin{proof}
  Proven by straightforward structural induction on $\varphi$.
\end{proof}

\begin{proposition}[Point-wise equivalence for formulas]
  \label{prop:three-valued:eq:mht:pointwise}
  Given a timed \HT-trace $\M=(\tuple{\Htrace,\Ttrace}, \tmf)$
  and its corresponding three-valued metric interpretation $\trivalIm$,
  for any arbitrary metric formulas $\varphi$ and $\psi$,
  and $\kinlambda$,
  it follows that
  $\M,k\models\varphi\leftrightarrow\psi$ iff $\trival{k}{\varphi} = \trival{k}{\psi}$.
\end{proposition}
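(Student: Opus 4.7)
The plan is to follow the two-layer template used for the propositional and \HTC\ cases (Propositions~\ref{prop:three-valued-ht-properties} and~\ref{prop:htc:equi-satisfaction}): first establish an equi-satisfaction lemma tying the three-valued metric valuation to \MHT-satisfaction, then derive the biconditional claim as an easy consequence.

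The preliminary lemma I would prove is: for every metric formula $\varphi$ and every $\kinlambda$,
\begin{align*}
  \trival{k}{\varphi} = 2 &\text{ iff } \M, k \models \varphi, \text{ and} \\
  \trival{k}{\varphi} > 0 &\text{ iff } (\tuple{\Ttrace,\Ttrace}, \tmf), k \models \varphi.
\end{align*}
I would prove this by simultaneous structural induction on $\varphi$, carrying both conjuncts as induction hypotheses. The atomic, Boolean and \initially\ cases run exactly as in the propositional \HT\ setting of Proposition~\ref{prop:three-valued-ht-properties}, the only new ingredient being indexing by~$k$. The implication case must invoke both hypotheses, because Clause~(4) of \MHT-satisfaction quantifies over $\M'\in\{\M,(\tuple{\Ttrace,\Ttrace},\tmf)\}$; a case split on the three possible values of $\trival{k}{\varphi}$ and $\trival{k}{\psi}$ then matches the three-valued definition of~$\to$ exactly. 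For the temporal cases, the key observation is that the timing function $\tmf$ is shared by both $\M$ and $(\tuple{\Ttrace,\Ttrace},\tmf)$, so the interval-membership condition $\tmf(j)-\tmf(k)\in\cI$ is unchanged when passing to the total trace. Thus for $\metricI{\Next}\varphi$ the clause boils down to applying the induction hypothesis at $k+1$ (with the two boundary subcases $k+1=\lambda$ and $\tmf(k+1)-\tmf(k)\notin\cI$ both forcing value~$0$, matching the \MHT-clause). For $\metricI{\eventuallyF}\varphi$, value~$2$ corresponds to the existence of a witness $j$ with $\trival{j}{\varphi}=2$ and the right interval condition, which by IH means $\M,j\models\varphi$; the value-$>0$ direction is symmetric with the total trace. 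The case $\metricI{\alwaysF}\varphi$ is dual, where the default value $2$ for the empty interval case mirrors the vacuous universal quantification in Clause~\ref{def:mhtsat:alwaysF}.

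Having the equi-satisfaction lemma, the target biconditional is immediate. Unfolding $\varphi\leftrightarrow\psi$ as $(\varphi\to\psi)\wedge(\psi\to\varphi)$ and applying the lemma to the two implications yields that $\M,k\models\varphi\leftrightarrow\psi$ is equivalent to $\trival{k}{\varphi\to\psi}=2$ and $\trival{k}{\psi\to\varphi}=2$; by Definition~\ref{def:three-valued-semantics-mht} of the three-valued implication these conjoin to $\trival{k}{\varphi}\leq\trival{k}{\psi}$ and $\trival{k}{\psi}\leq\trival{k}{\varphi}$, i.e.\ to $\trival{k}{\varphi}=\trival{k}{\psi}$.

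The main obstacle will be the temporal operator cases of the inductive lemma, and in particular the careful treatment of the interval restriction in $\metricI{\eventuallyF}$ and $\metricI{\alwaysF}$: the $\max\{0\}\cup\{\dots\}$ and $\min\{2\}\cup\{\dots\}$ conventions in Definition~\ref{def:three-valued-semantics-mht} differ from the standard \THTf\ three-valued semantics, so one has to verify that the added defaults coincide with the vacuous quantifications built into Clauses~\ref{def:mhtsat:eventuallyF} and~\ref{def:mhtsat:alwaysF}. A secondary subtlety is the implication case in the induction, where the persistence property $H_k\subseteq T_k$ of \HT-traces (so that $\M,k\models\chi$ implies $(\tuple{\Ttrace,\Ttrace},\tmf),k\models\chi$ for every subformula $\chi$) must be used to ensure consistency between the two conjuncts of the lemma; this follows from a standard persistence proof for \MHT\ that can be proved alongside, or lifted from~\citep{becadiscsc23a}.
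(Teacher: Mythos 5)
Your proposal is correct and follows essentially the same route as the paper: your preliminary equi-satisfaction lemma is exactly the paper's Proposition~\ref{prop:three-valued-semantics-eq} (proved there by the same simultaneous structural induction with two hypotheses), and the paper's own two-line proof of the statement is precisely your final step, reducing $\M,k\models\varphi\leftrightarrow\psi$ to $\trival{k}{\varphi\leftrightarrow\psi}=2$ and then to $\trival{k}{\varphi}=\trival{k}{\psi}$ via the three-valued semantics of implication.
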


\begin{proof}
  $\M,k\models\varphi\leftrightarrow\psi$ iff
  $\trivalIm(\varphi\leftrightarrow\psi)=2$.
  We know this is the case iff
  $\trival{k}{\varphi} = \trival{k}{\psi}$
  by the definition of the metric three-valued semantics.
\end{proof}

\begin{proposition}[Equivalent formulas]
  \label{prop:three-valued:eq:mht}
  Two arbitrary metric formulas $\varphi$ and $\psi$ are equivalent,
  in symbols $\varphi \equiv \psi$,
  iff $\trival{k}{\varphi} = \trival{k}{\psi}$
  for any three-valued metric interpretation $\trivalIm$,
  any $\rangeco{k}{0}{\lambda}$ and any timed function $\tau$.
\end{proposition}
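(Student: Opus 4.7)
The plan is to reduce this global equivalence statement to the pointwise version already established in Proposition~\ref{prop:three-valued:eq:mht:pointwise}, by first fixing an explicit bijection between three-valued metric interpretations (paired with a timing function) and timed \HT-traces.

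First, I would state the correspondence, analogous to Definition~\ref{def:three-valued-from-ht} but lifted to traces. Given a timed \HT-trace $\M=(\tuple{\Htrace,\Ttrace},\tmf)$ of length $\lambda$, define $\trivalIm$ pointwise by $\trival{k}{a}=2$ if $a\in H_k$, $\trival{k}{a}=1$ if $a\in T_k\setminus H_k$, and $\trival{k}{a}=0$ if $a\notin T_k$, using the same $\tmf$. Conversely, any three-valued metric interpretation $\trivalIm$ with timing function $\tmf$ yields a timed \HT-trace by setting $T_k\eqdef\{a\mid \trival{k}{a}>0\}$ and $H_k\eqdef\{a\mid \trival{k}{a}=2\}$, which satisfies $H_k\subseteq T_k$. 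This map is a bijection, so quantifying over all timed \HT-traces is equivalent to quantifying over all pairs $(\trivalIm,\tmf)$.

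For the forward direction, assume $\varphi\equiv\psi$. Unfolding the definition of equivalence in \MHT, this means $\M,k\models\varphi$ iff $\M,k\models\psi$ for every timed \HT-trace $\M$ and every $\rangeco{k}{0}{\lambda}$, so in particular $\M,k\models\varphi\leftrightarrow\psi$ for all such $\M,k$. Given any $\trivalIm$ and $\tau$, take the corresponding $\M$ via the bijection above and apply Proposition~\ref{prop:three-valued:eq:mht:pointwise} to conclude $\trival{k}{\varphi}=\trival{k}{\psi}$. For the converse, assume $\trival{k}{\varphi}=\trival{k}{\psi}$ universally. For any timed \HT-trace $\M$ and any $k$, let $\trivalIm$ be the corresponding three-valued metric interpretation; by hypothesis the two values coincide at $k$, so Proposition~\ref{prop:three-valued:eq:mht:pointwise} gives $\M,k\models\varphi\leftrightarrow\psi$, hence $\M,k\models\varphi$ iff $\M,k\models\psi$. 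Since $\M$ and $k$ were arbitrary, $\varphi\equiv\psi$.

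The only substantive step is verifying that the claimed correspondence between $\trivalIm$ (with $\tmf$) and timed \HT-traces is well defined and bijective, and that the notion of semantic equivalence $\equiv$ in use is the natural pointwise one at every state of every trace (rather than merely satisfaction at the initial state of every model). The first is straightforward from the analogous \HT\ case; the second is a matter of fixing the convention, which is implicitly the right one since Proposition~\ref{prop:three-valued:eq:mht:pointwise} is phrased pointwise. Once these are in place, the proof is essentially a one-line reduction to the pointwise equivalence lemma, and the statement follows without any new induction on the structure of $\varphi$ and $\psi$.
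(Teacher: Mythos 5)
Your proposal is correct and follows essentially the same route as the paper: the paper's own proof likewise comes down to the observation that $\trival{k}{\varphi\leftrightarrow\psi}=2$ exactly when $\trival{k}{\varphi}=\trival{k}{\psi}$, combined with the correspondence between timed \HT-traces and three-valued metric interpretations, only phrased as two short contradiction arguments rather than an explicit reduction to Proposition~\ref{prop:three-valued:eq:mht:pointwise}. Your version merely makes the bijection and the appeal to the pointwise lemma explicit, which is a presentational rather than substantive difference.
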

\begin{proof}
  Suppose $\varphi \equiv \psi$
  but there is some timed function $\tau$ and $\trivalIm$
  such that
  for some $\rangeco{k}{0}{\lambda}$,
  $\trival{k}{\varphi} \neq \trival{k}{\psi}$.
  By the three-valued semantics,
  $\trival{k}{\varphi\leftrightarrow\psi}\neq 2$,
  This contradicts the fact that $\varphi \equiv \psi$.
The converse direction is proven analogously.
\end{proof}

As in the propositional case,
we prove that $\MHT$ and our three-valued characterization are equivalent. We start by defining a model correspondence.

\begin{definition}[Correspondence between \MHT{} and three-valued metric interpretations]
  \label{def:three-valued:mht:correspondence}
  Given a timed \HT-trace $\M=(\tuple{\Htrace,\Ttrace}, \tmf)$ of length $\lambda$ over \alphabet,
we define its corresponding three-valued metric interpretation $\trivalIm$ over \alphabet\ as

  \begin{align*}
    \trival{k}{a} & \eqdef
                     \begin{cases}
                       0 & \text{ if } a \not\in T_k          \\
                       1 & \text{ if } a \in T_k\setminus H_k \\
                       2 & \text{ if } a \in H_k
                     \end{cases}
  \end{align*}
  \noindent for $a\in\alphabet$ and $\rangeco{k}{0}{\lambda}$.
  Conversely, given a three-valued metric interpretation $\trivalIm$ over \alphabet,
  we define its associated timed \HT-trace over \alphabet\
  as $\M=(\tuple{\Htrace,\Ttrace}, \tmf)$ of length $\lambda$ as
\begin{align*}
  	 H_k & \eqdef \{a \in \alphabet \mid \trival{k}{a} = 2\}\\
  	T_k & \eqdef \{a \in \alphabet \mid \trival{k}{a} > 0\}
  \end{align*}
  \noindent  for every $k\in\intervco{0}{\lambda}$.

\end{definition}

\begin{proposition}[Equi-satisfaction]\label{prop:three-valued-semantics-eq}Given a timed \HT-trace $\M=(\tuple{\Htrace,\Ttrace}, \tmf)$  and a three-valued metric interpretation $\trivalIm$
  satisfying the correspondences established in Definition~\ref{def:three-valued:mht:correspondence},
  it follows that
  	\begin{eqnarray*}
      \trival{k}{\varphi}=2 & \hbox{iff}  & \M,k \models \varphi\\
      \trival{k}{\varphi}>0 & \hbox{iff}& (\tuple{\Ttrace,\Ttrace}, \tmf), k \models \varphi.
  	\end{eqnarray*}
    \noindent  for all metric formulas $\varphi$ and any $\rangeco{k}{0}{\lambda}$.
\end{proposition}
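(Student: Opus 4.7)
The plan is to prove both biconditionals simultaneously by structural induction on $\varphi$, treating them as a combined induction hypothesis. This combined strategy mirrors the one used in Proposition~\ref{prop:htc:equi-satisfaction} and is essential because the \MHT-semantics of implication refers to both $\M$ and $(\tuple{\Ttrace,\Ttrace},\tmf)$, so progressing through the implication case requires access to both halves of the hypothesis at smaller subformulas.

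For the base cases, $\varphi=\bot$ is immediate from $\trival{k}{\bot}=0$, and for $\varphi=a\in\alphabet$ the two biconditionals unfold directly from Definition~\ref{def:three-valued:mht:correspondence}, since $\trival{k}{a}=2$ iff $a\in H_k$ iff $\M,k\models a$, and $\trival{k}{a}>0$ iff $a\in T_k$ iff $(\tuple{\Ttrace,\Ttrace},\tmf),k\models a$. The \initially\ case is immediate since both the semantic clause and the three-valued definition depend only on whether $k=0$. The conjunction and disjunction cases follow by applying the induction hypothesis inside the min and max, noting that $\min\{x,y\}=2$ iff $x=y=2$ and $\min\{x,y\}>0$ iff $x,y>0$, and dually for max. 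For the implication case $\varphi\to\psi$, we use the three-valued characterization $\trival{k}{\varphi\to\psi}=2$ iff $\trival{k}{\varphi}\leq\trival{k}{\psi}$, and split on the nine combinations of the values of $\trival{k}{\varphi}$ and $\trival{k}{\psi}$; invoking both halves of the induction hypothesis on $\varphi$ and $\psi$ lets us match each case with the \MHT-clause that requires $\M',k\not\models\varphi$ or $\M',k\models\psi$ for both $\M'=\M$ and $\M'=(\tuple{\Ttrace,\Ttrace},\tmf)$.

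For the three metric operators, the key observation is that the set of ``relevant'' future states is exactly $\{j\mid k\leq j<\lambda, \tmf(j)-\tmf(k)\in\cI\}$, which appears identically in Definition~\ref{def:three-valued-semantics-mht} and in clauses~\ref{def:mhtsat:next}, \ref{def:mhtsat:eventuallyF}, \ref{def:mhtsat:alwaysF} of \MHT-satisfaction. The next case is straightforward: either $k+1=\lambda$ or $\tmf(k+1)-\tmf(k)\notin\cI$, yielding $0$ and failure of the \MHT-clause, or otherwise $\trival{k}{\metricI{\Next}\varphi}=\trival{k+1}{\varphi}$ and the induction hypothesis transports both biconditionals. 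For eventually, the max agrees with an existential: it equals $2$ iff some $j$ in the relevant set has $\trival{j}{\varphi}=2$, and is $>0$ iff some such $j$ has $\trival{j}{\varphi}>0$, both matched by the induction hypothesis; the added default $0$ handles the empty-set case, where no witness exists and $\M,k\not\models\metricI{\eventuallyF}\varphi$. Dually, for always the min with default $2$ corresponds to the universal clause, which is vacuously satisfied on the empty set. The main obstacle will be the implication case: managing the nine-way split while keeping the correspondence between the three truth values and the pair of \HT-interpretations $\M$ and $(\tuple{\Ttrace,\Ttrace},\tmf)$ requires careful bookkeeping, and it is only here that the necessity of carrying both biconditionals as a single hypothesis becomes visible.
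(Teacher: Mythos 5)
Your proposal is correct and follows essentially the same route as the paper, which likewise proves the statement by structural induction on $\varphi$ carrying both biconditionals as simultaneous induction hypotheses. Your elaboration of the individual cases (in particular the shared timing function making the set of relevant future states identical for $\M$ and the total trace, and the need for both hypotheses in the implication case) is a sound filling-in of the details the paper leaves implicit.
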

\begin{proof}
  Proven by structural induction on $\varphi$,
  taking into account that we have two induction hypotheses,
  one for each of the two cases in the proposition.
\end{proof}

\begin{corollary}[Total three-valued metric correspondence]\label{prop:three-valued:metric:total}Let $(\tuple{\Htrace,\Ttrace}, \tmf)$ a total timed \HT-trace
  and let $\trivalIm$ a three-valued metric interpretation
  satisfying the correspondences established in Definition~\ref{def:three-valued:mht:correspondence},
  it follows that

	\begin{eqnarray*}
		\trival{k}{\varphi} = 2 &\hbox{ iff }& (\tuple{\Htrace,\Ttrace}, \tmf) \models \varphi \hbox{ and }\\
		\trival{k}{\varphi} = 0 &\hbox{ iff }& (\tuple{\Htrace,\Ttrace}, \tmf) \not \models \varphi.
	\end{eqnarray*}
	\noindent for all metric formulas $\varphi$ and any $\rangeco{k}{0}{\lambda}$.
\end{corollary}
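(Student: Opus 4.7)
The plan is to derive this corollary as a direct specialization of Proposition~\ref{prop:three-valued-semantics-eq} to the total case, together with the observation that total three-valued metric interpretations only take values in $\{0,2\}$ on every formula.

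First, I would verify that when $\Htrace=\Ttrace$, the three-valued metric interpretation $\trivalIm$ induced via Definition~\ref{def:three-valued:mht:correspondence} is total in the sense of Definition~\ref{def:three-valued-interpretation-mht}, i.e., $\trival{k}{a}\in\{0,2\}$ for every $a\in\alphabet$ and $\rangeco{k}{0}{\lambda}$. This is immediate from the definition, since the middle case $a\in T_k\setminus H_k$ is empty. The preceding ``Satisfaction total three-valued metric interpretations'' proposition then extends this to all metric formulas by structural induction: $\min$ and $\max$ of arguments in $\{0,2\}$ remain in $\{0,2\}$; the implication case has value $2$ or the value of $\trival{k}{\psi}\in\{0,2\}$; $\initially$ is always $0$ or $2$; and the aggregations for $\metricI{\eventuallyF}$ and $\metricI{\alwaysF}$ use default values ($0$ and $2$ respectively) that already lie in $\{0,2\}$.

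Second, I would instantiate Proposition~\ref{prop:three-valued-semantics-eq} at $\Htrace=\Ttrace$. Its first clause yields $\trival{k}{\varphi}=2$ iff $(\tuple{\Ttrace,\Ttrace},\tmf),k\models\varphi$, which is exactly the first equivalence to be shown. Its second clause gives $\trival{k}{\varphi}>0$ iff $(\tuple{\Ttrace,\Ttrace},\tmf),k\models\varphi$. Combining this with the totality established in the previous step, $\trival{k}{\varphi}\in\{0,2\}$ means that $\trival{k}{\varphi}=0$ is equivalent to $\trival{k}{\varphi}\not>0$, which in turn, by contraposition of the second clause, is equivalent to $(\tuple{\Ttrace,\Ttrace},\tmf),k\not\models\varphi$. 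This is the second equivalence claimed.

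There is no real obstacle here; the only subtlety is making explicit that totality of $\trivalIm$ on atoms propagates to all metric formulas, so that the two clauses of Proposition~\ref{prop:three-valued-semantics-eq} collapse into the sharp equivalences of the corollary. Since this propagation is exactly what the ``Satisfaction total three-valued metric interpretations'' proposition provides, the corollary follows by combining these two results.
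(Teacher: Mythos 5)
Your proof is correct and takes essentially the same route as the paper, whose proof of this corollary is simply an appeal to Proposition~\ref{prop:three-valued-semantics-eq} instantiated at $\Htrace=\Ttrace$; you merely spell out the details. Note that the totality step is not even needed for your second equivalence, since $\trival{k}{\varphi}=0$ is equivalent to $\trival{k}{\varphi}\not>0$ over the range $\{0,1,2\}$ regardless, so the second clause of the equi-satisfaction proposition already yields it directly.
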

\begin{proof}
  Follows from Proposition~\ref{prop:three-valued-semantics-eq}.
\end{proof}

 \section{Splitting}

\begin{definition}[Splitting set~\cite{ben-eliyahu21a,liftur94a}] Let $P$ be a logic program and let $U$ be a set of atoms.
    $U$ is said to be a \emph{splitting set} for $P$ if for all rule $r \in P$, if one of the atoms in the head of $r$ is in U then all the atoms of $r$ are in $U$.
    We denote by $b_{U}(P)$ the set of rules in $P$ having only atoms from $U$.
\end{definition}

\begin{definition}[Reducing a logic program~\cite{ben-eliyahu21a}] Let $P$ be a logic program and let $X,Y$ be two sets of atoms.
    the procedure $Reduce(P,X,Y)$ returns a logic program obtained from $P$ in which all atoms in $X$ are set to $true$ and the atoms in $Y$ are set to $false$.
    The procedure $Reduce(P,X,Y)$ is shown in Algorithm~\ref{alg:reduce}.
\end{definition}

\begin{algorithm}[h!]
    \SetKwInOut{Input}{input}\SetKwInOut{Output}{output}
    \Input{A program $P$ and two sets of atoms: $X$ and $Y$}
    \Output{An update of $P$ assuming that all the atoms in $X$ are true and all atoms in $Y$ are false.}

    \ForEach{atom $a\in X$}{
        \ForEach{rule $r\in P$}{
            \lIf{$a$ appears negative in the body of $r$}{delete $r$}
            \lElseIf{$a$ appears in the head of $r$}{ delete $r$}
            \lElse{delete each positive appearance of $a$ in the body of $r$}
        }
    }

    \ForEach{atom $a\in Y$}{
        \ForEach{rule $r\in P$}{
            \lIf{$a$ appears positive in the body of $r$}{delete $r$}
            \lElseIf{$a$ appears in the head of $r$}{ delete a from the head of $r$}
            \lElse{delete each negative appearance of $a$ in the body of $r$}
        }
    }
    \caption{$Reduce(P,X,Y)$}\label{alg:reduce}
\end{algorithm}

\begin{theorem}[Adopted from~\cite{liftur94a,ben-eliyahu21a}]\label{thm:splitting}
    Let $P$ be a logic program, and let $U$ be a splitting set for $P$.
    A set of atoms $S$ is a stable model for $P$ if and only if
    $X = X \cup Y$ where $X$ is a stable model of $b_{U}(P)$ and $Y$ is a stable model of
    $Reduce(P\setminus b_{U},X,U\setminus X)$.
\end{theorem}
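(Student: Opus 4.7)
The plan is to prove this theorem by reducing it to the classical splitting theorem of Lifschitz and Turner, whose conclusion differs from the one here only in how the ``top half'' of the program is pre-processed against the stable model of the ``bottom half.'' First I would unpack the definition: because $U$ is a splitting set, every rule $r \in P$ with at least one head atom in $U$ lies entirely in $U$ and therefore belongs to $b_U(P)$. Consequently, rules in $P \setminus b_U(P)$ never have a head atom in $U$, though they may mention atoms from $U$ in both positive and negative body literals.

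Second, I would show that $Reduce(P \setminus b_U(P), X, U \setminus X)$ coincides, up to notation, with the partial-evaluation operator used in the classical splitting result. The key observation is that the two ``if $a$ appears in the head of $r$'' branches of Algorithm~\ref{alg:reduce} are vacuous when applied to $P \setminus b_U(P)$ with $X \cup (U \setminus X) \subseteq U$, by the paragraph above. What survives is precisely the standard evaluation: substitute each $a \in X$ as true (deleting rules with $\neg a$ in the body and the positive occurrences of $a$ from remaining bodies) and each $a \in U \setminus X$ as false (deleting rules with positive $a$ in the body and the negative occurrences $\neg a$ from remaining bodies).

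Third, having established this equivalence, the theorem follows from the classical splitting theorem~\cite{liftur94a}, taken in its disjunctive generalization~\cite{bez21} since our rules have disjunctive heads of literals. The left-to-right direction sets $X \eqdef S \cap U$ and $Y \eqdef S \setminus U$; the splitting-set condition forces $X$ to be a stable model of $b_U(P)$ (stability of $S$ restricted to atoms wholly within $U$), and then the reduct equivalence above makes $Y$ a stable model of the reduced remainder. The right-to-left direction runs the same chain backwards: given stable models $X$ and $Y$ of the two pieces, one verifies that $X \cup Y$ is a model of $P$ whose minimality follows from the minimality of $X$ for $b_U(P)$ and of $Y$ for the reduct.

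The main obstacle will be the bookkeeping around negation-as-failure for atoms in $U \setminus X$ together with the disjunctive heads allowed in our setting, since the Tseitin-style translations of Section~\ref{sec:gmlp} produce disjunctive rules. To avoid re-proving splitting from scratch, I would keep the argument as a careful \emph{reduction} to~\cite{bez21}, whose framework already accommodates disjunction, so that the work is confined to a line-by-line verification that Algorithm~\ref{alg:reduce} implements their partial evaluator on the class of programs to which it is applied here.
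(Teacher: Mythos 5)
The paper does not prove this theorem at all: it is stated as ``adopted from'' Lifschitz--Turner and Ben-Eliyahu-Zohary, so the intended justification is exactly the appeal to the classical splitting theorem that you spell out, and your reduction is correct --- the splitting-set condition does make the head branches of $Reduce$ vacuous on $P\setminus b_{U}(P)$, so $Reduce(P\setminus b_{U}(P),X,U\setminus X)$ coincides with the standard partial evaluation $e_U$ of the top program. The only further remark is that the displayed equality ``$X = X\cup Y$'' in the statement is a typo for ``$S = X\cup Y$'', which your choice $X = S\cap U$ and $Y = S\setminus U$ implicitly corrects.
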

 \section{Proofs}
\label{sec:proofs}

\subsection{Plain Metric Logic Programs}

\begin{definition}[Correspondance HT to MHT Trace]
    \label{def:ht:correspondance:httomht:trace}
    Given an \HT\ interpretation $\tuple{H,T}$ over $\alphabets$
    we define $\htToMhttrace{\tuple{H,T}}$ as
    the \HT-trace
\begin{align*}
        \tuple{ \{a\in\alphabet\mid a_\kvar\in H\}, \{a\in\alphabet\mid a_\kvar\in T\}}_{\rangeco{\kvar}{0}{\lambda}}
    \end{align*}

\end{definition}
\comment{REV: We could use this to define $\htToMht{\tuple{H,T}}$}

\begin{proposition}[Next relaxed]
    \label{prop:mel:next:relaxed}
    Given a metric formula $\varphi$,
    and a timed trace $\M$,
    if $\M,k \models \metricI{\Next}\varphi $,
    then $\M,k \models \metric{\Next}{0}{\omega} \varphi$.
    \footnote{Notice the absence of an interval in \MHT\ is understood as abbreviation for the fixed interval $\intervco{0}{\omega}$ according to \cite{becadiscsc24a}}
\end{proposition}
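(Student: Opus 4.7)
The plan is to unfold the satisfaction clause for \metricI{\Next}\ at $k$ (Condition~\ref{def:mhtsat:next} of the \MHT\ semantics) and show that each of its three conjuncts implies the corresponding conjunct for \metric{\Next}{0}{\omega}\ at $k$. Since only the interval bound differs between the two operators, the only nontrivial task is to check that any difference $\tmf(k{+}1)-\tmf(k)$ that lies in $\cI$ also lies in $\intervco{0}{\omega}$.

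First I would assume $\M,k \models \metricI{\Next}\varphi$ with $\M=(\tuple{\Htrace,\Ttrace},\tmf)$ of length $\lambda$. By the definition of $\models$, this yields (i) $k+1<\lambda$, (ii) $\M,k{+}1\models\varphi$, and (iii) $\tmf(k{+}1)-\tmf(k)\in\cI$. Conditions~(i) and~(ii) are exactly what is needed for $\M,k\models\metric{\Next}{0}{\omega}\varphi$; only condition~(iii) has to be replaced with $\tmf(k{+}1)-\tmf(k)\in\intervco{0}{\omega}$.

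Second, I would invoke the definition of a (strict) timing function wrt $\lambda$ from Section~\ref{sec:mht}: since (i) gives $0\leq k<\lambda-1$, strictness yields $\tmf(k)<\tmf(k{+}1)$, hence $\tmf(k{+}1)-\tmf(k)\geq 1>0$. As $\tmf$ has codomain $\mathbb{N}$, we conclude $\tmf(k{+}1)-\tmf(k)\in\intervco{0}{\omega}$, establishing the required third conjunct.

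I do not anticipate a real obstacle here; the statement is essentially a sanity check that any bounded interval $\cI\in\allI$ is a subset of the universal interval $\intervco{0}{\omega}$, combined with the fact that strictness of $\tmf$ guarantees that the difference $\tmf(k{+}1)-\tmf(k)$ is always a positive natural number. The only small care needed is noting that $\cI\subseteq\mathbb{N}\subseteq\intervco{0}{\omega}$ by the definition of $\allI$, so that (iii) already entails the weaker membership condition directly, making the appeal to strictness technically redundant but conceptually reassuring.
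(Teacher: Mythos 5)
Your proposal is correct and follows essentially the same route as the paper: unfold the satisfaction clause for $\metricI{\Next}$ to obtain $k+1<\lambda$ and $\M,k{+}1\models\varphi$, and then observe that the time difference trivially lies in $\intervco{0}{\omega}$ (the paper states this directly; your extra appeal to strictness of $\tmf$ is, as you note yourself, redundant but harmless).
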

\begin{proof}

    If $\M,k \models \metricI{\Next}\varphi $,
    then
    $\M, k{+}1 \models \varphi \text{ and } \text{ and } k+1<\lambda,$
    using MHT semantics.
    Given that with the definition of $\in \intervco{0}{\omega}$
    we have $\tmf(k{+}1)-\tmf(k) \in \intervco{0}{\omega}$,
    then
    $\M \models  \metric{\Next}{0}{\omega} \varphi$.
\end{proof}

\begin{definition}[Relaxed intervals for metric formulas]
    \label{def:mlp:ht:open-intervals-formula}
    Given a metric formulas $\varphi$,
    we define $\openi{\varphi}$ as the metric logic program obtained
    by replacing all intervals in $\varphi$ with $\intervco{0}{\omega}$.
\end{definition}

\begin{definition}[Relaxed intervals for Metric Logic Programs]
    \label{def:mlp:ht:open-intervals-program}
    Given a metric logic program $\program$,
    we define $\openi{\program}$ as the metric logic program obtained
    applying $\openi{r}$ to every rule $r$ in $\program$.
\end{definition}

\begin{lemma}[Metric formulas with relaxed intervals MHT to HT]
    \label{lem:boundedk:mht:ht}
    Let $\varphi$ be a metric formula
    whose only temporal operator is metric next of form $\Next_{\intervco{0}{\omega}}$,
    and let
    \M\ be a timed \HT-trace of length
    $\lambda\in\mathbb{N}$.
    Then,
    for any $\kinlambda$,
    if $\M,k\models\varphi$ then  $\mhtToHt{\M}\models\tk{\varphi}$.
\end{lemma}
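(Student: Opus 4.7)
The plan is to lift the argument to the three-valued characterizations of $\HT$ and $\MHT$ presented in the appendix, where the otherwise awkward implication/persistence clauses collapse into pointwise algebraic operations on G\"odel truth values. Concretely, I will prove the stronger identity
\[
  \trival{k}{\psi} \;=\; \trivalI(\tk{\psi})
\]
for every subformula $\psi$ of $\varphi$ and every $\kinlambda$, where $\trivalIm$ is the three-valued metric interpretation associated with $\M$ via Definition~\ref{def:three-valued:mht:correspondence} and $\trivalI$ is the three-valued $\HT$-interpretation associated with $\mhtToHt{\M}$ via Definition~\ref{def:three-valued-from-ht}. From this identity the lemma follows at once: $\M,k\models\varphi$ gives $\trival{k}{\varphi}=2$ by Proposition~\ref{prop:three-valued-semantics-eq}, hence $\trivalI(\tk{\varphi})=2$, hence $\mhtToHt{\M}\models\tk{\varphi}$ by Proposition~\ref{prop:three-valued-ht-properties}.

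The identity is proven by structural induction on $\psi$. For $\bot$ both sides evaluate to $0$; for an atom $a\in\alphabet$, the construction $\mhtToHt{\M}=\tuple{H\cup X,T\cup X}$ places $a_k$ into $H$ (resp.~$T$) exactly when $a\in H_k$ (resp.~$a\in T_k$), so the two valuations agree on $a_k$; for $\initially$ both sides yield $2$ when $k=0$ and $0$ otherwise. For $\Next_{\intervco{0}{\omega}} a$ the interval condition $\tmf(k+1)-\tmf(k)\in\intervco{0}{\omega}$ is vacuous, so $\trival{k}{\Next_{\intervco{0}{\omega}}a}=\trival{k+1}{a}$ when $k+1<\lambda$ and $0$ otherwise, matching $\trivalI(\tk{\Next_{\intervco{0}{\omega}}a})$ by the translation's case split together with the just-proved atom case. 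The inductive cases for the Boolean connectives (and any implication that arises via the derived negation or the rule arrow) are routine, because both three-valued semantics interpret these connectives by the same $\min$, $\max$, and G\"odel conditional, and the translation commutes with them; thus the identity lifts componentwise from the induction hypothesis on subformulas.

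The main obstacle is conceptual rather than technical: under the original two-valued formulation the implication case would require a biconditional induction hypothesis, since both $\MHT$ and $\HT$ evaluate implications by a simultaneous check over $\M$ vs.~$(\tuple{\Ttrace,\Ttrace},\tmf)$ and over $\tuple{H,T}$ vs.~$\tuple{T,T}$ respectively, so a purely forward IH cannot extract the needed $\not\models$ direction for the antecedent. Moving to the three-valued setting collapses this double check into a single comparison of truth values, dispatching the implication case in the same uniform way as conjunction and disjunction. The only bookkeeping point to verify is that the timing atoms collected in $X$ do not interfere with $\trivalI(\tk{\psi})$; this holds because $\tk{\psi}$ is a formula over $\alphabets$ alone, so its three-valued evaluation depends only on the $\alphabets$-part of $\mhtToHt{\M}$.
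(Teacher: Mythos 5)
Your proposal is correct, but it follows a genuinely different route from the paper's proof. The paper proves this lemma by a direct structural induction on the two-valued satisfaction relations, with one case per construct ($\top$, $\bot$, $\initially$, $\finally$, atoms, $\Next_{\intervco{0}{\omega}}a$, $\neg$, $\wedge$, $\vee$, and the rule arrow); in the negation and implication cases it indeed invokes the induction hypothesis in the converse direction (from $\M,k\not\models\psi$ to $\mhtToHt{\M}\not\models\tk{\psi}$), which is harmless in the plain fragment because the formulas under negation and in rule bodies are just atoms and literals, but it is exactly the looseness you identified. You instead prove the stronger pointwise identity $\trival{k}{\psi}=\trivalI(\tk{\psi})$ for all subformulas, using the three-valued correspondences of the appendix (Definitions~\ref{def:three-valued-from-ht} and~\ref{def:three-valued:mht:correspondence} together with the equi-satisfaction propositions), and then read off the lemma from $\trival{k}{\varphi}=2$. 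This is essentially the strategy the paper itself reserves for the general fragment (cf.\ Proposition~\ref{prop:eq:eta:aux}), applied here already: it dispatches negation and implication uniformly via the G\"odel conditional, avoids the biconditional-IH issue, and in fact yields both this lemma and the substance of its converse (Lemma~\ref{lem:boundedk:ht:mht}) in one pass, at the cost of invoking the heavier three-valued machinery where the paper's direct induction suffices. Your bookkeeping remarks are also right: the timing atoms in $X$ are outside $\alphabets$ and so cannot affect $\trivalI(\tk{\psi})$, and the defined operators $\top$, $\neg$, $\finally$ are handled by unfolding their definitions, with the interval condition for $\Next_{\intervco{0}{\omega}}$ vacuous for any strict timing function.
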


\begin{proof}
    By structural induction on $\varphi$.
    Let $\M = (\tuple{H_\kvar,T_\kvar}_{\rangeco{\kvar}{0}{\lambda}},
        \tmf)$ and $\mhtToHt{\M}=\handt$

    \def \casephi {\top}
    \casex{\varphi=\casephi}{
        Since $\handt\models\top$ by definition,
            the result follows.
    }

    \def \casephi {\bot}
    \casex{\varphi=\casephi}{
        This leads to a contradiction,
            as $\M,\kvar\not\models\bot$.
    }

    \def \casephi {\initially}
    \casex{\varphi=\casephi}{
        If $\kvar = 0$,
            then $\handt\models\top$ by the HT semantics,
            which implies that $\handt \models \tk{\casephi}$.
        If $\kvar > 0$,
            this leads to a contradiction,
            as $\M,
            \kvar\not\models\casephi$.
    }

    \def \casephi {\finally}
    \casex{\varphi=\casephi}{
        The reasoning is analogous to the case $\varphi=\initially$.
    }

    \def \casephi {a \in \alphabet}
    \casex{\varphi=\casephi}{
        By the $\MHT$ semantics,
            $a$ must be in $H_k$.
        By definition of $\mhtToHt$,
            this means that $a_\kvar \in H$.
        By the HT semantics,
            we conclude that $\handt \models a_\kvar$,
            which implies that $\handt \models \tk{a}$.
    }

    \def \casephi { \Next_{\intervco{0}{\omega}} a }
    \casex{\varphi=\casephi}{
        Since $\M,
            k\models \casephi$,
            we analyze two cases:
        If $\kvar = \lambda-1$,
            we get a contradiction due to the $\MHT$ semantics.
        If $\kvar < \lambda-1$,
            then $\M,
            k+1\models a$ and $\tmf(k+1)-\tmf(k) \in \intervco{0}{\omega}$.
        By induction,
            this implies that $\handt \models \trans{a}{\kvar+1}$,
            which means that $\handt \models \tk{\casephi}$.
    }

    \def \casephi {\neg \psi}
    \casex{\varphi=\casephi}{
        Since $\M,
            \kvar\not\models\psi$,
            we get that $\handt\not\models\tk{\psi}$ by induction.
        By the HT semantics,
            this implies that $\handt\models\neg \tk{\psi}$,
            which means that $\handt \models \tk{\neg\psi}$.
    }

    \def \casephi {\psi_1 \wedge \psi_2}
    \casex{\varphi=\casephi}{
        Since $\M,
            \kvar$ satisfies both $\psi_1$ and $\psi_2$,
            we get by induction that $\handt$ satisfies both $\tk{\psi_1}$ and $\tk{\psi_2}$.
        By the HT semantics,
            this means that $\handt\models\tk{\psi_1} \wedge \tk{\psi_2}$,
            which implies that $\handt \models \tk{\casephi}$.
    }

    \def \casephi {\psi_1 \vee \psi_2}
    \casex{\varphi=\casephi}{
        The reasoning is analogous to the case $\varphi=\psi_1 \wedge \psi_2$.
    }

    \def \casephi {\psi \leftarrow \body}
    \casex{\varphi=\casephi}{
        If $\M,\kvar\not\models\body$,
            then by induction,
            $\handt\not\models\tk{\body}$.
        By the HT semantics,
            this implies that $\handt\models\tk{\psi} \leftarrow \tk{\body}$,
            which means that $\handt \models \tk{\casephi}$.

        If $\M,
            \kvar\models\body$,
            then by induction,
            $\handt\models\tk{\body}$.
        Since $\M,\kvar\models\psi$,
            we conclude that $(\tuple{T_\kvar,T_\kvar}_{\rangeco{\kvar}{0}{\lambda}},
            \tmf),\kvar\models\psi$.
        By induction,
            this means that both $\handt$ and $\tandt$ satisfy $\tk{\psi}$.
        By the HT semantics,
            we conclude that $\handt\models\tk{\psi} \leftarrow \tk{\body}$,
            which means that $\handt \models \tk{\casephi}$.
    }

\end{proof}

\begin{lemma}[Metric formulas with relaxed intervals HT to MHT]
    \label{lem:boundedk:ht:mht}
    Let $\varphi$ be a metric formula
    whose only temporal operator is metric next of form $\Next_{\intervco{0}{\omega}}$,
    and let
    $\handt$ a HT interpretation
    such that
    for any $\kinlambda$
    $\handt\models\tk{\varphi}$,
    then,
    $(\htToMhttrace{\handt},\tmf)\models\varphi$
    for any strict timing function $\tmf$.
\end{lemma}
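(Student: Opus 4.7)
The plan is to prove the converse of Lemma \ref{lem:boundedk:mht:ht} by structural induction on $\varphi$, following the same case split but in the opposite direction. Let $\M = (\htToMhttrace{\handt}, \tmf) = (\tuple{H_\kvar, T_\kvar}_{\kinlambda}, \tmf)$ under the natural projection (so $a \in H_\kvar$ iff $a_\kvar \in H$, and likewise for $T$). For each metric formula $\varphi$ and each state $\kvar \in \intervco{0}{\lambda}$, I would show that $\handt \models \tk{\varphi}$ implies $\M, \kvar \models \varphi$, invoking the strict timing function $\tmf$ to validate interval membership and the inductive hypothesis for subformulas.

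For the base cases $\bot$, $\top$, atoms $a \in \alphabet$, $\initially$, and $\finally$, the conclusion is immediate from the definitions of $\tk{\cdot}$ and $\htToMhttrace{\cdot}$, dualizing the corresponding cases in Lemma \ref{lem:boundedk:mht:ht}. For $\metric{\Next}{0}{\omega} a$, the case $\kvar = \lambda - 1$ is ruled out since $\tk{\varphi} = \bot$ contradicts the hypothesis; otherwise $\tk{\varphi} = a_{\kvar+1}$, so $a \in H_{\kvar+1}$, and strictness of $\tmf$ yields $\tmf(\kvar+1) - \tmf(\kvar) \geq 1 \in \intervco{0}{\omega}$, giving $\M, \kvar \models \metric{\Next}{0}{\omega} a$ by the \MHT\ semantics. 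Conjunction and disjunction are routine consequences of the \HT\ and \MHT\ semantics together with the inductive hypothesis.

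The main obstacle is the implication case $\psi_1 \to \psi_2$ (and analogously the rule case $\head \leftarrow \body$), because \HT\ satisfaction of an implication requires checking the clause at both $\handt$ and $\tandt$, and \MHT\ satisfaction imposes the analogous double check at $\M$ and $(\tuple{\Ttrace,\Ttrace}, \tmf)$. I would handle this by strengthening the inductive hypothesis to the joint claim: for every subformula $\psi$ and every $H' \in \{H, T\}$, $\tuple{H', T} \models \tk{\psi}$ implies $\M', \kvar \models \psi$, where $\M' = \M$ when $H' = H$ and $\M' = (\tuple{\Ttrace,\Ttrace}, \tmf)$ when $H' = T$. With this strengthened IH in place, the implication case unfolds uniformly: for each $H' \in \{H, T\}$, either $\tuple{H', T} \not\models \tk{\psi_1}$, in which case the contrapositive of Lemma \ref{lem:boundedk:mht:ht} gives $\M', \kvar \not\models \psi_1$, or $\tuple{H', T} \models \tk{\psi_2}$, in which case the IH yields $\M', \kvar \models \psi_2$. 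Either way the \MHT\ implication clause holds at $\M', \kvar$, closing the case. Rules are treated identically, completing the induction.
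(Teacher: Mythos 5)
Your proposal is correct and takes essentially the same route as the paper's proof: structural induction on $\varphi$ with the same case analysis, where the rule/implication case is settled by checking both $\tuple{H,T}$ and $\tuple{T,T}$ against $\M$ and $(\tuple{\Ttrace,\Ttrace},\tmf)$. Your strengthened induction hypothesis and the explicit appeal to the contrapositive of Lemma~\ref{lem:boundedk:mht:ht} merely make precise what the paper's proof does implicitly when it writes ``by induction'' in the negative-body and total-trace subcases.
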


\begin{proof}
    By structural induction on $\varphi$.
    Let $\M = (\htToMhttrace{\handt},\tmf)= (\tuple{H_\kvar,T_\kvar}_{\rangeco{\kvar}{0}{\lambda}}, \tmf)$.

    \def \casephi {\top}
    \casex{\varphi=\casephi}{
        Since $\M, k \models \top$ by definition, the result follows.
    }

    \def \casephi {\bot}
    \casex{\varphi=\casephi}{
        This leads to a contradiction, as $\handt \not\models \bot$.
    }

    \def \casephi {\initially}
    \casex{\varphi=\casephi}{
        If $\kvar = 0$, then $\M, \kvar \models \casephi$ by the $\MHT$ semantics.
        If $\kvar > 0$, then $\handt \models \tk{\casephi}$, but this contradicts the fact that $\handt \not\models \bot$.
    }

    \def \casephi {\finally}
    \casex{\varphi=\casephi}{
        The reasoning is analogous to the case $\varphi=\initially$.
    }

    \def \casephi {a \in \alphabet}
    \casex{\varphi=\casephi}{
        Since $\handt \models a_\kvar$ by definition of $\tk{r}$, we get that $a_\kvar \in H$ by the HT semantics.
        By the definition of $\htToMhttrace$, this implies that $a \in H_k$.
        Therefore, by the $\MHT$ semantics, we conclude that $\M, \kvar \models a$.
    }

    \def \casephi { \Next_{\intervco{0}{\omega}} a }
    \casex{\varphi=\casephi}{
        Since $\handt \models \tk{\casephi}$, we analyze two cases:
        If $\kvar = \lambda-1$, then $\handt \models \bot$, which is a contradiction.
        If $\kvar < \lambda-1$, then by the definition of $\tk{r}$, we get that $\handt \models \trans{a}{\kvar+1}$.
        By induction, this implies that $\M, k+1 \models a$.
        Additionally, since $\tmf(k+1)-\tmf(k) \in \intervco{0}{\omega}$ (which holds for any strict timing function), we conclude by the $\MHT$ semantics that $\M, k \models \casephi$.
    }

    \def \casephi {\neg \psi}
    \casex{\varphi=\casephi}{
        Since $\handt \models \tk{\neg\psi}$, we know that $\handt \models \neg \tk{\psi}$ by the definition of $\tk{r}$.
        This means that $\handt \not \models \tk{\psi}$ by the HT semantics.
        By induction, this implies that $\M, \kvar \not \models \psi$.
        Therefore, by the $\MHT$ semantics, we conclude that $\M, \kvar \models \neg\psi$.
    }

    \def \casephi {\psi_1 \wedge \psi_2}
    \casex{\varphi=\casephi}{
        Since $\handt \models \tk{\casephi}$, we get that $\handt \models \tk{\psi_1} \wedge \tk{\psi_2}$ by the definition of $\tk{r}$.
        This implies that $\handt \models \tk{\psi_1}$ and $\handt \models \tk{\psi_2}$ by the HT semantics.
        By induction, we conclude that $\M, \kvar \models \psi_1$ and $\M, \kvar \models \psi_2$.
        Therefore, by the $\MHT$ semantics, we get that $\M, \kvar \models \casephi$.
    }

    \def \casephi {\psi_1 \vee \psi_2}
    \casex{\varphi=\casephi}{
        The reasoning is analogous to the case $\varphi=\psi_1 \wedge \psi_2$.
    }

    \def \casephi {\psi \leftarrow \body}
    \casex{\varphi=\casephi}{
        Since $\handt \models \tk{\casephi}$, we get that $\handt \models \tk{\psi} \leftarrow \tk{\body}$ by the definition of $\tk{r}$.

        If $\handt \not \models \tk{\body}$, then by induction, $\M, \kvar \not \models \body$.
        By the $\MHT$ semantics, this means that $\M, \kvar \models \casephi$.

        If $\handt \models \tk{\body}$, then by induction, $\M, \kvar \models \body$.
        Since $\handt \models \tk{\psi}$ and $\tandt \models \tk{\psi}$ by the HT semantics, we conclude by induction that
        $\M, \kvar \models \psi \quad$and $(\tuple{T_\kvar,T_\kvar}_{\rangeco{\kvar}{0}{\lambda}}, \tmf),\kvar \models \psi$.
        Therefore, by the $\MHT$ semantics, we conclude that $\M, \kvar \models \casephi$.
    }

\end{proof}

\begin{theorem}[Temporal completeness with relaxed intervals]
    \label{thm:mlp:ht:completeness:open}
    Let \program\ be a metric logic program
    and
    $\M=(\tuple{\Htrace,\Ttrace}, \tmf)$ a timed \HT-trace of length $\lambda$.

    If
    $\M\models\openi{\program}$,
    then
    $\mhtToHt{\M}\models\Pi_\lambda(\program)$
\end{theorem}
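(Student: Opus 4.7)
The plan is to lift the pointwise translation lemma (Lemma~\ref{lem:boundedk:mht:ht}) from individual rule instances at a single time point to the full program $\Pi_\lambda(\program)$, exploiting the fact that the translation $\tk{\cdot}$ never inspects the interval $\cI$ of a metric next operator.

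First, I would unfold the outer $\alwaysF$ of each plain metric rule. Pick an arbitrary $\alwaysF{r}\in\program$. By Definition~\ref{def:mlp:ht:open-intervals-program}, the rule $\alwaysF{\openi{r}}$ belongs to $\openi{\program}$, and the hypothesis $\M\models\openi{\program}$ yields $\M,0\models\alwaysF{\openi{r}}$. Using clause~\ref{def:mhtsat:alwaysF} of the \MHT\ semantics with $\cI=\intervco{0}{\omega}$, and since timing function values lie in $\mathbb{N}$, this gives us $\M,k\models\openi{r}$ for every $\kinlambda$.

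Second, I would apply Lemma~\ref{lem:boundedk:mht:ht} to each such instance. Note that $\openi{r}$ contains no metric temporal operators other than $\metric{\next}{0}{\omega}$ (all intervals were relaxed), hence it lies in the fragment covered by the lemma. Applying the lemma at each $\kinlambda$ yields $\mhtToHt{\M}\models \tk{\openi{r}}$.

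Third, I would establish the key syntactic observation $\tk{\openi{r}}=\tk{r}$ for any plain metric rule $r$. Inspecting the definition of $\tk{\cdot}$, the only clause that could depend on an interval is the case for $\metricI{\Next}a$, but there the translation returns either $\bot$ (when $k=\lambda-1$) or $\trans{a}{k+1}$ (otherwise), regardless of $\cI$. All other clauses are interval-free, so the translations of $r$ and $\openi{r}$ coincide symbol by symbol. Combining this with the previous step gives $\mhtToHt{\M}\models \tk{r}$ for every $\alwaysF{r}\in\program$ and every $\kinlambda$, and by the definition $\Pi_\lambda(\program)=\bigcup_{\alwaysF{r}\in\program,\,\kinlambda}\tk{r}$ we conclude $\mhtToHt{\M}\models\Pi_\lambda(\program)$.

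The proof is essentially a bookkeeping exercise, so there is no deep obstacle; the main point requiring care is step three, namely confirming by a straightforward case analysis over the grammar of plain metric rules that $\tk{\cdot}$ is interval-oblivious. A secondary subtlety is that Lemma~\ref{lem:boundedk:mht:ht} is stated for rules without the outer $\alwaysF$, so one must explicitly strip the $\alwaysF$ using the metric semantics before invoking it rather than trying to apply the lemma directly to $\alwaysF{\openi{r}}$.
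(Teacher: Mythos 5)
Your proposal is correct and follows essentially the same route as the paper's own proof: unfold the outer $\alwaysF$ via the \MHT\ semantics, apply Lemma~\ref{lem:boundedk:mht:ht} pointwise to get $\mhtToHt{\M}\models\tk{\openi{r}}$, and conclude via the observation that $\tk{\openi{r}}=\tk{r}$ because the translation ignores intervals. Your explicit care about stripping the $\alwaysF$ before invoking the lemma and about the interval-obliviousness of $\tk{\cdot}$ matches exactly the steps the paper takes.
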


\begin{proof}
    We must prove that
    $\mhtToHt{\M}\models
        \textstyle\bigcup_{\alwaysF{r} \in P,\, 0 \leq \kvar < \lambda} \tk{r}$
    Since $\M\models\openi{\program}$,
    we have  $\M\models\openi{\alwaysF r} \;\; \text{for all } \alwaysF r \in \program$.
    \MHT\ Semantics  then gives us $\M,\kvar \models \openi{r} \;\; \text{for all } \alwaysF r \in \program, 0 \leq \kvar < \lambda$,
    and by applying Lemma~\ref{lem:boundedk:mht:ht}, we get
    $\mhtToHt{\M} \models \tk{\openi{r}} \;\; \text{for all } \alwaysF r \in \program, 0 \leq \kvar < \lambda$.
    Since
    $\tk{\openi{r}}=\tk{r}$,
    because
    intervals are irrelevant in $\tk{r}$,
    we conclude that
    $\mhtToHt{\M} \models \tk{r} \;\; \text{for all } \alwaysF r \in \program, 0 \leq \kvar < \lambda$,
    which is the definition of $\Pi_\lambda(\program)$.

\end{proof}

\begin{corollary}[Completness wihout relaxed intervals]
    \label{cor:mlp:ht:completeness-temp-old-normal}
    Let \program\ be a metric logic program
    and
    $\M=(\tuple{\Htrace,\Ttrace}, \tmf)$ a timed \HT-trace of length $\lambda$.

    If
    $\M\models\program$,
    then
    $\mhtToHt{\M}\models\Pi_\lambda(\program)$
\end{corollary}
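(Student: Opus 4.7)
The plan is to derive the corollary directly from Theorem~\ref{thm:mlp:ht:completeness:open} by showing that satisfaction of a plain metric program entails satisfaction of its interval-relaxed variant. In other words, I would establish the intermediate claim: if $\M\models\program$, then $\M\models\openi{\program}$. Once this is in hand, Theorem~\ref{thm:mlp:ht:completeness:open} applies immediately to $\openi{\program}$ and yields $\mhtToHt{\M}\models\Pi_\lambda(\openi{\program})$. Since the translation $\tk{\cdot}$ ignores intervals (as noted in the proof of Theorem~\ref{thm:mlp:ht:completeness:open}), we have $\Pi_\lambda(\openi{\program})=\Pi_\lambda(\program)$, delivering the conclusion.

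To prove the intermediate claim, I would exploit the fact that a plain metric rule has shape $\melrule{\head\leftarrow\body}$ or $\melrule{\metricI{\Next} a\leftarrow\body}$, where the only metric operator (other than the outer $\alwaysF$) is the next operator $\metricI{\Next}$ occurring positively in the head of the second kind of rule. Hence, relaxing every interval $\cI$ to $\intervco{0}{\omega}$ only affects rule heads of the form $\metricI{\Next} a$, which by Proposition~\ref{prop:mel:next:relaxed} become \emph{weaker} requirements on the trace (anything satisfying $\metricI{\Next} a$ also satisfies $\metric{\Next}{0}{\omega} a$). Consequently, for each rule $r$ of $\program$ and each $\kinlambda$, whenever $\M,k\models\body$ then $\M,k\models\head$ implies $\M,k\models\openi{\head}$, so the relaxed rule $\openi{r}$ remains satisfied. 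Quantifying over all $k$ absorbs the outer $\alwaysF$, giving $\M\models\openi{\program}$.

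With the intermediate claim established, the corollary is immediate: applying Theorem~\ref{thm:mlp:ht:completeness:open} to $\openi{\program}$ yields $\mhtToHt{\M}\models\Pi_\lambda(\openi{\program})$, and since $\tk{\openi{r}}=\tk{r}$ for every plain metric rule $r$ (as intervals do not appear in the codomain of $\tk{\cdot}$), we obtain $\Pi_\lambda(\openi{\program})=\Pi_\lambda(\program)$, concluding $\mhtToHt{\M}\models\Pi_\lambda(\program)$.

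The only subtle point is verifying that interval relaxation is genuinely a \emph{weakening} at the level of rule satisfaction. This relies crucially on $\metricI{\Next}$ occurring only positively in rule heads in the plain fragment; in the general fragment this would fail since metric operators may appear in bodies. For the plain case addressed here, however, Proposition~\ref{prop:mel:next:relaxed} suffices, so no significant obstacle arises.
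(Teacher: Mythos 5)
Your proposal is correct and follows essentially the same route as the paper: establish $\M\models\program \Rightarrow \M\models\openi{\program}$ via Proposition~\ref{prop:mel:next:relaxed} (noting that in the plain fragment intervals only decorate $\metricI{\Next}$ in heads) and then invoke Theorem~\ref{thm:mlp:ht:completeness:open}. The small detour through $\Pi_\lambda(\openi{\program})=\Pi_\lambda(\program)$ is harmless, since the theorem already yields $\mhtToHt{\M}\models\Pi_\lambda(\program)$ directly.
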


\begin{proof}
    We notice that the only difference form $\openi{\program}$ to $\program$
    is the
    $\Next_\cI$ metric operator.
    As stated in Proposition~\ref{prop:mel:next:relaxed},
    if $\M,k \models \metricI{\Next}\varphi $,
    then $\M,k \models \metric{\Next}{0}{\omega} \varphi$.
    Therefore, know that if $\M\models\program$ then $\M\models\openi{\program}$,
    so, we can apply Theorem~\ref{thm:mlp:ht:completeness:open}.
\end{proof}

\begin{theorem}[Temporal correctness with relaxed intervals]
    \label{thm:mlp:ht:correctness:open}
    Let \program\ be a metric logic program over $\alphabet$
    and
    let $\handt$ be an HT interpretation over $\alphabets$.

    If
    $\handt\models\Pi_\lambda(\program)$,
    then
    $(\htToMhttrace{\handt},\tmf)\models\openi{\program}$
    for any strict timing function $\tmf$.
\end{theorem}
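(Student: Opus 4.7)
\medskip

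\noindent\textbf{Proof proposal.}
The plan is to mirror the structure of Theorem~\ref{thm:mlp:ht:completeness:open}, but using the dual Lemma~\ref{lem:boundedk:ht:mht} instead of Lemma~\ref{lem:boundedk:mht:ht}. Specifically, I will unfold the definitions of $\Pi_\lambda(\program)$ and $\openi{\program}$, reduce satisfaction of $\openi{\program}$ by the timed \HT-trace $(\htToMhttrace{\handt},\tmf)$ to per-rule, per-state satisfaction, and then invoke Lemma~\ref{lem:boundedk:ht:mht} pointwise to close the argument.

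First, assume $\handt \models \Pi_\lambda(\program)$. By the definition of $\Pi_\lambda(\program)$, this unfolds into $\handt \models \tk{r}$ for every $\alwaysF r \in \program$ and every $\rangeco{k}{0}{\lambda}$. The key observation is that the translation $\tk{\cdot}$ is blind to the concrete intervals decorating metric next operators: the case defining $\tk{\metricI{\next}a}$ depends only on whether $k=\lambda-1$, not on $I$. Hence $\tk{\openi{r}} = \tk{r}$ for every rule $r$, and therefore $\handt \models \tk{\openi{r}}$ for every $\alwaysF r \in \program$ and $\rangeco{k}{0}{\lambda}$. Each $\openi{r}$ falls in the syntactic fragment handled by Lemma~\ref{lem:boundedk:ht:mht}, since $\openi{\cdot}$ replaces all intervals with $\intervco{0}{\omega}$.

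Now fix an arbitrary strict timing function $\tmf$. Applying Lemma~\ref{lem:boundedk:ht:mht} to each $\openi{r}$ at each $\rangeco{k}{0}{\lambda}$ yields $(\htToMhttrace{\handt},\tmf), k \models \openi{r}$ for every $\alwaysF r \in \program$ and every $\rangeco{k}{0}{\lambda}$. By the \MHT{} semantics of the always operator (clause~\ref{def:mhtsat:alwaysF} with $I=\intervco{0}{\omega}$), this is exactly $(\htToMhttrace{\handt},\tmf), 0 \models \alwaysF \openi{r}$, and so $(\htToMhttrace{\handt},\tmf) \models \openi{\alwaysF r}$ for every $\alwaysF r \in \program$. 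By the definition of $\MHT$-models of a theory, we conclude $(\htToMhttrace{\handt},\tmf) \models \openi{\program}$.

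The only delicate point is the interplay between $\openi{\cdot}$ and $\tk{\cdot}$: one must verify that relaxing all intervals in $r$ has no syntactic effect on $\tk{r}$, so that the hypothesis $\handt \models \tk{r}$ transfers verbatim to $\tk{\openi{r}}$. This is immediate from inspection of the translation table, but it is the linchpin that makes Lemma~\ref{lem:boundedk:ht:mht} applicable. Everything else is a routine rearrangement mirroring the completeness argument.
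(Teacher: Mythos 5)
Your proposal is correct and follows essentially the same route as the paper's proof: unfold $\Pi_\lambda(\program)$ into $\handt\models\tk{r}$ for all rules and states, observe that intervals are irrelevant to $\tk{\cdot}$ so $\tk{r}=\tk{\openi{r}}$, apply Lemma~\ref{lem:boundedk:ht:mht} pointwise, and close with the \MHT{} semantics of \alwaysF. The "delicate point" you flag (the syntactic invariance of $\tk{\cdot}$ under interval relaxation) is exactly the step the paper also makes explicit, so nothing is missing.
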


\begin{proof}
    We take a strict timing function $\tmf$ and
    let $\M=(\htToMhttrace{\handt},\tmf)$.
    We must prove that
    $\M\models\openi{\program}$.
We know $\handt \models \Pi_\lambda(\program)$,
    so
    $\handt \models \tk{r} \;\; \text{for all } \alwaysF r \in \program, 0 \leq \kvar < \lambda$.
    Since intervals are irrelevant in $\tk{r}$, then
    $\handt \models \tk{\openi{r}} \;\; \text{for all } \alwaysF r \in \program, 0 \leq \kvar < \lambda$.
    By applying Lemma~\ref{lem:boundedk:ht:mht}, we get
    $\M,k \models \openi{r} \;\; \text{for all } \alwaysF r \in \program, 0 \leq \kvar < \lambda$,
    which using \MHT\ Semantics, gives us
    $\M \models \openi{\program}$
\end{proof}
  \subsubsection{Translation of Plain Metric Logic Programs to \HT}
\begin{lemma}[Splitting Plain HT]
    \label{lem:splitting:plain:ht}
    Given a metric logic program $\program$
    and a $\lambda,\tmflimit\in\mathbb{N}$,
    the stable models of $\Pi_\lambda(\program)\cup\Delta_{\lambda,\tmflimit}\cup\Psi_{\lambda,\tmflimit}(\program)$ can be computed by means of the splitting technique.
\end{lemma}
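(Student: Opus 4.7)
The plan is to exhibit an explicit splitting set for the combined program and then invoke Theorem~\ref{thm:splitting}. The natural candidate is $U = \alphabetT = \{\timet_{\kvar,\tmvar} \mid 0\leq \kvar < \lambda,\, 0\leq \tmvar \leq \tmflimit\}$, i.e.\ the set of all timing atoms. What needs to be checked is that whenever a rule of the combined program has an atom from $U$ in its head, every atom of the rule lies in $U$.

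First I would inspect each of the three components separately. The rules in $\Pi_\lambda(\program)$ are built exclusively from atoms of $\alphabets$ (the indexed propositional atoms $a_\kvar$) together with $\bot$ and $\top$, so they contain no atoms from $U$ at all and the splitting condition is vacuous. The rules in $\Delta_{\lambda,\tmflimit}$ are either the fact $\timet_{0,0}$ or rules of the form $\bigvee_{d<d'\leq\tmflimit}\timet_{\kvar+1,\tmvar'}\leftarrow\timet_{\kvar,\tmvar}$, whose heads and bodies consist solely of atoms in $U$, so these rules lie entirely inside $U$. Finally, the rules in $\Psi_{\lambda,\tmflimit}(\program)$ are integrity constraints of form $\bot \leftarrow \tk{\body}\wedge \timet_{\kvar,\tmvar}\wedge \timet_{\kvar+1,\tmvar'}$: their head $\bot$ contains no atom of $U$, so again the splitting condition is vacuously satisfied. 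Hence $U$ is a splitting set, and moreover $b_U(\Pi_\lambda(\program)\cup\Delta_{\lambda,\tmflimit}\cup\Psi_{\lambda,\tmflimit}(\program)) = \Delta_{\lambda,\tmflimit}$.

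From here, applying Theorem~\ref{thm:splitting} directly yields the claim: the stable models of the whole program are obtained by first computing a stable model $X$ of the bottom $\Delta_{\lambda,\tmflimit}$ (which fixes the truth values of all timing atoms), and then computing a stable model $Y$ of $\mathrm{Reduce}(\Pi_\lambda(\program)\cup\Psi_{\lambda,\tmflimit}(\program),\, X,\, U\setminus X)$. Observe that after this reduction, the rules from $\Pi_\lambda(\program)$ are unaffected because they contain no timing atoms, while each constraint in $\Psi_{\lambda,\tmflimit}(\program)$ is either deleted (if some $\timet_{\kvar,\tmvar}$ in its body is falsified by $X$) or simplified by removing the timing atoms whose valuation in $X$ is true.

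The only subtle point I anticipate is making sure the formulation of Theorem~\ref{thm:splitting} in our setting applies: our rules contain disjunction in the head (in $\Delta_{\lambda,\tmflimit}$) and our ``stable models'' are the equilibrium models of the theory. The splitting result stated for logic programs with disjunctive heads covers exactly this syntactic shape, so the obstacle is only a matter of verifying that $\Delta_{\lambda,\tmflimit}$, $\Psi_{\lambda,\tmflimit}(\program)$, and $\Pi_\lambda(\program)$ are of the required rule form; this follows immediately from their definitions.
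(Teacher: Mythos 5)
Your proposal is correct and follows essentially the same route as the paper: it splits on the set of timing atoms, identifies the bottom program as $\Delta_{\lambda,\tmflimit}$, and applies Theorem~\ref{thm:splitting} so that the reduct leaves $\Pi_\lambda(\program)$ untouched and turns $\Psi_{\lambda,\tmflimit}(\program)$ into constraints over $\alphabets$ only. Your verification of the splitting-set condition (including the $\bot$-headed constraints and the disjunctive rules of $\Delta_{\lambda,\tmflimit}$) is in fact somewhat more explicit than the paper's.
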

\begin{proof}
    We will show how the stable models of $\mathbb{P} = \Pi_\lambda(\program)\cup\Delta_{\lambda,\tmflimit}\cup\Psi_{\lambda,\tmflimit}(\program)$ can be computed
    by sequentially applying the splitting theorem. Let us start by fixing $U_0 = \lbrace t_{k,d} \mid k, d \in \mathbb{N}\rbrace$.
    Clearly it is an splitting set because the elements of $U_0$ are only generated by $\Delta_{\lambda,\tmflimit}$. Moreover, $b_{U_0} = \Delta_{\lambda,\tmflimit}$.
    By Theorem~\ref{thm:splitting}, an stable model $T$ of $\mathbb{P}$ can be computed as,
    $T=T_0\cup T_1$ where $T_0$ is a stable model of $b_{U_0}$.
$T_1$ is a stable model of $Reduce(\mathbb{P}\setminus b_{U_0},T_0,U_0\setminus T_0)$.
    If we analyze this first simplification,
    we will see that
$Reduce(\mathbb{P}\setminus b_{U_0},T_0,U_0\setminus T_0) =  \Pi_\lambda(\program)\cup Reduce(\Psi_{\lambda,\tmflimit}(\program), T_0,U_0\setminus T_0)$,
    because $\Pi_\lambda(\program)$ does not contain atoms in $U_0$.
    More precisely,
    $Reduce(\Psi_{\lambda,\tmflimit}(\program), T_0,U_0\setminus T_0)$
    are just constraints that use only atoms from $\Pi_\lambda(\program)$.
\end{proof}

\begin{proposition}[Model if timed equilibrium]
    \label{prop:ht:timed:delta:equilibrium}
    Let $\handt$ be an \HT\ interpretation
    and
    $\lambda\in\mathbb{N}$.

    If $\handt$ is timed wrt $\lambda$ and induces $\tmf$
    then $\handt|_{\alphabetT}$ is a total \HT\ interpretation
    and an equilibrium model of
    $\Delta_{\lambda,\tmflimit}$
    for $\tmflimit=\tmf(\lambda-1)$.
\end{proposition}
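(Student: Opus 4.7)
The plan is to argue in three stages, corresponding to the three obligations packed into the statement: totality of the restriction, modelhood of $\Delta_{\lambda,\tmflimit}$, and the equilibrium (minimality) condition.

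For totality, I would observe that a timing function $\tmf$ wrt $\lambda$ is by definition a function, so for every $\kinlambda$ there is exactly one $\tmvar\in\mathbb{N}$ with $\tmf(\kvar)=\tmvar$. The timed condition in Definition~\ref{def:ht:timed} then forces both $H\cap\alphabetT$ and $T\cap\alphabetT$ to equal the set $T^{*}\eqdef\{\timet_{\kvar,\tmf(\kvar)}\mid \kinlambda\}$, so the restriction $\handt|_{\alphabetT}$ has the form $\tuple{T^{*},T^{*}}$ and is total. Modelhood of $\Delta_{\lambda,\tmflimit}$ then follows directly from Proposition~\ref{prop:ht:timed:delta}, since all atoms mentioned in $\Delta_{\lambda,\tmflimit}$ lie in $\alphabetT$ and the restriction preserves satisfaction of such formulas.

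The substantive part is verifying that the total model $\tuple{T^{*},T^{*}}$ is in equilibrium. I would proceed by contradiction: assume there exists $H'\subsetneq T^{*}$ with $\tuple{H',T^{*}}\models\Delta_{\lambda,\tmflimit}$. Since $\timet_{0,0}\in\Delta_{\lambda,\tmflimit}$ is a fact and $\tmf(0)=0$, we have $\timet_{0,0}=\timet_{0,\tmf(0)}\in H'$. Let $k^{*}$ be the least index with $\timet_{k^{*},\tmf(k^{*})}\notin H'$; then $k^{*}\geq 1$ and $\timet_{k^{*}-1,\tmf(k^{*}-1)}\in H'$. Because $\tmf$ is strictly increasing on $\intervco{0}{\lambda}$ we have $\tmf(k^{*}-1)\leq\tmf(\lambda-1)=\tmflimit$, so the rule
\[
   \textstyle\bigvee_{\tmf(k^{*}-1)<d'\leq\tmflimit}\timet_{k^{*},d'}\;\leftarrow\;\timet_{k^{*}-1,\tmf(k^{*}-1)}
\]
belongs to $\Delta_{\lambda,\tmflimit}$. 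Its body is satisfied in $\tuple{H',T^{*}}$, hence some disjunct $\timet_{k^{*},d'}$ with $\tmf(k^{*}-1)<d'\leq\tmflimit$ lies in $H'\subseteq T^{*}$. But $T^{*}$ contains exactly one atom with first index $k^{*}$, namely $\timet_{k^{*},\tmf(k^{*})}$, so $d'=\tmf(k^{*})$ and therefore $\timet_{k^{*},\tmf(k^{*})}\in H'$, contradicting the choice of $k^{*}$.

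The main delicate point is the head disjunction: nothing in the rule by itself picks out $\tmf(k^{*})$. The argument works only because the totality step pins $T^{*}$ down to a singleton per index, forcing the only realizable disjunct to coincide with $\tmf(k^{*})$. Once this is observed, the rest is routine induction along $\kvar$, using the strict monotonicity of $\tmf$ to ensure that each successor rule is available in $\Delta_{\lambda,\tmflimit}$.
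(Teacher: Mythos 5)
Your proposal is correct and follows essentially the same route as the paper's proof: totality and satisfaction come from the timed condition together with Proposition~\ref{prop:ht:timed:delta}, and equilibrium is shown by contradiction via a least index at which the smaller interpretation misses the time atom, using the fact $\timet_{0,0}$ and the head disjunction of the successor rule, pinned down to $\timet_{\kvar,\tmf(\kvar)}$ by the uniqueness of the time atom per index in the total component. The only cosmetic difference is that you anchor the base case through the fact $\timet_{0,0}$ directly instead of the paper's explicit case split on $k=0$ versus $k>0$.
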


\begin{proof}
    The satisfaction follows from Proposition~\ref{prop:ht:timed:delta}.
    From the definition of being timed it follows that $\handt|_{\alphabetT}$ must be total.
    Now we prove that it is in equilibrium.
    Assume towards a contradiction that
    there is $\handt$ such that $H\subset T$ and $\handt\models\Delta_{\lambda,\tmflimit}$.
    Then, there is $\timet_{k,d}\in\alphabetT$ such that $\timet_{k,d}\in T$ and $\timet_{k,d}\not\in H$
    for some $0\leq k <\lambda$ and $0\leq d \leq \tmflimit$.
Without loss of generality let this $k$ be such that for all $0<k'<k$,
    we have $\timet_{k',d}\in T$ iff $\timet_{k',d}\in H$.
    Now lets analyze the two cases for $k$.
    \casex{k=0}{
        Then  $d=0$ since $\tmf(0)=0$ becase $\tandt$ is timmed inducing $\tmf$.
        Then $\timet_{0,0}\not \in H$ which is a
        contradiction since $\handt\models\timet_{0,0}$.
    }
    \casex{k>0}{
        Then $\tmf(\kvar-1)=d'$ with $d'<d$,
        where
        $\timet_{\kvar-1,d'}\in T$ and
        $\timet_{\kvar-1,d'}\in H$ since we selected $k$ as the first to not be in the set,
        therefore,
        $\handt\models\timet_{\kvar-1,d'}$.
        Since $\handt\models\Delta_{\lambda,\tmflimit}$
        and the head of the rule holds, then
        $\handt\models\bigvee_{d'<d''\leq\tmflimit}\timet_{\kvar,\tmvar''}$.
        This means
        $\handt\models\timet_{\kvar,\tmvar''}$ for some $d'<d''\leq\tmflimit$,
        so $\timet_{\kvar,d''}\in H$,
        and given that it is timed we have
        $\tmf(\kvar)=\tmvar''$.
        Since $\tmf(k)=d$  we have that $d''=d$,
        which is a contradiction because $\timet_{k,d}\not \in H$.
    }
\end{proof}

\begin{lemma}
    \label{lem:completeness-ht}
    Let \program\ be a metric logic program
    and
    $\M=(\tuple{\Htrace,\Ttrace}, \tmf)$ a timed \HT-trace of length $\lambda$.

    If
    $\M\models\program$,
    then
    $\mhtToHt{\M}\models\Pi_\lambda(\program)\cup\Delta_{\lambda,\tmflimit}\cup\Psi_{\lambda,\tmflimit}(\program)$
    with $\tmflimit=\tmf(\lambda-1)$.
\end{lemma}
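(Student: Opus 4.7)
The plan is to decompose the goal conjunctively, showing that $\mhtToHt{\M}$ is an \HT-model of each of $\Pi_\lambda(\program)$, $\Delta_{\lambda,\tmflimit}$, and $\Psi_{\lambda,\tmflimit}(\program)$ separately, and then to invoke the fact that \HT-satisfaction of a union of theories is the conjunction of the individual satisfactions. The first two components are essentially free. For $\Pi_\lambda(\program)$, the claim is a direct application of Corollary~\ref{cor:mlp:ht:completeness-temp-old-normal} (with the trivial observation that if $\M\models\program$ then in particular each $\alwaysF r\in\program$ holds in $\M$). For $\Delta_{\lambda,\tmflimit}$, it suffices to observe that $\mhtToHt{\M}$ is, by construction, an \HT-interpretation timed wrt $\lambda$ inducing exactly the timing function $\tmf$ carried by $\M$, so that Proposition~\ref{prop:ht:timed:delta} applies with the chosen $\tmflimit=\tmf(\lambda-1)$.

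The substantive part is the satisfaction of $\Psi_{\lambda,\tmflimit}(\program)$. I would fix an arbitrary integrity constraint from either~(\ref{def:ht:psi:one}) or~(\ref{def:ht:psi:two}), say
\(
\bot\leftarrow \tk{\body}\wedge \timet_{\kvar,\tmvar}\wedge \timet_{\kvar+1,\tmvar'},
\)
associated to some $\alwaysF(\metric{\next}{m}{n} a\leftarrow\body)\in\program$ and some $k,d,d'$ satisfying either $d'-d<m$ or $n\in\mathbb{N}$ with $d'-d\geq n$. Following the \HT-semantics of implication, I have to refute the body at both $\tuple{H,T}$ and $\tuple{T,T}$, where $\mhtToHt{\M}=\tuple{H\cup X,T\cup X\rangle}$. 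In both cases, I proceed by contradiction: assume the body holds at the chosen \HT-interpretation. Then $\timet_{k,d}$ and $\timet_{k+1,d'}$ are true, so Definition~\ref{def:ht:timed} forces $\tmf(k)=d$ and $\tmf(k+1)=d'$. Meanwhile $\tk{\body}$ holds, so by Lemma~\ref{lem:boundedk:ht:mht} and the fact that $\htToMhtf$ inverts $\mhtToHtf$, we obtain $\M,k\models\body$ in the partial case and the analogous statement for the total variant $(\tuple{\Ttrace,\Ttrace},\tmf),k\models\body$ in the other case.

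From $\M\models\program$ and the clause $\alwaysF(\metric{\next}{m}{n}a\leftarrow\body)$, by the \MHT-semantics of $\alwaysF$ and $\to$ (which requires validity at both $\M$ and its total variant simultaneously), I derive $\M,k\models \metric{\next}{m}{n}a$ (respectively at the total variant). The \MHT-semantics of $\metricI{\next}$ then forces $\tmf(k+1)-\tmf(k)\in\intervco{m}{n}$, i.e., $d'-d\in\intervco{m}{n}$, which directly contradicts the case condition under which the integrity constraint was generated.

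The main obstacle I foresee is bookkeeping rather than mathematical depth: one must keep straight the two \HT-satisfaction cases for the implication, verify that $\htToMhttrace{\mhtToHt{\M}}$ recovers the trace underlying $\M$ (so that Lemma~\ref{lem:boundedk:ht:mht} produces satisfaction along the \emph{correct} timed trace rather than an arbitrary one), and correctly handle the $n=\omega$ case of~(\ref{def:ht:psi:two}), which is excluded from the definition of $\Psi$ and therefore never generates a constraint to refute. Beyond that, the argument is a clean combination of Corollary~\ref{cor:mlp:ht:completeness-temp-old-normal}, Proposition~\ref{prop:ht:timed:delta}, and Lemma~\ref{lem:boundedk:ht:mht}.
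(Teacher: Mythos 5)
Your proposal is correct and follows essentially the same route as the paper's own proof: $\Pi_\lambda(\program)$ via Corollary~\ref{cor:mlp:ht:completeness-temp-old-normal}, $\Delta_{\lambda,\tmflimit}$ via the timed-by-construction observation and Proposition~\ref{prop:ht:timed:delta}, and $\Psi_{\lambda,\tmflimit}(\program)$ by assuming a violated constraint, transferring $\tk{\body}$ back to $\M$ with Lemma~\ref{lem:boundedk:ht:mht}, and contradicting the interval condition forced by $\metric{\next}{m}{n}a\leftarrow\body$ and the induced $\tmf$. If anything, you are slightly more explicit than the paper in refuting the constraint body at both the $\tuple{H,T}$ and $\tuple{T,T}$ levels of the \HT\ implication, which is a welcome bit of bookkeeping rather than a divergence.
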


  \begin{proof}
    We prove that $\mhtToHt{\M}$ models each part of the translation.

    We know
    $\M\models\program$
    so due to Corollary~\ref{cor:mlp:ht:completeness-temp-old-normal},
    we have that $\M\models\openi{\program}$.
    With the Definition for $\openi{\program}$, get
    $\M\models\openi{\alwaysF r} \;\; \text{for all } \alwaysF r \in \program$,
    which means that
    $\M,k\models\alwaysF \openi{r} \;\; \text{for all } \alwaysF r \in \program,0 \leq \kvar < \lambda$.
    With Lemma~\ref{lem:boundedk:mht:ht}, we have that
    $\mhtToHt{\M} \models \tk{\openi{r}} \;\; \text{for all } \alwaysF r \in \program, 0 \leq \kvar < \lambda$.
    Since intervals are irrelevant in $\tk{r}$, we then know
    $\tk{\openi{r}}=\tk{r}$ and therefore
    $\mhtToHt{\M} \models \tk{r} \;\; \text{for all } \alwaysF r \in \program, 0 \leq \kvar < \lambda$,
    which by definition is $\Pi_\lambda(\program)$.

    $\mhtToHt{\M}$ is timed inducing $\tmf$ by construction,
    so it is enough to apply Proposition~\ref{prop:ht:timed:delta} to show that
    $\mhtToHt{\M}\models\Delta_{\lambda,\tmflimit}$.

    Assume towards a contradiction that
    $\mhtToHt{\M}\not\models \Psi_{\lambda,\tmflimit}(\program)$.
    Then, there is a rule $r = \bot \leftarrow \tk{\body} \wedge \timet_{\kvar,\tmvar} \wedge \timet_{\kvar+1,\tmvar'}$
    for some $0 \leq \kvar < \lambda-1$ and $0\leq d<d'\leq\tmflimit$,
    such that
    $\mhtToHt{\M}\models \tk{\body} \wedge \timet_{\kvar,\tmvar} \wedge \timet_{\kvar+1,\tmvar'}$,
    where $\melrule{\metric{\Next}{m}{n} a \leftarrow \body}\in \program$.
    Given that $\mhtToHt{\M}\models \tk{\body}$, with Lemma~\ref{lem:boundedk:ht:mht}
    we have $\M,k\models \body$.
    Which means that $\M,k\models \metric{\Next}{m}{n} a$ since $\M\models\program$.
    With MHT semantics, we have $\M,k+1\models a$ and $\tmf(k+1)-\tmf(k)\in \intervco{m}{n}$.
    Given that $\M$ is timed inducing $\tmf$, we have $\tmf(k)=d$ and $\tmf(k+1)=d'$.
    Then, we reach a contradiction since the conditions for $r$ are either
    $\tmvar - \tmvar' > -m$ or $n\in\mathbb{N} \;\text{and}\; \tmvar' - \tmvar > n$,
    for \eqref{def:ht:psi:one} and \eqref{def:ht:psi:two} respectively,
    which are both contradicted by the fact that $\tmf(k+1)-\tmf(k)\in \intervco{m}{n}$.

\end{proof}

\begin{lemma}
    \label{lem:correctness-ht}
    Let
    \program\ be a metric logic program,
    $\lambda,\tmflimit \in \mathbb{N}$,
    and
    \handt\ an \HT\ interpretation
    timed wrt. $\lambda$.

    If
    $\handt\models\Pi_\lambda(\program)\cup\Psi_{\lambda,\tmflimit}(\program)$,
    then
    $\htToMht{\handt}\models\program$.
\end{lemma}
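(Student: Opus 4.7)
The plan is to combine Theorem~\ref{thm:mlp:ht:correctness:open}, which handles the state-transition/Boolean backbone of the translation, with the interval constraints enforced by $\Psi_{\lambda,\tmflimit}(\program)$, so as to upgrade each relaxed rule of $\openi{\program}$ back to its original metric form in $\program$. Since $\handt$ is timed wrt $\lambda$, let $\tmf$ denote the timing function it induces; then $\htToMht{\handt} = (\htToMhttrace{\handt}, \tmf)$ is a well-defined timed \HT-trace of length $\lambda$. From $\handt \models \Pi_\lambda(\program)$ and Theorem~\ref{thm:mlp:ht:correctness:open} applied with this $\tmf$, we obtain $\htToMht{\handt} \models \openi{\program}$. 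For every rule $r \in \program$ whose head does not contain a metric next, $\openi{r} = r$ already coincides with the original rule, so such rules are immediately satisfied by $\htToMht{\handt}$.

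What remains is to strengthen every relaxed rule of the form $\openi{r} = (\metric{\Next}{0}{\omega} a \leftarrow \body)$ back to its original $r = (\metric{\Next}{m}{n} a \leftarrow \body)$ at each $0 \leq k < \lambda$. By \MHT\ semantics of implication, I need to check, for both $\M' = \htToMht{\handt}$ and $\M' = (\tuple{\Ttrace, \Ttrace}, \tmf)$, that $\M', k \not\models \body$ or $\M', k \models \metric{\Next}{m}{n} a$. The satisfaction of $\openi{r}$ already provides $k+1 < \lambda$ and $\M', k+1 \models a$ whenever $\M', k \models \body$; the only missing ingredient is the interval condition $\tmf(k+1) - \tmf(k) \in \intervco{m}{n}$. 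For $\M' = \htToMht{\handt}$, Lemma~\ref{lem:boundedk:mht:ht} lifts $\M', k \models \body$ to $\handt \models \tk{\body}$, while the timed assumption yields $\handt \models \timet_{k,\tmf(k)} \wedge \timet_{k+1,\tmf(k+1)}$. If $\tmf(k+1) - \tmf(k) \notin \intervco{m}{n}$, then one of the integrity constraints in~\eqref{def:ht:psi:one} or~\eqref{def:ht:psi:two} fires in $\handt$, contradicting $\handt \models \Psi_{\lambda,\tmflimit}(\program)$. The case $\M' = (\tuple{\Ttrace, \Ttrace}, \tmf)$ is fully analogous, because $\handt \models \Psi_{\lambda,\tmflimit}(\program)$ requires, by the \HT-satisfaction clause for implication, that the integrity constraints also be satisfied at the $T$-component of $\handt$.

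The main technical obstacle is essentially bookkeeping: one must verify that the two families of constraints in~\eqref{def:ht:psi:one} and~\eqref{def:ht:psi:two} jointly cover every possible violation $\tmf(k+1) - \tmf(k) \notin \intervco{m}{n}$, including the boundary case $n = \omega$, in which only the lower-bound family~\eqref{def:ht:psi:one} is active, and the degenerate case in which $m = 0$ vacuously holds. The exhaustive ranging over $0 \leq d < d' \leq \tmflimit$ in the definition of $\Psi_{\lambda,\tmflimit}(\program)$, together with $\tmf(k), \tmf(k+1) \leq \tmflimit$ inherited from $\handt$ being timed, ensures that precisely the constraint instance needed to expose any violation is present, delivering the desired conclusion $\htToMht{\handt} \models \program$.
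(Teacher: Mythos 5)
Your proposal is correct and follows essentially the same route as the paper: obtain satisfaction of the relaxed program $\openi{\program}$ from the Boolean backbone (you invoke Theorem~\ref{thm:mlp:ht:correctness:open}, which the paper unpacks directly via Lemma~\ref{lem:boundedk:ht:mht}), and then restore each interval condition on metric-next rules by combining Lemma~\ref{lem:boundedk:mht:ht}, the timed atoms $\timet_{k,\tmf(k)}$, $\timet_{k+1,\tmf(k+1)}$, and a contradiction with the constraints~\eqref{def:ht:psi:one}/\eqref{def:ht:psi:two}. Your explicit treatment of the total component $(\tuple{\Ttrace,\Ttrace},\tmf)$ in the implication clause is in fact slightly more careful than the paper's own writeup, which handles that case only implicitly.
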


  \begin{proof}
    Let $\tmf$ be the timing function induced by $\handt$.

    Since $\handt\models\Pi_\lambda(\program)$, then
    $\handt\models \tk{r}$ for all  $\alwaysF r \in \program, 0 \leq \kvar < \lambda$.
    We know that $\tk{r}=\tk{\openi{r}}$
    since intervals are not relevant for this definition.
    So $\handt\models \tk{\openi{r}}$ for all  $\alwaysF r \in \program, 0 \leq \kvar < \lambda$.
    With this we can apply Lemma~\ref{lem:boundedk:ht:mht} with $\M=\htToMht{\handt}$,
    to get
    $\htToMht{\handt}, k\models \openi{r}$ for all  $\alwaysF r \in \program, 0 \leq \kvar < \lambda$.
    We now analyze the two possible forms for $r$ and show that $\htToMht{\handt}, k\models {r}$,
    which would mean that $\htToMht{\handt}\models\program$.
    \casex{r = a_1\vee...\vee a_m \leftarrow \body}{
Since there are no metric operators in $r$, we have
        $\openi{r}=r$ so
        we have
        $\htToMht{\handt}, k\models {r}$
    }
    \casex{r = \metricI{\Next} a \leftarrow \body}{
        If $\htToMht{\handt}, k \not \models \body$ then
        $\htToMht{\handt}, k\models {r}$ is trivially satisfied.
        Otherwise,
        we must show that $\htToMht{\handt}, k\models \metricI{\Next} a$.
        We know that $\htToMht{\handt}, k\models \Next_{\intervco{0}{\omega}} a$
        because $\handt\models \tk{\openi{r}}$ and $\htToMht{\handt}, k \models \body$,
        so it is enough to show that $\tmf(k+1) -\tmf(k) \in \cI$
        to have the MHT semantics for $\metricI{\Next} a$.
Assume towards a contradiction that $\tmf(k+1) -\tmf(k) \not \in \cI$,
        with $\cI=\intervco{m}{n}$.
Since there are no metric operators in \body, we have
        $\handt \models \tk{\body}$ because of Lemma~\ref{lem:boundedk:mht:ht}.
        We also know that $\handt \models \timet_{k,\tmf(k)}$ and $\handt \models \timet_{k+1,\tmf(k+1)}$,
        since \handt\ induces $\tmf$,
        so $\handt \models \tk{\body} \wedge \timet_{k,\tmf(k)} \wedge \timet_{k+1,\tmf(k+1)}$.
If the reason for $\tmf(k+1) -\tmf(k) \not \in \cI$ is that $\tmf(k+1) - \tmf(k) < m$
        then $\handt \models \bot$ because it would meet the conditions of rule \eqref{def:ht:psi:one}.
        If the reason is that $n\in\mathbb{N}$ and $\tmf(k+1) - \tmf(k) \geq n$
        then $\handt \models \bot$ because it would meet the conditions of rule \eqref{def:ht:psi:two}.
        In both cases reaching a contradiction,
        therefore $\tmf(k+1) -\tmf(k) \in \cI$.
    }

\end{proof}
  \setpropcounter{prop:ht:delta:timed}
\begin{proposition}
    Let
    $\lambda,\tmflimit\in\mathbb{N}$.

    If $\tandt$ is an equilibrium model of $\Delta_{\lambda,\tmflimit}$
    then $\tandt$ is timed wrt $\lambda$.
\end{proposition}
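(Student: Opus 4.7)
The plan is to extract from \tandt\ a unique index $d_k$ at each level $k<\lambda$ with $d_0=0$ and $d_k<d_{k+1}$, and then set $\tmf(k):=d_k$ as the witnessing timing function. The key preliminary observation is that $\Delta_{\lambda,\tmflimit}$ is a positive disjunctive program, so its equilibrium models coincide with its $\subseteq$-minimal classical models; hence \tandt\ satisfies every rule in $\Delta_{\lambda,\tmflimit}$, and no \HT-interpretation $\tuple{H,T}$ with $H\subsetneq T$ satisfies $\Delta_{\lambda,\tmflimit}$.

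The core of the argument is an induction on $k$ establishing that exactly one $d_k$ has $\timet_{k,d_k}\in T$. For $k=0$, the fact $\timet_{0,0}\in\Delta_{\lambda,\tmflimit}$ forces $\timet_{0,0}\in T$, so we set $d_0:=0$. Any additional $\timet_{0,d'}\in T$ with $d'\neq 0$ can be dropped: taking $H=T\setminus\{\timet_{0,d'}\}$, the fact $\timet_{0,0}$ remains in $H$, and the rule whose body is $\timet_{0,d'}$ is satisfied both for $h'=H$ (body falsified in $H$) and for $h'=T$ (head disjunction already satisfied in $T$, inherited from \tandt). This contradicts minimality, so $d_0$ is unique.

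For the inductive step, assume each level $l\leq k$ has a unique atom $\timet_{l,d_l}\in T$ with $d_0<d_1<\cdots<d_k$. The rule of~\eqref{def:ht:delta} with body $\timet_{k,d_k}$ forces some $\timet_{k+1,d}\in T$ with $d>d_k$; fix one and call it $d_{k+1}$. Any further $\timet_{k+1,e}\in T$ with $e\neq d_{k+1}$ can be removed without breaking any rule by the same \HT-level analysis: the surviving witness $\timet_{k+1,d_{k+1}}$ still lies above $d_k$ and so satisfies the head of the unique live rule at level $k$ in the reduced $H$, while the rule whose body is $\timet_{k+1,e}$ is satisfied trivially for $h'=H$ and from \tandt's satisfaction for $h'=T$. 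Minimality again forces uniqueness, and $d_{k+1}>d_k$ holds by construction.

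Setting $\tmf(k):=d_k$ then yields a strict timing function $\tmf:\intervco{0}{\lambda}\to\mathbb{N}$ with $\tmf(0)=0$ and $\tmf(k)<\tmf(k+1)$ for $0\leq k<\lambda-1$, and the biconditions of Definition~\ref{def:ht:timed} collapse to ``$\timet_{k,d}\in T$ iff $d=d_k$'', which holds by construction (atoms $\timet_{l,d}$ with $l\geq\lambda$ occur in no rule body or head and are pruned by the same minimality argument). The main obstacle is the uniqueness step, which requires a careful case analysis for each rule potentially affected by the candidate removal; the disjunctive head of the rules in~\eqref{def:ht:delta} is what makes this delicate, since one must verify that the surviving witness $\timet_{k+1,d_{k+1}}$ still lies strictly above $d_k$ so that the level-$k$ rule remains satisfied in the reduced interpretation.
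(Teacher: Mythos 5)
Your proposal is correct and follows essentially the same route as the paper: use the equilibrium (minimality) condition to show by induction on $k$ that exactly one atom $\timet_{k,d}$ is true at each level, with the delicate step being that the removed duplicate at level $k{+}1$ only occurs in heads of rules whose body is the unique level-$k$ atom, whose head remains witnessed by the surviving atom above $d_k$. The only cosmetic difference is that you fold strictness ($d_k<d_{k+1}$) into the induction by choosing the witness from the rule's head, whereas the paper establishes functionality first and then argues strict monotonicity separately via the supporting rule.
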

\resetpropcounter
\begin{proof}
    \newcommand{\Dom}{\ensuremath{N}}
    \newcommand{\Cdom}{\ensuremath{D}}

    Let us construct a relation $\relR=\{(k,d)\mid\tandt \models\timet_{\kvar,\tmvar}\}$.

    We proof that \relR\ represents a function $\tmf: \Dom\to \Cdom$ where $0,\dots,\lambda-1\in \Dom$ and  $0,\dots,\tmflimit\in \Cdom$.
    Let us proof by induction that for all $k\in\Dom$ there is exactly one $d$ such that $(k,d)\in\relR$.

    \casex{k=0}{
        Since $\tandt \models \timet_{0,0}$ due to Definition~\ref{def:ht:delta},
        we know $(0,0) \in \relR$.
        We show that there is no other $d$
        where $(0,d)\in\relR$ with $d\neq 0$.
        Assume towards a contradiction that there is one,
        then $\tandt \models \timet_{0,d}$,
        and $\timet_{0,d}\in T$ due to \HT\ semantics.
        We argue that $\tuple{T\setminus{\{\timet_{0,d}\}},T}\models\Delta_{\lambda,\tmflimit}$.
        This is the case since
        $\timet_{0,d}$ can't appear in the head of rule $\bigvee_{d<d'\leq\tmflimit}\timet_{\kvar+1,\tmvar'}\leftarrow\timet_{k,d}$.
        So we can remove $\timet_{0,d}$ from $T$ and still satisfy $\Delta_{\lambda,\tmflimit}$.
        Therefore, $\tandt$ is not in equilibrium
        which is a contradiction.

    }
    \casex{k+1 \in \intervco{1}{\lambda}}{
By induction hypothesis, there is exactly one $d$ such that $(k,d) \in \relR$,
        so by construction
        $\tandt \models \timet_{k,d}$.
        With rule \eqref{def:ht:delta} and \HT\ semantics, we get
        $\tandt \models \bigvee_{d<d'\leq\tmflimit}\timet_{\kvar+1,\tmvar'}$.
        This means that
        $\tandt \models \timet_{\kvar+1,\tmvar'}$ for some $d'$,
        so by construction
        $(\kvar+1,\tmvar')\in\relR$.
Now we show that
        there is no $d''\neq d'$ such that $(\kvar+1,\tmvar'')\in\relR$.
        Assume towards a contradiction
        there is $(\kvar+1,\tmvar'')\in\relR$ with  $d''\neq d'$.
        Then  $\tandt \models \timet_{\kvar+1,\tmvar''}$ and
        $\timet_{\kvar+1,\tmvar''}\in T$.
        We argue that $\tuple{T',T}\models\Delta_{\lambda,\tmflimit}$ with $T' = T\setminus\{\timet_{\kvar+1,\tmvar''}\}$.
        Notice
        $\timet_{\kvar+1,\tmvar''}$ only appears as a head in rule $\bigvee_{d<d'\leq\tmflimit}\timet_{\kvar+1,\tmvar'}\leftarrow\timet_{k,d}$.
        Since $\timet_{k,d}\in T$, after removing $\timet_{\kvar+1,\tmvar''}$ ,
        $\bigvee_{d<d'\leq\tmflimit}\timet_{\kvar+1,\tmvar'}$ is still satisfied,
        so $\tandt$ is not in equilibrium, which is a contradiction.
    }

    Now we prove that $\tmf$ is a (strict) timing function wrt $\lambda$.
    First, $\tmf(0)=0$ as shown before.
    We assume towards a contradiction that
    $\tmf(k)\geq\tmf(k+1)$  for some $k$.
    By construction $(k,d),(k+1,d')\in\relR$ with $d\geq d'$,
    and $\tandt\models\timet_{k+1,d'}$ and $\tandt\models\timet_{k,d}$.
    Then, there must be a supporting rule for $\timet_{k+1,d'}$.
    This must be rule $\bigvee_{d<d'\leq\tmflimit}\timet_{\kvar+1,\tmvar'}\leftarrow\timet_{k,d}$,
    since it is the only rule that can introduce $\timet_{k+1,d'}$.
    Then $d<d'$ to appear in the head, which contradicts the assumption.
\end{proof}

\setpropcounter{prop:ht:timed:delta}
\begin{proposition}
    Let $\handt$ be an \HT\ interpretation
    and
    $\lambda\in\mathbb{N}$.

    If $\handt$ is timed wrt $\lambda$ and induces $\tmf$,
    then $\handt$ is an \HT-model of $\Delta_{\lambda,\tmflimit}$
    for $\tmflimit=\tmf(\lambda-1)$.
\end{proposition}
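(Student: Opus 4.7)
The plan is to verify each axiom in $\Delta_{\lambda,\tmflimit}$ directly from the definition of being timed. By Definition~\ref{def:ht:timed}, for every $0\leq k<\lambda$ and every $d\in\mathbb{N}$ we have $\tuple{H,T}\models\timet_{k,d}$ iff $\tmf(k)=d$, and similarly for $\tuple{T,T}$. So I would split the verification into two parts according to the two types of formulas in~\eqref{def:ht:delta}.

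First I would handle the fact $\timet_{0,0}$: since $\tmf$ is a timing function, $\tmf(0)=0$, and so by the timed property both $\tuple{H,T}\models\timet_{0,0}$ and $\tuple{T,T}\models\timet_{0,0}$, hence the fact is an HT-model.

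Next, for each rule $\bigvee_{d<d'\leq\tmflimit}\timet_{k+1,d'}\leftarrow\timet_{k,d}$ with $0\leq k<\lambda-1$ and $0\leq d\leq\tmflimit$, I would use the HT satisfaction clause for implications, checking for both $v\in\{H,T\}$ that $\tuple{v,T}\not\models\timet_{k,d}$ or $\tuple{v,T}\models\bigvee_{d<d'\leq\tmflimit}\timet_{k+1,d'}$. If $\tmf(k)\neq d$, then by the timed property the body is not satisfied in either $\tuple{H,T}$ or $\tuple{T,T}$, so the implication holds vacuously. Otherwise $\tmf(k)=d$, and I set $d^{*}\eqdef\tmf(k+1)$. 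Since $\tmf$ is strict, $d<d^{*}$; and since $\tmf$ is strictly increasing with $\tmflimit=\tmf(\lambda-1)$ and $k+1\leq\lambda-1$, we get $d^{*}\leq\tmflimit$. Hence $d<d^{*}\leq\tmflimit$ and, again by the timed property, both $\tuple{H,T}$ and $\tuple{T,T}$ satisfy the disjunct $\timet_{k+1,d^{*}}$, so the head holds.

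The argument is essentially bookkeeping from the definitions. The only slightly delicate step is noting that $\tmf(k+1)\leq\tmflimit$ whenever the rule is non-trivial, which uses strict monotonicity together with the choice $\tmflimit=\tmf(\lambda-1)$; this is what ensures that the disjunction in the head actually contains a true disjunct within its allowed range.
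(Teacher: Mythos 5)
Your proof is correct and follows essentially the same route as the paper's: a direct verification of the fact $\timet_{0,0}$ and of each rule, with the same case split on whether $\tmf(k)=d$ and the same observation that $\tmf(k+1)\leq\tmflimit=\tmf(\lambda-1)$ guarantees a true disjunct in the head. Your remark justifying $d^{*}\leq\tmflimit$ via strict monotonicity just spells out a step the paper asserts more tersely.
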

\resetpropcounter
\begin{proof}
    We proof that $\handt$ satisfies every rule in $\Delta_{\lambda,\tmflimit}$
    \casex{\timet_{0,0}}{
      $\tmf$ is a timing function so $\tmf(0)=0$.
      Since $\handt$ induces $\tmf$ we have that
      $\handt\models \timet_{0,0}$
    }
    \casex{\bigvee_{d<d'\leq\tmflimit}\timet_{\kvar+1,\tmvar'}\leftarrow\timet_{\kvar,\tmvar} }{
      Let $k \in \intervco{0}{\lambda-1}$ and $d \in \intervco{0}{\tmflimit}$.
      We analyze the two cases for $k$.
      \casex{\tmf(k)\neq d}{
        The definiton of timed gives us
        $\handt\not\models \timet_{k,d}$ and $\tandt\not\models \timet_{k,d}$,
        and with HT semantics we get
        $\handt\models\bigvee_{d<d'\leq\tmflimit}\timet_{\kvar+1,\tmvar'}\leftarrow\timet_{\kvar,\tmvar}$,
        since the body is false.
      }
      \casex{\tmf(k)= d}{
        $\tmf(k+1)=d'$ for some $d'>d$ because $\tmf$ is a strict timing function and $k+1$ is in the domain.
        $\handt$ induces $\tmf$ so $\handt\models \timet_{k+1,d'}$ and $\tandt\models \timet_{k+1,d'}$.
        Due to \HT\ semantics and the fact that $d'\leq \tmflimit$ since $\tmflimit=\tmf(\lambda-1)$,
        we get
        $\handt\models \bigvee_{d<d'\leq\tmflimit}\timet_{\kvar+1,\tmvar'}$ and
        $\tandt\models \bigvee_{d<d'\leq\tmflimit}\timet_{\kvar+1,\tmvar'}$
        which gives us the consequent of the implication.
      }
    }

\end{proof}

\setcounter{theorem}{\getrefnumber{thm:mlp:ht:completeness}}
\addtocounter{theorem}{-1}
\begin{theorem}[Completeness]
    Let \program\ be a plain metric logic program
    and
    $\M=(\tuple{\Ttrace,\Ttrace}, \tmf)$ a total timed \HT-trace of length $\lambda$.

    If
    $\M$ is a metric equilibrium model of $\program$,
    then
    $\mhtToHt{\M}$ is an equilibrium model of $\Pi_\lambda(\program)\cup\Delta_{\lambda,\tmflimit}\cup\Psi_{\lambda,\tmflimit}(\program)$
    with $\tmflimit=\tmf(\lambda-1)$.
\end{theorem}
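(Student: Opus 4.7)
The plan is to combine the already-established satisfaction Lemma~\ref{lem:completeness-ht} with a splitting argument to strengthen it to an equilibrium claim. The satisfaction part is essentially free: since $\M$ is a metric equilibrium model of $\program$, we have $\M\models\program$, so Lemma~\ref{lem:completeness-ht} already gives $\mhtToHt{\M}\models \Pi_\lambda(\program)\cup\Delta_{\lambda,\tmflimit}\cup\Psi_{\lambda,\tmflimit}(\program)$. Because $\M$ is total, so is the corresponding \HT-interpretation $\mhtToHt{\M}=\tandt$ (for a suitable $T\subseteq\alphabets\cup\alphabetT$). What remains is to rule out strictly smaller \HT-models sharing the same $T$.

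To this end, I would assume toward a contradiction that there exists $\handt$ with $H\subsetneq T$ satisfying $\Pi_\lambda(\program)\cup\Delta_{\lambda,\tmflimit}\cup\Psi_{\lambda,\tmflimit}(\program)$. The splitting argument of Lemma~\ref{lem:splitting:plain:ht}, with splitting set $U_0=\alphabetT$, lets me separate the timing atoms from the rest: since $\Delta_{\lambda,\tmflimit}$ uses only atoms in $\alphabetT$ and $\tandt$ is already an equilibrium model of $\Delta_{\lambda,\tmflimit}$ restricted to $\alphabetT$ by Proposition~\ref{prop:ht:timed:delta:equilibrium}, any \HT-model $\handt$ of $\Delta_{\lambda,\tmflimit}$ with $H\subseteq T$ must agree with $T$ on $\alphabetT$, i.e.\ $H\cap\alphabetT = T\cap\alphabetT$. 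Consequently $\handt$ can only differ from $\tandt$ on atoms in $\alphabets$, and in particular $\handt$ is still timed wrt $\lambda$ and induces the same timing function $\tmf$ as $\tandt$.

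Given this, I apply Lemma~\ref{lem:correctness-ht} to $\handt$ (which models $\Pi_\lambda(\program)\cup\Psi_{\lambda,\tmflimit}(\program)$) to obtain $\htToMht{\handt}\models \program$. By the definition of $\htToMhtf$ this timed \HT-trace has the form $(\tuple{\Htrace',\Ttrace},\tmf)$ with $\Htrace'\leq\Ttrace$, and because $H\subsetneq T$ differs from $T$ on some atom $a_k\in\alphabets$, we have $\Htrace'\neq\Ttrace$ and hence $\Htrace'<\Ttrace$. This yields a non-total \MHT-model of $\program$ below the total model $\M$, contradicting the assumption that $\M$ is a metric equilibrium model.

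The main obstacle is the splitting step: one must justify carefully that the \HT-equilibrium of the timing-fragment genuinely forces every competing $\handt$ to coincide with $\tandt$ on $\alphabetT$, so that a candidate witness against equilibrium can be projected back to a legitimate smaller \MHT-model with the same timing function. The rest is bookkeeping via the already-established correspondences $\mhtToHtf$ and $\htToMhtf$ and the completeness/correctness lemmas for the temporal translation.
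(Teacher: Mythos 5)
Your proposal is correct and follows essentially the same route as the paper: satisfaction of the translation via Lemma~\ref{lem:completeness-ht}, then minimality by using Proposition~\ref{prop:ht:timed:delta:equilibrium} to force any smaller \HT-model to agree with $T$ on $\alphabetT$ (hence be timed with the same $\tmf$), and finally projecting it back with Lemma~\ref{lem:correctness-ht} to obtain a smaller \MHT-model contradicting the metric equilibrium of $\M$. The paper organizes this as a case analysis on whether the differing atom lies in $\alphabetT$ or $\alphabets$ (and does not actually need the splitting lemma here), but the substance of the argument is identical.
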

\begin{proof}
    We first prove that it models the translation
    $\Pi_\lambda(\program)\cup\Delta_{\lambda,\tmflimit}\cup\Psi_{\lambda,\tmflimit}(\program)$
    using Lemma~\ref{lem:completeness-ht}.
Now we prove that it is in equilibrium.
    By construction $\mhtToHt{\M}=\tandt$ is total since $\M$ is total.
    Assume towards a contradiction that
    there is $\handt$ such that $H\subset T$ and $\handt\models\Pi_\lambda(\program)\cup\Delta_{\lambda,\tmflimit}\cup\Psi_{\lambda,\tmflimit}(\program)$.
    Then there is $x\in\alphabets\cup\alphabetT$ such that $x\in T$ and $x\not\in H$.
    Let's analyze each case.
    \casex{x\in\alphabetT}{
        We know that $\mhtToHt{\M}$ is timed by construction,
        so with Proposition~\ref{prop:ht:timed:delta:equilibrium}
        we get
        $\mhtToHt{\M}|_{\alphabetT}$ is an equilibrium model of $\Delta_{\lambda,\tmflimit}$,
        then
        $\handt|_{\alphabetT}\not\models\Delta_{\lambda,\tmflimit}$,
        which contradicts the hypothesis.
    }
    \casex{x\in\alphabets}{
        Then $x=a_k$ for some $0\leq k <\lambda$
        so, there is $a_k\in\alphabets$ such that $a_k\in T$ and $a_k\not\in H$.
        Notice that then $\handt$ is timed wrt $\lambda$ inducing $\tmf$
        and that $\handt\models\Pi_\lambda(\program)\cup\Psi_{\lambda,\tmflimit}(\program)$,
        which means we can apply Lemma~\ref{lem:completeness-ht} to get
        $\htToMht{\handt}\models\program$.
        This contradicts that $\M$ is an equilibrium model,
        because $\htToMht{\handt}$ has the same timing function and it is smaller due to $x$.
    }

\end{proof}

\setcounter{theorem}{\getrefnumber{thm:mlp:ht:correctness}}
\addtocounter{theorem}{-1}
\begin{theorem}[Correctness]
    Let
    \program\ be a plain metric logic program,
    and
    $\lambda,\tmflimit \in \mathbb{N}$.

    If
    $\tandt$ is an equilibrium model of $\Pi_\lambda(\program)\cup\Delta_{\lambda,\tmflimit}\cup\Psi_{\lambda,\tmflimit}(\program)$,
    then
    $\htToMht{\tandt}$ is a metric equilibrium model of $\program$.
\end{theorem}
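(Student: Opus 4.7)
The plan is to mirror the strategy of the completeness proof (Theorem~\ref{thm:mlp:ht:completeness}) but in the reverse direction, first obtaining a metric model from the equilibrium model of the translated program and then transferring the minimality condition back.

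The first step is to extract the timing structure. From $\tandt$ being an equilibrium model of $\Pi_\lambda(\program)\cup\Delta_{\lambda,\tmflimit}\cup\Psi_{\lambda,\tmflimit}(\program)$, I would apply the splitting technique of Lemma~\ref{lem:splitting:plain:ht} using the splitting set $\alphabetT$: this yields that $\tandt|_{\alphabetT}$ is itself an equilibrium model of $\Delta_{\lambda,\tmflimit}$, since $b_{\alphabetT}=\Delta_{\lambda,\tmflimit}$. Proposition~\ref{prop:ht:delta:timed} then guarantees that $\tandt|_{\alphabetT}$ (and hence $\tandt$) is timed wrt $\lambda$, inducing a unique timing function $\tmf$ with $\tmf(\lambda-1)\leq\tmflimit$.

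Once $\tmf$ is in hand, the satisfaction part is immediate: because $\tandt\models\Pi_\lambda(\program)\cup\Psi_{\lambda,\tmflimit}(\program)$, Lemma~\ref{lem:correctness-ht} directly gives that $\htToMht{\tandt}$ is a metric model of $\program$. The one-to-one correspondence between timed \HT-traces of length $\lambda$ and \HT\ interpretations timed wrt $\lambda$ ensures that $\htToMht{\tandt}$ is a well-defined total timed \HT-trace (since $\tandt$ is total by being in equilibrium).

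The main obstacle is establishing minimality. I would proceed by contraposition: suppose $\htToMht{\tandt}=(\tuple{\Ttrace,\Ttrace},\tmf)$ is not a metric equilibrium model of $\program$, so there exists $\M'=(\tuple{\Htrace,\Ttrace},\tmf)$ with $\Htrace<\Ttrace$ satisfying $\program$. Applying Lemma~\ref{lem:completeness-ht} to $\M'$ yields that $\mhtToHt{\M'}$ is an \HT-model of $\Pi_\lambda(\program)\cup\Delta_{\lambda,\tmflimit}\cup\Psi_{\lambda,\tmflimit}(\program)$. By the construction of $\mhtToHt{\cdot}$, the valuations attached to $\Ttrace$ coincide with the $\alphabets$-part of $\tandt$, while the $\alphabetT$-part coincides with $\tandt|_{\alphabetT}$ because both are induced by the same $\tmf$. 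Hence $\mhtToHt{\M'}$ has the form $\tuple{H,T}$ with $H\subset T$ where $T$ equals the second component of $\tandt$, contradicting $\tandt$ being in equilibrium. The subtle point to verify is that the two invertible maps $\mhtToHtf$ and $\htToMhtf$ preserve strict inclusion, but this follows from their definitions since they only rename atoms between $\alphabet$-per-state and $\alphabets$-indexed representations while leaving the $\alphabetT$ component fixed by the induced timing function.
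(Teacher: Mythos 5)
Your proposal is correct and takes essentially the same route as the paper's proof: splitting (Lemma~\ref{lem:splitting:plain:ht}) plus Proposition~\ref{prop:ht:delta:timed} to extract the timing function, Lemma~\ref{lem:correctness-ht} for satisfaction of $\program$, and Lemma~\ref{lem:completeness-ht} applied to a hypothetical strictly smaller metric model to contradict the equilibrium property of $\tandt$. The only difference is presentational: you make explicit that $\mhtToHt{\M'}$ shares its second component with $\tandt$ (including the $\alphabetT$ part fixed by $\tmf$), a point the paper leaves implicit.
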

\begin{proof}
    We argue that $\tandt$ is an equilibrium model of $\Delta_{\lambda,\tmflimit}$
    since this program is over alphabet $\alphabetT$ and $\Pi_\lambda(\program)$ is over alphabet $\alphabets$,
    where $\alphabetT\cap\alphabets=\emptyset$.
    Additionally $\Psi_{\lambda,\tmflimit}(\program)$ consist of only integrity constraints.
    We can apply the splitting property (Lemma~\ref{lem:splitting:plain:ht}) to say that
    $\tandt|_\alphabetT$ is an equilibrium model of $\Delta_{\lambda,\tmflimit}$,
    $\tandt|_\alphabetT$ is timed wrt. $\lambda$ inducing timing function $\tmf$
    due to Proposition~\ref{prop:ht:delta:timed}.
    Notice that $\tandt$ is also timed.
    Knowing this, we have the necessary conditions to create $\htToMht{\tandt}$.
    Then, we can apply Lemma~\ref{lem:correctness-ht} to get that
    $\htToMht{\tandt}\models\program$.

    We now prove that $\htToMht{\tandt}$ is an equilibrium model.
    Let $\htToMht{\tandt} = (\tuple{T_\kvar,T_\kvar}_{\rangeco{\kvar}{0}{\lambda}}, \tmf)$.
    First notice that $\htToMht{\tandt}$ is total by construction.
Let's assume towards a contradiction that $\htToMht{\tandt}$ is not minimal.
    Then, there is some metric trace $\M'=(\tuple{H_\kvar,T_\kvar}_{\rangeco{\kvar}{0}{\lambda}}, \tmf)$
    such that $\M'\models\program$ and $H_\kvar\subset T_\kvar$ for some $\rangeco{\kvar}{0}{\lambda}$.
    So, there is $a\in\alphabet$ such that $a\in T_\kvar$ and $a\not\in H_\kvar$.
    By construction $a_k\in T$ and $a_k\not\in H$ where $\mhtToHt{\M'}=\tuple{H,T}$.
    We apply Lemma~\ref{lem:completeness-ht} and get
    $\tuple{H,T}\models\Pi_\lambda(\program)\cup\Delta_{\lambda,\tmflimit}\cup\Psi_{\lambda,\tmflimit}(\program)$
    so $\tandt$ is not an equilibrium model, which is a contradiction.

\end{proof}
 \subsubsection{Translation of Plain Metric Logic Programs to \HTC}

\begin{lemma}[Splitting Plain \HTC]
  \label{lem:splitting:plain:htc}
  Given a metric logic program $\program$
  and a $\lambda\in\mathbb{N}$,
  the stable models of $\Pi_\lambda(\program)\cup\Delta_{\lambda}^c\cup\Psi^c_{\lambda}(\program)$
  can be computed by means of the splitting technique.
\end{lemma}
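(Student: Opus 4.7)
The plan is to mirror the proof of Lemma~\ref{lem:splitting:plain:ht} while adapting it to the constraint setting of \HTC. Let $\mathbb{P}^c = \Pi_\lambda(\program)\cup\Delta^{c}_{\lambda}\cup\Psi^{c}_{\lambda}(\program)$. Define the splitting set
\[
U_0 = \{\timet_k = d \mid 0\leq k<\lambda,\, d\in\mathbb{N}\} \cup \{\timet_i - \timet_j \leq d \mid 0\leq i,j<\lambda,\, d\in\mathbb{Z}\},
\]
that is, all constraint atoms over the timing variables in $\alphabetT^c$. The verification that $U_0$ is a splitting set relies on inspecting the three subprograms: by~\eqref{def:htc:delta}, every rule of $\Delta^c_\lambda$ uses only atoms from $U_0$, both in heads and bodies; by~\eqref{def:htc:psi:one} and~\eqref{def:htc:psi:two}, the rules of $\Psi^c_\lambda(\program)$ mention $U_0$-atoms only within their (negated) bodies; and $\Pi_\lambda(\program)$ contains no $U_0$-atom at all. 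Hence $b_{U_0}(\mathbb{P}^c) = \Delta^c_\lambda$.

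Applying the splitting theorem (Theorem~\ref{thm:splitting}), a constraint stable model $T$ of $\mathbb{P}^c$ can be decomposed as $T = T_0\cup T_1$, where $T_0$ is a constraint stable model of $\Delta^c_\lambda$ and $T_1$ is a constraint stable model of the reduced program $\textit{Reduce}(\mathbb{P}^c \setminus \Delta^c_\lambda, T_0, U_0\setminus T_0)$. By the observations above, this reduction simplifies to $\Pi_\lambda(\program)\cup\textit{Reduce}(\Psi^c_\lambda(\program), T_0, U_0\setminus T_0)$, because no rule of $\Pi_\lambda(\program)$ is touched by the reduction. Moreover, since the $U_0$-atoms occur only in rule bodies of $\Psi^c_\lambda(\program)$, the reduction turns each such rule either into a trivially satisfied formula or into an integrity constraint involving exclusively atoms from $\alphabets$.

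The main obstacle is that Theorem~\ref{thm:splitting} is formulated for classical (Boolean) logic programs, while $\mathbb{P}^c$ is an \HTC\ program whose constraint atoms carry the denotations defined in Section~\ref{sec:mlp:htc}. To close this gap, I would rely on the three-valued characterisation of \HTC\ developed in Appendix~\ref{sec:three:htc}, in particular Propositions~\ref{prop:three-valued:eq:htc} and~\ref{prop:htc:equi-satisfaction}, to treat the constraint atoms of $U_0$ as externally evaluated propositional atoms whose truth is fixed by a valuation satisfying $\Delta^c_\lambda$. Under this view, each constraint atom behaves classically with respect to the splitting, since Proposition~\ref{pro:htc:defined} guarantees that every $\timet_k$ is defined in any \HTC-model of $\Delta^c_\lambda$, so no constraint atom in $U_0$ remains in an ambiguous state. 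With these ingredients in place, the splitting argument transfers verbatim from the Boolean case, giving the desired decomposition.
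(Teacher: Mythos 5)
Your decomposition is exactly the one the paper uses (it mirrors Lemma~\ref{lem:splitting:plain:ht}: bottom part $\Delta^{c}_{\lambda}$, top part $\Pi_\lambda(\program)$ plus the reduced constraints from $\Psi^{c}_{\lambda}(\program)$), but the crux of the \HTC\ case is precisely the step you flag as "the main obstacle", and your proposed fix does not close it. The paper does not re-derive splitting from Theorem~\ref{thm:splitting}; its proof consists of invoking a splitting-set notion and splitting theorem formulated natively for \HTC\ (the reference cited in the paper's proof), where splitting is defined over \emph{variables} and stable models are constraint equilibrium models over \emph{valuations}. Your $U_0$ is instead a set of constraint \emph{atoms}, and treating those atoms as externally evaluated Boolean atoms so that the classical splitting theorem applies is not sound as stated: distinct constraint atoms over the same variable (e.g.\ $\timet_k=5$, $\timet_k-\timet_j\leq d$) are semantically entangled through the shared valuation of $\timet_k$, so minimization over sets of "true constraint atoms" does not correspond to minimization over valuations, which is what constraint equilibrium requires. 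The three-valued results you invoke (Propositions~\ref{prop:three-valued:eq:htc} and~\ref{prop:htc:equi-satisfaction}) only characterize satisfaction, not the equilibrium/minimality condition, so they do not license the transfer "verbatim".

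There is also a concrete flaw in the definedness argument: Proposition~\ref{pro:htc:defined} guarantees $\Vh(\timet_k)\in\mathbb{N}$ only for \HTC-\emph{models} of $\Delta^{c}_{\lambda}$, whereas the equilibrium test ranges over smaller interpretations $\tuple{h,t}$ with $h\subset t$ in which $\timet_k$ may well be undefined in $h$ (this is exactly the situation analyzed in Proposition~\ref{prop:htc:timed:delta:equilibrium}); for such $\tuple{h,t}$ the constraint atoms of your $U_0$ take the intermediate truth value, so they do not "behave classically" where it matters. To repair the proof you should either cite and apply the splitting theorem for \HTC\ directly (as the paper does), or actually prove an \HTC\ splitting result for this program class — the latter is a nontrivial piece of work, not a routine adaptation of the Boolean Theorem~\ref{thm:splitting}.
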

\begin{proof}
  Proven as Lemma~\ref{lem:splitting:plain:ht} but using the notion of splitting set as defined in
  \citep[Definition 10]{cafascwa20b}.
\end{proof}

\begin{proposition}[Model if timed equilibrium]
  \label{prop:htc:timed:delta:equilibrium}
  Let $\htcinterp$ be an \HTC\ interpretation
  and
  $\lambda\in\mathbb{N}$.

  If $\htcinterp$ is timed wrt. $\lambda$ and induces $\tmf$
  then $\htcinterp|_{\alphabetT^c}$ is a total \HTC\ interpretation
  and an constraint equilibrium model of
  $\Delta^c_{\lambda}$.
\end{proposition}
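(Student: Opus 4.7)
The plan is to mirror the structure of the analogous \HT\ result (Proposition~\ref{prop:ht:timed:delta:equilibrium}), exploiting the simpler structure of $\Delta^{c}_{\lambda}$, which only involves variables in $\alphabetT^c$.

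First, I would establish that $\htcinterp|_{\alphabetT^c}$ is total. Since $\htcinterp$ is timed wrt $\lambda$ and induces $\tmf$, by definition $\Vh(\timet_{\kvar}) = \tmf(\kvar) = \Vt(\timet_{\kvar})$ for every $0\leq\kvar<\lambda$, so the two valuations agree on $\alphabetT^c$ and the restriction is total. Next, for the satisfaction part, I would invoke Proposition~\ref{prop:htc:timed:delta} to obtain $\htcinterp\models\Delta^{c}_{\lambda}$. Because every formula in $\Delta^{c}_{\lambda}$ mentions only variables in $\alphabetT^c$ and the denotations of the involved constraint atoms depend solely on the values assigned to those variables, the restriction $\htcinterp|_{\alphabetT^c}$ satisfies $\Delta^{c}_{\lambda}$ as well.

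The remaining task is equilibrium. I would argue by contradiction: suppose there exists $\tuple{\Vh',\Vt}$ with $\Vh'\subsetneq \Vt|_{\alphabetT^c}$ such that $\tuple{\Vh',\Vt|_{\alphabetT^c}}\models\Delta^{c}_{\lambda}$. Since $\Vt|_{\alphabetT^c}$ assigns $\tmf(\kvar)\in\mathbb{N}$ to every $\timet_{\kvar}$ with $0\leq\kvar<\lambda$, and $\Vh'$ is a proper subset under the set-of-pairs representation, there must be some $\kvar^{*}$ with $\Vh'(\timet_{\kvar^{*}})=\undefined$. I then show that this makes some formula of $\Delta^{c}_{\lambda}$ containing $\timet_{\kvar^{*}}$ fail: by strictness of the denotations, $\den{\timet_{0}=0}$ and $\den{\timet_{\kvar}-\timet_{\kvar+1}\leq -1}$ both require their variables to be defined in $\mathbb{N}$; so if $\kvar^{*}=0$ the first atom fails, and otherwise the constraint $\timet_{\kvar^{*}-1}-\timet_{\kvar^{*}}\leq -1$ (which lies in $\Delta^{c}_{\lambda}$ since $\kvar^{*}\leq\lambda-1$, with the case $\kvar^{*}=\lambda-1$ well-defined because $\lambda\geq 2$ whenever such a $\kvar^{*}$ can exist) fails, giving $\Vh'\notin\den{\,\cdot\,}$ and hence $\tuple{\Vh',\Vt|_{\alphabetT^c}}\not\models\Delta^{c}_{\lambda}$, a contradiction.

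The main obstacle, which is in fact mild, is making precise the interaction between the set-inclusion ordering on valuations and the strictness of the denotations: strict denotations ensure that leaving any $\timet_{\kvar}$ undefined in $\Vh'$ cannot be compensated by its value in $\Vt$, which is exactly what forces every proper shrinking of $\Vt|_{\alphabetT^c}$ to violate $\Delta^{c}_{\lambda}$. Once this observation is in place, the argument reduces to a one-step case analysis as above.
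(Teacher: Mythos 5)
Your proposal is correct and follows essentially the same route as the paper's proof: totality from the definition of being timed, satisfaction via Proposition~\ref{prop:htc:timed:delta}, and equilibrium by contradiction, using the strictness of the denotations to show that any $\timet_{\kvar^*}$ left undefined in a smaller here-valuation falsifies $\timet_0=0$ (case $\kvar^*=0$) or a difference constraint of $\Delta^{c}_{\lambda}$ mentioning $\timet_{\kvar^*}$ (case $\kvar^*>0$). Your choice of the constraint $\timet_{\kvar^*-1}-\timet_{\kvar^*}\leq -1$ in the second case is in fact the cleaner instantiation, since it is guaranteed to lie in $\Delta^{c}_{\lambda}$ even when $\kvar^*=\lambda-1$.
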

\begin{proof}
  The satisfaction follows from Proposition~\ref{prop:htc:timed:delta}.
  From the definition of being timed
  it follows that $\htcinterp|_{\alphabetT^c}=\tuple{\Vt',\Vt'}$ must be total.
  Now we prove that it is in equilibrium.
  Assume towards a contradiction that
  there is $\tuple{\Vh',\Vt'}$ such that $\Vh'\subset \Vt'$ and $\tuple{\Vh',\Vt'}\models\Delta^c_{\lambda}$
  Then, there is $(\timet_{k},d)\in\Vt'$ such that $(\timet_{k},d)\not\in\Vh'$
  for some $0\leq k <\lambda$ and $d \in \mathbb{N}$,
  so $\Vh'(\timet_{k})=\undefined$.
  We analyze the two cases for $k$.
  \casex{k=0}{
  Then
  $\Vh'\not\in\den{\timet_{0}=0}$
  since $\Vh'(\timet_{0})=\undefined$.
  Contradiction since $\tuple{\Vh',\Vt'}\models\timet_{0}=0$
  because $\tuple{\Vh',\Vt'}\models\Delta^c_{\lambda}$.
  }
  \casex{k>0}{
    Since $\Vh'(\timet_{k})=\undefined$ and $x\in\mathbb{N}$ is a condition for $\den{\x-\y\leq \tmvar}$,
    we have
    $\Vh'\not\in\den{ \timet_{\kvar}-\timet_{\kvar{+}1} \leq -1}$.
    Contradiction since $\tuple{\Vh',\Vt'}\models\timet_{\kvar}-\timet_{\kvar{+}1} \leq -1$
    because $\tuple{\Vh',\Vt'}\models\Delta^c_{\lambda}$.
  }
\end{proof}

\begin{lemma}[Correctness \HTC]
  \label{lem:correctness-htc}
  Let
  \program\ be a metric logic program,
  \htcinterp\ an \HTC\ interpretation
  timed wrt. $\lambda$.

  If
  $\htcinterp\models\Pi_\lambda(\program)\cup\Psi^c_{\lambda}(\program)$,
  then
  $\htcToMht{\htcinterp}\models\program$.
\end{lemma}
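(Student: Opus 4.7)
The plan is to mirror the structure of the proof of Lemma~\ref{lem:correctness-ht} (the \HT\ counterpart), adapting each step to the \HTC\ setting via the correspondence $\htcToMhtf^c$ and exploiting that $\htcinterp$ being timed wrt.\ $\lambda$ induces a unique timing function $\tmf$ with $\Vh(\timet_\kvar)=\Vt(\timet_\kvar)=\tmf(\kvar)$.

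First I would establish the \HTC-analog of Lemma~\ref{lem:boundedk:ht:mht}: whenever $\varphi$ is a metric formula whose only temporal operator is $\Next_{\intervco{0}{\omega}}$ and $\htcinterp$ is an \HTC\ interpretation timed wrt.\ $\lambda$ inducing $\tmf$, if $\htcinterp\models \tk{\varphi}$ then $(\htcToMht{\htcinterp}),k\models\varphi$ for any $\kinlambda$. The proof is a routine structural induction on $\varphi$, using that the \HTC-denotation $\den{a=\true}$ coincides, via the definition of $\mhtToHtf^c$/$\htToMhtf^c$, with set membership of $a_\kvar$ in the trace components, and that $\tmf(\kvar{+}1)-\tmf(\kvar)\in\intervco{0}{\omega}$ holds trivially for any strict timing function.

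Next, from $\htcinterp\models\Pi_\lambda(\program)$ we get $\htcinterp\models\tk{r}$ for every $\alwaysF r\in\program$ and $\kinlambda$. Since the translation $\tk{\cdot}$ does not look at intervals, $\tk{r}=\tk{\openi{r}}$. Applying the lemma above, $\htcToMht{\htcinterp},k\models\openi{r}$ for every such $r$ and $k$. Now case-split on the shape of $r$. If $r=\head\leftarrow\body$ contains no temporal operators, then $\openi{r}=r$ and we are done. If $r=\metricI{\Next} a\leftarrow\body$ and $\htcToMht{\htcinterp},k\not\models\body$, the implication is trivial. Otherwise, $\htcToMht{\htcinterp},k\models\Next_{\intervco{0}{\omega}} a$ follows from $\openi{r}$; it remains to show $\tmf(\kvar{+}1)-\tmf(\kvar)\in\cI$, where $\cI=\intervco{m}{n}$.

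The main obstacle, and the heart of the argument, is this last interval check, which is where $\Psi^c_\lambda(\program)$ enters. Assume toward a contradiction that $\tmf(\kvar{+}1)-\tmf(\kvar)\notin\intervco{m}{n}$. By the lemma above applied to $\body$, we have $\htcinterp\models\tk{\body}$. If $\tmf(\kvar{+}1)-\tmf(\kvar)<m$, then $\tmf(\kvar)-\tmf(\kvar{+}1)>-m$, so by the denotation $\den{x-y\leq d}$ we have $\Vh\notin\den{\timet_\kvar-\timet_{\kvar+1}\leq -m}$ (and analogously for $\Vt$), hence $\htcinterp\models\neg(\timet_\kvar-\timet_{\kvar+1}\leq -m)$; combined with $\htcinterp\models\tk{\body}$, the constraint in~\eqref{def:htc:psi:one} forces $\htcinterp\models\bot$, a contradiction. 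The case $n\in\mathbb{N}$ and $\tmf(\kvar{+}1)-\tmf(\kvar)\geq n$ is symmetric using~\eqref{def:htc:psi:two}. Hence $\tmf(\kvar{+}1)-\tmf(\kvar)\in\cI$ and so $\htcToMht{\htcinterp},k\models\metricI{\Next} a$ by the \MHT\ semantics, concluding that $\htcToMht{\htcinterp}\models\program$.
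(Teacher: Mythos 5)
Your proposal matches the paper's proof of Lemma~\ref{lem:correctness-htc}: the paper likewise reduces to the relaxed-interval rules via $\tk{r}=\tk{\openi{r}}$, transfers satisfaction to the trace using the \HTC\ reading of Lemma~\ref{lem:boundedk:ht:mht}, and recovers the interval condition for $\metricI{\Next}$ by contradiction with the constraints \eqref{def:htc:psi:one} and \eqref{def:htc:psi:two}, using that the induced timing function fixes the values of $\timet_\kvar$. The only nitpick is that the step deriving $\htcinterp\models\tk{\body}$ from trace satisfaction of the body needs the converse direction of your auxiliary lemma (the analogue of Lemma~\ref{lem:boundedk:mht:ht}, which the paper invokes noting it carries over to \HTC\ since only Boolean variables occur), not the direction you stated; that converse is equally routine.
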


\begin{proof}

    Let $\tmf$ be the timing function induced by $\htcinterp$.
    We will prove that $\htcToMht{\htcinterp}\models\program$
    by showing that it models each rule $r\in\program$.
    For this, first notice that
    $\htcinterp\models\Pi_\lambda(\program)$,
    so
    $\htcinterp\models \tk{r}$ for all  $\alwaysF r \in \program, 0 \leq \kvar < \lambda$.
    Since $\tk{r}=\tk{\openi{r}}$, then
    $\htcinterp\models \tk{\openi{r}}$,
    therefore we can apply Lemma~\ref{lem:boundedk:ht:mht}
    to get $\htcToMht{\htcinterp}, k\models \openi{r}$.
    Notice that Lemma~\ref{lem:boundedk:mht:ht} is defined for HT,
    but since it uses only boolean variables it holds the for \HTC\ (Observation 1 in \cite{cakaossc16a}).
Now we analyze the two cases for $r$
    and show that $\htcToMht{\htcinterp}, k\models {r}$,
    which would mean that $\htcToMht{\htcinterp}\models\program$.
    \casex{r = a_1\vee...\vee a_m \leftarrow \body}{
        Since there are no metric operators in $r$,
        then $\openi{r}=r$.
        Therefore, $\htcToMht{\htcinterp}, k\models {r}$.
    }
    \casex{r = \metricI{\Next} a \leftarrow \body}{
        If $\htcToMht{\htcinterp}, k \not \models \body$ then
        $\htcToMht{\htcinterp}, k\models {r}$ is trivially satisfied.
        Otherwise,
        we must show that $\htcToMht{\htcinterp}, k\models \metricI{\Next} a$.
        We know that $\htcToMht{\htcinterp}, k\models \Next_{\intervco{0}{\omega}} a$
        because $\htcinterp\models \tk{\openi{r}}$ and $\htcToMht{\htcinterp}, k \models \body$,
        so it is enough to show that $\tmf(k+1) -\tmf(k) \in \cI$
        to have the MHT semantics for $\metricI{\Next} a$.

        Assume towards a contradiction that $\tmf(k+1) -\tmf(k) \not \in \cI$,
        with $\cI=\intervco{m}{n}$.
Since there are no metric operators in \body, we have
        $\htcinterp \models \tk{\body}$ because of Lemma~\ref{lem:boundedk:mht:ht}.
If the reason for $\tmf(k+1) -\tmf(k) \not \in \cI$ is that $\tmf(k+1) - \tmf(k) < m$
        then
        $\timet_{k+1} - \timet_{k} < m$,
        since it is timed inducing $\tmf$.
        This is equivalent to
        $\timet_{k} - \timet_{k+1} > - m$
        so $\Vh \not \in \den{\timet_{k} - \timet_{k+1} \leq - m}$
        and $\Vh \models \neg (\timet_{k} - \timet_{k+1} \leq - m)$.
        Therefore,
        $\handt \models \bot$ because of rule \eqref{def:htc:psi:one},
        which is a contradiction.
If the reason is that $n\in\mathbb{N}$ and $\tmf(k+1) - \tmf(k) \geq n$,
        then we reach a contradiction
        by reasoning analogously and using rule \eqref{def:htc:psi:two}.
Therefore, $\tmf(k+1) -\tmf(k) \in \cI$.
    }

\end{proof}

\begin{lemma}[Completeness \HTC]
  \label{lem:completeness-htc}
  Let \program\ be a metric logic program
  and
  $\M=(\tuple{\Htrace,\Ttrace}, \tmf)$ a timed \HT-trace of length $\lambda$.

  If
  $\M\models\program$,
  then
  $\mhtToHtc{\M}\models\Pi_\lambda(\program)\cup\Delta^c_{\lambda}\cup\Psi^c_{\lambda}(\program)$.
\end{lemma}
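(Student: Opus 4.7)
The plan is to mirror the structure of the proof of Lemma~\ref{lem:completeness-ht}, replacing appeals to Boolean encodings of the timing function with their \HTC\ counterparts. Specifically, I will show separately that $\mhtToHtc{\M}$ satisfies each of the three components $\Pi_\lambda(\program)$, $\Delta^c_{\lambda}$, and $\Psi^c_{\lambda}(\program)$.

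First I would handle $\Pi_\lambda(\program)$. Since $\M\models\program$, Corollary~\ref{cor:mlp:ht:completeness-temp-old-normal} (or Proposition~\ref{prop:mel:next:relaxed} applied rule-wise) gives $\M\models\openi{\program}$, so $\M,k\models\openi{r}$ for each $\alwaysF r \in \program$ and $\rangeco{k}{0}{\lambda}$. Applying Lemma~\ref{lem:boundedk:mht:ht} yields $\mhtToHtc{\M}\models\tk{\openi{r}}$ for all such $k$ and $r$; note that while the lemma is phrased for \HT, it is applicable here because $\tk{\openi{r}}$ contains only Boolean atoms in $\alphabets$ and the valuations within $\mhtToHtc{\M}$ agree with the corresponding \HT\ interpretation on those atoms (see Observation 1 in~\citep{cakaossc16a}). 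Since intervals are irrelevant in $\tk{\cdot}$, we have $\tk{\openi{r}}=\tk{r}$, establishing $\mhtToHtc{\M}\models\Pi_\lambda(\program)$.

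Next, since $\mhtToHtc{\M}$ is timed wrt $\lambda$ by construction, Proposition~\ref{prop:htc:timed:delta} immediately delivers $\mhtToHtc{\M}\models\Delta^c_{\lambda}$. This is simpler than in the Boolean case because $\Delta^c_{\lambda}$ imposes no upper bound on the range of $\tmf$.

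The main remaining step, and the one I anticipate as the principal obstacle, is showing $\mhtToHtc{\M}\models\Psi^c_{\lambda}(\program)$. I will argue by contradiction: suppose some integrity constraint from \eqref{def:htc:psi:one} or~\eqref{def:htc:psi:two} is violated. Then there is a rule $\melrule{\metric{\next}{m}{n}a\leftarrow\body}\in\program$ and some $\rangeco{k}{0}{\lambda-1}$ with $\mhtToHtc{\M}\models\tk{\body}$ together with $\mhtToHtc{\M}\models\neg(\timet_{\kvar}-\timet_{\kvar+1}\leq -m)$ (resp.\ $\neg(\timet_{\kvar+1}-\timet_{\kvar}\leq n-1)$). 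By Lemma~\ref{lem:boundedk:mht:ht}, the first conjunct yields $\M,k\models\body$, and hence $\M,k\models\metric{\next}{m}{n}a$, so by the \MHT\ semantics $\tmf(k+1)-\tmf(k)\in\intervco{m}{n}$. The key technical point is translating the violated difference-constraint negations back into arithmetic statements on $\tmf$: since $\mhtToHtc{\M}$ is timed inducing $\tmf$, the valuation~$\Vh$ is defined on $\timet_k$ and $\timet_{k+1}$ with values $\tmf(k)$ and $\tmf(k+1)$, and the denotation clause for $\x-\y\leq d$ together with the semantics of negation in \HTC\ implies $\tmf(k+1)-\tmf(k)<m$ (resp.\ $\tmf(k+1)-\tmf(k)\geq n$), contradicting membership in $\intervco{m}{n}$. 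The delicate aspect here is correctly unfolding negation in \HTC: one must check both $\Vh\not\in\den{\cdot}$ and $\Vt\not\in\den{\cdot}$, but this is straightforward because $\mhtToHtc{\M}$ is of the form $\htcttinterp$ so $\Vh=\Vt$ on $\alphabetT^c$.
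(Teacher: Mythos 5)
Your proposal matches the paper's proof essentially step for step: the same three-part decomposition, namely $\Pi_\lambda(\program)$ via Corollary~\ref{cor:mlp:ht:completeness-temp-old-normal} and Lemma~\ref{lem:boundedk:mht:ht} lifted to \HTC\ through the Boolean-only observation of \citep{cakaossc16a}, $\Delta^c_{\lambda}$ via Proposition~\ref{prop:htc:timed:delta}, and $\Psi^c_{\lambda}(\program)$ by contradiction combining the \MHT\ semantics of $\metric{\next}{m}{n}$ with the denotation of difference constraints and the fact that $\Vh=\Vt$ on the timing variables. One small correction: the step from $\mhtToHtc{\M}\models\tk{\body}$ to $\M,k\models\body$ needs the HT-to-\MHT\ direction, i.e.\ Lemma~\ref{lem:boundedk:ht:mht} (which is what the paper invokes there), not Lemma~\ref{lem:boundedk:mht:ht}.
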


\begin{proof}
    We prove that $\mhtToHtc{\M}$ models each part of the translation.

    We know
    $\M\models\program$
    so due to Corollary~\ref{cor:mlp:ht:completeness-temp-old-normal}
    \footnote{
        Notice that Corollary~\ref{cor:mlp:ht:completeness-temp-old-normal} is defined for HT,
        but since it uses only boolean variables it holds the for \HTC\ (Observation 1 in \cite{cakaossc16a})
    },
    we have that $\M\models\openi{\program}$.
    With the Definition for $\openi{\program}$, get
    $\M\models\openi{\alwaysF r} \;\; \text{for all } \alwaysF r \in \program$,
    which means that
    $\M,k\models\alwaysF \openi{r} \;\; \text{for all } \alwaysF r \in \program,0 \leq \kvar < \lambda$.
    With Lemma~\ref{lem:boundedk:mht:ht}
    \footnote{Notice that Lemma~\ref{lem:boundedk:mht:ht} is defined for HT,
        but since it uses only boolean variables it holds the for \HTC\ (Observation 1 in \cite{cakaossc16a}).},
    we have that
    $\mhtToHtc{\M} \models \tk{\openi{r}} \;\; \text{for all } \alwaysF r \in \program, 0 \leq \kvar < \lambda$.
    Since intervals are irrelevant in $\tk{r}$, we then know
    $\tk{\openi{r}}=\tk{r}$ and therefore
    $\mhtToHtc{\M} \models \tk{r} \;\; \text{for all } \alwaysF r \in \program, 0 \leq \kvar < \lambda$,
    which by definition is $\Pi_\lambda(\program)$.

    $\mhtToHtc{\M}$ is timed inducing $\tmf$ by construction,
    so it is enough to apply Proposition~\ref{prop:htc:timed:delta} to show that
    $\mhtToHtc{\M}\models\Delta^c_{\lambda}$.

    Assume towards a contradiction that
    $\mhtToHtc{\M}\not\models \Psi^c_{\lambda}(\program)$.
    Then, there is a rule
    $r\in\Psi^c_{\lambda}(\program)$ such that $\mhtToHtc{\M}\not\models r$.
    Without loss of generality, let $r$ come from equation \eqref{def:htc:psi:one},
    where the case for \eqref{def:htc:psi:two} is analogous.
    Then, $r = \bot \leftarrow \tk{\body} \wedge \neg (\timet_{\kvar}-\timet_{\kvar+1} \leq -m)$
    for some $0 \leq \kvar < \lambda-1$
    and
    $\mhtToHtc{\M}\models \tk{\body} \wedge \neg (\timet_{\kvar}-\timet_{\kvar+1} \leq -m)$,
    where $\melrule{\metric{\Next}{m}{n} a \leftarrow \body}\in \program$.
    Given that $\mhtToHtc{\M}\models \tk{\body}$, with Lemma~\ref{lem:boundedk:ht:mht}
    we have $\M,k\models \body$.
    Which means that $\M,k\models \metric{\Next}{m}{n} a$ since $\M\models\program$.
    With MHT semantics, we have $\M,k+1\models a$ and $\tmf(k+1)-\tmf(k)\in \intervco{m}{n}$.
Given that $\M$ is timed inducing $\tmf$, we have $\tmf(k)=\timet_k$ and $\tmf(k+1)=\timet_k$,
    so $\timet_{k+1}-\timet_{k}\geq m$,
    which is equivalent to $\timet_{k}-\timet_{k+1} \leq -m$.
    Then $\Vh\in\den{\timet_{\kvar}-\timet_{\kvar+1} \leq -m}$
    and
    $\mhtToHtc{\M}\models (\timet_{\kvar}-\timet_{\kvar+1} \leq -m)$,
    which is a contradiction,
    since $\mhtToHtc{\M}\not\models (\timet_{\kvar}-\timet_{\kvar+1} \leq -m)$.

\end{proof}
  \setpropcounter{pro:htc:defined}
\begin{proposition}
    Let $\htcinterp$ be an \HTC\ interpretation and $\lambda \in \mathbb{N}$

    If $\htcinterp$ is an \HTC-model of $\Delta^{c}_{\lambda}$, then $\Vh(\timet_{\kvar})\in\mathbb{N}$ for all $0\leq\kvar<\lambda$.
\end{proposition}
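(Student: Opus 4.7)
The plan is to directly unwind the denotations of the two constraint atoms appearing in $\Delta^c_\lambda$, exploiting the fact that each of the denotations
\[
  \den{\x=\tmvar} \;\text{ and }\; \den{\x-\y\leq \tmvar}
\]
explicitly requires the involved variables to take values in $\mathbb{N}$ for the valuation to belong to the denotation. Consequently, every time an \HTC\ interpretation satisfies such a constraint atom via condition~\ref{sat:htc:atom} of the \HTC\ semantics, the relevant $\Vh$-values are automatically integer-valued (rather than $\undefined$).

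More concretely, I would split the argument on $\kvar$. For the base case $\kvar=0$, the atom $\timet_0 = 0$ belongs to $\Delta^c_\lambda$, so $\htcinterp \models \timet_0 = 0$ gives $\Vh \in \den{\timet_0 = 0}$, which by definition forces $\Vh(\timet_0)=0$, and in particular $\Vh(\timet_0)\in\mathbb{N}$. For the step $1 \leq \kvar < \lambda$, I would appeal to the difference constraint $\timet_{\kvar-1} - \timet_{\kvar} \leq -1$, which is a member of $\Delta^c_\lambda$ precisely when $0 \leq \kvar - 1 < \lambda - 1$, i.e.\ for every such $\kvar$. Satisfaction yields $\Vh \in \den{\timet_{\kvar-1} - \timet_\kvar \leq -1}$, and the very definition of this denotation requires $\Vh(\timet_{\kvar-1}),\Vh(\timet_\kvar)\in\mathbb{N}$, giving the conclusion.

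There is no real obstacle; the result is essentially immediate once one notices that the denotations for equality and difference constraints are \emph{strict} in the sense that they are empty whenever any involved variable is undefined. The only small subtlety is making sure the indexing covers the entire range $\{0,\dots,\lambda-1\}$: the atom $\timet_0=0$ covers $\kvar=0$, whereas the difference constraints, indexed by $0 \leq \kvar < \lambda-1$, cover every successor index $\kvar+1$ up to $\lambda-1$, so together they reach all required positions. The degenerate case $\lambda=1$ is subsumed, since only $\timet_0=0$ is needed.
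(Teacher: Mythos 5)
Your proof is correct and follows essentially the same route as the paper: a case split on $\kvar$, using the atom $\timet_0=0$ for $\kvar=0$ and the difference constraint $\timet_{\kvar-1}-\timet_{\kvar}\leq -1$ for $1\leq\kvar<\lambda$, with the conclusion in each case coming from the fact that membership of $\Vh$ in the respective denotation requires the involved variables to take values in $\mathbb{N}$. Your extra remarks on index coverage and the $\lambda=1$ case are fine but not needed beyond what the paper states.
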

\resetpropcounter

\begin{proof} Case analysis on $k$

    \casex{k=0}{
        Since $\timet_0=0 \in \Delta^{c}_{\lambda}$
        then $\Vh\in\den{\timet_0=0}$.
        Therefore, $\Vh(\timet_0)\in\mathbb{N}$,
        since it is a condition of the denotation.
    }

    \casex{0<k<\lambda}{
        Since $\timet_{\kvar-1}-\timet_{\kvar} \leq -1 \in \Delta^{c}_{\lambda}$,
        then $\Vh\in\den{\timet_{\kvar-1}-\timet_{\kvar} \leq -1}$.
        Therefore, $\Vh(\timet_{\kvar})\in\mathbb{N}$
        since it is a condition of the denotation.
    }

\end{proof}

\setpropcounter{prop:htc:delta:timed}
\begin{proposition}
    Let $\htcinterp$ be an \HTC\ interpretation
    and
    $\lambda\in\mathbb{N}$.

    If $\htcinterp$ is an \HTC-model of $\Delta^{c}_{\lambda}$
    then $\htcinterp$ is timed wrt $\lambda$.
\end{proposition}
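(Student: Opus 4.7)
The plan is to exhibit an explicit timing function $\tau$ built directly from $h$ and then verify the two clauses of the definition of ``timed wrt $\lambda$''. The main technical observation needed beforehand is that $h(\timet_{\kvar})$ is always a natural number, which is exactly the content of Proposition~\ref{pro:htc:defined}; once definedness is in hand, the rest is an unpacking of denotations together with the inclusion $h\subseteq t$.

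First, I invoke Proposition~\ref{pro:htc:defined} to conclude that $h(\timet_{\kvar})\in\mathbb{N}$ for all $\rangeco{\kvar}{0}{\lambda}$. Since $\htcinterp$ is an \HTC\ interpretation, $h\subseteq t$, and by the convention that only pairs $(x,d)$ with $d\in\mathcal{D}$ belong to a valuation, we obtain $t(\timet_{\kvar})=h(\timet_{\kvar})\in\mathbb{N}$ as well. Now define
\[
  \tau:\intervco{0}{\lambda}\to\mathbb{N},\qquad \tau(\kvar)\eqdef h(\timet_{\kvar}) .
\]

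Next, I check that $\tau$ is a strict timing function wrt $\lambda$. From $\htcinterp\models \timet_{0}=0$ we have $h\in\den{\timet_{0}=0}$, hence $\tau(0)=h(\timet_{0})=0$. For every $0\leq \kvar<\lambda-1$, the assumption $\htcinterp\models \timet_{\kvar}-\timet_{\kvar+1}\leq -1$ yields $h\in\den{\timet_{\kvar}-\timet_{\kvar+1}\leq -1}$, whose denotation demands $h(\timet_{\kvar}),h(\timet_{\kvar+1})\in\mathbb{N}$ and $h(\timet_{\kvar})-h(\timet_{\kvar+1})\leq -1$, i.e.\ $\tau(\kvar)<\tau(\kvar+1)$. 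So $\tau$ meets both conditions required of a (strict) timing function.

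Finally, the equalities $h(\timet_{\kvar})=\tau(\kvar)$ and $t(\timet_{\kvar})=\tau(\kvar)$ hold for all $0\leq \kvar<\lambda$ by construction of $\tau$ and the observation above that $t(\timet_{\kvar})=h(\timet_{\kvar})$. By the definition of ``timed wrt $\lambda$'', $\htcinterp$ is timed wrt $\lambda$ via $\tau$. I do not anticipate any real obstacle here; the only subtlety is recognising that the $h\subseteq t$ clause plus the fact that $h(\timet_{\kvar})$ is defined forces $t(\timet_{\kvar})$ to take the same value, so a single timing function works for both components simultaneously.
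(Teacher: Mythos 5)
Your proposal is correct and follows essentially the same route as the paper's proof: define $\tmf(\kvar)\eqdef\Vh(\timet_{\kvar})$ and read off $\tmf(0)=0$ and strict monotonicity directly from the denotations of $\timet_0=0$ and $\timet_{\kvar}-\timet_{\kvar+1}\leq-1$. You are merely a bit more explicit than the paper about definedness (via Proposition~\ref{pro:htc:defined}) and about why $\Vh\subseteq\Vt$ forces $\Vt(\timet_{\kvar})=\Vh(\timet_{\kvar})$, which the paper delegates to its remark following that proposition.
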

\resetpropcounter

\begin{proof}
    Let us construct the induced timing function $\tmf$ such that $\tmf(k)=\Vh(\timet_k)$  for $0\leq k\leq\lambda$.
    We prove that $\tmf$ is a timing function by showing that it is strictly increasing.
    Notice that the functional nature of $\timet_k$ is already given by the \HTC\ semantics,
    unlike for the \HT\ approach.
First, $\tmf(0) = 0$ is implied by the denotation since $\htcinterp\models\timet_0=0$.
To show that it is increasing,
    let $k\in\intervco{0}{\lambda-1}$.
    We have that $\htcinterp\models\timet_{\kvar}-\timet_{\kvar+1} \leq -1$.
    This means that $\Vh(\timet_k)<\Vh(\timet_{k+1})$ by the denotation semantics.
    Therefore $\tmf(\kvar)<\tmf(\kvar+1)$ by construction.

  \end{proof}
\setpropcounter{prop:htc:timed:delta}
\begin{proposition}
    Let $\htcinterp$ be an \HTC\ interpretation
    and
    $\lambda\in\mathbb{N}$.

    If $\htcinterp$ is timed wrt $\lambda$
    then $\htcinterp$ is an \HTC-model of $\Delta^{c}_{\lambda}$.
\end{proposition}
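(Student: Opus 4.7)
The plan is to unpack the definitions of ``timed'' and of $\Delta^{c}_{\lambda}$ and then show, atom by atom, that every constraint in $\Delta^{c}_{\lambda}$ lies in the denotation evaluated at $\Vh$ (and at $\Vt$, which follows since $\Vh\subseteq\Vt$ and the denotations are strict). Since $\Delta^{c}_{\lambda}$ consists only of constraint atoms (no implications, conjunctions or disjunctions), verifying satisfaction reduces to checking condition~\ref{sat:htc:atom} of the \HTC\ satisfaction relation for each atom.

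First I would fix a timing function $\tmf$ witnessing that $\htcinterp$ is timed wrt $\lambda$, so that by definition $\Vh(\timet_\kvar)=\Vt(\timet_\kvar)=\tmf(\kvar)$ for all $0\leq\kvar<\lambda$. For the unique equality constraint $\timet_0=0$ in $\Delta^{c}_{\lambda}$, the fact that $\tmf$ is a timing function gives $\tmf(0)=0$, hence $\Vh(\timet_0)=0\in\mathbb{N}$, so $\Vh\in\den{\timet_0=0}$ and thus $\htcinterp\models\timet_0=0$.

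Next, for each $0\leq\kvar<\lambda-1$ I would check the difference constraint $\timet_\kvar-\timet_{\kvar+1}\leq -1$. The strictness of $\tmf$ gives $\tmf(\kvar)<\tmf(\kvar+1)$, and since both values are natural numbers we obtain $\tmf(\kvar)-\tmf(\kvar+1)\leq -1$. Using $\Vh(\timet_\kvar)=\tmf(\kvar)$ and $\Vh(\timet_{\kvar+1})=\tmf(\kvar+1)$, this yields $\Vh\in\den{\timet_\kvar-\timet_{\kvar+1}\leq -1}$, so $\htcinterp\models\timet_\kvar-\timet_{\kvar+1}\leq -1$. Collecting both cases gives $\htcinterp\models\Delta^{c}_{\lambda}$.

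There is no real obstacle here; the result is the easy direction complementing Proposition~\ref{prop:htc:delta:timed}. The only small point worth stating explicitly is that the denotations $\den{\timet_0=0}$ and $\den{\timet_\kvar-\timet_{\kvar+1}\leq -1}$ require the relevant variables to take natural-number values, which is automatic here because a timing function has codomain $\mathbb{N}$. Unlike Proposition~\ref{prop:ht:delta:timed} in the Boolean setting, no minimality argument is needed, which is why the statement does not ask for an equilibrium model.
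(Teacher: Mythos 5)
Your proposal is correct and follows essentially the same route as the paper's proof: it fixes the induced timing function and verifies, constraint by constraint, that $\Vh$ (and hence $\Vt$) satisfies $\timet_0=0$ and each $\timet_\kvar-\timet_{\kvar+1}\leq -1$ via the denotation semantics, using $\tmf(0)=0$ and the strict increase of $\tmf$. The extra remark about the natural-number conditions in the denotations being automatic is a fine clarification but not a departure from the paper's argument.
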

\resetpropcounter

\begin{proof}
    Let $\tmf$ be the timing function induced by $\htcinterp$.
    We prove that $\htcinterp$ models each rule in $\Delta^{c}_{\lambda}$.
Since $\tmf(0)=0$ then $\Vh(\timet_0)=0$, because $\tmf$ was induced by $\htcinterp$.
    Therefore, $\htcinterp\models\timet_0=0$.
Let $0\leq\kvar<\lambda-1$.
    Since $\tmf$ is a timing function, $\tmf(k)<\tmf(k+1)$.
    This means that $\Vh(\timet_{k})<\Vh(\timet_{k+1})$.
    Therefore, $\htcinterp\models\timet_{\kvar}-\timet_{\kvar+1} \leq -1$.
\end{proof}

\setcounter{theorem}{\getrefnumber{thm:mlp:htc:completeness}}
\addtocounter{theorem}{-1}
\begin{theorem}[Completeness]
    Let \program\ be a plain metric logic program
    and
    $\M=(\tuple{\Ttrace,\Ttrace}, \tmf)$ a total timed \HT-trace of length $\lambda$.

    If
    $\M$ is a metric equilibrium model of $\program$,
    then
    $\mhtToHtc{\M}$ is a constraint equilibrium model of $\Pi_\lambda(\program)\cup\Delta^c_{\lambda}\cup\Psi^c_{\lambda}(\program)$.
\end{theorem}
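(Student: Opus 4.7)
The plan is to mirror the completeness proof for the \HT\ target (Theorem~\ref{thm:mlp:ht:completeness}), adapting each step to the \HTC\ setting. First, I would establish satisfaction: since $\M$ is in particular an \MHT-model of $\program$, Lemma~\ref{lem:completeness-htc} immediately yields that $\mhtToHtc{\M}$ is an \HTC-model of $\Pi_\lambda(\program)\cup\Delta^c_\lambda\cup\Psi^c_\lambda(\program)$. Observe that $\mhtToHtc{\M}$ is total by construction, since $\M$ is total and the integer variables $\timet_k$ are assigned the fixed value $\tmf(k)$ in both components of the pair.

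Next, I would establish minimality by contradiction. Suppose there is an \HTC-interpretation $\htcinterp$ with $\Vh\subsetneq\Vt$ that models $\Pi_\lambda(\program)\cup\Delta^c_\lambda\cup\Psi^c_\lambda(\program)$, where $\Vt$ coincides with the total valuation of $\mhtToHtc{\M}$. Pick some pair $(x,v)\in \Vt\setminus \Vh$ and split on whether $x\in\alphabets$ or $x\in\alphabetT^c$. For $x\in\alphabetT^c$, i.e.\ some integer time variable $\timet_k$, I would restrict to $\htcinterp|_{\alphabetT^c}$ and invoke Proposition~\ref{prop:htc:timed:delta:equilibrium}: since $\mhtToHtc{\M}|_{\alphabetT^c}$ is a constraint equilibrium model of $\Delta^c_\lambda$, no strictly smaller valuation on the time alphabet can satisfy $\Delta^c_\lambda$, contradicting $\htcinterp\models\Delta^c_\lambda$. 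This is where Proposition~\ref{pro:htc:defined} quietly helps: it ensures no $\timet_k$ is undefined in any model of $\Delta^c_\lambda$, so the only way to differ below is to omit some $(\timet_k,d)$, which kills either the $\timet_0=0$ atom or a difference constraint $\timet_{k-1}-\timet_k\leq -1$ in $\Vh$.

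For the case $x\in\alphabets$, i.e.\ some Boolean atom $a_k$, I would apply Lemma~\ref{lem:splitting:plain:htc} to separate the time fragment from the rest: since $\Delta^c_\lambda$ involves only atoms in $\alphabetT^c$, and $\Psi^c_\lambda(\program)$ consists of integrity constraints, splitting on $\alphabetT^c$ shows that $\htcinterp$ remains an \HTC-model of $\Pi_\lambda(\program)\cup\Psi^c_\lambda(\program)$, and that $\htcinterp$ is timed wrt $\lambda$ inducing the same timing function $\tmf$ as $\mhtToHtc{\M}$. Then Lemma~\ref{lem:correctness-htc} yields that $\htcToMht{\htcinterp}\models\program$. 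By construction of $\htcToMht{\cdot}$ the resulting timed \HT-trace has the form $(\tuple{\Htrace',\Ttrace},\tmf)$ with $\Htrace'<\Ttrace$, because $a_k\in\Vt$ yet $a_k\notin\Vh$ translates to $a\in T_k\setminus H'_k$. This contradicts $\M$ being a metric equilibrium model of $\program$.

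The main obstacle I anticipate is the book-keeping of the interaction between the Boolean and integer fragments under \HTC\ semantics, in particular justifying that one may ``shave'' a single pair from $\Vh$ on the time side without breaking the model property, and that the splitting between $\alphabets$ and $\alphabetT^c$ is genuinely sound for constraint equilibrium (not merely for satisfaction). Once these two observations are pinned down via Proposition~\ref{prop:htc:timed:delta:equilibrium} and Lemma~\ref{lem:splitting:plain:htc}, the rest reduces to a clean reuse of the correctness lemma, with the verification that $\htcToMht{\cdot}$ respects the induced timing function being straightforward from its definition.
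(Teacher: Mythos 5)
Your proposal is correct and follows essentially the same route as the paper's proof: satisfaction via the \HTC\ completeness lemma, totality by construction, and minimality by contradiction with a case split in which a dropped time variable contradicts Proposition~\ref{pro:htc:defined} and a dropped Boolean atom yields, via Lemma~\ref{lem:correctness-htc}, a strictly smaller \MHT-model of \program\ contradicting that $\M$ is in equilibrium. Your detours through Lemma~\ref{lem:splitting:plain:htc} and Proposition~\ref{prop:htc:timed:delta:equilibrium} are harmless but unnecessary: the paper handles the time case directly with Proposition~\ref{pro:htc:defined}, and timedness of the candidate smaller interpretation already follows from its modelling $\Delta^c_{\lambda}$.
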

\begin{proof}
    We first prove that it models the program by applying Lemma~\ref{lem:completeness-ht}.
    Now we prove that it is in equilibrium.
    By construction $\mhtToHtc{\M}=\htcttinterp$ is total
    since $\M$ is total.
Assume towards a contradiction that
    there is $\htcinterp$ such that $\Vh\subset \Vt$ and $\htcinterp\models\Pi_\lambda(\program)\cup\Delta^c_{\lambda}\cup\Psi^c_{\lambda}(\program)$.
    Then, there is some $(x,v)$ such that $(x,v)\in\Vt$ and $(x,v)\not\in\Vh$.
    We analyze the two cases for $x$.
    \casex{x=\timet_k}{
        If $(\timet_k,v)\not\in\Vh$ then there is no assignment for $\timet_k$
        because $\Vh\subset\Vt$.
        This contradict with Proposition~\ref{pro:htc:defined} since $\htcinterp\models\Delta^c_{\lambda}$
    }
    \casex{x=a_k \in \alphabets}{
We apply Lemma~\ref{lem:correctness-htc} and get $\htToMht{\htcinterp}\models\program$,
        where
        $\htToMht{\htcinterp}=(\tuple{\Htrace,\Ttrace}, \tmf)$.
        Notice that $\Ttrace$ and $\tmf$ are the same as in $\M$ since $\htcinterp$ induces $\tmf$
        and that $\Htrace<\Ttrace$
        since $a\in T_k$ but $a\not\in H_k$.
        This is a contradiction since $\M$ is in equilibrium.

    }

\end{proof}

\setcounter{theorem}{\getrefnumber{thm:mlp:htc:correctness}}
\addtocounter{theorem}{-1}
\begin{theorem}[Correctness]
    Let
    \program\ be a plain metric logic program,
    and
    $\lambda \in \mathbb{N}$.

    If
    $\htcttinterp$ is a constraint equilibrium model of $\Pi_\lambda(\program)\cup\Delta^c_{\lambda}\cup\Psi^c_{\lambda}(\program)$,
    then
    $\htcToMht{\htcttinterp}$ is a metric equilibrium model of $\program$.
\end{theorem}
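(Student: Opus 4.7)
The plan is to mirror the structure of the proof of Theorem~\ref{thm:mlp:ht:correctness}, substituting the \HTC\ machinery for its Boolean counterpart throughout. Since $\htcttinterp$ is a constraint equilibrium model, it is in particular an \HTC-model of $\Pi_\lambda(\program)\cup\Delta^c_{\lambda}\cup\Psi^c_{\lambda}(\program)$. First, I would split along $\alphabetT^c=\{\timet_k\mid 0\leq k<\lambda\}$ using Lemma~\ref{lem:splitting:plain:htc}: the axioms in $\Delta^c_{\lambda}$ only mention integer variables $\timet_k$, the rules in $\Pi_\lambda(\program)$ only mention Boolean variables in $\alphabets$, and $\Psi^c_{\lambda}(\program)$ consists solely of integrity constraints. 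From the splitting, $\htcttinterp|_{\alphabetT^c}$ is itself a constraint equilibrium model of $\Delta^c_{\lambda}$, hence by Proposition~\ref{prop:htc:delta:timed} it is timed wrt $\lambda$ and induces some timing function $\tmf$. Consequently, $\htcttinterp$ (extended trivially on $\alphabetT^c$) is timed wrt $\lambda$ with the same $\tmf$, so $\htcToMht{\htcttinterp}$ is well defined and, by Lemma~\ref{lem:correctness-htc}, satisfies $\program$.

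It remains to prove the equilibrium property of $\htcToMht{\htcttinterp}=(\tuple{\Ttrace,\Ttrace},\tmf)$. By construction, this timed \HT-trace is total. Suppose, for contradiction, there exists a timed \HT-trace $\M'=(\tuple{\Htrace',\Ttrace},\tmf)$ with $\Htrace'<\Ttrace$ and $\M'\models\program$. Then there is some $a\in\alphabet$ and $0\leq k<\lambda$ with $a\in T_k\setminus H'_k$. Applying Lemma~\ref{lem:completeness-htc} to $\M'$, we get that $\mhtToHtc{\M'}$ is an \HTC-model of $\Pi_\lambda(\program)\cup\Delta^c_{\lambda}\cup\Psi^c_{\lambda}(\program)$. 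By construction, the $t$-component of $\mhtToHtc{\M'}$ coincides with that of $\htcttinterp$ (both encode $\Ttrace$ and $\tmf$), while its $h$-component omits the pair $(a_k,\true)$ that occurs in the $h$-component of $\htcttinterp$. Hence $\mhtToHtc{\M'}$ is strictly smaller than $\htcttinterp$, contradicting the equilibrium hypothesis on $\htcttinterp$.

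The main obstacle is ensuring that the splitting argument cleanly yields that $\htcttinterp|_{\alphabetT^c}$ is a constraint equilibrium model of $\Delta^c_{\lambda}$: one must verify that the $\Psi^c_{\lambda}(\program)$-constraints, which mix Boolean and integer variables, do not interfere with the minimality argument on the integer part. Here the fact that $\Psi^c_{\lambda}(\program)$ contains only integrity constraints (whose reduction in the splitting step yields again integrity constraints over $\alphabets\cup\alphabetT^c$) is crucial, and it is what allows the same splitting pattern used in Theorem~\ref{thm:mlp:ht:correctness} to go through in the \HTC\ setting. A second delicate point is that one must invoke Proposition~\ref{pro:htc:defined} to guarantee that $\Vh(\timet_k)\in\mathbb{N}$ so that the induced $\tmf$ of the counter-model $\mhtToHtc{\M'}$ is indeed well-defined and agrees with that of $\htcttinterp$; once this is in place, the inclusion $\mhtToHtc{\M'}\subset\htcttinterp$ follows immediately from the construction of $\mhtToHtc{\cdot}$.
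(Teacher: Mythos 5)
Your proposal is correct, and its second half (satisfaction via Lemma~\ref{lem:correctness-htc}, then refuting a smaller trace $\M'$ by mapping it through Lemma~\ref{lem:completeness-htc} and contradicting constraint equilibrium) is exactly the paper's argument. The one place where you genuinely diverge is in how you establish that $\htcttinterp$ is timed: you import the splitting machinery from the \HT\ proof (Lemma~\ref{lem:splitting:plain:htc}) to argue that $\htcttinterp|_{\alphabetT^c}$ is a constraint equilibrium model of $\Delta^c_{\lambda}$ and only then conclude timedness. This works, but it is heavier than necessary: Proposition~\ref{prop:htc:delta:timed} already yields timedness from mere \HTC-model-hood of $\Delta^c_{\lambda}$ — no minimality and hence no splitting is needed — because the constraint semantics (together with Proposition~\ref{pro:htc:defined}) pin down the values of the integer variables $\timet_k$ directly. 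The paper points this out explicitly ("Unlike Proposition~\ref{prop:ht:delta:timed}, the latter refrain from requiring \HTC-interpretations in equilibrium"), and this is precisely what makes the \HTC\ correctness proof shorter than its \HT\ counterpart, where the analogous timedness claim really does require an equilibrium model of $\Delta_{\lambda,\tmflimit}$ and therefore a splitting step. So your route buys nothing extra here, while the paper's avoids having to argue that the mixed Boolean/integer constraints in $\Psi^c_{\lambda}(\program)$ do not disturb the split — the "main obstacle" you flag simply does not arise in the paper's proof.
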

\begin{proof}
    We can prove that $\htcToMht{\htcttinterp}\models\program$ by applying Lemma~\ref{lem:correctness-htc}.
    For this we only need to show  that $\htcttinterp$ is timed, which follows from Proposition~\ref{prop:htc:delta:timed},
    given that $\htcttinterp\models\Delta^c_{\lambda}$.
We it remains to prove that $\htcToMht{\htcttinterp}$ is an equilibrium model.
    Let $\htcToMht{\htcttinterp} = (\tuple{T_\kvar,T_\kvar}_{\rangeco{\kvar}{0}{\lambda}}, \tmf)$,
    and lets assume towards a contradiction that $\htcToMht{\htcttinterp}$ is not minimal.
    There is some metric trace $\M'=(\tuple{H_\kvar,T_\kvar}_{\rangeco{\kvar}{0}{\lambda}}, \tmf)$
    such that $\M'\models\program$ and $H_\kvar\subset T_\kvar$ for some $\rangeco{\kvar}{0}{\lambda}$.
    This means, that there is $a\in\alphabet$ such that $a\in T_\kvar$ and $a\not\in H_\kvar$,
    so $(a_k,\true)\in \Vt$ and $(a_k,\true)\not\in \Vh$ where $\mhtToHt{\M'}=\htcinterp$.
    By applying Lemma~\ref{lem:completeness-ht}
    we get that $\htcinterp\models\Pi_\lambda(\program)\cup\Delta^c_{\lambda}\cup\Psi^c_{\lambda}(\program)$.
    This is a contradiction since $\htcttinterp$ is a constraint equilibrium model.

\end{proof}

\subsection{General Metric Logic Programs}
\begin{definition}[Interval alphabet]
    \label{def:interval-alphabe}
    We define the interval alphabet
    \begin{align}
        \alphabetI \eqdef \{\faili{\cI}{k}{j} \mid \cI \in \allI, k,j \in \mathbb{N}, k\leq j\}
    \end{align}
\end{definition}

\begin{definition}[Auxiliary alphabet]
    \label{def:auxiliary-alphabet-mlp-to-ht}
    Given a metric logic program $\program$,
    we define the alphabet
    \begin{align}
        \alphabetaux \eqdef \{\laux{\matom}{\kvar} \mid \matom \in \program, \kvar\in\mathbb{N}\}\cup
        \{\ljaux{\metricI\odot b}{\kvar}{j} \mid \odot \in \{\eventuallyF, \alwaysF\}, \metricI\odot b \in \program, \kvar,j\in\mathbb{N}\}
    \end{align}
\end{definition}

\begin{proposition}[Equivalence of auxiliary atoms]
    \label{prop:eq:eta:aux}
    Given $\lambda\in\mathbb{N}$,
    a (simple) metric formula $\mu$ over $\alphabet$,
    an \HT\ interpretation $\handt$
    with its corresponding three-valued interpretation $\trivalI$
    and
    a timed trace $\M=(\tuple{\Htrace,\Ttrace}, \tmf)$ of length $\lambda$
    with its corresponding metric three-valued interpretation $\trivalIm$,
    such that the following propositions hold:
    \begin{enumerate}
        \item[e1.] $\trivalI(a_k)=\trivalIm(k,a)$ for all $\kinlambda$, $a\in\alphabet$
        \item[e2.] $\trivalI(\faili{\cI}{\kvar}{j})=2$ iff $\tmf(j)-\tmf(\kvar) \not\in \cI$ for all $\kinlambda$, $\cI\in\allI$
        \item[e3.] $\trivalI(\faili{\cI}{\kvar}{j})=0$ iff $\tmf(j)-\tmf(\kvar) \in \cI$ for all $\kinlambda$, $\cI\in\allI$
        \item[e4.] if $\matom=\metricI{\alwaysF}b$ then $\trivalI(\ljaux{\matom}{k}{j})=\max\{\trivalI(\laux{b}{j}),\trivalI(\faili{\cI}{\kvar}{j})\}$
              for all $\kinlambda$, $\rangeco{j}{k}{\lambda}$
        \item[e5.] if $\matom=\metricI{\eventuallyF}b$ then $\trivalI(\ljaux{\matom}{k}{j})=\min\{\trivalI(\laux{b}{j}),\trivalI(\neg\faili{\cI}{\kvar}{j})\}$
              for all $\kinlambda$, $\rangeco{j}{k}{\lambda}$
    \end{enumerate}

    Then, the following holds for any \kinlambda:

    \begin{align}
        \trivalI(\eta_k(\matom))=2 \text{ iff }
        \trivalI(\laux{\matom}{k})=\trivalIm(k,\matom)
    \end{align}
\end{proposition}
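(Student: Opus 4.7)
The plan is to perform a case analysis on the structure of $\mu$, exploiting the fact that each $\eta_k(\mu)$ in Table~\ref{tab:tseitin:metric:temporal} is a (conjunction of) biconditional(s) of the form $\laux{\mu}{k} \leftrightarrow \phi_{k,\mu}$, where $\phi_{k,\mu}$ depends on the shape of $\mu$. By the three-valued semantics of $\leftrightarrow$ (Definition~\ref{def:three-valued-extension-ht}), $\trivalI(\alpha \leftrightarrow \beta) = 2$ iff $\trivalI(\alpha) = \trivalI(\beta)$. Hence the claim reduces to showing $\trivalI(\phi_{k,\mu}) = \trivalIm(k,\mu)$ in each case, and then rechaining the biconditional gives the desired iff.

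For the Boolean and state-selector cases $\mu \in \{\bot, \top, a, \initially, \finally\}$, the claim follows immediately from the three-valued semantics of $\bot$ and $\top$, the case splits on $k$ recorded in the middle column of Table~\ref{tab:tseitin:metric:temporal}, the definition of $\trivalIm$ in Definition~\ref{def:three-valued-semantics-mht}, and hypothesis (e1) for the atomic case. The case $\finally$ is handled by unfolding its abbreviation $\neg\metric{\next}{0}{\omega}\top$ and observing that $\tmf(k{+}1)-\tmf(k)\in\intervco{0}{\omega}$ whenever $k{+}1<\lambda$. For $\mu = \metricI{\Next} b$ we split on whether $k = \lambda-1$ (both sides evaluate to $0$ by definition) or $k < \lambda-1$. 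In the latter case $\trivalI(\phi_{k,\mu}) = \min\{\trivalI(\laux{b}{k+1}),\trivalI(\neg\faili{\cI}{k}{k+1})\}$, while $\trivalIm(k,\mu) = 0$ if $\tmf(k{+}1)-\tmf(k) \notin \cI$ and $\trivalIm(k{+}1,b)$ otherwise. Hypotheses (e2) and (e3) pin $\trivalI(\neg\faili{\cI}{k}{k+1})$ to exactly $0$ or $2$ in those two sub-cases, and since $b$ is an atom or Boolean constant the identity $\trivalI(\laux{b}{k+1}) = \trivalIm(k{+}1,b)$ follows from (e1) together with the already-proved atomic case.

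For $\mu\in\{\metricI{\alwaysF} b,\metricI{\eventuallyF} b\}$, the set $\eta_k(\mu)$ consists of two biconditionals: one defining $\laux{\mu}{k}$ as the aggregate (conjunction for $\alwaysF$, disjunction for $\eventuallyF$) of the witness atoms $\ljaux{\mu}{k}{j}$, and one defining each witness. Hypotheses (e4)/(e5) supply the witness-defining biconditional outright, so $\trivalI(\eta_k(\mu))=2$ iff the outer biconditional holds, i.e.\ iff $\trivalI(\laux{\mu}{k})$ equals $\min$ or $\max$ over $j\in\intervco{k}{\lambda}$ of $\trivalI(\ljaux{\mu}{k}{j})$. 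Substituting (e4)/(e5) into that aggregate and then applying (e2)/(e3) to resolve the $\trivalI(\faili{\cI}{k}{j})$ components, the out-of-interval indices contribute the neutral element ($2$ inside the outer $\min$ for $\alwaysF$, $0$ inside the outer $\max$ for $\eventuallyF$), while the in-interval indices contribute exactly $\trivalI(\laux{b}{j})$. Using the atomic case once more to identify $\trivalI(\laux{b}{j})$ with $\trivalIm(j,b)$, this matches the definition of $\trivalIm(k,\metricI{\alwaysF}b)$ and $\trivalIm(k,\metricI{\eventuallyF}b)$ in Definition~\ref{def:three-valued-semantics-mht}, including the default values $2$ and $0$ that arise when no $j$ lies in the interval.

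The main obstacle is the bookkeeping in the always/eventually cases: one must verify carefully that the neutral elements that (e2) forces on the out-of-interval witnesses coincide with the defaults in the $\min\{2\}\cup\cdot$ and $\max\{0\}\cup\cdot$ clauses of $\trivalIm$, and that the outer $\min$/$\max$ distributes correctly through the witness definitions provided by (e4)/(e5). No conceptual step goes beyond three-valued propositional reasoning, but both Boolean extremes $0$ and $2$ have to be tracked in each sub-case, and the edge case where the set $\{j\mid\tmf(j)-\tmf(k)\in\cI\}$ is empty must be treated separately to see that the aggregate collapses to the advertised default.
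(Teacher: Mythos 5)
Your proposal matches the paper's own proof in essence: both reduce the claim via the three-valued semantics of the biconditional to showing that the defining right-hand side of $\eta_k(\matom)$ evaluates to $\trivalIm(k,\matom)$, then proceed by case analysis on $\matom$, using e1 for atoms, e2/e3 to resolve the interval atoms in the $\metricI{\Next}$ case, and e4/e5 together with the $\min$/$\max$ bookkeeping (including the neutral elements matching the defaults $2$ and $0$ of $\trivalIm$ when no $j$ falls in the interval) for $\metricI{\alwaysF}$ and $\metricI{\eventuallyF}$. Note that your appeal to e1 to identify $\trivalI(\laux{b}{j})$ with $\trivalIm(j,b)$ is exactly the same (slightly loose) reading of e1 that the paper's proof itself uses, so this is not a divergence.
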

\begin{proof}
    We proceed by case analysis on the structure of $\matom$.

      \casex{\matom = a\in\alphabet}{
          We start with $\trivalI(\laux{a}{k})=\trival{k}{a}$.
          Since condition e1 holds,
          we have that $\trivalI(\laux{a}{k})=\trivalI(a_k)$.
          By applying Proposition~\ref{prop:three-valued:eq:ht},
          we get that $\trivalI(\laux{a}{k}\leftrightarrow a_k)=2$.
      }

    \def \caseg {\bot}
      \casex{\matom=\caseg}{
          We start with
          $\trivalI({\laux{\caseg}{k}})=\trival{k}{\caseg}$.
          Using the semantics of the metric three-valued interpretation,
          we have that $\trivalI({\laux{\caseg}{k}})=0$.
          With the semantics of HT three valued we get that
          $\trivalI({\laux{\caseg}{k}})=\trivalI(\caseg)$.
          By applying Proposition~\ref{prop:three-valued:eq:ht},
          we get that $\trivalI(\laux{\caseg}{k}\leftrightarrow \caseg)=2$.
      }

    \def \caseg {\top}
    \casex{\matom=\caseg}{
      Analogous to the previous case.
    }

      \def \caseg {\finally}
      \casex{\matom=\caseg}{
        We start with
        $\trivalI({\laux{\caseg}{k}})=\trival{k}{\caseg}$.
        Now we analyze the two cases for $k$.
        \casex{k < \lambda-1}{
            With the semantics of the metric three-valued interpretation,
            we have that $\trivalI({\laux{\caseg}{k}})=0$.
            With the semantics of HT three valued we get that
            $\trivalI({\laux{\caseg}{k}})=\trivalI(\caseg)$.
            By applying Proposition~\ref{prop:three-valued:eq:ht},
            we get that $\trivalI(\laux{\caseg}{k}\leftrightarrow \caseg)=2$.
        }
        \casex{k = \lambda-1}{
            With the semantics of the metric three-valued interpretation,
            we have that $\trivalI({\laux{\caseg}{k}})=1$.
            With the semantics of HT three valued we get that
            $\trivalI({\laux{\caseg}{k}})=\trivalI(\caseg)$.
            By applying Proposition~\ref{prop:three-valued:eq:ht},
            we get that $\trivalI(\laux{\caseg}{k}\leftrightarrow \caseg)=2$.
        }
      }

      \def \caseg {\initially}
      \casex{\matom=\caseg}{
        Analogous to the previous case.
      }

      \def \caseg {\metricI{\Next}b}
      \casex{\matom=\caseg}{
        We start with
       $\trivalI(\laux{\caseg}{k})=\trival{k}{\caseg}$,
       and analyze the two cases for $k$.
        \casex{k < \lambda-1}{
          Now further analysis on weather a one setp falls in the interval.
          \casex{\tau(k + 1) - \tau(k) \not\in I}{
              Three-valued semantics for metric gives us
              $\trivalI(\laux{\caseg}{k})=0$,
              which is equivalent to
              $\min\{0, \trivalI(\laux{b}{k+1})\}$.
              Using condition e2. we get
              $\min\{\trivalI(\neg\faili{\cI}{k}{j}),\trivalI(\laux{b}{k+1})\}$,
              since $\trivalI(\neg\faili{\cI}{k}{j})=0$ because $\trivalI(\faili{\cI}{k}{j})=2$.
              Finally $\trivalI(\laux{\caseg}{k})=\trivalI(\neg\faili{\cI}{k}{j}\wedge\laux{b}{k+1})$.
              Therefore, $\trivalI(\laux{\caseg}{k}\leftrightarrow \neg\faili{\cI}{k}{j}\wedge\laux{b}{k+1})=2$,
              by applying Proposition~\ref{prop:three-valued:eq:ht}.

          }
          \casex{\tau(k + 1) - \tau(k) \in I}{
            Three-valued semantics for metric gives us
              $\trivalI(\laux{\caseg}{k})=\trival{k+1}{b}$.
              Due  to condition e1. we have that
              $\trival{k+1}{b}=\trivalI(\laux{b}{k+1})$,
              so
              $\trivalI(\laux{\caseg}{k})=\trivalI(\laux{b}{k+1})$,
              which is equivalent to
              $\min\{2,\trivalI(\laux{b}{k+1})\}$.
              Using condition e3. we get
              $\min\{\trivalI(\neg\faili{\cI}{k}{j}),\trivalI(\laux{b}{k+1})\}$
              since $\trivalI(\neg\faili{\cI}{k}{j})=2$, because $\trivalI(\faili{\cI}{k}{j})=0$.
              Finally $=\trivalI(\neg\faili{\cI}{k}{j}\wedge\laux{b}{k+1})$.
              Therefore, $\trivalI(\laux{\caseg}{k}\leftrightarrow \neg\faili{\cI}{k}{j}\wedge\laux{b}{k+1})=2$,
              by applying Proposition~\ref{prop:three-valued:eq:ht}.

          }
        }
        \casex{k=\lambda-1}{
            Three-valued semantics for metric gives us
            $\trivalI(\laux{\caseg}{k})=0$,
            which is equivalent to
            $\trivalI(\bot)$.
            We apply Proposition~\ref{prop:three-valued:eq:ht}
            and get that $\trivalI(\laux{\caseg}{k}\leftrightarrow \bot)=2$.
        }
      }

      \def \caseg {\metricI{\alwaysF}\, b}
      \casex{\matom=\caseg}{
        We check that the two equivalences in $\eta_k(\caseg)$ hold.
First, notice that
        $\ljaux{\caseg}{k}{j} \leftrightarrow \faili{\cI}{k}{j} \vee \laux{b}{j}$
        holds trivially by construction.
Now for $\laux{\caseg}{k} \leftrightarrow \bigwedge_{j=k}^{\lambda-1} \ljaux{\caseg}{k}{j}$,
        we start with $\trivalI(\laux{\caseg}{k})=\trival{k}{\caseg}$.
        This is equivalent to
        $\min\{2\}\cup \{\trival{j}{b} \mid k \leq j < \lambda \text{ and } \tau(j) - \tau(k) \in I\} $,
        which due to condition e1. is equivalent to
        $\min\{2\}\cup \{\trivalI(\laux{b}{j}) \mid k \leq j < \lambda \text{ and } \tau(j) - \tau(k) \in I\} $.
        The invariant of $\min$ lets us add multiple instances of $2$, giving us
        $\min \{\trivalI(\laux{b}{j}) \mid k \leq j < \lambda \text{ and } \tau(j) - \tau(k) \in I\} \cup
                \{2 \mid k \leq j < \lambda \text{ and } \tau(j) - \tau(k) \not\in I\} $.
        With conditions e2. and e3.
        we know that the value of $\trivalI(\faili{\cI}{k}{j})$
        for the conditions of each set is 0 and 2 respectively,
        so we can substitute them and get
        $\min \{\max\{\trivalI(\laux{b}{j}),\trivalI(\faili{\cI}{k}{j})\} \mid k \leq j < \lambda \text{ and } \tau(j) - \tau(k) \in I\} \cup
                \{\trivalI(\faili{\cI}{k}{j}) \mid k \leq j < \lambda \text{ and } \tau(j) - \tau(k) \not\in I\} $.
        Now the invariant of $\max$ gives us
        $\min \{\max\{\trivalI(\laux{b}{j}),\trivalI(\faili{\cI}{k}{j})\} \mid k \leq j < \lambda \text{ and } \tau(j) - \tau(k) \in I\} \cup
                \{\max\{\trivalI(\laux{b}{j}),\trivalI(\faili{\cI}{k}{j})\} \mid k \leq j < \lambda \text{ and } \tau(j) - \tau(k) \not\in I\} $.
        With simple set operations since the values are the same we can make the union and remove the condition
        $\min \{\max\{\trivalI(\laux{b}{j}),\trivalI(\faili{\cI}{k}{j})\} \mid k \leq j < \lambda \}$,
        which thanks to condition e4. is equivalent to
        $\min \{\trivalI(\ljaux{\caseg}{k}{j}) \mid k \leq j < \lambda \}$
        and finally to
        $\trivalI(\bigwedge_{j=k}^{\lambda-1}\ljaux{\caseg}{k}{j})$.
        Therefore $\laux{\caseg}{k} \leftrightarrow \bigwedge_{j=k}^{\lambda-1} \ljaux{\caseg}{k}{j}$
      }

      \def \caseg {\metricI{\eventuallyF}\, b}
      \casex{\matom=\caseg}{
      Analogous to the previous case.
      }

  \end{proof}

 \subsubsection{Translation of MHT to General Metric Logic Programs}

\begin{lemma}[Completeness]
    \label{lem:mel2mlp1}
    Let $\varphi$ be a metric formula over alphabet $\alphabet$
    and let
    $\M=(\tuple{\Htrace,\Ttrace}, \tmf)$ be a timed trace of length $\lambda$
    with corresponding metric three-valued interpretation $\trivalIm$,
    such that $\M \models \varphi$.
    Then, there exists some timed trace $\M'=(\tuple{\Htrace',\Ttrace'}, \tmf)$ over alphabet $\alphabet \cup \mathcal{L}_\varphi$
    such that $\tuple{\Htrace,\Ttrace} = \tuple{\Htrace',\Ttrace'}|_{\alphabet}$
    and $\M' \models \Theta(\varphi)$.
\end{lemma}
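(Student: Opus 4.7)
The plan is to construct $\M'$ by augmenting each pair $\tuple{H_k,T_k}$ with the fresh label atoms $\Label{\phi}$, where $\phi$ ranges over $\subformulas{\varphi}$, so that the \HT-status of $\Label{\phi}$ at state $k$ in $\M'$ mirrors the three-valued value $\trival{k}{\phi}$ that $\phi$ takes in $\M$.  Concretely, for each $\kinlambda$ I would set
\begin{align*}
  H'_k &\eqdef H_k \cup \{\Label{\phi} \mid \phi \in \subformulas{\varphi},\, \trival{k}{\phi} = 2 \}, \\
  T'_k &\eqdef T_k \cup \{\Label{\phi} \mid \phi \in \subformulas{\varphi},\, \trival{k}{\phi} \geq 1 \},
\end{align*}
and let $\M' \eqdef (\tuple{\Htrace',\Ttrace'},\tmf)$.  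Since $\trival{k}{\phi}=2$ entails $\trival{k}{\phi}\geq 1$, the inclusion $H'_k\subseteq T'_k$ is preserved, so $\M'$ is a legitimate timed \HT-trace over $\alphabet\cup\mathcal{L}_\varphi$.  Moreover $\tuple{\Htrace',\Ttrace'}|_\alphabet = \tuple{\Htrace,\Ttrace}$, giving the first conclusion, and the three-valued metric interpretation induced by $\M'$ via Definition~\ref{def:three-valued:mht:correspondence} agrees with $\trivalIm$ on $\alphabet$ and assigns, at each state $k$, the value $\trival{k}{\phi}$ to the fresh atom $\Label{\phi}$ for every $\phi \in \subformulas{\varphi}$.

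It remains to show $\M' \models \Theta(\varphi)$.  The axiom $\alwaysF(\initially \to \Label{\varphi})$ is immediate: since $\M \models \varphi$, Proposition~\ref{prop:three-valued-semantics-eq} yields $\trival{0}{\varphi}=2$, so $\Label{\varphi} \in H'_0$; at $k>0$ the antecedent $\initially$ is false, and the outer $\alwaysF$ collapses trivially.  For each set $\eta^*(\phi)$ with $\phi \in \subformulas{\varphi}$ I would first establish $\M' \models \eta(\phi)$, that is, the biconditional form shown in the middle column of Table~\ref{tab:tseitin:metric:general}.  By Proposition~\ref{prop:three-valued:eq:mht:pointwise} this reduces, at each $k$, to verifying that the three-valued value of $\Label{\phi}$ in $\M'$ coincides with the three-valued value of the compound expression on the right-hand side of the biconditional; this in turn follows from the recursive clauses of Definition~\ref{def:three-valued-semantics-mht} together with the label-to-subformula identity noted above.

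The remaining step is to derive each rule of $\eta^*(\phi)$ from the biconditional $\eta(\phi)$.  For atoms, constants, $\initially$, conjunction, disjunction and for the modal operators $\metricI{\Next}$, $\metricI{\eventuallyF}$ and $\metricI{\alwaysF}$, this is routine: every implication listed in $\eta^*(\phi)$ is an immediate \HT-consequence of the corresponding biconditional.  The main obstacle is the implication row $\phi = \varphi' \to \psi$: the first three rules follow routinely from $\eta(\phi)$, but the additional disjunctive axiom $\alwaysF(\top \to \Label{\varphi'} \vee \neg \Label{\psi} \vee \Label{\varphi' \to \psi})$ is not an \HT-consequence of a plain biconditional and must be verified directly from the three-valued identities.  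A case analysis on $\trival{k}{\varphi'}$ and $\trival{k}{\psi}$ discharges it: when $\trival{k}{\varphi'} \leq \trival{k}{\psi}$, one has $\trival{k}{\varphi' \to \psi} = 2$, so $\Label{\varphi' \to \psi} \in H'_k$ and the third disjunct holds; when $\trival{k}{\varphi'} = 2$, the first disjunct holds; and in the remaining subcase $\trival{k}{\varphi'} = 1$ with $\trival{k}{\psi} = 0$ we have $\Label{\psi} \notin T'_k$, so $\neg \Label{\psi}$ holds in \HT.  This closes the implication case and completes the verification that $\M' \models \Theta(\varphi)$.
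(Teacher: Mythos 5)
Your construction is essentially the paper's own proof: the paper likewise builds $\M'$ by stipulating that the three-valued value of each label $\Label{\phi}$ at state $k$ equals $\trival{k}{\phi}$, verifies the axiom $\metric{\alwaysF}{0}{\omega}(\initially\to\Label{\varphi})$ via the equi-satisfaction proposition, and checks the Tseitin equivalences pointwise through the three-valued semantics, so your proposal is correct and follows the same route (indeed you are slightly more explicit about passing from $\eta(\phi)$ to $\eta^*(\phi)$). One small inaccuracy: the extra disjunctive axiom $\alwaysF(\top\to\Label{\varphi'}\vee\neg\Label{\psi}\vee\Label{\varphi'\to\psi})$ actually \emph{is} an \HT-consequence of the biconditional, since $p\vee\neg q\vee(p\to q)$ is a G\"odel three-valued tautology and the biconditional forces $\Label{\varphi'\to\psi}$ and $\Label{\varphi'}\to\Label{\psi}$ to take equal values, but your direct case analysis establishes it correctly in any event.
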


\begin{proof}
  Given $\trivalIm$,
  we construct $\M'$
  by defining its corresponding
  three-valued metric interpretation ${\trivalIm}'$
  such that
  $\trivaluation{k}{\Label\phi}{{\trivalIm}'}=\trivaluation{k}{\phi}{\trivalIm}$
  or any $\phi \in \subformulas{\varphi}$ and $\kinlambda$,
and for any atom $a \in \alphabet$  we have
  $\trivaluation{k}{a}{{\trivalIm}'}=\trivaluation{k}{a}{\trivalIm}$ for $\kinlambda$.
By construction we then know that $\tuple{\Htrace,\Ttrace} = \tuple{\Htrace',\Ttrace'}|_{\alphabet}$.

  Now we proceed to proof that $\M' \models \Theta(\varphi)$,
  so $\M' \models \{\metric{\alwaysF}{0}{\omega}\initially \rightarrow \Label{\varphi}\} \cup
      \{\eta^*(\phi) \mid \phi \in \subformulas{\varphi}\}$.
  We can prove  $\M' \models \eta(\phi)$ for any $\phi \in \subformulas{\varphi}$
  by analyzing the structure as done in the proof for Lemma~\ref{lem:mel2mlp2},
  (note that we use the construction of $\M'$ instead of the induction hypothesis).
Now we must show that $\M' \models \{\metric{\alwaysF}{0}{\omega}\initially \rightarrow \Label{\varphi}\}$.
We know $\M\models\varphi$ by hypothesis,
  so $\M,0\models\varphi$.
  Proposition~\ref{prop:three-valued-semantics-eq} gives us
  $\trivaluation{0}{\varphi}{\trivalIm}=2$,
  which by construction of $\M'$ gives us
  $\trivaluation{0}{\Label\varphi}{{\trivalIm}'}=2$.
  Since
  $\trivaluation{0}{\initially \rightarrow \Label{\varphi}}{{\trivalIm}'}=\trivaluation{0}{\Label\varphi}{{\trivalIm}'}$
  we have $\trivaluation{0}{\initially \rightarrow \Label{\varphi}}{{\trivalIm}'}=2$.
  Notice also that $\trivaluation{j}{\initially \rightarrow \Label{\varphi}}{{\trivalIm}'}=2$
  for all $j>0$.
  since $\trivaluation{j}{\initially}{{\trivalIm}'}=0$.
  Then $\min\{2\}\cup \{\trivaluation{j}{\varphi}{\trivalIm} \mid 0 \leq j < \lambda \text{ and } \tau(j) - \tau(k) \in I\}=2$
  so $\trivaluation{0}{\metric{\alwaysF}{0}{\omega}\initially \rightarrow \Label{\varphi}}{{\trivalIm}'}=2$,
  which by proposition~\ref{prop:three-valued-semantics-eq} gives us
  $\M',0 \models \metric{\alwaysF}{0}{\omega}\initially \rightarrow \Label{\varphi}$
  and $\M' \models \Theta(\varphi)$.

\end{proof}

\begin{lemma}[Correctness]
    \label{lem:mel2mlp2}
    Let $\varphi$ be a metric formula over $\alphabet$ and
    let $\M=(\tuple{\Htrace,\Ttrace}, \tmf)$ be a timed trace of $\Theta(\varphi)$,
    with corresponding metric three-valued interpretation $\trivalIm$.
    Then, for any $\gamma \in \mathit{\subformulas\varphi}$ and $\kinlambda$,
    we have $\trivaluation{k}{\Label{\gamma}}{\trivalIm} = \trivaluation{k}{\gamma}{\trivalIm}$.
\end{lemma}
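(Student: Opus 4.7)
The plan is to proceed by structural induction on $\gamma\in\subformulas{\varphi}$, strengthened so as to hold uniformly at every $\kinlambda$ (needed because metric operators couple truth values across time). The scaffold is: (i) observe that, under the always-prefix in $\Theta(\varphi)$, the implications of $\eta^*(\gamma)$ entail the single bi-implication $\eta(\gamma)$ (from the left column of Table~\ref{tab:tseitin:metric:general}) at every state; (ii) apply Proposition~\ref{prop:three-valued:eq:mht:pointwise} to extract the pointwise equality $\trival{k}{\Label{\gamma}}=\trival{k}{\psi}$, where $\psi$ is the right-hand side of $\eta(\gamma)$, expressed in terms of labels $\Label{\gamma'}$ for the immediate subformulas of $\gamma$; (iii) invoke the induction hypothesis to rewrite $\psi$ by substituting each $\Label{\gamma'}$ with $\gamma'$ while preserving the three-valued value at $k$.

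For the atomic cases $\gamma\in\{a,\bot,\top,\initially\}$ there are no subformula labels in $\psi$, so step (iii) is vacuous and the equality follows directly from (ii). For the Boolean connectives the clauses of Definition~\ref{def:three-valued-semantics-mht} for $\wedge$, $\vee$ and $\to$ compute $\trival{k}{\Label{\varphi}\odot\Label{\psi}}$ from $\trival{k}{\Label{\varphi}}$ and $\trival{k}{\Label{\psi}}$, so a single application of the induction hypothesis at $k$ suffices. For the metric cases $\gamma=\metricI{\odot}b$ with $\odot\in\{\Next,\alwaysF,\eventuallyF\}$, the corresponding clause of Definition~\ref{def:three-valued-semantics-mht} aggregates $\trival{j}{\Label{b}}$ across time points $j$ filtered by $\tmf(j)-\tmf(k)\in\cI$; here we apply the induction hypothesis at every such $j$ to swap $\Label{b}$ for $b$, yielding $\trival{k}{\metricI{\odot}\Label{b}}=\trival{k}{\metricI{\odot}b}$.

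The main obstacle will be step (i) in the case $\gamma=\varphi\to\psi$, where $\eta^*(\gamma)$ comprises four implications, including the non-trivial excluded-middle-like rule $\alwaysF(\top\to\Label{\varphi}\vee\neg\Label{\psi}\vee\Label{\varphi\to\psi})$ that handles the case in which the implication is proven without its antecedent being refuted or its consequent being established. Recovering the bi-implication $\alwaysF(\Label{\varphi\to\psi}\leftrightarrow(\Label{\varphi}\to\Label{\psi}))$ from these four rules reduces to the propositional-theory-to-logic-program translation of~\cite{capeva05a}, applied pointwise at each state; a brief three-valued case analysis on $(\trival{k}{\Label{\varphi}},\trival{k}{\Label{\psi}})$ then confirms the equality. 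A secondary subtlety is that the induction hypothesis must be invoked uniformly in $k$ for the metric cases (not only at the current $k$), which is why the statement is strengthened from the outset to quantify over all $\kinlambda$.
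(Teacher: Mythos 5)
Your proposal is correct and follows essentially the same route as the paper's proof: structural induction on the subformula, recovering at each state the biconditional $\eta(\gamma)$ from $\Theta(\varphi)$, converting it into a pointwise equality of three-valued values via the MHT three-valued correspondence, and applying the induction hypothesis at the relevant time points ($k$ for the Boolean connectives, $k+1$ resp.\ all admissible $j$ for the metric operators). The only difference is that you explicitly justify passing from the implications $\eta^*(\gamma)$ back to the equivalence $\eta(\gamma)$ (via the reduction of~\cite{capeva05a} plus a three-valued case analysis for the implication case), a step the paper's proof silently takes for granted when it reads $\M\models\alwaysF(\Label{\gamma}\leftrightarrow\cdot)$ off $\Theta(\varphi)$, so your version is, if anything, slightly more complete.
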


\begin{proof}
  We proceed by structural induction on $\phi$.

  \def \caseg {a}
  \casex{\phi\in\{a,\bot,\initially\}}{
  From \eqref{eq:tseitin:general},
  since $\M \models \Theta(\varphi)$,
  we obtain $\M \models \alwaysF(\Label{\phi} \leftrightarrow \phi)$. The satisfaction of MHT then ensures that for any $k \in \lambda$,
  we have $\M,
  k \models \Label{\phi} \leftrightarrow \phi$. Applying Proposition~\ref{prop:three-valued-semantics-eq},
  we conclude that for any $k \in \lambda$,
  the valuation $\trivaluation{k}{\Label{\phi}}{\trivalIm}$ is equal to $\trivaluation{k}{\phi}{\trivalIm}$.
  }
  \def \caseg {\zeta  \otimes \psi}
  \def \casegl {\Label\zeta  \otimes \Label\psi}
  \casex{\phi=\caseg}{
  From \eqref{eq:tseitin:general},
  we have $\M \models \alwaysF(\Label{\phi} \leftrightarrow \Label{\zeta} \otimes \Label{\psi})$. From MEL satisfaction,
  it follows that for any $k \in \lambda$,
  $\M,
  k \models \Label{\phi} \leftrightarrow \Label{\zeta} \otimes \Label{\psi}$. Using Proposition~\ref{prop:three-valued-semantics-eq},
  we get $\trivaluation{k}{\Label{\phi}}{\trivalIm} = \trivaluation{k}{\Label{\zeta} \otimes \Label{\psi}}{\trivalIm}$.
  By Definition~\ref{def:three-valued-semantics-mht},
  this is equal to $f^\otimes\{\trivaluation{k}{\Label{\zeta}}{\trivalIm},
  \trivaluation{k}{\Label{\psi}}{\trivalIm}\}$. Applying the induction hypothesis,
  we replace $\trivaluation{k}{\Label{\zeta}}{\trivalIm}$ and $\trivaluation{k}{\Label{\psi}}{\trivalIm}$ with $\trivaluation{k}{\zeta}{\trivalIm}$ and $\trivaluation{k}{\psi}{\trivalIm}$,
  respectively. This simplifies to $\trivaluation{k}{\phi}{\trivalIm}$ by Definition~\ref{def:three-valued-semantics-mht},
  completing this case.
  }
  \def \caseg {\zeta  \to \psi}
  \def \casegl {\Label\zeta  \to \Label\psi}
  \casex{\phi=\caseg}{
  From \eqref{eq:tseitin:general},
  we obtain $\M \models \alwaysF(\Label{\phi} \leftrightarrow \Label{\zeta} \to \Label{\psi})$. By MEL satisfaction,
  this implies that for any $k \in \lambda$,
  $\M,
  k \models \Label{\phi} \leftrightarrow \Label{\zeta} \to \Label{\psi}$. From Proposition~\ref{prop:three-valued-semantics-eq},
  we conclude that $\trivaluation{k}{\Label{\phi}}{\trivalIm} = \trivaluation{k}{\Label{\zeta} \to \Label{\psi}}{\trivalIm}$.

  We now distinguish two cases.
  - If $\trivaluation{k}{\Label{\zeta}}{\trivalIm} \leq \trivaluation{k}{\Label{\psi}}{\trivalIm}$,
  then by Definition~\ref{def:three-valued-semantics-mht},
  $\trivaluation{k}{\Label{\zeta} \to \Label{\psi}}{\trivalIm} = 2$. The induction hypothesis tells us that $\trivaluation{k}{\zeta}{\trivalIm} \leq \trivaluation{k}{\psi}{\trivalIm}$,
  so applying Definition~\ref{def:three-valued-semantics-mht} again,
  we conclude $\trivaluation{k}{\phi}{\trivalIm} = 2$.
  - Otherwise,
  if $\trivaluation{k}{\Label{\zeta}}{\trivalIm} > \trivaluation{k}{\Label{\psi}}{\trivalIm}$,
  then by Definition~\ref{def:three-valued-semantics-mht},
  we have $\trivaluation{k}{\Label{\zeta} \to \Label{\psi}}{\trivalIm} = \trivaluation{k}{\Label{\psi}}{\trivalIm}$. The induction hypothesis then gives us $\trivaluation{k}{\zeta}{\trivalIm} > \trivaluation{k}{\psi}{\trivalIm}$,
  so by applying Definition~\ref{def:three-valued-semantics-mht},
  we again obtain $\trivaluation{k}{\phi}{\trivalIm} = \trivaluation{k}{\psi}{\trivalIm}$.
  }
  \def \caseg {\metricI{\Next}\psi}
  \def \casegl {\metricI{\Next}\Label\psi}
  \casex{\phi=\caseg}{
  By \eqref{eq:tseitin:general},
  we have $\M \models \alwaysF(\Label{\phi} \leftrightarrow \metricI{\Next} \Label{\psi})$. From MEL satisfaction,
  it follows that for any $k \in \lambda$,
  we have $\M,
  k \models \Label{\phi} \leftrightarrow \metricI{\Next} \Label{\psi}$. By Proposition~\ref{prop:three-valued-semantics-eq},
  this gives us $\trivaluation{k}{\Label{\phi}}{\trivalIm} = \trivaluation{k}{\metricI{\Next} \Label{\psi}}{\trivalIm}$.

  If $k + 1 = \lambda$ or $\tau(k + 1) - \tau(k) \notin I$,
  then by Definition~\ref{def:three-valued-semantics-mht},
  we obtain $\trivaluation{k}{\casegl}{\trivalI} = 0$,
  where also $0= \trivaluation{k}{\caseg}{\trivalIm}$.
  Otherwise,
  we use Definition~\ref{def:three-valued-semantics-mht} to get
  $\trivaluation{k}{\casegl}{\trivalI} = \trivaluation{k+1}{\Label{\psi}}{\trivalIm}$. Applying the induction hypothesis,
  we replace $\trivaluation{k+1}{\Label{\psi}}{\trivalIm}$ with $\trivaluation{k+1}{\psi}{\trivalIm}$,
  which simplifies to $\trivaluation{k}{\phi}{\trivalIm}$ as required.
  }
  \def \caseg {\metricI{\alwaysF}\, \psi}
  \def \casegl {\metricI{\alwaysF}\, \Label\psi}
  \casex{\phi=\caseg}{
  By \eqref{eq:tseitin:general},
  we obtain $\M \models \alwaysF(\Label{\phi} \leftrightarrow \metricI{\alwaysF} \Label{\psi})$,
  which implies through MEL satisfaction that $\M,
  k \models \Label{\phi} \leftrightarrow \metricI{\alwaysF} \Label{\psi}$ for all $k \in \lambda$. By Proposition~\ref{prop:three-valued-semantics-eq},
  we get $\trivaluation{k}{\Label{\phi}}{\trivalIm} = \trivaluation{k}{\metricI{\alwaysF} \Label{\psi}}{\trivalIm}$.
  By Definition~\ref{def:three-valued-semantics-mht} we know
  $\trivaluation{k}{\casegl}{\trivalIm} = \min\{2, \trivaluation{j}{\Label\psi}{\trivalIm} \mid k \leq j < \lambda \text{ and } \tau(j) - \tau(k) \in I\}$,
  which by induction can be replaced to get
  $= \min\{2, \trivaluation{j}{\psi}{\trivalIm} \mid k \leq j < \lambda \text{ and } \tau(j) - \tau(k) \in I\}$.
  Finally with Definition~\ref{def:three-valued-semantics-mht},
  this is $\trivaluation{k}{\caseg}{\trivalIm}$.
  }
  \def \caseg {\metricI{\eventuallyF}\, \psi}
  \def \casegl {\metricI{\eventuallyF}\, \Label\psi}
  \casex{\phi=\caseg}{
    Analogous to previous case.
  }
  \end{proof}

\setcounter{theorem}{\getrefnumber{thm:correct-mel-to-mlp}}
\addtocounter{theorem}{-1}
\begin{theorem}[Correctness and Completeness]
    Let $\varphi$ be a metric formula over $\alphabet$.
    Then, we have
    \begin{align*}
      \{(\tuple{\Htrace,\Ttrace}, \tmf) \mid (\tuple{\Htrace,\Ttrace}, \tmf) \models \varphi\} =
      \{({\tuple{\Htrace',\Ttrace'}}|_\alphabet, \tmf) \mid (\tuple{\Htrace',\Ttrace'}, \tmf) \models \Theta(\varphi)\}
    \end{align*}
  \end{theorem}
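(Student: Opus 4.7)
The statement asserts a set equality, so I would prove the two inclusions separately. Both directions are essentially packaged by the two preceding lemmas, with one small locality observation bridging them.

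For the $\subseteq$ direction, the work is already done by Lemma~\ref{lem:mel2mlp1}. Given any timed trace $\M=(\tuple{\Htrace,\Ttrace}, \tmf) \models \varphi$, that lemma constructs an extension $\M'=(\tuple{\Htrace',\Ttrace'}, \tmf)$ over $\alphabet\cup\mathcal{L}_\varphi$ with $\tuple{\Htrace',\Ttrace'}|_\alphabet=\tuple{\Htrace,\Ttrace}$ and $\M'\models\Theta(\varphi)$. This places $\M$ in the right-hand set, and no further argument is needed.

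For the $\supseteq$ direction, take any $\M'=(\tuple{\Htrace',\Ttrace'}, \tmf)\models\Theta(\varphi)$ and let $\trivalIm'$ be its corresponding three-valued metric interpretation. From the form of $\Theta(\varphi)$ in~\eqref{eq:tseitin:general}, $\M'\models\metric{\alwaysF}{0}{\omega}(\initially\to\Label{\varphi})$, and in particular $\M',0\models\Label{\varphi}$ since $\M',0\models\initially$. By Proposition~\ref{prop:three-valued-semantics-eq} this yields $\trivaluation{0}{\Label{\varphi}}{\trivalIm'}=2$, and Lemma~\ref{lem:mel2mlp2} then gives $\trivaluation{0}{\varphi}{\trivalIm'}=\trivaluation{0}{\Label{\varphi}}{\trivalIm'}=2$. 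Reapplying Proposition~\ref{prop:three-valued-semantics-eq} we conclude $\M',0\models\varphi$.

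The one remaining step is to pass from $\M',0\models\varphi$ to $(\tuple{\Htrace',\Ttrace'}|_\alphabet, \tmf),0\models\varphi$. Since the atoms of $\varphi$ all lie in $\alphabet$ and the timing function is unchanged by restriction, this is a standard alphabet-locality property of \MHT\ satisfaction. I would record it as a small lemma, proved by a routine structural induction on $\varphi$: at the atomic case, $a\in H_i$ iff $a\in H_i\cap\alphabet$ whenever $a\in\alphabet$; the Boolean and temporal connectives pass through the induction hypothesis without interacting with the restriction because $\tmf$ is shared. With this in hand, the restricted trace belongs to the left-hand set, closing the inclusion.

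The only mildly subtle point—hence the step I expect to require the most care—is ensuring that the locality argument handles the implication clause correctly, where both $\M'$ and the total variant $(\tuple{\Ttrace',\Ttrace'},\tmf)$ must be considered; but since restriction commutes with forming the total trace, the induction goes through cleanly. Everything else is bookkeeping around the two lemmas.
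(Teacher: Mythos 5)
Your proposal is correct and follows essentially the same route as the paper: the $\subseteq$ inclusion is exactly Lemma~\ref{lem:mel2mlp1}, and the $\supseteq$ inclusion unwinds the rule $\metric{\alwaysF}{0}{\omega}(\initially\to\Label{\varphi})$, uses the three-valued equi-satisfaction result together with Lemma~\ref{lem:mel2mlp2} at $k=0$, and then restricts to $\alphabet$. The only difference is presentational: the paper dismisses the final restriction step in one line (``$\varphi$ is a theory over vocabulary $\alphabet$''), whereas you correctly flag and sketch the routine alphabet-locality induction, including the implication clause.
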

\begin{proof}
    \casex{\subseteq }{
      Follows from Lemma~\ref{lem:mel2mlp1}.
    }
    \casex{\supseteq}{
      Take some timed trace $\M=(\tuple{\Htrace',\Ttrace'},\tmf)$ of length $\lambda$
      that models $\Theta(\varphi)$,
      with corresponding metric three-valued interpretation $\trivalIm$.
      Then $\M\models\metric{\alwaysF}{0}{\omega} (\initially \rightarrow \Label{\varphi})$ (Definition of satisfaction of MLP)
      so $\M,0\models\Label{\varphi}$ (Satisfaction of always and initial).
      This implies that
      $\trivaluation{0}{\Label\varphi}{\trivalIm}=2$ (Proposition~\ref{prop:three-valued:eq:mht:pointwise}).
      We can apply Lemma~\ref{lem:mel2mlp2} with $\gamma=\varphi$ and $k=0$ to conclude
      $\trivaluation{0}{\Label\varphi}{\trivalIm}=\trivaluation{0}{\varphi}{\trivalIm}=2$ and so,
      $\M$ is also a model of $\varphi$.
      Finally, $(\tuple{\Htrace',\Ttrace'}|_\alphabet,\tmf)$ is still a model of $\varphi$ because the latter is a theory for vocabulary $\alphabet$.
    }
\end{proof}
 \subsubsection{Translation of General Metric Logic Programs to \HT}

\begin{lemma}[Splitting General HT]
  \label{lem:splitting:full:ht}
  Given a metric logic program $\program$
  and a $\lambda,\tmflimit\in\mathbb{N}$,
  the stable models of $\newPi_\lambda(\program)\cup\Delta_{\lambda,\tmflimit}\cup\newPsi_{\lambda,\tmflimit}(\program)$ can be computed by means of the splitting technique.
\end{lemma}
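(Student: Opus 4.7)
The plan is to iteratively apply the splitting technique in essentially the same spirit as the proof of Lemma~\ref{lem:splitting:plain:ht}, but using a two-stage split that reflects the additional structure of the general translation. Let $\mathbb{P} = \newPi_\lambda(\program)\cup\Delta_{\lambda,\tmflimit}\cup\newPsi_{\lambda,\tmflimit}(\program)$. The key observation is that the three components of $\mathbb{P}$ are layered: atoms of $\alphabetT$ appear in heads only inside $\Delta_{\lambda,\tmflimit}$, atoms of $\alphabetI$ (the $\faili{\cI}{\kvar}{j}$) appear in heads only inside $\newPsi_{\lambda,\tmflimit}(\program)$ (with bodies drawn from $\alphabetT$), and the remaining rules in $\newPi_\lambda(\program)$ are over $\alphabets\cup\alphabetaux\cup\alphabetI$ with $\alphabetI$ appearing only in bodies.

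First, I would take the splitting set $U_0 = \alphabetT$. Since no rule of $\newPi_\lambda(\program)$ mentions any $\timet_{\kvar,\tmvar}$ at all, and the rules in $\newPsi_{\lambda,\tmflimit}(\program)$ mention $\timet_{\kvar,\tmvar}$ only in bodies, $U_0$ is indeed a splitting set and $b_{U_0}(\mathbb{P}) = \Delta_{\lambda,\tmflimit}$. By Theorem~\ref{thm:splitting}, every stable model $T$ of $\mathbb{P}$ decomposes as $T_0 \cup T_1$, where $T_0$ is a stable model of $\Delta_{\lambda,\tmflimit}$ and $T_1$ is a stable model of $\mathbb{P}' \eqdef Reduce(\mathbb{P}\setminus b_{U_0}(\mathbb{P}), T_0, U_0\setminus T_0)$. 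Here $\mathbb{P}'$ equals $\newPi_\lambda(\program)$ (unchanged, since it contains no $\timet$ atoms) together with the simplification of $\newPsi_{\lambda,\tmflimit}(\program)$, in which every rule either collapses to a fact $\faili{\cI}{\kvar}{j}$ (when the pair $\timet_{\kvar,\tmvar}$, $\timet_{j,\tmvar'}$ picked out by $T_0$ witnesses an interval violation) or is removed.

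Next, I would take $U_1 = \alphabetI$. In the reduced program $\mathbb{P}'$, atoms of $\alphabetI$ occur in heads only within the (reduced) $\newPsi_{\lambda,\tmflimit}(\program)$ and occur in bodies only of $\newPi_\lambda(\program)$; hence $U_1$ is a splitting set for $\mathbb{P}'$ and $b_{U_1}(\mathbb{P}')$ is exactly the reduced $\newPsi_{\lambda,\tmflimit}(\program)$, which is a set of facts. A second application of Theorem~\ref{thm:splitting} yields $T_1 = T_1' \cup T_1''$, where $T_1'$ is the unique stable model of those facts and $T_1''$ is a stable model of $Reduce(\newPi_\lambda(\program), T_1', U_1\setminus T_1')$. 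In this final residual program, every $\faili{\cI}{\kvar}{j}$ atom has been evaluated to true or false, so the program is purely over $\alphabets\cup\alphabetaux$ and can be solved by the standard stable model machinery.

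I expect the routine checks to be the verification that $U_0$ and $U_1$ are splitting sets, i.e., that no rule of $\newPi_\lambda(\program)$ has a $\timet_{\kvar,\tmvar}$ or a $\faili{\cI}{\kvar}{j}$ in its head. The first is immediate from the definition of $\tk{r}$ and $\eta_k^*(\varphi)$ in Table~\ref{tab:tseitin:metric:temporal}; the second requires a brief inspection of that table to confirm that all occurrences of $\faili{\cI}{\kvar}{j}$ in $\eta_k^*(\varphi)$ are in rule bodies, which is the main (but mild) obstacle. Once that is settled, the two successive applications of Theorem~\ref{thm:splitting} give the claim.
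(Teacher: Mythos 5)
Your proposal is correct and follows essentially the same route as the paper's own proof: a first split on the timing atoms $\alphabetT$ with $b_{U_0}=\Delta_{\lambda,\tmflimit}$, followed by a second split on the interval-violation atoms $\alphabetI$, whose reduced rules become facts, leaving a residual reduction of $\newPi_\lambda(\program)$. The supporting observations you flag (no $\timet$ or $\faili{\cI}{\kvar}{j}$ atoms in heads of $\newPi_\lambda(\program)$) are exactly the ones the paper relies on.
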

\begin{proof}
  We will show how the stable models of $\mathbb{P} = \newPi_\lambda(\program)\cup\Delta_{\lambda,\tmflimit}\cup\newPsi_{\lambda,\tmflimit}(\program)$ can be computed
  by sequentially applying the splitting theorem. Let us start by fixing $U_0 = \lbrace t_{k,d} \mid k, d \in \mathbb{N}\rbrace$.
  Clearly it is an splitting set because the elements of $U_0$ are only generated by $\Delta_{\lambda,\tmflimit}$. Moreover, $b_{U_0} = \Delta_{\lambda,\tmflimit}$.
  By Theorem~\ref{thm:splitting}, an stable model $T$ of $\mathbb{P}$ can be computed as,
  $T=T_0\cup T'$ where $T_0$ is a stable model of $b_{U_0}$.
$T'$ is a stable model of $Reduce(\mathbb{P}\setminus b_{U_0},T_0,U_0\setminus T_0)$.
  If we analyze this first simplification, we can assume, without loss of generality,
  that $Reduce(\mathbb{P}\setminus b_{U_0},T_0,U_0\setminus T_0) =  \newPi_\lambda(\program)\cup Reduce(\newPsi_{\lambda,\tmflimit}(\program), T_0,U_0\setminus T_0)$,
  because $\newPi_\lambda(\program)$ does not contain atoms in $U_0$.
Let us denote by $\mathbb{P}_1 = \newPi_\lambda(\program)\cup Reduce(\newPsi_{\lambda,\tmflimit}(\program), T_0,U_0\setminus T_0)$
  and let us consider the set $U_1 = \lbrace \delta_{[m..n)}^{k,j} \mid k,j \in \mathbb{N} \hbox{ and }[m..n)\in \mathbb{I\mid_{P}}\rbrace$.
  $U_1$ is also an splitting set because elements of $U_1$ only occur (as facts) in $ Reduce(\newPsi_{\lambda,\tmflimit}(\program), T_0,U_0\setminus T_0)$.
  More precisely, $b_{U_1} = Reduce(\newPsi_{\lambda,\tmflimit}(\program), T_0,U_0\setminus T_0)$.
  By applying (once again) Theorem~\ref{thm:splitting}, we can say that $T' = T_1 \cup T_2$ where $T_1$ is an stable model of $ Reduce(\newPsi_{\lambda,\tmflimit}(\program), T_0,U_0\setminus T_0)$ and
  $T_2$ is a stable model of
  \begin{equation*}
      Reduce(\mathbb{P}_1\setminus b_{U_1},T_1, U_1\setminus T_1) = Reduce(\newPi_\lambda(\program), T_1,U_1\setminus T_1).
  \end{equation*}
\end{proof}

\begin{proposition}[Equivalences for correspondence]
  \label{prop:ht:corr:eq}
  Given an \HT\ interpretation $\handt$
  with a corresponding three-valued interpretation $\trivalI$,
  and
  a timed trace $\M=(\tuple{\Htrace,\Ttrace}, \tmf)$ of length $\lambda$
  over alphabet $\alphabet$
  with a corresponding three-valued metric interpretation $\trivalIm$,
  we have that
  $\handt|_{\alphabets\cap\alphabetT}=\mhtToHt{\M}$,
  iff the following properties hold:

  \begin{enumerate}
    \item $\trivalI(a_k)=\trivalIm(a,k)$ for all $\kinlambda$, $a\in\alphabet$
    \item $\trivalI(\timet_{k,\tmvar})=2$ iff $\tmf(k)=\tmvar$ for all $\kinlambda$, $\tmvar\in\intervcc{0}{\tmflimit}$
    \item $\trivalI(\timet_{k,\tmvar})=0$ iff $\tmf(k)\neq\tmvar$ for all $\kinlambda$, $\tmvar\in\intervcc{0}{\tmflimit}$
  \end{enumerate}
  where $\tmflimit=\tmf(\lambda-1)$,
\end{proposition}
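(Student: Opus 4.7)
The plan is to prove both directions by unfolding the three relevant constructions: the HT-to-three-valued correspondence of Definition~\ref{def:three-valued-from-ht}, the \MHT-to-three-valued correspondence of Definition~\ref{def:three-valued:mht:correspondence}, and the definition of $\mhtToHt{\M}$. Recall that $\mhtToHt{\M}=\tuple{H_0\cup X,\,T_0\cup X}$ where $H_0,T_0$ encode the trace component-wise and $X=\{\timet_{k,d}\mid \tmf(k)=d,\,0\leq k<\lambda,\,d\in\mathbb{N}\}$, so $\mhtToHt{\M}$ is already total on $\alphabetT$. The argument is essentially bookkeeping; no structural induction is needed.

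For the forward direction, I assume $\handt|_{\alphabets\cup\alphabetT}=\mhtToHt{\M}$ and verify each of the three properties. For property~1, I expand $\trivalI(a_k)$ using Definition~\ref{def:three-valued-from-ht}: the three cases $a_k\notin T$, $a_k\in T\setminus H$, $a_k\in H$ translate, via the assumed equality and the construction of $H_0,T_0$, into $a\notin T_k$, $a\in T_k\setminus H_k$, $a\in H_k$, which is precisely $\trivalIm(k,a)$. For properties~2 and~3, I use that $\mhtToHt{\M}$ is total on $\alphabetT$, so $\timet_{k,d}\in H$ iff $\timet_{k,d}\in T$ iff $\timet_{k,d}\in X$ iff $\tmf(k)=d$; consequently $\trivalI(\timet_{k,d})\in\{0,2\}$, taking the value $2$ exactly when $\tmf(k)=d$ and $0$ exactly when $\tmf(k)\neq d$.

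For the reverse direction, I assume properties~1 through~3 and establish the equality separately on $\alphabets$ and on $\alphabetT$. Properties~2 and~3 together rule out the value~$1$ for $\trivalI(\timet_{k,d})$, so inverting Definition~\ref{def:three-valued-from-ht} yields $\timet_{k,d}\in H$ iff $\timet_{k,d}\in T$ iff $\tmf(k)=d$, which is exactly membership in $X$; hence $\handt|_{\alphabetT}$ and $\mhtToHt{\M}|_{\alphabetT}$ share the same $H$- and $T$-components. For the $\alphabets$-part, property~1 combined with the three cases of Definition~\ref{def:three-valued-from-ht} on the left and Definition~\ref{def:three-valued:mht:correspondence} on the right gives, atom by atom, $a_k\in H$ iff $a\in H_k$ and $a_k\in T$ iff $a\in T_k$, matching the definition of $H_0$ and $T_0$.

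The main (and only) subtle point I anticipate is making explicit that properties~2 and~3 jointly force totality of $\trivalI$ on $\alphabetT$, because this totality is what allows the reverse direction to recover the symmetric occurrence of $X$ in both components of $\mhtToHt{\M}$. Once that observation is in place, the rest reduces to reading off the three definitions side by side.
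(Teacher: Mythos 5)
Your proof is correct and matches the paper's intent: the paper itself disposes of this proposition with ``trivial by construction of $\mhtToHt{\M}$'', and your write-up simply makes that bookkeeping explicit by unfolding Definitions~\ref{def:three-valued-from-ht}, \ref{def:three-valued:mht:correspondence} and the construction of $\mhtToHt{\M}$ in both directions, including the observation that properties~2 and~3 force totality on $\alphabetT$. (You also silently read the restriction $\alphabets\cap\alphabetT$ as the intended $\alphabets\cup\alphabetT$, which is consistent with how the paper uses it elsewhere.)
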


\begin{proof}
  Trivial by construction of $\mhtToHt{\M}$
\end{proof}

\begin{definition}[Construction of an HT three-valued interpretation given a metric interpretation]
  \label{def:ht:trival:construction}
  Given a
  timing function $\tmf$,
  a metric three-valued interpretation $\trivalIm$,
  and a program $\program$,
  we construct a three-valued interpretation $\exmtri$
  over alphabet $\alphabets\cup\alphabetT\cup\alphabetaux\cup\alphabetI$
  as follows:

  \begin{enumerate}
    \item $\exmtri(a_k)=\trivalIm(a,k)$ for all $a\in\alphabet$
    \item $\exmtri(\timet_{k,\tmvar})=2$ iff $\tmf(k)=\tmvar$ for all $\tmvar\in\intervcc{0}{\tmflimit}$
    \item $\exmtri(\timet_{k,\tmvar})=0$ iff $\tmf(k)\neq\tmvar$ for all $\tmvar\in\intervcc{0}{\tmflimit}$
    \item $\exmtri(\laux{\matom}{k})=\trivalIm(\matom,k)$ for all $\matom\in\program$
    \item $\exmtri(\faili{\cI}{\kvar}{j})=2$ if $\tmf(j)-\tmf(\kvar) \not\in \cI$ for all $\cI\in\allI$
    \item $\exmtri(\faili{\cI}{\kvar}{j})=0$ if $\tmf(j)-\tmf(\kvar) \in \cI$ for all $\cI\in\allI$
    \item $\exmtri(\ljaux{\metricI{\alwaysF}b}{k}{j})=\max\{\exmtri(\laux{b}{j}),\exmtri(\faili{\cI}{\kvar}{j})\}$
          for all $\metricI{\alwaysF}b\in\program$
    \item $\exmtri(\ljaux{\metricI{\eventuallyF}b}{k}{j})=\min\{\exmtri(\laux{b}{j}),\exmtri(\neg\faili{\cI}{\kvar}{j})\}$
          for all $\metricI{\eventuallyF}b\in\program$
  \end{enumerate}
  for all $\kinlambda$, $\rangeco{j}{k}{\lambda}$
\end{definition}

\begin{proposition}[Construction is total]
  \label{prop:ht:construction:total}
  Given a metric three-valued interpretation $\trivalIm$,
  and a program $\program$,
  if $\trivalIm$ is total
  then $\exmtri$ is total.
\end{proposition}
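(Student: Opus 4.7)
The plan is to show, case by case, that every atom in the extended alphabet $\alphabets\cup\alphabetT\cup\alphabetaux\cup\alphabetI$ is assigned a value in $\{0,2\}$ by $\exmtri$, appealing to the totality of $\trivalIm$ whenever needed. Since $\trivalIm$ being total means that $\trivalIm(k,\varphi)\in\{0,2\}$ for every metric formula $\varphi$ and every $\kinlambda$ (by the earlier proposition on satisfaction of total three-valued metric interpretations), this gives us the needed handle on the image values that are inherited from $\trivalIm$.

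First I would treat the base cases. For $a_k$, item~1 of Definition~\ref{def:ht:trival:construction} gives $\exmtri(a_k)=\trivalIm(a,k)\in\{0,2\}$, since $\trivalIm$ is total on atoms. For $\timet_{k,\tmvar}$, items~2 and~3 explicitly assign either $2$ or $0$ according to whether $\tmf(k)=\tmvar$ holds, so the value is in $\{0,2\}$ by construction. The same applies to $\faili{\cI}{k}{j}$ by items~5 and~6.

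Next I would handle the labels $\laux{\matom}{k}$ for metric atoms $\matom\in\program$. By item~4 we have $\exmtri(\laux{\matom}{k})=\trivalIm(k,\matom)$, and since $\matom$ is a (metric) formula and $\trivalIm$ is total, the cited proposition on totality of three-valued metric interpretations yields $\trivalIm(k,\matom)\in\{0,2\}$, so $\exmtri(\laux{\matom}{k})\in\{0,2\}$.

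Finally the composite witness atoms $\ljaux{\metricI{\alwaysF}b}{k}{j}$ and $\ljaux{\metricI{\eventuallyF}b}{k}{j}$ are defined via $\max$ and $\min$ (items~7 and~8) of previously treated values, all of which lie in $\{0,2\}$ by the preceding cases (noting also that $\exmtri(\neg\faili{\cI}{k}{j})\in\{0,2\}$, since negation maps $\{0,2\}$ to $\{0,2\}$ in the three-valued semantics). Since both $\max$ and $\min$ preserve the set $\{0,2\}$, these atoms are also assigned values in $\{0,2\}$. Combining all cases, the range of $\exmtri$ lies entirely within $\{0,2\}$, so $\exmtri$ is total. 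No step poses a genuine obstacle here; the only subtlety worth flagging is that item~4 requires the already-established fact that totality of $\trivalIm$ extends from atoms to arbitrary metric formulas, which is precisely the earlier proposition in Section~\ref{sec:three:mht}.
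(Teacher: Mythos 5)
Your proof is correct and follows essentially the same route as the paper's: a case split over $\alphabets$, $\alphabetT\cup\alphabetI$, and $\alphabetaux$, using totality of $\trivalIm$ for the inherited values and the fact that $\min$/$\max$ (and negation) preserve $\{0,2\}$ for the auxiliary atoms. Your explicit appeal to the proposition that totality of $\trivalIm$ extends to arbitrary metric formulas for the $\laux{\matom}{k}$ case is a welcome clarification of what the paper compresses into ``holds recursively.''
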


\begin{proof}
  For $x\in\alphabetT\cup\alphabetI$ it is trivial by construction.
  For $x\in\alphabets$ it holds since $\trivalIm$ is total.
  For $x\in\alphabetaux$ it holds recursively since min and max only give values in the set.
\end{proof}

\begin{definition}[Additional interval constraints for monotonic]
  \label{def:ht:interval:eq}
  \begin{align}
    \newPsi^{\bot}_{\lambda,\tmflimit}(\program) \eqdef
     & \ \{\bot \leftarrow
      \faili{\cI}{\kvar}{j} \wedge
      \timet_{\kvar,\tmvar} \wedge
      \timet_{j,\tmvar'}
      \mid
      0 \leq \kvar \leq j < \lambda-1,
      0\leq d<d'\leq\tmflimit,
      \tmvar'-\tmvar \in \cI,
      \cI \in \mathcal{I}\}\;\cup
  \end{align}

\end{definition}

\begin{lemma}[Modeling of extra constraints]
  \label{lem:mlp:full:ht:monotonic}
  Let
  \program\ be a metric logic program,
  and
  $\lambda,\tmflimit \in \mathbb{N}$.

  If
  $\tandt$ is an equilibrium model of $\Delta_{\lambda,\tmflimit}\cup\newPsi_{\lambda,\tmflimit}(\program)$,
  then
  $\tandt\models\newPsi^\bot_{\lambda,\tmflimit}(\program)$
\end{lemma}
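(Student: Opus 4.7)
\medskip\noindent
The plan is to proceed by contradiction: assume $\tandt$ is an equilibrium model of $\Delta_{\lambda,\tmflimit}\cup\newPsi_{\lambda,\tmflimit}(\program)$ but that some instance of a constraint in $\newPsi^\bot_{\lambda,\tmflimit}(\program)$ fails. Then there exist $\cI=\intervco{m}{n}\in\allIp$, indices $0\leq k\leq j<\lambda-1$, and values $0\leq d<d'\leq\tmflimit$ with $d'-d\in\cI$ such that $\faili{\cI}{k}{j}, \timet_{k,d}, \timet_{j,d'}$ are all in $T$. The goal is to exhibit a strictly smaller $H'\subset T$ with $\tuple{H',T}$ still a model, contradicting equilibrium.

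\medskip\noindent
First, I would apply a splitting argument in the style of Lemma~\ref{lem:splitting:plain:ht}, adapted to this fragment, using $U=\alphabetT$ as splitting set. Since atoms from $\alphabetT$ only occur in heads of rules of $\Delta_{\lambda,\tmflimit}$ (they appear exclusively in bodies within $\newPsi_{\lambda,\tmflimit}(\program)$), $U$ is a valid splitting set and $b_U=\Delta_{\lambda,\tmflimit}$. Splitting then yields that $\tandt|_{\alphabetT}$ is an equilibrium model of $\Delta_{\lambda,\tmflimit}$. By Proposition~\ref{prop:ht:delta:timed} it is timed, so it induces a strict timing function $\tmf$ with $T\cap\alphabetT=\{\timet_{k,\tmf(k)}\mid \kinlambda\}$. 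In particular, $d=\tmf(k)$ and $d'=\tmf(j)$, and for any other value $d_1\neq\tmf(k)$ we have $\timet_{k,d_1}\notin T$ (analogously for $j$).

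\medskip\noindent
Next, I would put $H'\eqdef T\setminus\{\faili{\cI}{k}{j}\}$ and verify that $\tuple{H',T}$ is a model of $\Delta_{\lambda,\tmflimit}\cup\newPsi_{\lambda,\tmflimit}(\program)$. For $\Delta_{\lambda,\tmflimit}$, the atom $\faili{\cI}{k}{j}$ does not appear in any of its rules, and $H'$ coincides with $T$ on $\alphabetT$; hence all disjunctive rules remain satisfied at both $H'$ and $T$, exactly as they were in $\tandt$. For $\newPsi_{\lambda,\tmflimit}(\program)$, I need only check the rules whose head is $\faili{\cI}{k}{j}$: each such rule has body $\tk{\body}\wedge\timet_{k,d_1}\wedge\timet_{j,d_1'}$ with side condition $d_1'-d_1<m$ or $d_1'-d_1\geq n$. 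By the functionality established above, any instantiation making $\timet_{k,d_1}\wedge\timet_{j,d_1'}$ true in $T$ (or $H'$) forces $d_1=d$ and $d_1'=d'$, but then $d_1'-d_1=d'-d\in\intervco{m}{n}$, contradicting every permitted side condition. Therefore no rule with head $\faili{\cI}{k}{j}$ has its body satisfied in $\tuple{H',T}$, so removing $\faili{\cI}{k}{j}$ from $H$ breaks no implication.

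\medskip\noindent
Hence $\tuple{H',T}$ is a model of $\Delta_{\lambda,\tmflimit}\cup\newPsi_{\lambda,\tmflimit}(\program)$ with $H'\subset T$, contradicting equilibrium of $\tandt$. This forces the initial assumption to fail, giving $\tandt\models\newPsi^\bot_{\lambda,\tmflimit}(\program)$. The main obstacle will be the splitting step: strictly speaking, Lemma~\ref{lem:splitting:plain:ht} was stated for plain programs, so I expect to need a small explicit argument that the same splitting on $\alphabetT$ works here, justified by the observation that $\alphabetT$-atoms appear only as heads in $\Delta_{\lambda,\tmflimit}$ and only as body atoms elsewhere. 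The remainder, once functionality of $\tmf$ is secured, is a direct case analysis on the two families of rules~(\ref{def:ht:interval:one}) and~(\ref{def:ht:interval:two}).
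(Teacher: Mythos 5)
Your proof is correct and takes essentially the same route as the paper's: the paper likewise obtains timedness (hence functionality of the $\timet$ atoms) from the equilibrium of $\Delta_{\lambda,\tmflimit}$ and then argues that a true $\faili{\cI}{\kvar}{j}$ with $\tmf(j)-\tmf(\kvar)\in\cI$ cannot be founded by any rule of $\newPsi_{\lambda,\tmflimit}(\program)$ because their side conditions force $\tmvar'-\tmvar<m$ or $\tmvar'-\tmvar\geq n$, which is exactly what your explicit construction of $H'=T\setminus\{\faili{\cI}{\kvar}{j}\}$ formalizes. One cosmetic slip: the bodies of the rules in~(\ref{def:ht:interval:one}) and~(\ref{def:ht:interval:two}) are just $\timet_{\kvar,\tmvar}\wedge\timet_{j,\tmvar'}$, without the conjunct $\tk{\body}$ you mention; this does not affect your argument.
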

\begin{proof}
  We show that it models each integrity constraint in $\newPsi^\bot_{\lambda,\tmflimit}(\program)$.
Lets assume towards a contradiction that there is constraint $c = \bot \leftarrow
    \faili{\intervco{m}{n}}{\kvar}{j} \wedge
    \timet_{\kvar,\tmvar} \wedge
    \timet_{j,\tmvar'}$ that is not modeled by $\tandt$.
  This implies that  $\tandt\models\faili{\intervco{m}{n}}{\kvar}{j}$
  and $\tandt\models\timet_{\kvar,\tmvar}$
  and $\tandt\models\timet_{j,\tmvar'}$,
Since $\tandt$ is an equilibrium model of $\Delta_{\lambda,\tmflimit}$,
  $\tandt$ is timed inducing $\tmf$,
  where $\tmf(k)=\tmvar$ and $\tmf(j)=\tmvar'$.
Given that $\tandt\models\faili{\intervco{m}{n}}{\kvar}{j}$,
  then it must be founded by a rule in $\newPsi_{\lambda}(\program)$,
  because it is in equilibrium.
  If it comes from \eqref{def:ht:interval:one} then, the condition is that
  $\tmvar'-\tmvar < m$,
  which leads to a contradiction since
  $\tmvar'-\tmvar \in \cI$ is a condition for $c$.
  If it comes from \eqref{def:ht:interval:two} then, the condition is that
  $\tmvar'-\tmvar \geq n$,
  which leads to a contradiction since
  $\tmvar'-\tmvar \in \cI$ is a condition for $c$.

\end{proof}

\begin{lemma}[Completeness lemma HT]
  \label{lem:mlp:full:ht:completeness}
  Let \program\ be a metric logic program over $\alphabet$,
  and
  $\M=(\tuple{\Htrace,\Ttrace}, \tmf)$ a timed \HT-trace of length $\lambda$.

  If
  $\M\models\program$,
  then
  there exists an HT interpretation  $\handt$
  such that
  $\handt\models\newPi_\lambda(\program)\cup
    \Delta_{\lambda,\tmflimit}\cup
    \newPsi_{\lambda,\tmflimit}(\program)$
  with $\tmflimit=\tmf(\lambda-1)$,
  and $\tuple{H,T}|_{\alphabets\cup\alphabetT}=\mhtToHt{\M}$.

\end{lemma}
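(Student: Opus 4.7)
The plan is to build the required $\handt$ via the construction $\exmtri$ from Definition~\ref{def:ht:trival:construction}, instantiated with $P=\program$, $\tmf$ the timing function of $\M$, and $\trivalIm$ the three-valued metric interpretation corresponding to $\M$ per Definition~\ref{def:three-valued:mht:correspondence}. Then let $\handt$ be the \HT\ interpretation over $\alphabets \cup \alphabetT \cup \alphabetaux \cup \alphabetI$ corresponding to $\exmtri$ in the sense of Definition~\ref{def:three-valued-from-ht}. By Proposition~\ref{prop:ht:construction:total}, $\exmtri$ inherits totality from $\trivalIm$ on $\alphabet$, so the only atoms where $\handt$ may assign value $1$ are those of $\alphabets$ — and those are controlled by $\Htrace$ and $\Ttrace$. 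The restriction claim $\handt|_{\alphabets\cup\alphabetT}=\mhtToHt{\M}$ then follows immediately from items~(1)--(3) of Definition~\ref{def:ht:trival:construction} together with Proposition~\ref{prop:ht:corr:eq}.

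Next, I would verify satisfaction of the three components separately. For $\Delta_{\lambda,\tmflimit}$: items~(2)--(3) of the construction show that $\handt$ is timed wrt $\lambda$ and induces $\tmf$, so Proposition~\ref{prop:ht:timed:delta} directly yields $\handt\models\Delta_{\lambda,\tmflimit}$ for $\tmflimit=\tmf(\lambda-1)$. For $\newPsi_{\lambda,\tmflimit}(\program)$: a rule of form~\eqref{def:ht:interval:one} or~\eqref{def:ht:interval:two} has body $\timet_{\kvar,\tmvar}\wedge\timet_{j,\tmvar'}$ guarded by $\tmvar'-\tmvar<m$ or $\tmvar'-\tmvar\geq n$, i.e.\ by $\tmvar'-\tmvar\notin\intervco{m}{n}$. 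Whenever the body is satisfied in $\handt$, we have $\tmf(\kvar)=\tmvar$ and $\tmf(j)=\tmvar'$, so $\tmf(j)-\tmf(\kvar)\notin\cI$, and items~(5)--(6) of the construction force $\exmtri(\faili{\cI}{\kvar}{j})=2$; hence the head $\faili{\cI}{\kvar}{j}$ holds in $\handt$.

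The main obstacle, and the bulk of the work, lies in showing $\handt\models\newPi_\lambda(\program)$. This set splits into (a) the rule translations $\tk{r}$, and (b) the Tseitin-style rules $\eta^*_k(\varphi)$ for each metric atom $\varphi\in r$. For~(b), the essential step is to check that the hypotheses e1--e5 of Proposition~\ref{prop:eq:eta:aux} are discharged by items~(1) and~(4)--(8) of the construction of $\exmtri$. Combined with item~(4), the proposition yields $\trivalI(\eta_k(\matom))=2$ for every metric atom $\matom$ occurring in $\program$, i.e.\ $\handt\models\eta_k(\matom)$; since each set of rules $\eta^*_k(\matom)$ in Table~\ref{tab:tseitin:metric:temporal} is a logic-programming rendering of the corresponding equivalence $\eta_k(\matom)$, this gives $\handt\models\eta^*_k(\matom)$. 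For~(a), I would exploit item~(4) of the construction to obtain $\trivalI(\laux{\matom}{k})=\trival{k}{\matom}$ for every metric atom in $r$; by Proposition~\ref{prop:three-valued:eq:ht} this equality of three-valued values propagates through Boolean connectives to yield $\trivalI(\tk{r})=\trival{k}{r}$. Because $\M,k\models r$ holds for all $\kinlambda$ (as $\M\models\alwaysF{r}$), Proposition~\ref{prop:three-valued-semantics-eq} gives $\trival{k}{r}=2$, so $\trivalI(\tk{r})=2$ and thus $\handt\models\tk{r}$.

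The subtle step I expect to need the most care is the propagation argument in~(a): one must be precise that replacing metric atoms by their labels is a \emph{three-valued} substitution, so that the induction on the Boolean structure of $r$ really goes through under the three-valued semantics of Definition~\ref{def:three-valued-semantics-mht} and Definition~\ref{def:three-valued-extension-ht}. The other delicate point is item~(8) of the construction for $\ljaux{\metricI{\eventuallyF} b}{k}{j}$, which uses $\neg\faili{\cI}{\kvar}{j}$: one should check that $\trivalI(\neg\faili{\cI}{\kvar}{j})\in\{0,2\}$ (which holds since the construction assigns $\faili{\cI}{\kvar}{j}$ only values $0$ or $2$) so that the minimum behaves classically, matching hypothesis e5 of Proposition~\ref{prop:eq:eta:aux} as stated.
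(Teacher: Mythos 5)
Your proposal is correct and follows essentially the same route as the paper's own proof: construct $\exmtri$ as in Definition~\ref{def:ht:trival:construction}, obtain the restriction claim via Proposition~\ref{prop:ht:corr:eq}, handle $\Delta_{\lambda,\tmflimit}$ through timedness and Proposition~\ref{prop:ht:timed:delta}, check $\newPsi_{\lambda,\tmflimit}(\program)$ directly, and discharge $\newPi_\lambda(\program)$ by the label/value substitution together with Proposition~\ref{prop:eq:eta:aux}. You are in fact slightly more explicit than the paper about the passage from the equivalences $\eta_k(\matom)$ to the rule sets $\eta^*_k(\matom)$ and about the two-valued behaviour of the atoms $\faili{\cI}{k}{j}$, which the paper's proof leaves implicit.
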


\begin{proof}

    We construct $\trivalI=\exmtri$ as done in Definition~\ref{def:ht:trival:construction}
    and let $\handt$ be the HT interpretation associated to $\trivalI$.
    We notice the properties from Proposition~\ref{prop:ht:corr:eq} are satisfied
    by the construction of $\exmtri$, so we can apply Proposition~\ref{prop:ht:corr:eq}
    to assure that $\tuple{H,T}|_{\alphabets\cup\alphabetT}=\mhtToHt{\M}$.
We now proceed to show that $\handt\models\newPi_\lambda(\program)\cup\Delta_{\lambda,\tmflimit}\cup\newPsi_{\lambda,\tmflimit}(\program)$
    by showing that it is a model of each part.

    \medskip
    First, we notice that by construction $\handt$ is timed wrt. $\lambda$ inducing $\tmf$.
    We can apply Proposition~\ref{prop:ht:timed:delta}
    to show that $\handt\models\Delta_{\lambda,\tmflimit}$ for $\tmflimit=\tmf(\lambda-1)$.

    \medskip
    Next, we show that $\handt\models\newPsi_{\lambda,\tmflimit}(\program)$
    by showing that $\trivalI(\newPsi_{\lambda,\tmflimit}(\program))=2$.
    Then, we must show that for every rule $r=\faili{\cI}{\kvar}{j} \leftarrow \timet_{\kvar,\tmvar} \wedge \timet_{j,\tmvar'} $
    in $\newPsi_{\lambda,\tmflimit}(\program)$,
    we have that $\trivalI(r)=2$.
    Notice that $r$ must come either from
    \eqref{def:ht:interval:one} or \eqref{def:ht:interval:two}.
    The analysis for both cases is the same so we will proceed, with \eqref{def:ht:interval:one},
    and leave \eqref{def:ht:interval:two} to the reader.
The conditions for \eqref{def:ht:interval:one} state that $\tmvar'-\tmvar< m$,
    since $\handt$ it is timed, we have
    $\timet_{\kvar,\tmvar}$
    and $\timet_{j,\tmvar'}$
    with $\tmf(j)=\tmvar'$ and $\tmf(\kvar)=\tmvar$.
    So we have that $\tmf(j)-\tmf(\kvar) \not\in \cI$.
    If this is the case, then $\trivalI(\faili{\cI}{\kvar}{j})=2$
    by construction of $\exmtri$.
    Then the three-valued semantics for \HT\ gives us that
    $\trivalI(\faili{\cI}{\kvar}{j} \leftarrow \timet_{\kvar,\tmvar} \wedge \timet_{j,\tmvar'})=2$,
    since $2\geq\{0,1,2\}$.

    \medskip
    Finally, we show that $\handt\models\newPi_\lambda(\program)$ by showing that $\trivalI(\newPi_\lambda(\program))=2$.
    Then, we must show that
    for every rule $\alwaysF{r} \in \program$, and \kinlambda,
    we have $\trivalI(\tk{r})=2$,
    and for every $\matom \in \program$,
    we have $\trivalI(\eta_k^*(\matom))=2$.
    For the first point,
    we know by construction of $\exmtri$,
    that
    $\trival{k}{\mu}=\trivalI(\laux{k}{\mu})$.
    With this equivalence we use the substitution to know that
    if $\trival{k}{r}=2$ then $\trivalI(\tk{r})=2$.
    And $\trival{k}{r}=2$ since $\M$ is a model of $\program$.
For the second point,
    we can directly apply Proposition~\ref{prop:eq:eta:aux}
    given that the requirements are satisfied by Proposition~\ref{prop:ht:corr:eq}.

  \end{proof}

\begin{lemma}[Correctness lemma HT]
  \label{lem:mlp:full:ht:correctness}
  Let
  \program\ be a metric logic program,
  $\lambda,\tmflimit \in \mathbb{N}$,
  and $\handt$ an timed HT interpretation wrt. $\lambda$ inducing $\tmf$.

  If
  $\handt\models\newPi_\lambda(\program)\cup
    \newPsi_{\lambda,\tmflimit}(\program)\cup
    \newPsi^{\bot}_{\lambda,\tmflimit}(\program)$,
  then
  $\htToMht{\tuple{H,T}|_{\alphabets\cup\alphabetT}}\models\program$.
\end{lemma}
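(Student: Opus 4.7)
The plan is to let $\M = \htToMht{\handt|_{\alphabets \cup \alphabetT}}$, which by construction is a timed \HT-trace of length $\lambda$ over $\alphabet$ whose induced timing function coincides with $\tmf$. To show $\M \models \program$ it suffices, for every $\alwaysF r \in \program$ and every $\rangeco{k}{0}{\lambda}$, to establish $\M, k \models r$. I will route this through the three-valued semantics of \HT\ and \MHT, reducing the task to one application of Proposition~\ref{prop:eq:eta:aux} per metric atom occurring in $\program$.

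First I introduce $\trivalI$, the three-valued interpretation corresponding to $\handt$ via Definition~\ref{def:three-valued-from-ht}, and $\trivalIm$, the three-valued metric interpretation corresponding to $\M$ via Definition~\ref{def:three-valued:mht:correspondence}; I then verify the five preconditions of Proposition~\ref{prop:eq:eta:aux}. Condition e1 is immediate from the definition of $\htToMht$. For e2 and e3 I exploit that $\handt$ is timed, so every $\trivalI(\timet_{k,d})$ is classical, taking only the values $0$ or $2$; then $\handt \models \newPsi_{\lambda,\tmflimit}(\program)$ forces $\trivalI(\faili{\cI}{k}{j}) = 2$ whenever $\tmf(j) - \tmf(k) \notin \cI$, while $\handt \models \newPsi^{\bot}_{\lambda,\tmflimit}(\program)$ rules out $\faili{\cI}{k}{j}$ taking any nonzero value whenever $\tmf(j) - \tmf(k) \in \cI$. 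Combined, these give e2 and e3.

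The main obstacle is verifying e4 and e5. The implications in $\eta^*_k$ of Table~\ref{tab:tseitin:metric:temporal} are a priori strictly weaker than the biconditionals in $\eta_k$, so in an arbitrary HT-model the $\ljaux$ atoms could be assigned values that break the target equalities $\trivalI(\ljaux) = \max\{\trivalI(\laux{b}{j}), \trivalI(\faili{\cI}{k}{j})\}$ and $\trivalI(\ljaux) = \min\{\trivalI(\laux{b}{j}), \trivalI(\neg \faili{\cI}{k}{j})\}$. The crucial observation is that e2 and e3 have already pinned $\trivalI(\faili{\cI}{k}{j})$ to a classical value. In the always case, a short case split on $\trivalI(\faili{\cI}{k}{j}) \in \{0, 2\}$ applied to the three implications $\faili{\cI}{k}{j} \rightarrow \ljaux$, $\laux{b}{j} \rightarrow \ljaux$, and $\ljaux \wedge \neg \faili{\cI}{k}{j} \rightarrow \laux{b}{j}$ collapses the two inequalities to equality; the eventually case is handled symmetrically by the integrity constraint $\ljaux \wedge \faili{\cI}{k}{j} \rightarrow \bot$ together with $\neg \faili{\cI}{k}{j} \wedge \laux{b}{j} \rightarrow \ljaux$ and $\ljaux \rightarrow \laux{b}{j}$.

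With e1--e5 established, Proposition~\ref{prop:eq:eta:aux}, applied once per metric atom $\matom$ occurring in $\program$, yields $\trivalI(\laux{\matom}{k}) = \trivalIm(k, \matom)$ for all relevant $\matom$ and $k$; for the inner atoms $b \in \alphabet \cup \{\bot, \top\}$ inside the temporal operators, $\trivalI(\laux{b}{j}) = \trivalIm(j, b)$ follows from e1 together with the two implications of $\eta^*_j(b)$ (or directly for the constants). These equalities propagate through the Boolean connectives of $r$, so that $\trivalI(\tk{r})$ equals $\trival{k}{r}$. Since $\tk{r} \in \newPi_\lambda(\program)$ is satisfied by $\handt$, we obtain $\trival{k}{r} = 2$, which by Proposition~\ref{prop:three-valued-semantics-eq} gives $\M, k \models r$, as required.
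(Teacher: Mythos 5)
Your proposal is correct and follows essentially the same route as the paper's proof: construct $\M=\htToMht{\handt|_{\alphabets\cup\alphabetT}}$, pass to the three-valued interpretations, verify conditions e1--e5 of Proposition~\ref{prop:eq:eta:aux} (e1 by construction, e2/e3 from timedness together with $\newPsi_{\lambda,\tmflimit}(\program)$ and $\newPsi^{\bot}_{\lambda,\tmflimit}(\program)$, e4/e5 from $\newPi_\lambda(\program)$), and then propagate the pointwise equivalences $\trivalI(\laux{\matom}{k})=\trival{k}{\matom}$ through the rules to conclude $\M,k\models r$. If anything, your explicit case split on the classical value of $\faili{\cI}{k}{j}$ when recovering the witness equivalences from the implications of $\eta^*_k$ is slightly more careful than the paper's direct appeal to $\trivalI(\eta_k(\matom))=2$.
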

\begin{proof}
Let $\htToMht{\handt|_{\alphabetT\cup\alphabets}}=\M$
and let $\trivalI$ be the three-valued interpretation corresponding to $\handt$.

We will show below that each of the conditions of Proposition~\ref{prop:eq:eta:aux}.
to prove the point-wise equivalence for the auxiliary atoms,
namely $\trivalI(\laux{\matom}{k})=\trival{k}{\matom}$
for all \kinlambda.
With this in hand we argue that $\M\models\program$ follows directly from $\handt\models\newPi_\lambda(\program)$,
given that with these equivalence for any \kinlambda,
$\trivalI(\tk{r})=2$ iff $\trival{k}{r}=2$.

\medskip
Notice that $\handt|_{\alphabets\cap\alphabetT}=\mhtToHt{\M}$,
so we apply Proposition~\ref{prop:ht:corr:eq} and get
$\trivalI(a_k)=\trival{k}{a}$ for all $\kinlambda$, $a\in\alphabet$.

\medskip
We will now show that
$\trivalI(\faili{\cI}{\kvar}{j})=2$ iff $\tmf(j)-\tmf(\kvar) \not\in \cI$ for all $\kinlambda$, $\cI\in\allI$.
For this
let  $\kinlambda$ and $\cI=\intervco{m}{n}\in\allI$.
We know that $\handt\models\timet_{k,\tmvar}$ and $\handt\models\timet_{j,\tmvar'}$
with $\tmvar'>d$, where $\tmf(k)=\tmvar$ and $\tmf(j)=\tmvar'$,
since $\handt$ is timed wrt. $\lambda$.
This shows that
$\trivalI(\timet_{k,\tmvar}\wedge\timet_{j,\tmvar'})=2$,
since
$\trivalI(\timet_{k,\tmvar})=2$ and $\trivalI(\timet_{j,\tmvar'})=2$.
Now we prove both directions of the equivalence:
\casex{\leftarrow}{
    We start with $\tmf(j)-\tmf(\kvar) \not\in \cI$,
    which gives us the conditions in \eqref{def:ht:interval:one}
    or \eqref{def:ht:interval:two}
    depending on the two cases for not being in $\cI$.
    Then,
    $\trivalI(\faili{\cI}{\kvar}{j} \leftarrow \timet_{\kvar,\tmvar} \wedge \timet_{j,\tmvar'})=2$
    since $\handt\models\newPsi_{\lambda,\tmflimit}(\program)$.
    By using the three-valued semantics for \HT\ we know that
    $\trivalI(\faili{\cI}{\kvar}{j})=2$,
    since $\trivalI(\timet_{k,\tmvar}\wedge\timet_{j,\tmvar'})=2$.
}
\casex{\rightarrow}{
    We start with $\trivalI(\faili{\cI}{\kvar}{j})=2$,
    and let's assume towards a contradiction that $\tmf(j)-\tmf(\kvar) \in \cI$.
    Then,
    $\trivalI(\bot \leftarrow
        \faili{\intervco{m}{n}}{\kvar}{j} \wedge
        \timet_{\kvar,\tmvar} \wedge
        \timet_{j,\tmvar'})=2$
        since
    $\handt\models\newPsi^{\bot}_{\lambda,\tmflimit}(\program)$.
    With three-valued semantics for \HT\ we know that
    $\trivalI(
        \faili{\intervco{m}{n}}{\kvar}{j} \wedge
        \timet_{\kvar,\tmvar} \wedge
        \timet_{j,\tmvar'})=0$, which leads to a contradiction,
        since $\trivalI(\faili{\cI}{\kvar}{j})=2$ and $\trivalI(\timet_{k,\tmvar}\wedge\timet_{j,\tmvar'})=2$.

}
Therefore, $\trivalI(\faili{\cI}{\kvar}{j})=2$ iff $\tmf(j)-\tmf(\kvar) \not\in \cI$ for all $\kinlambda$, $\cI\in\allI$

\medskip
We will now show that
$\trivalI(\faili{\cI}{\kvar}{j})=0$ iff $\tmf(j)-\tmf(\kvar) \in \cI$ for all $\kinlambda$, $\cI\in\allI$.
For this
let  $\kinlambda$ and $\cI=\intervco{m}{n}\in\allI$.
As before, we know that $\handt\models\timet_{k,\tmvar}$ and $\handt\models\timet_{j,\tmvar'}$,
with $\tmvar'>d$, where $\tmf(k)=\tmvar$ and $\tmf(j)=\tmvar'$,
since it is timed.
Now we prove both directions of the equivalence:
\casex{\leftarrow}{
    We start with $\tmf(j)-\tmf(\kvar) \in \cI$.
    Since $\handt\models\newPsi^{\bot}_{\lambda,\tmflimit}(\program)$,
    then
    $\trivalI(\bot \leftarrow
        \faili{\intervco{m}{n}}{\kvar}{j} \wedge
        \timet_{\kvar,\tmvar} \wedge
        \timet_{j,\tmvar'})=2$,
    which means that
    $\trivalI(
                \faili{\intervco{m}{n}}{\kvar}{j} \wedge
                \timet_{\kvar,\tmvar} \wedge
                \timet_{j,\tmvar'})=0$.
    The three-valued semantics for \HT\ gives us that
    $\min\{
        \trivalI(\faili{\intervco{m}{n}}{\kvar}{j}), 2\}=0$,
        after substituting the known values.
    Then $\trivalI(\faili{\cI}{\kvar}{j})=0$.

}
\casex{\rightarrow}{
    We start with $\trivalI(\faili{\cI}{\kvar}{j})=0$,
    and assume towards a contradiction that $\tmf(j)-\tmf(\kvar) \not\in \cI$.
    Then
    $\trivalI(\faili{\cI}{\kvar}{j} \leftarrow \timet_{\kvar,\tmvar} \wedge \timet_{j,\tmvar'})=2$
    since $\handt\models\newPsi_{\lambda,\tmflimit}(\program)$
    and we can apply \eqref{def:ht:interval:one}
    or \eqref{def:ht:interval:two}
    depending on the two cases for not being in $\cI$.
    Contradiction since $\trivalI(\timet_{k,\tmvar}\wedge\timet_{j,\tmvar'})=2$.
}
Therefore, $\trivalI(\faili{\cI}{\kvar}{j})=0$ iff $\tmf(j)-\tmf(\kvar) \in \cI$ for all $\kinlambda$, $\cI\in\allI$.

\medskip
We will now show that that
$\trivalI(\ljaux{\matom}{k}{j})$=$\max\{\trivalI(\laux{b}{j}),\trivalI(\faili{\cI}{\kvar}{j})\}$.
Consider $\matom=\metricI{\alwaysF}b$ and let \kinlambda, $\rangeco{j}{k}{\lambda}$.
Since $\htcinterp\models\newPi_\lambda(\program)$, we have that
$\trivalI(\eta_k(\matom))=2$.
We use the definition of Table~\ref{tab:tseitin:metric:temporal}
and get $\trivalI(\ljaux{\matom}{k}{j}\leftrightarrow\laux{b}{j}\vee\faili{\cI}{\kvar}{j})=2$.
By applying Proposition~\ref{prop:three-valued:eq:ht},
$\trivalI(\ljaux{\matom}{k}{j})$=$\trivalI(\laux{b}{j}\vee\faili{\cI}{\kvar}{j})$,
which by the semantics of three-valued logic gives us that
$\trivalI(\ljaux{\matom}{k}{j})$=$\max\{\trivalI(\laux{b}{j}),\trivalI(\faili{\cI}{\kvar}{j})\}$.
The prove that $\trivalI(\ljaux{\matom}{k}{j})=\min\{\trivalI(\laux{b}{j}),\trivalI(\neg\faili{\cI}{\kvar}{j})\}$,
follows analogously.

\end{proof}

 \setcounter{theorem}{\getrefnumber{thm:mlp:full:ht:completeness}}
\addtocounter{theorem}{-1}
\begin{theorem}[Completeness]
  Let \program\ be a metric logic program over $\alphabet$,
  and
  $\M=(\tuple{\Ttrace,\Ttrace}, \tmf)$ a total timed \HT-trace of length $\lambda$.

  If
  $\M$ is a metric equilibrium model of $\program$,
  then
  there exists an equilibrium model $\tandt$ of $\newPi_\lambda(\program)\cup\Delta_{\lambda,\tmflimit}\cup\newPsi_{\lambda,\tmflimit}(\program)$
  with $\tmflimit=\tmf(\lambda-1)$
  such that $\tandt|_{\alphabets\cap\alphabetT}=\mhtToHt{\M}$.
\end{theorem}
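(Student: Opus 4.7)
The plan is to combine the existence lemma for models (Lemma~\ref{lem:mlp:full:ht:completeness}) with a minimality argument that reduces any strictly smaller HT-model to a strictly smaller MHT-model, contradicting the assumed metric equilibrium status of $\M$. First, since $\M$ is, in particular, an MHT-model of $\program$, Lemma~\ref{lem:mlp:full:ht:completeness} delivers an HT-interpretation $\tuple{H,T}$ that models $\newPi_\lambda(\program)\cup\Delta_{\lambda,\tmflimit}\cup\newPsi_{\lambda,\tmflimit}(\program)$ and satisfies $\tuple{H,T}|_{\alphabets\cup\alphabetT}=\mhtToHt{\M}$. Because $\M$ is total and the corresponding three-valued metric interpretation $\trivalIm$ is therefore total, Proposition~\ref{prop:ht:construction:total} ensures that the extended three-valued interpretation $\exmtri$ used inside Lemma~\ref{lem:mlp:full:ht:completeness} is total as well, so this model is in fact of the form $\tandt$.

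For the equilibrium part, I would assume towards a contradiction that there exists $\tuple{H',T}$ with $H'\subset T$ modelling $\newPi_\lambda(\program)\cup\Delta_{\lambda,\tmflimit}\cup\newPsi_{\lambda,\tmflimit}(\program)$, and argue that this strict shrinking must ultimately occur inside $\alphabets$. First, by Lemma~\ref{lem:splitting:full:ht}, the splitting on the timing atoms $\alphabetT$ applies: the restriction $\tandt|_{\alphabetT}$ must be an equilibrium model of $\Delta_{\lambda,\tmflimit}$, which by Proposition~\ref{prop:ht:timed:delta:equilibrium} forces $H'|_{\alphabetT}=T|_{\alphabetT}$, so $\tuple{H',T}$ remains timed wrt $\lambda$ and induces the same timing function $\tmf$ as $\tandt$. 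A second splitting step on $\alphabetI$ then forces $H'|_{\alphabetI}=T|_{\alphabetI}$, since the rules in $\newPsi_{\lambda,\tmflimit}(\program)$ defining the $\faili{\cI}{k}{j}$ atoms have only timing atoms in their bodies, which are already fixed. Hence $H'$ and $T$ can only differ on $\alphabets\cup\alphabetaux$.

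The next step is to enable the use of Lemma~\ref{lem:mlp:full:ht:correctness}, which requires a model of $\newPi_\lambda(\program)\cup\newPsi_{\lambda,\tmflimit}(\program)\cup\newPsi^{\bot}_{\lambda,\tmflimit}(\program)$. Since $\tandt$ is an equilibrium model of $\Delta_{\lambda,\tmflimit}\cup\newPsi_{\lambda,\tmflimit}(\program)$ (by the splitting reasoning above), Lemma~\ref{lem:mlp:full:ht:monotonic} yields $\tandt\models\newPsi^{\bot}_{\lambda,\tmflimit}(\program)$. Crucially, each constraint in $\newPsi^{\bot}_{\lambda,\tmflimit}(\program)$ has a purely positive body, so the property is monotone in $H$: from $\tandt\models\newPsi^{\bot}_{\lambda,\tmflimit}(\program)$ and $H'\subseteq T$ we get $\tuple{H',T}\models\newPsi^{\bot}_{\lambda,\tmflimit}(\program)$ as well. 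Thus $\tuple{H',T}$ satisfies the full set of premises required by Lemma~\ref{lem:mlp:full:ht:correctness} and is timed wrt $\lambda$ inducing $\tmf$, so $\htToMht{\tuple{H',T}|_{\alphabets\cup\alphabetT}}$ is an MHT-model of $\program$ sharing the timing function with $\M$.

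Finally, to close the contradiction I need to argue that $H'\subset T$ actually forces this restricted trace to be strictly smaller than $\M$ in the MHT ordering. Since the timing atoms and the $\faili{}{}{}$-atoms are already pinned down, the strict inclusion $H'\subset T$ must show up either on some $a_k\in\alphabets$ or on some auxiliary atom $\laux{\matom}{k}$ or $\ljaux{\matom}{k}{j}$ in $\alphabetaux$. In the first case we are done immediately, as the induced metric trace witnesses a strict MHT-submodel of $\M$. In the auxiliary-only case, I expect to use Proposition~\ref{prop:eq:eta:aux} together with the base equivalences encoded by $\newPi_\lambda(\program)$ to propagate any discrepancy in an auxiliary atom down to its defining subformula, and, by induction on the subformula structure, eventually to a base atom $a_k$, thereby reducing to the first case. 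This propagation, driven by the Tseitin-style bi-implications, is the step I expect to be the main technical obstacle, since $\eta^*_k$ only presents the defining equivalences as pairs of implications and care is needed to show that no purely auxiliary shrinking can satisfy $\newPi_\lambda(\program)$ in HT. Once this is established, the metric trace derived from $\tuple{H',T}$ contradicts the metric equilibrium property of $\M$, completing the proof.
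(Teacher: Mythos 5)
Your strategy coincides with the paper's own proof: obtain the candidate model from Lemma~\ref{lem:mlp:full:ht:completeness} (with totality coming from the construction behind it, Proposition~\ref{prop:ht:construction:total}), then refute any strictly smaller HT-model by splitting off \alphabetT\ (Lemma~\ref{lem:splitting:full:ht}, Proposition~\ref{prop:ht:timed:delta:equilibrium}) and \alphabetI, transferring $\newPsi^{\bot}_{\lambda,\tmflimit}(\program)$ via Lemma~\ref{lem:mlp:full:ht:monotonic}, and feeding the smaller interpretation into Lemma~\ref{lem:mlp:full:ht:correctness} so that a shrinkage on \alphabets\ yields a strictly smaller \MHT-model of \program\ with the same timing function, contradicting that \M\ is a metric equilibrium model. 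All of this is sound and is exactly the paper's route.

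The genuine gap is the case you flag yourself: a strict shrinkage confined to \alphabetaux. Your sketch — ``propagate the discrepancy down to a base atom $a_k$, reducing to the first case'' — cannot work as stated, because in this case you have already fixed that $H'$ and $T$ agree on $\alphabets\cup\alphabetT\cup\alphabetI$; there is no base-atom discrepancy to reduce to. What must be shown instead is that no purely auxiliary shrinkage can satisfy $\newPi_\lambda(\program)$ at all. The paper closes this by observing that, for any interpretation satisfying $\newPi_\lambda(\program)$, the three-valued value of each auxiliary atom $\laux{\matom}{k}$ and $\ljaux{\matom}{k}{j}$ is functionally determined by the values of the atoms in $\alphabets\cup\alphabetI$ occurring in its defining implications (Table~\ref{tab:tseitin:metric:temporal}): the paired implications pin the auxiliary atom's value to that of its defining body, and since the smaller interpretation agrees with the total \tandt\ on $\alphabets\cup\alphabetI$, those values are $0$ or $2$, whereas an atom in $T\setminus H'$ would have value $1$ — a contradiction. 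Supplying this requires a per-operator check over the rules for $\metricI{\Next}b$, $\metricI{\alwaysF}b$ (with its witnesses) and $\metricI{\eventuallyF}b$ (with its witnesses), in the spirit of Proposition~\ref{prop:eq:eta:aux}; that check is precisely what your proposal leaves open. With it supplied, your proof is complete and matches the paper's.
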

\begin{proof}

  We construct $\tandt$ such that its corresponding three-valued interpretation is $\exmtri$.
  Notice the properties from Proposition~\ref{prop:ht:corr:eq} are satisfied
  by the construction of $\exmtri$, so we can apply Proposition~\ref{prop:ht:corr:eq}
  to assure that $\tandt|_{\alphabets\cup\alphabetT}=\mhtToHt{\M}$.
  In what follows,
  we will prove the theorem by proving incrementally that $\tandt$
  is an equilibrium model
  of each part of the program using Lemma~\ref{lem:splitting:full:ht},
  limiting the alphabet to the corresponding atoms.

  We begin proving
  that $\tandt|_\alphabetT$ is an equilibrium model of $\Delta_{\lambda,\tmflimit}$
  Notice that by construction $\tandt$ is timed wrt. $\lambda$ inducing $\tmf$.
  Therefor, $\tandt|_\alphabetT$ is an equilibrium model of $\Delta_{\lambda,\tmflimit}$ by Proposition~\ref{prop:ht:timed:delta}.

  We will now prove that $\tandt|_{\alphabetT\cup\alphabetI}$
  is an equilibrium model of $\Delta_{\lambda,\tmflimit}\cup\newPsi_{\lambda,\tmflimit}(\program)$.
  We can prove that $\tandt|_{\alphabetT\cup\alphabetI}\models\newPsi_{\lambda,\tmflimit}(\program)$
  as in the proof of Lemma~\ref{lem:mlp:full:ht:monotonic}.
  For the equilibrium criteria,
  we assume towards a contradiction that
  there is $\handt$ such that $H\subset T$ and $\handt\models\newPsi_{\lambda,\tmflimit}(\program)$,
  and let $\trivalI'$ be its corresponding three-valued interpretation.
  This means that there is $x\in\alphabetI $ such that $x\in T$ and $x\not\in H$,
  where $\trivalI'(x)=1$.
  Then, $x=\faili{\cI}{\kvar}{j}$ for some \kinlambda, $\rangeco{j}{k}{\lambda}$ and $\cI\in\allI$.
  Since $x\in T$,
  by construction of $\exmtri$, $\tmf(j)-\tmf(k)\not\in\cI$,
  we get that
  $\exmtri(\faili{\cI}{\kvar}{j})=2$.
  Since it is timed wrt. $\lambda$, then
  $d'-d\not\in\cI$ and $\trivalI'(\timet_{\kvar,\tmvar} \wedge \timet_{j,\tmvar'})=2$.
  We also know that
  $\trivalI'(\faili{\intervco{m}{n}}{\kvar}{j} \leftarrow\timet_{\kvar,\tmvar} \wedge \timet_{j,\tmvar'})=2$
  since $\handt\models\newPsi_{\lambda,\tmflimit}(\program)$.
  Then,
  by the three-valued semantics for \HT\ we know that
  $\trivalI'(\faili{\intervco{m}{n}}{\kvar}{j})=2$.
  This leads to a contradiction since $\trivalI'(x)=1$.
Therefore,
  $\tandt|_{\alphabetT\cup\alphabetI}$ is an equilibrium model of $\Delta_{\lambda,\tmflimit}\cup\newPsi_{\lambda,\tmflimit}(\program)$.

  We will now prove that $\tandt$ is an equilibrium model of $\newPi_\lambda(\program)\cup\Delta_{\lambda,\tmflimit}\cup\newPsi_{\lambda,\tmflimit}(\program)$.
  We can proof that $\tandt\models\newPi_\lambda(\program)$
  as in the proof of Lemma~\ref{lem:mlp:full:ht:completeness}.
  We assume towards a contradiction that
  there is $\handt$ such that $H\subset T$ and $\handt\models\newPi_\lambda(\program)$,
  and let $\trivalI'$ be its corresponding three-valued interpretation.
  Then,
  there is $x\in\alphabets\cup\alphabetaux $ such that $x\in T$ and $x\not\in H$ so $\trivalI'(x)=1$.
  We analyze the cases for $x$:
  \casex{x\in\alphabets}{
      $x=a_k$ for some $0\leq k <\lambda$.
      since $\tandt|_{\alphabetT\cup\alphabetI}$ is an equilibrium model of $\Delta_{\lambda,\tmflimit}\cup\newPsi_{\lambda,\tmflimit}(\program)$,
      we can apply Lemma ~\ref{lem:mlp:full:ht:monotonic}
      to get that $tandt|_{\alphabetT\cup\alphabetI}\models\newPsi^{\bot}_{\lambda,\tmflimit}(\program)$,
      and since $\handt|_{\alphabetT\cup\alphabetI}=\tandt|_{\alphabetT\cup\alphabetI}$,
      we get that $\handt\models\newPsi^{\bot}_{\lambda,\tmflimit}(\program)$.
      We also know that $\handt\models\newPi_\lambda(\program)\cup\newPsi_{\lambda,\tmflimit}(\program)$,
      so by Lemma~\ref{lem:mlp:full:ht:correctness},
      we get that
      $\htToMht{\handt|_{\alphabets\cup\alphabetT}}\models\program$.
      This leads to a contradiction since $\M$ is an equilibrium model of $\program$ but $\htToMht{\handt|_{\alphabets\cup\alphabetT}}<\M$.
      Therefore, $\tandt|_{\alphabets}=\handt|_{\alphabets}$
  }
    \casex{x\in\alphabetaux}{
      Notice that if $\handt\models\newPi_\lambda(\program)$
      then the value of $x\in\alphabetaux$ is fully defined by the atoms in $\alphabets\cup\alphabetI$.
      Since $\tandt|_{\alphabets\cup\alphabetI}=\handt|_{\alphabets\cup\alphabetI}$, then,
      $\trivalI'(x)=\exmtri(x)$.
      But since $\tandt$ is total then $\exmtri(x)$ is either $0$ or $1$, but $\trivalI'(x)=1$.
      This leads to a contradiction
    }
  Therefore,
  $\tandt$ is an equilibrium model of $\newPi_\lambda(\program)\cup\Delta_{\lambda,\tmflimit}\cup\newPsi_{\lambda,\tmflimit}(\program)$.

\end{proof}

\setcounter{theorem}{\getrefnumber{thm:mlp:full:ht:correctness}}
\addtocounter{theorem}{-1}

\begin{theorem}[Correctness]
  Let
  \program\ be a metric logic program,
  and
  $\lambda,\tmflimit \in \mathbb{N}$.

  If
  $\tandt$ is an equilibrium model of $\newPi_\lambda(\program)\cup\Delta_{\lambda,\tmflimit}\cup\newPsi_{\lambda,\tmflimit}(\program)$,
  then
  $\htToMht{\tandt|_{\alphabets\cap\alphabetT}}$ is a metric equilibrium model of $\program$.
\end{theorem}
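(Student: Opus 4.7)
The plan is to derive the result as the mirror image of the completeness theorem, relying on the splitting structure of the translation together with the correctness lemma (Lemma~\ref{lem:mlp:full:ht:correctness}). First I would observe that by Lemma~\ref{lem:splitting:full:ht}, the program $\newPi_\lambda(\program)\cup\Delta_{\lambda,\tmflimit}\cup\newPsi_{\lambda,\tmflimit}(\program)$ splits along the alphabets $\alphabetT$, $\alphabetI$ and $\alphabets\cup\alphabetaux$. This lets me extract from the given equilibrium model $\tandt$ the sub-model $\tandt|_{\alphabetT}$, which is an equilibrium model of $\Delta_{\lambda,\tmflimit}$, and hence timed wrt.\ $\lambda$ (inducing some $\tmf$) by Proposition~\ref{prop:ht:delta:timed}. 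Moreover, because $\tandt|_{\alphabetT\cup\alphabetI}$ is an equilibrium model of $\Delta_{\lambda,\tmflimit}\cup\newPsi_{\lambda,\tmflimit}(\program)$, Lemma~\ref{lem:mlp:full:ht:monotonic} gives that $\tandt\models\newPsi^\bot_{\lambda,\tmflimit}(\program)$.

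Next, since $\tandt\models\newPi_\lambda(\program)\cup\newPsi_{\lambda,\tmflimit}(\program)\cup\newPsi^\bot_{\lambda,\tmflimit}(\program)$ and $\tandt$ is timed wrt.\ $\lambda$, Lemma~\ref{lem:mlp:full:ht:correctness} applies directly to yield that $\M\eqdef\htToMht{\tandt|_{\alphabets\cap\alphabetT}}$ is an \MHT-model of $\program$. Note also that $\M$ is total by construction because $\tandt$ is total on both $\alphabets$ and $\alphabetT$.

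It remains to establish minimality, which I expect to be the main obstacle. Suppose for contradiction that $\M$ is not a metric equilibrium model of $\program$; then there exists a timed \HT-trace $\M'=(\tuple{\Htrace',\Ttrace},\tmf)$ with $\Htrace'<\Ttrace$ and $\M'\models\program$, where $\Ttrace$ and $\tmf$ are the ones induced by $\tandt|_{\alphabets\cap\alphabetT}$. By Lemma~\ref{lem:mlp:full:ht:completeness} there is an \HT-interpretation $\handt'$ with $\handt'\models\newPi_\lambda(\program)\cup\Delta_{\lambda,\tmflimit}\cup\newPsi_{\lambda,\tmflimit}(\program)$ and $\handt'|_{\alphabets\cup\alphabetT}=\mhtToHt{\M'}$. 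The subtle point is to exhibit a witness $\handt''$ with $H''\subset T$ (strict inclusion on the full alphabet $\alphabets\cup\alphabetT\cup\alphabetaux\cup\alphabetI$) that still satisfies the whole translation, contradicting the equilibrium property of $\tandt$.

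To produce such $\handt''$ I would take $T''\eqdef T$ and let $H''$ coincide with $T$ on $\alphabetT\cup\alphabetI$ (so the timing and interval-violation components stay total, matching the unique equilibrium from Proposition~\ref{prop:ht:delta:timed} and the splitting), while on $\alphabets$ it mirrors $\mhtToHt{\M'}$, which is strictly smaller than $T|_{\alphabets}$ since $\Htrace'<\Ttrace$. On the auxiliary alphabet $\alphabetaux$ I would define $H''$ following the construction $\exmtri$ of Definition~\ref{def:ht:trival:construction}, driven by the chosen values on $\alphabets$ and $\alphabetI$; by the three-valued arguments used in Lemma~\ref{lem:mlp:full:ht:completeness} (in particular Proposition~\ref{prop:eq:eta:aux}), this assignment satisfies $\newPi_\lambda(\program)$ and still lies below $T$. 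The constraints in $\Delta_{\lambda,\tmflimit}\cup\newPsi_{\lambda,\tmflimit}(\program)$ are preserved because they only involve $\alphabetT\cup\alphabetI$, on which $H''$ equals $T$. Finally $H''\subset T$ strictly, because strictness is inherited from $\Htrace'<\Ttrace$ on $\alphabets$. This contradicts $\tandt$ being an equilibrium model and completes the proof.
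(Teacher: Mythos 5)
Your proof follows the paper's argument essentially step for step: splitting (Lemma~\ref{lem:splitting:full:ht}) together with Proposition~\ref{prop:ht:delta:timed} and Lemma~\ref{lem:mlp:full:ht:monotonic} to establish the hypotheses of Lemma~\ref{lem:mlp:full:ht:correctness}, and then Lemma~\ref{lem:mlp:full:ht:completeness} to refute the existence of a strictly smaller metric model of \program. Your explicit construction of the witness $\handt''$ that keeps the $T$-component (and agrees with it on $\alphabetT$, $\alphabetI$ and $\alphabetaux$) merely spells out what the paper leaves implicit when it asserts that the interpretation delivered by the completeness lemma is smaller than $\tandt$, so this is the same proof with a little extra care at that final step.
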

\begin{proof}
  Let $\htToMht{\tandt|_{\alphabetT\cup\alphabets}}=\M$.

  We begin proving that $\M\models\program$.
  We do so by using Lemma~\ref{lem:mlp:full:ht:correctness},
  so it remains proving that the conditions for this Lemma hold.
Firstly,
  $\tandt|_{\alphabetT}$ is an equilibrium model of $\Delta_{\lambda}$
  given the splitting property in Lemma~\ref{lem:splitting:full:ht}.
  Due to Proposition~\ref{prop:ht:delta:timed}, $\tandt|_{\alphabetT}$ is timed wrt. $\lambda$ inducing $\tmf$.
  Since adding other atoms not in ${\alphabetT}$ keeps timed property,
  $\tandt$ is also timed.
For the second condition,
  $\tandt|_{\alphabetT\cup\alphabetI}$ is an equilibrium model of $\Delta_{\lambda}\cup\newPsi_{\lambda}(\program)$ thanks to the Splitting property.
  Applying Lemma~\ref{lem:mlp:full:ht:monotonic} we get that $\tandt|_{\alphabetT\cup\alphabetI}\models\newPsi^\bot_{\lambda}(\program)$.
  As before, we can add the rest of the atoms since there is no interference so $\tandt\models\newPsi^\bot_{\lambda}(\program)$.
We also know that $\tandt\models\newPi_\lambda(\program)\cup\newPsi_{\lambda,\tmflimit}(\program)$ from the hypothesis.
With this we can apply Lemma~\ref{lem:mlp:full:ht:correctness} to get that $\M\models\program$.

  Now we prove that it is in equilibrium.
  By construction $\M$ is total.
  We assume towards a contradiction that
  there is $\M'=(\tuple{\Htrace,\Ttrace},\tmf)$ such that $\Htrace<\Ttrace$ and $\M'\models\program$.
  Then, there is $a\in\alphabet$ such that $a\in T_k$ and $a\not\in H_k$.
We apply Lemma~\ref{lem:mlp:full:ht:completeness},
  and
  let $\handt$ be an HT interpretation
  such that
  $\handt\models\newPi_\lambda(\program)\cup
  \Delta_{\lambda,\tmflimit}\cup
  \newPsi_{\lambda,\tmflimit}(\program)$
  with $\tmflimit=\tmf(\lambda-1)$,
  and $\tuple{H,T}|_{\alphabets\cup\alphabetT}=\mhtToHt{\M'}$
  By construction
  $a_k\in T$ and $a_k\not\in H$,
  which means that $\handt<\tandt$.
  Leading to a contradiction since $\tandt$ is an equilibrium model.

\end{proof}

 \subsubsection{Translation of General Metric Logic Programs to \HTC}
\begin{lemma}[Splitting General \HTC]
    \label{lem:splitting:full:htc}
    Given a metric logic program $\program$
    and a $\lambda\in\mathbb{N}$,
    the stable models of $\newPi_\lambda(\program)\cup\Delta_{\lambda}^c\cup\newPsi^c_{\lambda}(\program)$
    can be computed by means of the splitting technique.
\end{lemma}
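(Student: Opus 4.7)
The plan is to adapt the two-step splitting argument used for Lemma~\ref{lem:splitting:full:ht} to the \HTC\ setting, in the same way that Lemma~\ref{lem:splitting:plain:htc} adapts the argument of Lemma~\ref{lem:splitting:plain:ht}. Concretely, I would invoke the \HTC-splitting theorem of~\citep{cafascwa20b}, applying it twice in sequence so as to isolate the three components $\Delta^c_\lambda$, $\newPsi^c_\lambda(\program)$ and $\newPi_\lambda(\program)$ of the translation.

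First, I would take the splitting set $U_0 = \{\timet_k \mid 0 \leq k < \lambda\}$ consisting of the integer time variables. This is a legitimate splitting set because $\timet_k$ occurs in the head of a constraint atom only inside the axioms of $\Delta^c_\lambda$: the atoms $\timet_k = 0$ and $\timet_k - \timet_{k+1} \leq -1$ of~\eqref{def:htc:delta} are precisely the rules touching $U_0$ in their heads, while both $\newPi_\lambda(\program)$ and $\newPsi^c_\lambda(\program)$ mention the $\timet_k$ only in rule bodies (either implicitly, via the difference-constraint atoms of~(\ref{def:htc:psi:one:general}/\ref{def:htc:psi:two:general}), or not at all). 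Hence $b_{U_0}$ is exactly $\Delta^c_\lambda$, and the reduct obtained after fixing a constraint equilibrium model of the bottom part is $\newPi_\lambda(\program)\cup\mathit{Reduce}(\newPsi^c_\lambda(\program))$.

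Second, I would split the remainder using $U_1 = \{\faili{\cI}{k}{j} \mid \cI \in \allIp,\, 0 \leq k \leq j < \lambda\}$. After the first splitting the rules coming from~\eqref{def:htc:psi:one:general} and~\eqref{def:htc:psi:two:general} have their difference-constraint bodies already decided, so they become facts or are removed, and their heads $\faili{\cI}{k}{j}$ occur nowhere else as a head in $\newPi_\lambda(\program)$ (they appear only in bodies of rules in $\eta^*_k(\varphi)$ from Table~\ref{tab:tseitin:metric:temporal}). Thus $U_1$ is a splitting set for the intermediate program, the bottom part being (the reduct of) $\newPsi^c_\lambda(\program)$ and the top part being (the reduct of) $\newPi_\lambda(\program)$, which is the desired decomposition.

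The main obstacle I expect is a careful verification of the splitting hypothesis in the \HTC\ framework: unlike the Boolean case, atoms in $U_0$ are \emph{constraint atoms} whose denotations couple several variables, so I need to check that the definition of splitting set in~\citep{cafascwa20b} is indeed satisfied by $U_0$ (no constraint atom outside $b_{U_0}$ places a variable of $U_0$ in a head position). This amounts to observing that the variables $\timet_k$ are introduced exclusively by $\Delta^c_\lambda$, while elsewhere they appear inside negated difference-constraint atoms in rule bodies; once this bookkeeping is done, the rest of the argument is a direct transcription of the proof of Lemma~\ref{lem:splitting:full:ht}, using Proposition~\ref{pro:htc:defined} to guarantee that the $\timet_k$ are always defined so that the split constraint atoms have well-defined truth values in the bottom part.
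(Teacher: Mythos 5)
Your proposal is correct and follows essentially the same route as the paper: the paper proves this lemma exactly as Lemma~\ref{lem:splitting:full:ht} (the two-step split, first on the timing variables to peel off $\Delta^c_\lambda$, then on the interval-violation atoms $\faili{\cI}{k}{j}$ to separate $\newPsi^c_\lambda(\program)$ from $\newPi_\lambda(\program)$), invoking the \HTC\ notion of splitting set from~\citep[Definition~10]{cafascwa20b}. Your additional bookkeeping about $\timet_k$ occurring in head position only in $\Delta^c_\lambda$ and the appeal to Proposition~\ref{pro:htc:defined} is a sound elaboration of what the paper leaves implicit.
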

\begin{proof}
    Proven as Lemma~\ref{lem:splitting:full:ht} but using the notion of splitting set as defined in
    \citep[Definition 10]{cafascwa20b}.

\end{proof}

\begin{proposition}[Equivalences for correspondence]
    \label{prop:htc:corr:eq}
Given
    an \HTC\ interpretation $\htcinterp$,
    a timed trace $\M=(\tuple{\Htrace,\Ttrace}, \tmf)$ of length $\lambda$
    over alphabet $\alphabet$,
    and its corresponding three-valued metric interpretation $\trivalIm$,
    we have that
    $\htcinterp|_{\alphabets\cap\alphabetT^c}=\mhtToHtc{\M}$,
    iff the following properties hold:

    \begin{enumerate}
      \item $\trivalIhtc(a_k=\true)=\trivalIm(a,k)$ for all $\kinlambda$, $a\in\alphabet$
      \item $\trivalIhtc(\timet_{k}=\tmvar)=2$ iff $\tmf(k)=\tmvar$ for all $\kinlambda$, $\tmvar\in\mathbb{N}$
      \item $\trivalIhtc(\timet_{k}=\tmvar)=0$ iff $\tmf(k)\neq\tmvar$ for all $\kinlambda$, $\tmvar\in\mathbb{N}$
    \end{enumerate}
  \end{proposition}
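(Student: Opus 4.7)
The plan is to follow the pattern of Proposition~\ref{prop:ht:corr:eq}, which is essentially trivial by construction. The argument reduces to a direct unfolding of three definitions: of $\mhtToHtc{\M}$, of the denotations $\den{\cdot}$ for Boolean and integer equality atoms, and of the three-valued valuation $\trivalIhtc$ from Definition~\ref{def:htc:three-val:interpretation}. I would prove both directions separately, item by item, and keep in mind one small subtlety about strict denotations and the value $\undefined$.

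For the forward direction, I would assume $\htcinterp|_{\alphabets\cap\alphabetT^c}=\mhtToHtc{\M}$ and derive the three conditions. For condition~1, by the definition of $\mhtToHtc{\M}$ we have $\Vh(a_k)=\true$ iff $a\in H_k$ and $\Vt(a_k)=\true$ iff $a\in T_k$. Combined with the denotation $\den{a_k=\true}$ and Definition~\ref{def:htc:three-val:interpretation}, this gives $\trivalIhtc(a_k=\true)=2$ iff $a\in H_k$, $=1$ iff $a\in T_k\setminus H_k$, and $=0$ iff $a\notin T_k$, matching $\trivalIm(a,k)$ by Definition~\ref{def:three-valued:mht:correspondence}. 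For conditions~2 and~3, since $\mhtToHtc{\M}$ sets $\Vh(\timet_k)=\Vt(\timet_k)=\tmf(k)\in\mathbb{N}$, the three-valued value $\trivalIhtc(\timet_k=\tmvar)$ only takes the values $2$ or $0$, and hits $2$ precisely when $\tmf(k)=\tmvar$.

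For the backward direction, I would assume the three conditions hold and verify the equality $\Vh|_{\alphabets\cup\alphabetT^c}=h\cup x$ and $\Vt|_{\alphabets\cup\alphabetT^c}=t\cup x$ pointwise, where $h,t,x$ are as in the definition of $\mhtToHtc{\M}$. On the Boolean part, $(a_k,\true)\in\Vh$ iff $\Vh(a_k)=\true$ iff $\trivalIhtc(a_k=\true)=2$ iff (by condition~1) $\trivalIm(a,k)=2$ iff $a\in H_k$ iff $(a_k,\true)\in h$; the argument for $\Vt$ uses $\trivalIhtc(a_k=\true)>0$ and proceeds analogously. On the timing part, $(\timet_k,\tmvar)\in\Vh$ iff $\Vh(\timet_k)=\tmvar$ iff $\trivalIhtc(\timet_k=\tmvar)=2$ iff (by condition~2) $\tmf(k)=\tmvar$ iff $(\timet_k,\tmvar)\in x$.

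The only delicate point, and where I expect the main bookkeeping, is handling $\undefined$. Because the denotations of $a=\true$ and $\timet_k=\tmvar$ are strict (requiring $\Vh(x)\in\mathbb{N}$ or $\Vh(x)=\true$ to even enter the denotation), a three-valued value of $0$ on an equality atom is equivalent to the corresponding variable being undefined in $\Vt$, and in turn to the absence of the pair from the set representation of $\Vt$. Once this is explicit, the equality of $\htcinterp|_{\alphabets\cap\alphabetT^c}$ with $\mhtToHtc{\M}$ follows immediately from the pointwise agreement established above, so the result is essentially trivial by construction.
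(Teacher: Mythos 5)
Your proposal is correct and takes essentially the same route as the paper, whose entire proof is the one-line remark that the claim is trivial by construction of $\mhtToHtc{\M}$; you simply spell out the definition-unfolding (denotations, Definition~\ref{def:htc:three-val:interpretation}, and the set representation of valuations) that this remark leaves implicit. Your explicit treatment of $\undefined$ and the strict denotations is a reasonable elaboration, not a different approach.
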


  \begin{proof}
    Trivial by construction of $\mhtToHtc{\M}$
  \end{proof}

\begin{definition}[Construction of an HTc three-valued interpretation given a metric interpretation]
    \label{def:htc:trival:construction}
    Given a
    timing function $\tmf$,
    a metric three-valued interpretation $\trivalIm$,
    and a program $\program$,
    construct an \HTC\ interpretation, $\htcinterp=\htcinterpex$
    over alphabet $\alphabets\cup\alphabetT^c\cup\alphabetaux\cup\alphabetI$
    as follows:

    \begin{eqnarray*}
      \Vh(\timet_{k})= \Vt(\timet_{k}) & \eqdef & \tmf(k)\\
      \Vh(\faili{\cI}{\kvar}{j})= \Vt(\faili{\cI}{\kvar}{j})  & \eqdef &
      \begin{cases}
        \true & \text{if\;} \tmf(j)-\tmf(\kvar) \not\in \cI \\
        \undefined & \text{otherwise}
      \end{cases}\hspace{20pt} \text{for all interval } \cI \in  \allI\\
          \Vh(a_k) & \eqdef &
          \begin{cases}
              \true & \text{if\;} \trivalIm(k,a)=2    \\
              \undefined & \text{otherwise}
          \end{cases}\hspace{20pt} \text{for all atom } a \in  \alphabet\\
      \Vt(a_k) & \eqdef &
      \begin{cases}
        \true & \text{if\;} \trivalIm(k,a)>0 \\
        \undefined & \text{otherwise}
      \end{cases}\hspace{20pt} \text{for all atom } a \in  \alphabet\\
      \Vh(\laux{\matom}{k}) & \eqdef &
      \begin{cases}
        \true & \text{if\;} \trivalIm(k,\matom)=2 \\
        \undefined & \text{otherwise}
      \end{cases}\hspace{20pt} \text{for all } \matom \in  \program\\
      \Vt(\laux{\matom}{k}) & \eqdef &
      \begin{cases}
        \true & \text{if\;} \trivalIm(k,\matom)>0 \\
        \undefined & \text{otherwise}
      \end{cases}\hspace{20pt} \text{for all } \matom \in  \program\\
      \Vh'(\ljaux{\metricI{\alwaysF}b}{k}{j}) & \eqdef &
      \begin{cases}
        \true & \text{if\;} \Vh'(\laux{b}{j})=\true \text{ or } \Vh'(\faili{\cI}{\kvar}{j})=\true \\
        \undefined & \text{otherwise}
      \end{cases}\hspace{20pt} \text{for all } \metricI{\alwaysF}b \in \program\\
      \Vh'(\ljaux{\metricI{\eventuallyF}b}{k}{j}) & \eqdef &
      \begin{cases}
        \true & \text{if\;} \Vh'(\laux{b}{j})=\true \text{ and } \Vh'(\faili{\cI}{\kvar}{j})=\undefined \\
        \undefined & \text{otherwise}
      \end{cases}\hspace{20pt} \text{for all } \metricI{\eventuallyF}b \in \program
      \end{eqnarray*}

    for all $\kinlambda$, $\rangeco{j}{k}{\lambda}$, $\Vh'\in\{\Vh,\Vt\}$

  \end{definition}

\begin{proposition}[Properties of the three-valued interpretation]
    \label{prop:htc:trival:construction}
    Given a
    timing function $\tmf$,
    a metric three-valued interpretation $\trivalIm$,
    a program $\program$,
    and its corresponding \HTC\ interpretation $\htcinterpex$,
    then $\htcexmtri$
    the following properties hold:
\begin{enumerate}
    \item $\htcexmtri(a_k=\true)=\trivalIm(k,a)$ for all $a\in\alphabet$
    \item $\htcexmtri(\timet_{k}=\tmvar)=2$ iff $\tmf(k)=\tmvar$ for all $\tmvar\in\mathbb{N}$
    \item $\htcexmtri(\timet_{k}=\tmvar)=0$ iff $\tmf(k)\neq\tmvar$ for all $\tmvar\in\mathbb{N}$
    \item $\htcexmtri(\laux{\matom}{k}=\true)=\trivalIm(k,\matom)$ for all $\matom\in\program$
    \item $\htcexmtri(\faili{\cI}{\kvar}{j}=\true)=2$ iff $\tmf(j)-\tmf(\kvar) \not\in \cI$ for all $\cI\in\allI$
    \item $\htcexmtri(\faili{\cI}{\kvar}{j}=\true)=0$ iff $\tmf(j)-\tmf(\kvar) \in \cI$ for all $\cI\in\allI$
    \item $\htcexmtri(\ljaux{\metricI{\alwaysF}b}{k}{j}=\true)=\max\{\htcexmtri(\laux{b}{j}=\true),\htcexmtri(\faili{\cI}{\kvar}{j}=\true)\}$
        for all $\metricI{\alwaysF}b\in\program$
    \item $\htcexmtri(\ljaux{\metricI{\eventuallyF}b}{k}{j}=\true)=\min\{\htcexmtri(\laux{b}{j}=\true),\htcexmtri(\neg(\faili{\cI}{\kvar}{j}=\true))\}$
        for all $\metricI{\eventuallyF}b\in\program$
\end{enumerate}
for all $\kinlambda$, $\rangeco{j}{k}{\lambda}$

\end{proposition}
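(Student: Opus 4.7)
The plan is to verify each of the eight properties directly from the construction in Definition~\ref{def:htc:trival:construction} together with the definition of the three-valued mapping $\trivalIhtc$ (Definition~\ref{def:htc:three-val:interpretation}). Each property reduces to a case analysis on whether the relevant atom lies in the denotation under $\Vh$ alone, under $\Vt$ but not $\Vh$, or not at all. Since the construction sets $\Vh = \Vt$ on the time variables $\timet_k$ and on the interval atoms $\faili{\cI}{\kvar}{j}$, the three-valued evaluations at these atoms will always be either $0$ or $2$, never $1$, which simplifies items~2, 3, 5 and~6. For items~1 and~4, the construction follows the standard pattern where $\true$ is put in $\Vh$ iff $\trivalIm(k,\cdot)=2$ and in $\Vt$ iff $\trivalIm(k,\cdot) > 0$; this exactly mirrors the three-valued trace-to-\HT\ correspondence of Definition~\ref{def:three-valued:mht:correspondence}, giving the desired equalities by direct unfolding of $\trivalIhtc$.

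First I would handle items~2 and~3 together: by construction $\Vh(\timet_k)=\Vt(\timet_k)=\tmf(k)\in\mathbb{N}$, so $\Vh\in\den{\timet_k=\tmvar}$ iff $\tmf(k)=\tmvar$, and symmetrically for $\Vt$. Hence $\htcexmtri(\timet_k=\tmvar)\in\{0,2\}$ with the value $2$ precisely when $\tmf(k)=\tmvar$, yielding both equivalences at once. I would then treat items~5 and~6 analogously, using that $\Vh(\faili{\cI}{\kvar}{j})=\Vt(\faili{\cI}{\kvar}{j})=\true$ iff $\tmf(j)-\tmf(\kvar)\notin\cI$, so the denotation of $\faili{\cI}{\kvar}{j}=\true$ is reached under both valuations exactly in that case. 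Items~1 and~4 follow from straightforward case distinctions: the three-valued value of $a_k=\true$ (respectively $\laux{\matom}{k}=\true$) matches $\trivalIm(k,a)$ (respectively $\trivalIm(k,\matom)$) by construction of $\Vh$ and $\Vt$ on these Boolean variables.

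For items~7 and~8, I would apply the results already proved for items~4, 5, 6 to rewrite each side in terms of the underlying three-valued data and then reason by cases on whether $\faili{\cI}{\kvar}{j}$ is ``on'' (i.e.\ $\tmf(j)-\tmf(\kvar)\notin\cI$) or ``off''. When it is on, $\htcexmtri(\faili{\cI}{\kvar}{j}=\true)=2$, so the $\max$ in item~7 equals $2$, which by construction matches $\htcexmtri(\ljaux{\metricI{\alwaysF}b}{k}{j}=\true)$ since the construction also forces $\true$ in both $\Vh$ and $\Vt$ in that case. When it is off, the $\max$ collapses to $\htcexmtri(\laux{b}{j}=\true)$, and the construction makes $\ljaux{\metricI{\alwaysF}b}{k}{j}$ inherit exactly the truth value of $\laux{b}{j}$ in each of $\Vh$ and $\Vt$, so the two match. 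The reasoning for item~8 is symmetric via the negation $\neg(\faili{\cI}{\kvar}{j}=\true)$, noting that in \HTC\ the three-valued value of $\neg c$ is $2$ when $\trivalIhtc(c)=0$ and $0$ otherwise.

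The bulk of the argument is mechanical unfolding, but the main obstacle is items~7 and~8, where one must check that both components of the interpretation (the $\Vh$ and $\Vt$ parts) are consistent with the $\max$/$\min$ of two three-valued quantities computed from the denotations. The delicate point is that the construction of $\ljaux{\cdot}{k}{j}$ is phrased separately on $\Vh$ and on $\Vt$, so one must verify that taking $\Vh'\in\{\Vh,\Vt\}$ in the definition yields, when combined by the three-valued reading, precisely the $\max$ (or $\min$) on the right-hand side. I expect this to be settled by a small four-case table per item (witness on/off under $\Vh$, on/off under $\Vt$), using monotonicity $\Vh\subseteq\Vt$ to rule out the impossible case.
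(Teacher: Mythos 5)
Your proposal is correct and follows essentially the same route as the paper, whose entire proof is the one-liner ``trivial by construction of $\htcinterpex$ in Definition~\ref{def:htc:trival:construction}''; you simply make the mechanical unfolding of $\trivalIhtc$ and the case analysis (including the on/off split for $\faili{\cI}{\kvar}{j}$ in items~7 and~8) explicit.
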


\begin{proof}
Trivial by construction of $\htcinterpex$ in Definition~\ref{def:htc:trival:construction}
\end{proof}

\begin{proposition}[Construction is total]
    \label{prop:htc:construction:total}
    Given a metric three-valued interpretation $\trivalIm$,
    and a program $\program$,
    if $\trivalIm$ is total
    then $\htcexmtri$ is total.
  \end{proposition}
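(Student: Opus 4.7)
The plan is to verify, by a short case analysis on the variables appearing in the construction of $\htcinterpex$ in Definition~\ref{def:htc:trival:construction}, that $\Vh(x)=\Vt(x)$ for every variable $x\in\alphabets\cup\alphabetT^c\cup\alphabetaux\cup\alphabetI$, which is exactly what it means for the \HTC-interpretation $\htcinterpex$ to be total.

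First, for the integer variables $\timet_k$ and the interval-violation atoms $\faili{\cI}{k}{j}$, the construction assigns $\Vh$ and $\Vt$ the same value explicitly, so these two cases are immediate. Likewise, for the witness atoms $\ljaux{\metricI{\alwaysF}b}{k}{j}$ and $\ljaux{\metricI{\eventuallyF}b}{k}{j}$, the definition is given uniformly for any $\Vh'\in\{\Vh,\Vt\}$ and depends only on the values of $\laux{b}{j}$ and $\faili{\cI}{k}{j}$ under $\Vh'$, so agreement on these arguments transfers upward.

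The substantive cases are the Boolean atoms $a_k\in\alphabets$ and the auxiliary labels $\laux{\matom}{k}\in\alphabetaux$, where $\Vh$ is defined via the condition $\trivalIm(k,\cdot)=2$ while $\Vt$ is defined via $\trivalIm(k,\cdot)>0$. Here I would invoke totality of $\trivalIm$: by hypothesis $\trival{k}{a}\in\{0,2\}$ for every atom $a\in\alphabet$, and by the earlier proposition on the satisfaction of total three-valued metric interpretations this extends to arbitrary metric formulas, so in particular $\trival{k}{\matom}\in\{0,2\}$ for every metric atom $\matom$ occurring in $\program$. Consequently the conditions ``$=2$'' and ``$>0$'' collapse into the same condition, yielding $\Vh(a_k)=\Vt(a_k)$ and $\Vh(\laux{\matom}{k})=\Vt(\laux{\matom}{k})$.

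The only point requiring attention is propagating totality from atoms to arbitrary metric (sub)formulas used as labels, which is precisely the content of the earlier total-valuation proposition; everything else is direct inspection of the construction. No induction on formula structure is needed beyond that already established proposition, so the argument should fit in a few short case clauses.
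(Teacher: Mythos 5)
Your proof is correct and follows essentially the same route as the paper's own (very terse) argument: a case split over $\alphabetT^c\cup\alphabetI$ (where $\Vh$ and $\Vt$ coincide by construction), $\alphabets$ and the labels $\laux{\matom}{k}$ (where totality of $\trivalIm$ collapses the conditions ``$=2$'' and ``$>0$''), and the witness atoms $\ljaux{\matom}{k}{j}$ (which inherit agreement from their defining arguments). Your explicit appeal to the proposition that total three-valued metric interpretations assign only values in $\{0,2\}$ to arbitrary metric formulas is exactly the detail the paper leaves implicit, so nothing is missing.
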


  \begin{proof}
    For $x\in\alphabetT^c\cup\alphabetI$ it is trivial by construction.
    For $x\in\alphabets$ it holds since $\trivalIm$ is total.
    For $x\in\alphabetaux$ it holds recursively since min and max only give values in the set.
  \end{proof}

\begin{definition}[Additional interval constraints for monotonic]
    \label{def:htc:interval:eq}
    \begin{align}
        \newPsi^{c\bot}_{\lambda}(\program) \eqdef
         & \ \{ \bot \leftarrow \faili{\intervco{m}{n}}{\kvar}{j}
        \wedge (\timet_{\kvar}-\timet_{j} \leq -m)
        \wedge (\timet_{j} - \timet_{\kvar} \leq n - 1)
        \mid
        0 \leq \kvar \leq j < \lambda-1, \intervco{m}{n} \in \mathcal{I} \}\;\cup \\
         & \ \{ \bot \leftarrow \faili{\intervco{m}{\omega}}{\kvar}{j}
        \wedge (\timet_{\kvar}-\timet_{j} \leq -m)
        \mid
        0 \leq \kvar \leq j < \lambda-1, \intervco{m}{\omega} \in \mathcal{I} \}
    \end{align}

\end{definition}

\begin{lemma}[Modeling of extra constraints]
    \label{lem:mlp:full:htc:monotonic}
    Let
    \program\ be a metric logic program,
    and
    $\lambda \in \mathbb{N}$.

    If
    $\htcttinterp$ is an constraint equilibrium model of
    $\Delta^c_{\lambda}\cup\newPsi^c_{\lambda}(\program)$,
    then
    $\htcttinterp\models\newPsi^{\bot,c}_{\lambda}(\program)$
\end{lemma}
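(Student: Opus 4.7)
The plan is to argue by contradiction using the equilibrium property. Suppose $\htcttinterp$ is a constraint equilibrium model of $\Delta^c_{\lambda}\cup\newPsi^c_{\lambda}(\program)$ but that $\htcttinterp\not\models\newPsi^{c\bot}_{\lambda}(\program)$. Then some integrity constraint in $\newPsi^{c\bot}_{\lambda}(\program)$ has a satisfied body, so there exist $0\leq k\leq j<\lambda-1$ and $\intervco{m}{n}\in\allIp$ such that $t\in\den{\faili{\intervco{m}{n}}{k}{j}=\true}$, $t\in\den{\timet_{k}-\timet_{j}\leq -m}$, and (in the case $n\in\mathbb{N}$) $t\in\den{\timet_{j}-\timet_{k}\leq n-1}$.

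The key construction is to set $h \eqdef t\setminus\{(\faili{\intervco{m}{n}}{k}{j},\true)\}$, which gives $h\subset t$. I would then show that $\tuple{h,t}$ is still an \HTC-model of $\Delta^c_{\lambda}\cup\newPsi^c_{\lambda}(\program)$, contradicting the equilibrium property. For $\Delta^c_{\lambda}$, this is immediate because its axioms only mention $\timet_k$ variables whose values coincide in $h$ and $t$. For $\newPsi^c_{\lambda}(\program)$, I would split into two cases: (a) Rules whose head is some $\faili{\cI'}{k'}{j'}\neq\faili{\intervco{m}{n}}{k}{j}$ are satisfied by $\tuple{h,t}$ because their head still belongs to $h$ (we only removed one atom) and $\tuple{t,t}$ already satisfies them. (b) Rules whose head is $\faili{\intervco{m}{n}}{k}{j}$, namely those in~\eqref{def:htc:psi:one:general} and, for $n\in\mathbb{N}$, in~\eqref{def:htc:psi:two:general}, have bodies of the form $\neg(\timet_{k}-\timet_{j}\leq -m)$ or $\neg(\timet_{j}-\timet_{k}\leq n-1)$; by our hypothesis $t$ (and hence $h$, since the $\timet_k$ values are unchanged) satisfies the positive difference constraints, so the negated bodies fail in both $\tuple{h,t}$ and $\tuple{t,t}$ and the rules are vacuously satisfied.

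The main delicate point, and where I expect to have to be most careful, is verifying the behavior of negation-as-failure in \HTC{}: by the semantics of implication, $\tuple{h,t}\models\neg c$ requires that $c$ be unsatisfied under both $h$ and $t$. This is what makes the construction of $h$ safe---since $h$ and $t$ agree on all $\timet_k$ values, every difference constraint on the $\timet_k$ has the same truth value under $\tuple{h,t}$ and $\tuple{t,t}$, so introducing the reduction in $h$ neither activates nor deactivates any rule body in $\newPsi^c_{\lambda}(\program)$. Combined with Proposition~\ref{pro:htc:defined}, which guarantees that the $\timet_k$ are defined integers in $h$, this ensures the denotations of the difference constraints behave as expected.

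Once these checks are in place, the contradiction with the minimality required by a constraint equilibrium model is immediate, completing the proof. The argument is essentially a support-based ``founding'' argument: under the equilibrium semantics, $\faili{\intervco{m}{n}}{k}{j}$ can only be true if one of the two rule bodies defining it is satisfied, but both bodies are falsified precisely when the time difference lies in $\intervco{m}{n}$, which is exactly the situation forbidden by $\newPsi^{c\bot}_{\lambda}(\program)$.
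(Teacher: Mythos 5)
Your proof is correct and follows essentially the same route as the paper: both arguments assume a constraint of $\newPsi^{c\bot}_{\lambda}(\program)$ is violated and then use the equilibrium property to conclude that the true atom $\faili{\intervco{m}{n}}{\kvar}{j}$ would have to be supported by one of its defining rules in~\eqref{def:htc:psi:one:general}/\eqref{def:htc:psi:two:general}, whose body is falsified exactly because the difference constraints $\timet_{\kvar}-\timet_{j}\leq -m$ and $\timet_{j}-\timet_{\kvar}\leq n-1$ hold. The only difference is presentational: where the paper simply asserts that the atom ``must be founded by a rule because it is in equilibrium,'' you make this step explicit by removing the atom from $h$ and verifying that $\tuple{h,t}$ still models $\Delta^c_{\lambda}\cup\newPsi^c_{\lambda}(\program)$, which is a sound and slightly more detailed rendering of the same argument.
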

\begin{proof}
  We show that it models each rule in $\newPsi^{c\bot}_{\lambda}(\program)$.

  Lets assume towards a contradiction that there is a rule
  $c\in\newPsi^{c\bot}_{\lambda}(\program)$
  such that $\htcttinterp\not\models c$.
  Let
  $c = \bot \leftarrow \faili{\intervco{m}{n}}{\kvar}{j}
  \wedge (\timet_{\kvar}-\timet_{j} \leq -m)
  \wedge (\timet_{j} - \timet_{\kvar} \leq n - 1)$.
  \footnote{ Notice that the case for $\intervco{m}{\omega}$
  is a simpler version of this case as it has less conditions, so will not be considered here.
  }
  Then
  $\htcttinterp\models\faili{\intervco{m}{n}}{\kvar}{j}$
  and $\htcttinterp\models\timet_{\kvar}-\timet_{j} \leq -m$
  and $\htcttinterp\models\timet_{j} - \timet_{\kvar} \leq n - 1$.

  Since $\htcttinterp$ is an equilibrium model of $\Delta^c_{\lambda}$,
  $\htcttinterp\models\Delta^c_\lambda$.
  By applying Proposition~\ref{prop:htc:delta:timed},
  we get that $\htcttinterp$ is timed inducing $\tmf$,
  and
  $\tmf(k)=\Vt(\timet_{k})$ and $\tmf(j)=\Vt(\timet_{j})$.
  Given that $\htcttinterp\models\faili{\intervco{m}{n}}{\kvar}{j}$,
  then it must be founded by a rule in $\newPsi^c_{\lambda}(\program)$,
  because it is in equilibrium.
  If it comes from \eqref{def:htc:psi:one:general} then
  $\htcttinterp\models \neg (\timet_{\kvar}-\timet_{j} \leq -m)$
  and $\htcttinterp\not\models  \timet_{\kvar}-\timet_{j} \leq -m$,
  which leads to a contradiction.
  If it comes from \eqref{def:htc:psi:two:general} then
  $\htcttinterp\models \neg (\timet_{j} - \timet_{\kvar} \leq n - 1)$
  and $\htcttinterp\not\models  \timet_{j} - \timet_{\kvar} \leq n - 1$,
  which leads to a contradiction as well.

\end{proof}

\begin{lemma}[Completeness lemma HTC]
    \label{lem:mlp:full:htc:completeness}
    Let \program\ be a metric logic program over $\alphabet$,
    and
    $\M=(\tuple{\Htrace,\Ttrace}, \tmf)$ a timed \HT-trace of length $\lambda$.

    If
    $\M\models\program$,
    then
    there exists an \HTC\ interpretation  $\htcinterp$
    such that
    $\htcinterp\models\newPi_\lambda(\program)\cup
        \Delta^{c}_{\lambda}\cup
        \newPsi^{c}_{\lambda}(\program)$
    and $\htcinterp|_{\alphabets\cup\alphabetT^c}=\mhtToHtc{\M}$.

\end{lemma}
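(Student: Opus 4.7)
My plan mirrors the proof of Lemma~\ref{lem:mlp:full:ht:completeness}, transposed to the \HTC\ framework. I would take the metric three-valued interpretation $\trivalIm$ associated with $\M$ via Definition~\ref{def:three-valued:mht:correspondence}, and set $\htcinterp \eqdef \htcinterpex$ as the \HTC\ interpretation prescribed by Definition~\ref{def:htc:trival:construction}. By Proposition~\ref{prop:htc:trival:construction}, the corresponding three-valued interpretation $\htcexmtri$ then satisfies the eight properties relating it to $\trivalIm$, $\tmf$ and $\program$. In particular, properties~1--3 coincide with the conditions of Proposition~\ref{prop:htc:corr:eq}, yielding immediately $\htcinterp|_{\alphabets\cup\alphabetT^c} = \mhtToHtc{\M}$.

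The next step is to verify the three components of the target program separately. For $\Delta^c_\lambda$ the construction sets $\Vh(\timet_k)=\Vt(\timet_k)=\tmf(k)$, so $\htcinterp$ is timed wrt $\lambda$ and Proposition~\ref{prop:htc:timed:delta} gives $\htcinterp\models\Delta^c_\lambda$. For $\newPsi^c_\lambda(\program)$ I would argue directly from the construction: if $\htcinterp\models\neg(\timet_\kvar-\timet_j\leq -m)$ then by the denotation of the difference-constraint atom $\tmf(j)-\tmf(\kvar) < m$, hence $\tmf(j)-\tmf(\kvar)\notin\intervco{m}{n}$, and therefore property~5 of Proposition~\ref{prop:htc:trival:construction} forces $\faili{\intervco{m}{n}}{\kvar}{j}$ to be true in both $\Vh$ and $\Vt$; rule~(\ref{def:htc:psi:two:general}) is handled symmetrically using property~6.

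The heart of the argument is $\htcinterp\models\newPi_\lambda(\program)$. Since $\M\models\program$, for every $\alwaysF r\in\program$ and every $\kinlambda$ we have $\trival{k}{r}=2$ by Proposition~\ref{prop:three-valued-semantics-eq}. Property~4 of Proposition~\ref{prop:htc:trival:construction} makes $\htcexmtri(\laux{\matom}{k})$ coincide with $\trival{k}{\matom}$ for every metric atom $\matom$ occurring in $\program$, so the translated rule $\tk{r}$ evaluates to $2$ under $\htcexmtri$, and Proposition~\ref{prop:htc:equi-satisfaction} converts this into $\htcinterp\models\tk{r}$. For the Tseitin-style rules $\eta^*_k(\matom)$, I would observe that properties~1 and~4--8 of Proposition~\ref{prop:htc:trival:construction} match exactly conditions (e1)--(e5) of Proposition~\ref{prop:eq:eta:aux}, so the latter yields $\htcexmtri(\eta_k(\matom))=2$, and another application of Proposition~\ref{prop:htc:equi-satisfaction} delivers $\htcinterp\models\eta^*_k(\matom)$.

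The main obstacle I foresee is lifting Proposition~\ref{prop:eq:eta:aux}, stated for the Boolean logic \HT, to the \HTC\ three-valued setting used here. Its proof only manipulates the Boolean atoms $\laux{\matom}{k}$, $\ljaux{\matom}{k}{j}$ and $\faili{\cI}{\kvar}{j}$ through the usual propositional connectives and the three-valued evaluation of $\min$ and $\max$, so I expect the argument to port verbatim once one appeals, as in~\cite{cakaossc16a}, to the fact that an \HTC\ interpretation restricted to Boolean variables behaves like an \HT\ interpretation. Making this transfer rigorous is the only delicate point; everything else is routine bookkeeping driven by Proposition~\ref{prop:htc:trival:construction}.
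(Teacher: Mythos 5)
Your proposal follows essentially the same route as the paper's proof: construct $\htcinterpex$ from the metric three-valued interpretation of $\M$, obtain the restriction claim via Propositions~\ref{prop:htc:trival:construction} and~\ref{prop:htc:corr:eq}, get $\Delta^c_\lambda$ from being timed via Proposition~\ref{prop:htc:timed:delta}, verify $\newPsi^c_\lambda(\program)$ by the interval case analysis, and handle $\newPi_\lambda(\program)$ through property~4 together with Proposition~\ref{prop:eq:eta:aux} lifted to \HTC\ by the Boolean-variable observation of~\cite{cakaossc16a}. This matches the paper's argument, including its treatment of the only delicate step you single out, so the proposal is correct.
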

\begin{proof}

    Let $\trivalIm$ be the three-valued metric interpretation
    corresponding to $\M$ as in Definition~\ref{def:three-valued:mht:correspondence}.
    We construct $\htcinterp=\htcinterpex$ following Definition~\ref{def:htc:trival:construction}.
    We notice the properties from Proposition~\ref{prop:htc:corr:eq} are satisfied
    by the construction of $\trivalIhtc$ (Proposition~\ref{prop:htc:trival:construction}), so we can apply Proposition~\ref{prop:htc:corr:eq}
    to assure that $\htcinterp|_{\alphabets\cup\alphabetT}=\mhtToHtc{\M}$.
We now proceed to show that $\htcinterp\models\newPi_\lambda(\program)\cup\Delta^c_{\lambda}\cup\newPsi^c_{\lambda}(\program)$
    by showing that it is a model of each part.

    Notice that by construction $\htcinterp$ is timed wrt. $\lambda$ inducing $\tmf$.
    We can apply Proposition~\ref{prop:htc:timed:delta}
    to show that $\htcinterp\models\Delta^c_{\lambda}$.

    \medskip
    Next, we show that $\htcinterp\models\newPsi^c_{\lambda}(\program)$ by showing that $\trivalIhtc(\newPsi^c_{\lambda}(\program))=2$.
    Then, we must show that for every rule $r \in\newPsi^c_{\lambda}(\program)$,
    we have that $\trivalIhtc(r)=2$.
    Notice that $\Vh(\timet_{\kvar})=\tmf(k)$ and $\Vh(\timet_{j})=\tmf(j)$, since it is timed.
We proof this by case analysis on $r$
    \casex{\faili{\intervco{m}{n}}{\kvar}{j} \leftarrow \neg (\timet_{\kvar}-\timet_{j} \leq -m)}{
      Now we analyze the two cases below.
      \casex{\tmf(k)-\tmf(j) > -m }{
        We then know that $\tmf(k)-\tmf(j) \not\in \intervco{m}{n}$.
        By Construction and Proposition~\ref{prop:htc:trival:construction} then
        $\trivalIhtc(\faili{\cI}{\kvar}{j})=2$.
        Since $2\geq\max\{0,1,2\}$, then
        $\trivalIhtc(\faili{\cI}{\kvar}{j}) \geq \trivalIhtc(\neg (\timet_{\kvar}-\timet_{j} \leq -m))$.
        Finally, the three-valued semantics for \HTC\ gives us that
        $\trivalIhtc(\faili{\cI}{\kvar}{j} \leftarrow \neg (\timet_{\kvar}-\timet_{j} \leq -m))=2$.
      }
      \casex{\tmf(k)-\tmf(j) \leq -m }{
        We then know that $\trivalIhtc(\timet_{\kvar}-\timet_{j} \leq -m)=2$.
        The three-valued semantics for \HTC\ gives us that
        $\trivalIhtc(\neg\timet_{\kvar}-\timet_{j} \leq -m)=0$,
        and consequently
        $\trivalIhtc(\faili{\cI}{\kvar}{j} \leftarrow \neg (\timet_{\kvar}-\timet_{j} \leq -m))=2$.
      }
    }
    \casex{\faili{\intervco{m}{n}}{\kvar}{j} \leftarrow \neg (\timet_{j} - \timet_{\kvar} \leq n - 1)}{
      Analogous
    }

    \medskip
    Finally, we show that $\htcinterp\models\newPi_\lambda(\program)$ by showing that $\trivalIhtc(\newPi_\lambda(\program))=2$.
    Then, we must show that
    for every rule $\alwaysF{r} \in \program$, and \kinlambda,
    we have $\trivalIhtc(\tk{r})=2$,
    and for every $\matom \in \program$,
    we have $\trivalIhtc(\eta_k^*(\matom))=2$.
    For the first point,
    we know by construction of $\htcinterpex$ and Proposition~\ref{prop:htc:trival:construction},
    that
    $\trival{k}{\mu}=\trivalIhtc(\laux{k}{\mu})$.
    With this equivalence know that
    if $\trival{k}{r}=2$ then $\trivalIhtc(\tk{r})=2$.
    And $\trival{k}{r}=2$ since $\M$ is a model of $\program$.
For the second point,
    again by construction,
    $\trivalIhtc(\laux{\matom}{k}=\true)=\trival{k}{\matom}$.
    We can then apply Proposition~\ref{prop:eq:eta:aux},
    given that the requirements are satisfied by Proposition~\ref{prop:htc:trival:construction}.
    \footnote{Notice that Proposition~\ref{prop:eq:eta:aux} is defined for HT
    but it holds the same for \HTC\ since it uses only boolean variables. (Observation 1 in \cite{cakaossc16a})}
    With this we get $\trivalIhtc(\eta_k^*(\matom)=\true)=2$.

  \end{proof}

\begin{lemma}[Correctness lemma HT]
    \label{lem:mlp:full:htc:correctness}
    Let
    \program\ be a metric logic program,
    $\lambda \in \mathbb{N}$,
    and $\htcinterp$ an timed \HTC\ interpretation wrt. $\lambda$

    If
    $\htcinterp\models\newPi_\lambda(\program)\cup
        \newPsi^{c}_{\lambda}(\program)\cup
        \newPsi^{c\bot}_{\lambda}(\program)$,
    then
    $\htcToMht{\htcinterp|_{\alphabets\cup\alphabetT^c}}\models\program$.
\end{lemma}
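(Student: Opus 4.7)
The plan is to mirror the structure of Lemma~\ref{lem:mlp:full:ht:correctness}, replacing the HT three-valued machinery of Appendix~\ref{sec:three:ht} by its \HTC\ counterpart in Appendix~\ref{sec:three:htc} and exploiting Proposition~\ref{prop:eq:eta:aux} (which, as already noted, transfers from \HT\ to \HTC\ because it only speaks of Boolean-valued variables). Set $\M \eqdef \htcToMht{\htcinterp|_{\alphabets\cup\alphabetT^c}}$ and let $\tmf$ be the timing function induced by $\htcinterp$ (available because $\htcinterp$ is timed wrt.\ $\lambda$). Let $\trivalIhtc$ be the three-valued interpretation associated with $\htcinterp$ and $\trivalIm$ the three-valued metric interpretation associated with $\M$. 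By construction of $\htcToMht{\cdot}$, we have $\htcinterp|_{\alphabets\cap\alphabetT^c}=\mhtToHtc{\M}$, so Proposition~\ref{prop:htc:corr:eq} immediately delivers condition e1 of Proposition~\ref{prop:eq:eta:aux}, namely $\trivalIhtc(a_k) = \trivalIm(k,a)$ for all $a\in\alphabet$ and $\kinlambda$.

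The crux of the argument is then to establish conditions e2--e5. For e2, fix $\cI = \intervco{m}{n}$ and suppose $\tmf(j)-\tmf(k)\notin\cI$, which means $\Vh(\timet_k)-\Vh(\timet_j) > -m$ or (if $n$ is finite) $\Vh(\timet_j)-\Vh(\timet_k) \geq n$. In either case, the corresponding difference-constraint atom evaluates to $0$ under $\trivalIhtc$, hence its negation evaluates to $2$, and the respective rule in $\newPsi^{c}_{\lambda}(\program)$ (from~\eqref{def:htc:psi:one:general} or~\eqref{def:htc:psi:two:general}) forces $\trivalIhtc(\faili{\cI}{\kvar}{j}) = 2$. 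For the converse direction of e2, and for e3, assume towards contradiction that $\trivalIhtc(\faili{\cI}{\kvar}{j})$ takes a value incompatible with the position of $\tmf(j)-\tmf(k)$ relative to $\cI$; when $\tmf(j)-\tmf(k)\in\cI$, both difference constraints $\timet_k-\timet_j\leq -m$ and $\timet_j-\timet_k\leq n-1$ evaluate to $2$ under $\trivalIhtc$, and the integrity constraint in $\newPsi^{c\bot}_{\lambda}(\program)$ would then be violated. This use of $\newPsi^{c\bot}_{\lambda}(\program)$ is precisely what replaces the appeal to equilibrium made in the HT case. Conditions e4 and e5 then follow directly from $\htcinterp \models \newPi_\lambda(\program)$: the rules in $\eta_k^*(\metricI{\alwaysF}b)$ and $\eta_k^*(\metricI{\eventuallyF}b)$ of Table~\ref{tab:tseitin:metric:temporal} jointly imply the equivalences $\ljaux{\metricI{\alwaysF}b}{k}{j} \leftrightarrow \faili{\cI}{k}{j}\vee\laux{b}{j}$ and $\ljaux{\metricI{\eventuallyF}b}{k}{j} \leftrightarrow \neg\faili{\cI}{k}{j}\wedge\laux{b}{j}$, and Proposition~\ref{prop:three-valued:eq:htc} converts satisfaction of the equivalence into equality of three-valued evaluations.

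With e1--e5 established, Proposition~\ref{prop:eq:eta:aux} yields $\trivalIhtc(\laux{\matom}{k}) = \trivalIm(k,\matom)$ for every metric atom $\matom$ occurring in $\program$ and every $\kinlambda$. Combined with e1, this means that for every rule $\alwaysF r\in\program$ and every $\kinlambda$, $\tk{r}$ is obtained from the instance of $r$ at $k$ by replacing each subformula by an auxiliary atom whose three-valued value under $\trivalIhtc$ matches the corresponding metric-formula value under $\trivalIm$. Since $\htcinterp\models\newPi_\lambda(\program)$ gives $\trivalIhtc(\tk{r}) = 2$ for all such $k,r$, we conclude $\trivalIm(k,r) = 2$, and thus $\M,k\models r$. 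Applying \MHT\ satisfaction of $\alwaysF$ for every $\alwaysF r\in\program$ yields $\M\models\program$.

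The main obstacle is the handling of conditions e2 and e3 under the three-valued \HTC\ semantics. Unlike the \HT\ variant, where the atoms $\timet_{k,d}$ are Boolean and the analysis could leverage equilibrium minimality, here the timing variables are integer-valued and the only glue between $\faili{\cI}{\kvar}{j}$ and the actual interval membership is provided by the positive rules of $\newPsi^c_\lambda$ on one side and the integrity constraints of $\newPsi^{c\bot}_\lambda$ on the other. Getting both directions of the biconditional in e2 (and symmetrically in e3) without invoking equilibrium requires carefully tracking the interplay between $\Vh$ and $\Vt$ evaluations of the difference-constraint atoms, which is also why the auxiliary constraint set $\newPsi^{c\bot}_{\lambda}(\program)$ had to be introduced (in analogy with $\newPsi^{\bot}_{\lambda,\tmflimit}(\program)$ from Definition~\ref{def:ht:interval:eq}) as an additional hypothesis of the lemma.
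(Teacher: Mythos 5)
Your proposal is correct and follows essentially the same route as the paper's proof: verify conditions e1--e5 of Proposition~\ref{prop:eq:eta:aux} (e1 via Proposition~\ref{prop:htc:corr:eq}, e2/e3 via the rules of $\newPsi^{c}_{\lambda}(\program)$ together with the integrity constraints of $\newPsi^{c\bot}_{\lambda}(\program)$, e4/e5 from $\newPi_\lambda(\program)$ and Proposition~\ref{prop:three-valued:eq:htc}), and then transfer $\trivalIhtc(\tk{r})=2$ to $\trival{k}{r}=2$ to conclude $\M\models\program$. Only your side remark is slightly off: the \HT\ counterpart (Lemma~\ref{lem:mlp:full:ht:correctness}) likewise takes $\newPsi^{\bot}_{\lambda,\tmflimit}(\program)$ as a hypothesis rather than appealing to equilibrium minimality, but this does not affect your argument.
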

\begin{proof}
Let $\htcToMht{\htcinterp|_{\alphabetT^c\cup\alphabets}}=\M$,
and let $\tmf$ be the timing function induced by $\htcinterp$.

We will show below that each of the conditions of Proposition~\ref{prop:eq:eta:aux} hold,
to prove the point-wise equivalence for the auxiliary atoms,
namely $\trivalIhtc(\laux{\matom}{k})=\trival{k}{\matom}$
for all \kinlambda.
With this in hand we argue that $\M\models\program$ follows directly from $\htcinterp\models\newPi_\lambda(\program)$,
given that with these equivalence for any \kinlambda,
$\trivalIhtc(\tk{r})=2$ iff $\trival{k}{r}=2$.

\medskip
Notice $\htcinterp|_{\alphabets\cap\alphabetT^c}=\mhtToHtc{\M}$,
so we apply Proposition~\ref{prop:htc:corr:eq} and get
$\trivalIhtc(a_k=\true)=\trival{k}{a}$ for all $\kinlambda$, $a\in\alphabet$.

\medskip
We will now show that
$\trivalIhtc(\faili{\cI}{\kvar}{j}=\true)=2$ iff $\tmf(j)-\tmf(\kvar) \not\in \cI$ for all $\kinlambda$, $\cI\in\allI$.
For this
let  $\kinlambda$ and $\cI=\intervco{m}{n}\in\allI$.
We know that $\Vh(\timet_{k})=\tmf(k)$ and $\Vh(\timet_{j})=\tmf(j)$,
since $\htcinterp$ is timed inducing $\tmf$.
Now we prove both directions of the equivalence:
\casex{\leftarrow}{
    We start with $\tmf(j)-\tmf(\kvar) \not\in \cI$
    and analyze the two cases for not being in $\cI$:
    \casex{\tmf(j)-\tmf(k) < m }{
        $\trivalIhtc{(\timet_{\kvar}-\timet_{j} \leq -m)}=0$  since $\tmf(k)=\Vh(\timet_k)$ and $\tmf(j)=\Vh(\timet_j)$.
        The three-valued semantics fo \HTC\ gives us that $\trivalIhtc{(\neg(\timet_{\kvar}-\timet_{j} \leq -m))}=2$.
        Since $\htcinterp\models\newPsi^{c}_{\lambda}(\program)$
        then $\trivalIhtc(\faili{\intervco{m}{n}}{\kvar}{j}\leftarrow\neg(\timet_{\kvar}-\timet_{j} \leq -m))=2$,
        which leads to
        $\trivalIhtc{(\faili{\cI}{\kvar}{j}=\true)}=2$
        using the three-valued semantics for \HTC.
    }
    \casex{\tmf(j)-\tmf(k) \geq n }{
        Analogous
    }
}
\casex{\rightarrow}{
    We start with $\trivalIhtc(\faili{\cI}{\kvar}{j}=\true)=2$,
    and let's assume towards a contradiction that $\tmf(j)-\tmf(\kvar) \in \cI$.
    Then, $\trivalIhtc(\timet_{\kvar}-\timet_{j} \leq -m)=2$
    and $\trivalIhtc(\timet_{j} - \timet_{\kvar} \leq n - 1)=2$.
    Using the three-valued semantics for \HTC
    we can put together the following conjunction:
    $\trivalIhtc(\faili{\cI}{\kvar}{j} \wedge
    (\timet_{\kvar}-\timet_{j} \leq -m) \wedge
    (\timet_{j} - \timet_{\kvar} \leq n - 1))=2$,
    and its corresponding constraint,
    $\trivalIhtc(\bot \leftarrow \faili{\cI}{\kvar}{j} \wedge
    (\timet_{\kvar}-\timet_{j} \leq -m) \wedge
    (\timet_{j} - \timet_{\kvar} \leq n - 1))=0$.
    This leads to a contradiction since $\htcinterp\models\newPsi^{c\bot}_{\lambda}(\program)$.
}

\medskip
We will now show that
$\trivalIhtc(\faili{\cI}{\kvar}{j}=\true)=0$ iff $\tmf(j)-\tmf(\kvar) \in \cI$ for all $\kinlambda$, $\cI\in\allI$.
For this
let  $\kinlambda$ and $\cI=\intervco{m}{n}\in\allI$.
As before, we know that $\Vh(\timet_{k})=\tmf(k)$ and $\Vh(\timet_{j})=\tmf(j)$.
Now we prove both directions of the equivalence:
\casex{\leftarrow}{

    We start with $\tmf(j)-\tmf(\kvar) \in \cI$.
    Then, $\trivalIhtc(\timet_{\kvar}-\timet_{j} \leq -m)=2$
    and $\trivalIhtc(\timet_{j} - \timet_{\kvar} \leq n - 1)=2$.
    Since $\htcinterp\models\newPsi^{c\bot}_{\lambda}(\program)$,
    then
    $\trivalIhtc(\bot \leftarrow \faili{\cI}{\kvar}{j} \wedge
    (\timet_{\kvar}-\timet_{j} \leq -m) \wedge
    (\timet_{j} - \timet_{\kvar} \leq n - 1))=2$,
    which means that
    $\trivalIhtc(\faili{\cI}{\kvar}{j} \wedge
    (\timet_{\kvar}-\timet_{j} \leq -m) \wedge
    (\timet_{j} - \timet_{\kvar} \leq n - 1))=0$
    By substituting the know values the three-valued semantics for \HTC\ gives us that
    $\min\{
        \trivalIhtc(\faili{\intervco{m}{n}}{\kvar}{j}), 2\}=0$.
    Therefore, $\trivalIhtc(\faili{\cI}{\kvar}{j}=\true)=0$.
    }
\casex{\rightarrow}{

    We start with $\trivalIhtc(\faili{\cI}{\kvar}{j}=\true)=0$,
    and assume towards a contradiction that $\tmf(j)-\tmf(\kvar) \not\in \cI$.
    We analyze the two cases for not being in $\cI$:
    \casex{\tmf(j)-\tmf(k) < m }{
        We have that $\trivalIhtc{(\timet_{\kvar}-\timet_{j} \leq -m)}=0$
        so
        $\trivalIhtc{(\neg(\timet_{\kvar}-\timet_{j} \leq -m))}=2$.
        Since
        $\htcinterp\models\newPsi^{c}_{\lambda}(\program)$,
        then
        $\trivalIhtc(\faili{\intervco{m}{n}}{\kvar}{j}\leftarrow\neg(\timet_{\kvar}-\timet_{j} \leq -m))=2$,
        which leads to
        $\trivalIhtc{(\faili{\cI}{\kvar}{j}=\true)}=2$.
        This is a contradiction.
    }
    \casex{\tmf(j)-\tmf(k) \geq n }{
        Analogous
    }
    Then $\tmf(j)-\tmf(\kvar) \in \cI$
}

We will now show that that
$\trivalIhtc(\ljaux{\matom}{k}{j})$=$\max\{\trivalIhtc(\laux{b}{j}),\trivalIhtc(\faili{\cI}{\kvar}{j})\}$.
Consider $\matom=\metricI{\alwaysF}b$ and let \kinlambda, $\rangeco{j}{k}{\lambda}$.
Since $\htcinterp\models\newPi_\lambda(\program)$, we have that
$\trivalIhtc(\eta_k(\matom))=2$.
We use the definition of Table~\ref{tab:tseitin:metric:temporal}
and get $\trivalIhtc(\ljaux{\matom}{k}{j}\leftrightarrow\laux{b}{j}\vee\faili{\cI}{\kvar}{j})=2$.
By applying Proposition~\ref{prop:three-valued:eq:htc},
$\trivalIhtc(\ljaux{\matom}{k}{j})$=$\trivalIhtc(\laux{b}{j}\vee\faili{\cI}{\kvar}{j})$,
which by the semantics of three-valued logic gives us that
$\trivalIhtc(\ljaux{\matom}{k}{j})$=$\max\{\trivalIhtc(\laux{b}{j}),\trivalIhtc(\faili{\cI}{\kvar}{j})\}$.
The prove that $\trivalIhtc(\ljaux{\matom}{k}{j})=\min\{\trivalIhtc(\laux{b}{j}),\trivalIhtc(\neg\faili{\cI}{\kvar}{j})\}$,
follows analogously.

\end{proof}
  \setcounter{theorem}{\getrefnumber{thm:mlp:full:htc:completeness}}
\addtocounter{theorem}{-1}
\begin{theorem}[Completeness]
    Let \program\ be a metric logic program over $\alphabet$,
    and
    $\M=(\tuple{\Ttrace,\Ttrace}, \tmf)$ a total timed \HT-trace of length $\lambda$.

    If
    $\M$ is a metric equilibrium model of $\program$,
    then
    there exists a constraint equilibrium model $\htcttinterp$ of $\newPi_\lambda(\program)\cup\Delta^{c}_{\lambda}\cup\newPsi^{c}_{\lambda}(\program)$
    such that $\htcttinterp|_{\alphabets\cap\alphabetT^c}=\mhtToHtc{\M}$.
  \end{theorem}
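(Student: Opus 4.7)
The plan is to construct the required HTC interpretation directly from the metric three-valued interpretation $\trivalIm$ corresponding to $\M$ via Definition~\ref{def:three-valued:mht:correspondence}, then verify satisfaction and equilibrium separately by a splitting argument. Concretely, I would set $\htcttinterp \eqdef \htcinterpex$ as given in Definition~\ref{def:htc:trival:construction}. Since $\M$ is total, Proposition~\ref{prop:htc:construction:total} yields that $\htcttinterp$ is total, and the defining clauses for $t_k$ and $a_k$ immediately give $\htcttinterp|_{\alphabets\cap\alphabetT^c}=\mhtToHtc{\M}$ via Proposition~\ref{prop:htc:corr:eq}. Satisfaction of the full translation $\newPi_\lambda(\program)\cup\Delta^{c}_{\lambda}\cup\newPsi^{c}_{\lambda}(\program)$ follows directly from Lemma~\ref{lem:mlp:full:htc:completeness}, since $\M\models\program$ by assumption.

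The real work lies in establishing that $\htcttinterp$ is in constraint equilibrium. Here I would invoke the splitting technique (Lemma~\ref{lem:splitting:full:htc}) and proceed layer by layer along the disjoint alphabets $\alphabetT^c$, $\alphabetI$, $\alphabets\cup\alphabetaux$. First, since $\htcttinterp$ is timed wrt.\ $\lambda$ inducing $\tmf$ by construction, Proposition~\ref{prop:htc:timed:delta:equilibrium} gives that $\htcttinterp|_{\alphabetT^c}$ is a constraint equilibrium model of $\Delta^c_\lambda$. Second, for the $\alphabetI$ layer, I would assume toward contradiction some $\htcinterp'$ with $\Vh'\subset\Vt'$ satisfying the program; the only atoms that could drop from $\Vt'$ to undefined in $\Vh'$ would be of the form $\faili{\cI}{k}{j}$, but by construction such an atom is true iff $\tmf(j)-\tmf(k)\notin\cI$, in which case one of the rules in \eqref{def:htc:psi:one:general} or \eqref{def:htc:psi:two:general} fires and forces the atom into $\Vh'$, yielding a contradiction.

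The main obstacle is the last layer, covering $\alphabets\cup\alphabetaux$. Assuming toward contradiction some $\htcinterp'$ with $\Vh'\subset\Vt'$ that models the whole program, I would analyze the two cases for where the reduction occurs. If an auxiliary atom in $\alphabetaux$ is reduced, I would argue that its value is forced by the values of $\faili{\cI}{k}{j}$ and $\laux{b}{j}$-atoms at lower levels through the implications in $\newPi_\lambda(\program)$ (matching Proposition~\ref{prop:eq:eta:aux}), so it cannot drop below what totality already pins down. If instead some $a_k\in\alphabets$ is reduced, then since the $\alphabetT^c\cup\alphabetI$ layer is already in equilibrium, $\htcinterp'$ also models $\newPsi^{c\bot}_\lambda(\program)$ by Lemma~\ref{lem:mlp:full:htc:monotonic}; together with $\htcinterp'\models\newPi_\lambda(\program)\cup\newPsi^c_\lambda(\program)$ and its being timed wrt.\ $\lambda$, Lemma~\ref{lem:mlp:full:htc:correctness} then produces a metric model $\htcToMht{\htcinterp'|_{\alphabets\cup\alphabetT^c}}$ of $\program$ strictly smaller than $\M$ but sharing $\Ttrace$ and $\tmf$, contradicting that $\M$ is a metric equilibrium model of $\program$. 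The delicate point will be to ensure that the $\alphabetaux$ reductions cannot by themselves create a smaller constraint model without also reducing some $a_k$, which is where the forcing from $\newPi_\lambda(\program)$ combined with totality of $\trivalItotalhtc$ is essential.
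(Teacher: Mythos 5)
Your proposal is correct and follows essentially the same route as the paper's proof: the same construction $\htcinterpex$ from Definition~\ref{def:htc:trival:construction}, totality and restriction via Propositions~\ref{prop:htc:construction:total} and~\ref{prop:htc:corr:eq}, satisfaction via (the proof of) Lemma~\ref{lem:mlp:full:htc:completeness}, and the same layered splitting argument over $\alphabetT^c$, $\alphabetI$, and $\alphabets\cup\alphabetaux$, with the $\alphabets$ case resolved through Lemmas~\ref{lem:mlp:full:htc:monotonic} and~\ref{lem:mlp:full:htc:correctness} against the metric equilibrium of $\M$, and the $\alphabetaux$ case by the values being determined by lower layers plus totality.
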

\begin{proof}
    Let $\trivalIm$ be the
    three-valued metric interpretation $\trivalIm$
    in correspondence
    with $\M$ (Definition~\ref{def:three-valued:mht:correspondence}).
    We construct $\htcttinterp=\htcinterpex$.
    We notice the properties from Proposition~\ref{prop:htc:corr:eq} are satisfied
    by the construction of $\htcinterpex$, so we can apply Proposition~\ref{prop:htc:corr:eq}
    to assure that $\htcttinterp|_{\alphabets\cup\alphabetT^c}=\mhtToHtc{\M}$.
    We also know that $\htcinterpex$ is total by Proposition~\ref{prop:htc:construction:total} since $\M$ is total.
    In what follows, we will prove the theorem by proving
    incrementally that $\htcttinterp$
    is a constraint equilibrium model
    of each part of the program using Lemma~\ref{lem:splitting:full:htc},
    limiting the alphabet to the corresponding atoms.

    \medskip
    We begin proving that
    that $\htcttinterp|_{\alphabetT^c}$
    is an equilibrium model of $\Delta^c_{\lambda}$.
    Notice that by construction $\htcttinterp$ is timed wrt. $\lambda$ inducing $\tmf$.
    Therefor, $\htcttinterp|_{\alphabetT^c}$ is a constraint equilibrium model of $\Delta^c_{\lambda}$
    by applying Proposition~\ref{prop:htc:timed:delta:equilibrium}.

    \medskip
    We will now prove that $\htcttinterp|_{\alphabetT^c\cup\alphabetI}$ is an equilibrium model of $\Delta^c_{\lambda}\cup\newPsi^c_{\lambda}(\program)$
    We can prove that $\htcttinterp|_{\alphabetT^c\cup\alphabetI}\models\newPsi^c_{\lambda}(\program)$
    as in the prove of Lemma~\ref{lem:mlp:full:htc:completeness}.
For the equilibrium criteria,
    we assume towards a contradiction that
    there is $\htcinterp$ such that $\Vh\subset \Vt$ and $\htcinterp\models\newPsi^c_{\lambda}(\program)$.
    This means that there is $x\in\alphabetI $ such that $\Vh(x)=\undefined$ and $\Vt(x)=\true$
    (Notice that $\htcinterp$ is still timed).
    Then, $x=\faili{\cI}{\kvar}{j}$ for some \kinlambda, $\rangeco{j}{k}{\lambda}$ and $\cI\in\allI$.
    Since $x\in \Vt$, $\htcexmtri(\faili{\cI}{\kvar}{j}=\true)=2$.
    By construction construction of $\htcexmtri$ $\tmf(j)-\tmf(k)\not\in\cI$.
    Now we analyze each case for not being in $\cI$:
        \casex{\tmf(j)-\tmf(k) < m }{
          Then, $\trivalIhtc{(\timet_{\kvar}-\timet_{j} \leq -m)}=0$, since  $\tmf(k)=\Vh(\timet_k)$ and $\tmf(j)=\Vh(\timet_j)$.
          By  \HTC\ three-valued semantics we know that $\trivalIhtc{(\neg(\timet_{\kvar}-\timet_{j} \leq -m))}=2$,
          and since $\trivalIhtc(\faili{\intervco{m}{n}}{\kvar}{j}\leftarrow\neg(\timet_{\kvar}-\timet_{j} \leq -m))=2$,
          we have that
          $\trivalIhtc{(\faili{\cI}{\kvar}{j}=\true)}=2$,
          which leads to a contradiction since  $\Vh(x)=\undefined$
        }
        \casex{\tmf(j)-\tmf(k) \geq n }{
          Analogous
        }
    Therefor, $\htcttinterp|_{\alphabetT^c\cup\alphabetI}$ is a constraint equilibrium model of $\Delta^c_{\lambda}\cup\newPsi^c_{\lambda}(\program)$.

    \medskip
    We will prove that $\htcttinterp$ is a constraint equilibrium model of $\Delta^c_{\lambda}\cup\newPsi^c_{\lambda}(\program)\cup\newPi_\lambda(\program)$.
    We can prove that $\htcttinterp\models\newPi_\lambda(\program)$
    as in the proof of Lemma~\ref{lem:mlp:full:htc:completeness}.
    For the equilibrium criteria,
    we assume towards a contradiction that there is $\htcinterp$ such that $\Vh\subset \Vt$ and
    $\htcinterp\models\newPi_\lambda(\program)\cup\Delta^{c}_{\lambda}\cup\newPsi^{c}_{\lambda}(\program)$.
    Then, there is $x\in\alphabets\cup\alphabetaux $ such that $(x,\true)\in \Vt$ and $(x,\true)\not\in \Vh$ so $\trivalIhtc(x=\true)=1$.
    We know that it can't be $x\in\alphabetT^c\cup\alphabetI$ since $\htcttinterp|_{\alphabetT^c\cup\alphabetI}=\htcinterp|_{\alphabetT^c\cup\alphabetI}$.

      \casex{x\in\alphabets}{
        We have that $x=a_k$ for some $0\leq k <\lambda$.
        We get $\htcttinterp|_{\alphabetT^c\cup\alphabetI}\models\newPsi^{\bot,c}_{\lambda}(\program)$
        by applying Lemma ~\ref{lem:mlp:full:htc:monotonic} since $\htcttinterp|_{\alphabetT^c\cup\alphabetI}$
          is an equilibrium model of $\Delta^c_{\lambda}\cup\newPsi^c_{\lambda}(\program)$.
        Notice that, since $\htcinterp|_{\alphabetT^c\cup\alphabetI}=\htcttinterp|_{\alphabetT^c\cup\alphabetI}$,
        we have $\htcinterp\models\newPsi^{\bot,c}_{\lambda}(\program)$.
        From $\htcinterp\models\newPi_\lambda(\program)\cup\newPsi^c_{\lambda}(\program)$
        and Lemma~\ref{lem:mlp:full:htc:correctness},
        we know that
        $\htcToMht{\htcinterp|_{\alphabets\cup\alphabetT^c}}\models\program$.
        Which leads to a contradiction since $\M$ is an equilibrium model of $\program$ but $\htcToMht{\htcinterp|_{\alphabets\cup\alphabetT^c}}<\M$.
        This means that $\htcttinterp|_{\alphabets}=\htcinterp|_{\alphabets}$.

      }

      \casex{x\in\alphabetaux}{
        Notice that if $\htcinterp\models\newPi_\lambda(\program)$
        then the value of $x\in\alphabetaux$ is fully defined by the atoms in $\alphabets\cup\alphabetI$.
        Since $\htcttinterp|_{\alphabets\cup\alphabetI}=\htcinterp|_{\alphabets\cup\alphabetI}$, then,
        $\trivalIhtc(x)=\trivalIhtcletter^{\htcttinterp}(x)$.
        But since $\htcttinterp$ is total then $\trivalIhtcletter^{\htcttinterp}(x)$ is either $0$ or $1$, but $\trivalIhtc(x)=1$.
        Which leads to a contradiction.

      }

    Therefore, $\htcttinterp$ is a constraint equilibrium model of $\Delta^c_{\lambda}\cup\newPsi^c_{\lambda}(\program)\cup\newPi_\lambda(\program)$.

\end{proof}

\setcounter{theorem}{\getrefnumber{thm:mlp:full:htc:correctness}}
\addtocounter{theorem}{-1}
\begin{theorem}[Correctness]
    Let
    \program\ be a metric logic program,
    and
    $\lambda \in \mathbb{N}$.

    If
    $\htcttinterp$ is an constraint equilibrium model of $\newPi_\lambda(\program)\cup\Delta^{c}_{\lambda}\cup\newPsi^{c}_{\lambda}(\program)$,
    then
    $\htcToMht{\htcttinterp|_{\alphabets\cap\alphabetT^c}}$ is a metric equilibrium model of $\program$.
  \end{theorem}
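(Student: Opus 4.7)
The plan is to mirror the structure of the proof of Theorem~\ref{thm:mlp:full:ht:correctness}, adapting each step from \HT\ to \HTC\ by leveraging the counterpart lemmas already established for the constraint setting. First I would argue that $\htcToMht{\htcttinterp|_{\alphabets\cap\alphabetT^c}}$ is a model of \program, and then that it is in equilibrium.

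For the modeling part, I would invoke Lemma~\ref{lem:mlp:full:htc:correctness}, which requires three ingredients: (i) that $\htcttinterp$ is timed wrt.\ $\lambda$, (ii) that $\htcttinterp\models\newPi_\lambda(\program)\cup\newPsi^{c}_{\lambda}(\program)$, and (iii) that $\htcttinterp\models\newPsi^{c\bot}_{\lambda}(\program)$. Ingredient~(ii) is immediate from the hypothesis. For~(i), I would apply the splitting technique (Lemma~\ref{lem:splitting:full:htc}) to conclude that $\htcttinterp|_{\alphabetT^c}$ is a constraint equilibrium model of $\Delta^c_\lambda$, then invoke Proposition~\ref{prop:htc:delta:timed} to extract the induced timing function $\tmf$; since the timed property only involves atoms in $\alphabetT^c$, it lifts to all of $\htcttinterp$. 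For~(iii), splitting again gives that $\htcttinterp|_{\alphabetT^c\cup\alphabetI}$ is an equilibrium model of $\Delta^c_\lambda\cup\newPsi^c_\lambda(\program)$, hence by Lemma~\ref{lem:mlp:full:htc:monotonic} it satisfies $\newPsi^{c\bot}_\lambda(\program)$; again this lifts to $\htcttinterp$. Together these supply the hypotheses of Lemma~\ref{lem:mlp:full:htc:correctness}, yielding $\htcToMht{\htcttinterp|_{\alphabets\cup\alphabetT^c}}\models\program$.

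For the equilibrium part, note that $\M\eqdef\htcToMht{\htcttinterp|_{\alphabets\cap\alphabetT^c}}$ is total by the definition of $\htToMhtf^c$ applied to a total \HTC\ interpretation $\htcttinterp$. Writing $\M=(\tuple{\Ttrace,\Ttrace},\tmf)$, assume towards a contradiction that some $\M'=(\tuple{\Htrace,\Ttrace},\tmf)$ with $\Htrace<\Ttrace$ is a model of $\program$. Then Lemma~\ref{lem:mlp:full:htc:completeness} yields an \HTC\ interpretation $\htcinterp$ satisfying $\newPi_\lambda(\program)\cup\Delta^c_\lambda\cup\newPsi^c_\lambda(\program)$ such that $\htcinterp|_{\alphabets\cup\alphabetT^c}=\mhtToHtc{\M'}$. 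Since $\Htrace<\Ttrace$, some atom $a_k$ is in $\Vt$ but not in $\Vh$, so $\Vh\subset\Vt$ strictly, contradicting the equilibrium property of $\htcttinterp$.

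The main obstacle is the bookkeeping around auxiliary atoms: the witness $\htcinterp$ provided by Lemma~\ref{lem:mlp:full:htc:completeness} agrees with $\htcttinterp$ only on $\alphabets\cup\alphabetT^c$, while $\htcttinterp$ also interprets variables in $\alphabetaux\cup\alphabetI$. I would close this gap by observing (as in the \HT-proof of Theorem~\ref{thm:mlp:full:ht:correctness}) that once $\htcinterp|_{\alphabets\cup\alphabetT^c}$ is fixed, the values on $\alphabetI$ are determined by $\newPsi^c_\lambda(\program)$ via the equilibrium on $\Delta^c_\lambda\cup\newPsi^c_\lambda(\program)$, and those on $\alphabetaux$ are in turn fully determined by the Tseitin-style rules in $\newPi_\lambda(\program)$ together with Proposition~\ref{prop:eq:eta:aux}. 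Hence we may, without loss of generality, extend $\htcinterp$ to a total witness agreeing with $\htcttinterp$ off $\alphabets$, so that the strict inclusion $\Vh\subset\Vt$ is preserved and yields the desired contradiction.
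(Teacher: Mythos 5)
Your proposal is correct and follows essentially the same route as the paper's proof: the modeling part via Lemma~\ref{lem:mlp:full:htc:correctness} with timedness from splitting (Lemma~\ref{lem:splitting:full:htc}) and Proposition~\ref{prop:htc:delta:timed}, satisfaction of $\newPsi^{c\bot}_{\lambda}(\program)$ from Lemma~\ref{lem:mlp:full:htc:monotonic}, and the equilibrium part by contradiction through Lemma~\ref{lem:mlp:full:htc:completeness}. Your final paragraph on reconciling the witness with $\htcttinterp$ on $\alphabetaux\cup\alphabetI$ addresses a bookkeeping point the paper's own proof passes over silently (and is sound, though the witness you build should be non-total, with the same $t$-component as $\htcttinterp$ and strictly smaller $h$, rather than ``total''), so it is a welcome refinement rather than a deviation.
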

\begin{proof}
    Let $\M=\htcToMht{\htcttinterp|_{\alphabets\cap\alphabetT^c}}$.

    We begin proving that $\M\models\program$.
    We do so by using Lemma~\ref{lem:mlp:full:htc:correctness},
    so it remains proving that the conditions for this Lemma hold.
Firstly,
    $\htcttinterp|_{\alphabetT^c}$ is a constraint equilibrium model of $\Delta^c_{\lambda}$
    due to Lemma~\ref{lem:splitting:full:htc}.
Due to Proposition~\ref{prop:htc:delta:timed}, $\htcttinterp|_{\alphabetT^c}$ is timed wrt. $\lambda$ inducing $\tmf$.
    Since adding other atoms not in ${\alphabetT^c}$ keeps timed property,
    $\htcttinterp$ is also timed wrt. $\lambda$ inducing $\tmf$.
For the second condition.
    $\htcttinterp|_{{\alphabetT^c}\cup\alphabetI}$ is a constraint equilibrium model of $\Delta^c_{\lambda}\cup\newPsi^c_{\lambda}(\program)$ thanks to the Splitting property.
    Applying Lemma~\ref{lem:mlp:full:htc:monotonic} we get that $\htcttinterp|_{{\alphabetT^c}\cup\alphabetI}\models\newPsi^\bot_{\lambda}(\program)$.
    As before, we can add the rest of the atoms since there is no interference so $\htcttinterp\models\newPsi^\bot_{\lambda}(\program)$.
With this we can apply Lemma~\ref{lem:mlp:full:htc:correctness} to get that $\M\models\program$.

    We then prove that it is in equilibrium.
    By construction $\M$ is total.
    We assume towards a contradiction that
    there is $\M'=(\tuple{\Htrace,\Ttrace},\tmf)$ such that $\Htrace<\Ttrace$ and $\M'\models\program$
    Then, there is $a\in\alphabet$ such that $a\in T_k$ and $a\not\in H_k$,
    for some $\trangeco{k}{0}{\lambda}$.
We apply Lemma~\ref{lem:mlp:full:htc:completeness},
    and
    let $\htcinterp$ be the \HTC\ interpretation
    such that
    $\htcinterp\models\newPi_\lambda(\program)\cup
    \Delta^c_{\lambda}\cup
    \newPsi^c_{\lambda}(\program)$
    with $\tmflimit=\tmf(\lambda-1)$,
    and $\tuple{H,T}|_{\alphabets\cup{\alphabetT^c}}=\mhtToHt{\M'}$.
    By construction $(a_k,\true)\in \Vt$ and $(a_k,\true)\not\in \Vh$,
    which means that $\htcinterp<\htcttinterp$
    leading to a
    contradiction since $\htcttinterp$ is a constraint equilibrium model.
    Since there is no smaller model, $\M$ is in equilibrium.

\end{proof}

\end{document}